\newtheorem{assumption}{Assumption}
\newtheorem{lemma}{Lemma}
\newtheorem{theorem}{Theorem}
\newtheorem{definition}{Definition}
\DeclareMathOperator*{\argmax}{arg\,max}
\title{Deep Q-Learning with Gradient Target Tracking}
\author{%
  Bumgeun Park\thanks{Equal contribution.} \\
  Electrical Engineering, KAIST\\
  \texttt{j4t123@kaist.ac.kr}
    \And
  Taeho Lee \footnotemark[1]\\
  Electrical Engineering, KAIST\\
  \texttt{eho0228@kaist.ac.kr}
  \And
Donghwan Lee \\
  Electrical Engineering, KAIST\\
  \texttt{donghwan@kaist.ac.kr}
}
\begin{document}
\setlength{\abovedisplayskip}{3pt}
\setlength{\belowdisplayskip}{3pt}

\maketitle

\begin{abstract}
This paper introduces Q-learning with gradient target tracking, a novel reinforcement learning framework that provides a learned continuous target update mechanism as an alternative to the conventional hard update paradigm. In the standard deep Q-network (DQN), the target network is a copy of the online network's weights, held fixed for a number of iterations before being periodically replaced via a hard update. While this stabilizes training by providing consistent targets, it introduces a new challenge: the hard update period must be carefully tuned to achieve optimal performance. To address this issue, we propose two gradient-based target update methods: DQN with asymmetric gradient target tracking (AGT2-DQN) and DQN with symmetric gradient target tracking (SGT2-DQN). These methods replace the conventional hard target updates with continuous and structured updates using gradient descent, which effectively eliminates the need for manual tuning. We provide a theoretical analysis proving the convergence of these methods in tabular settings. Additionally, empirical evaluations demonstrate their advantages over standard DQN baselines, which suggest that gradient-based target updates can serve as an effective alternative to conventional target update mechanisms in Q-learning.
\end{abstract}

\section{Introduction}
Recently, reinforcement learning (RL)~\cite{sutton1998reinforcement} has seen remarkable success in solving sequential decision-making problems~\cite{mnih2015human,lillicrap2015continuous,silver2016mastering,schulman2017proximal,fujimoto2018addressing,haarnoja2018soft,van2016deep,wang2016dueling,anschel2017averaged}. In particular, deep Q-network (DQN) algorithm~\cite{mnih2015human} is a widely used approach for value-based RL, and has demonstrated human-level performance in various complex tasks, such as playing Atari games. However, standard DQN suffers from tuning challenges due to the periodic hard update of the target network. Specifically, the target Q-network is updated only periodically via a hard update, where every few thousand iterations, the target network’s weights are replaced by those of the online network. This strategy stabilizes learning but introduces a new challenge: the hard update period must be carefully tuned to achieve optimal performance. An alternative to hard updates is incremental updates, where the target network is updated gradually at each step. Lillicrap et al.~\cite{lillicrap2015continuous} popularized this approach in deep RL, using a soft update (or Polyak averaging), where the target parameters move toward the online parameters with a small learning rate. While this method mitigates instability, it still requires careful tuning of the averaging weight.

To address these challenges, we introduce a novel gradient-based target update framework for Q-learning. Our approach replaces the standard hard target update with gradient-based target learning approach, which continuously updates the target Q-function using gradient descent. In particular, we develop DQN with asymmetric gradient target tracking (AGT2-DQN) and DQN with symmetric gradient target tracking (SGT2-DQN). AGT2-DQN replaces the hard update with a continuous, gradient-based update for the target network. Instead of copying weights infrequently, the target network's parameters are treated as learnable, and they are adjusted at every step using gradient descent on a mean squared loss between the target Q-network and online Q-network. SGT2-DQN also uses continuous gradient-based target updates, but with a symmetric coupling between the online and target networks. Both networks are updated with loss terms that drive them toward each other's predictions. Morever, SGT2-DQN adds an extra regularization term to the loss functions of both networks that penalizes the difference between the online Q-value and target Q-value for the same state-action. This means the online and target networks influence each other's updates, which enforces a more balanced interaction (each network ``knows'' about the other's objective).

Unlike standard DQN, where the target network is updated only periodically, our methods ensure continuous adaptation of the target function through gradient-based learning. The advantages of our approach are supported by experiments on benchmark reinforcement learning tasks, which show that AGT2-DQN and SGT2-DQN achieve learning performance comparable to standard DQN without significant tuning steps.

Finally, we provide theoretical analysis, which shows that both the asymmetric and symmetric algorithms converge to the optimal Q-values in the tabular setting. In fact, for the tabular versions of AGT2-DQN/SGT2-DQN, called Q-learning with asymmetric gradient tracking (AGT2-QL) and Q-learning with symmetric gradient tracking (SGT2-QL), we prove that under standard assumptions, the online and target Q estimates will converge to $Q^*$ (the true optimal Q-function) with probability one.

\section{Related works}

In the original DQN~\cite{mnih2015human}, the target network is a copy of the online network's weights that is held fixed for a number of iterations and then replaced (``hard updated'') periodically. Researchers have looked into several alternatives to this hard update mechanism: soft (Polyak) updates have been introduced in~\cite{lillicrap2015continuous}. Instead of copying the network every few thousand steps, an incremental update can be done every step that updates the target parameters with a small learning rate towards the online parameters. In~\cite{lee2019target,lee2020unified}, the effect of the soft updates has been theoretically investigated for tabular TD-learning and tabular Q-learning respectively.

DeepMellow~\cite{kim2019deepmellow} removes the target network entirely by modifying the Bellman update and achieves faster learning without the target network. They replace the conventional $\max$ operator with a Mellowmax operator, a smooth approximation of $\max$ that prevents over-optimistic value estimates. Similarly, in~\cite{piche2021beyond}, the authors propose adding a functional regularization (FR) therm to the DQN loss and removing the target network, which is calle FR-DQN. In their FR-DQN algorithm, the Q-network is trained with an extra loss term that measures the difference between the current Q-values and those of a prior network (which is a periodically updated snapshot of the online network). This acts similarly to a target network, but instead of using the prior network's values as fixed targets, it uses them as a reference to restrain the current network's updates.

In summary, several strategies have been explored as alternatives to hard target updates. All these methods share the goal of providing stability to Q-learning without the downsides of an abruptly updated target. The new gradient target tracking approaches proposed in this paper contribute to this landscape by offering a learned, continuous target update mechanism as yet another alternative to the hard update paradigm.

\section{Preliminaries}

\subsection{Markov decision problem}
We consider the infinite-horizon discounted Markov decision problem and Markov decision process, where the agent sequentially takes actions to maximize cumulative discounted rewards. In a Markov decision process with the state-space ${\cal S}:=\{ 1,2,\ldots ,|{\cal S}|\}$ and action-space ${\cal A}:= \{1,2,\ldots,|{\cal A}|\}$, the decision maker selects an action $a \in {\cal A}$ at the current state $s\in {\cal S}$, then the state
transits to the next state $s'\in {\cal S}$ with probability $P(s'|s,a)$, and the transition incurs a
reward $r(s,a,s') \in {\mathbb R}$, where $P(s'|s,a)$ is the state transition probability from the current state
$s\in {\cal S}$ to the next state $s' \in {\cal S}$ under action $a \in {\cal A}$, and $r(s,a,s')$ is the reward function. Moreover, $|{\cal S}|$ and  $|{\cal A}|$ denotes cardinalities of $\cal S$ and $\cal A$, respectively. For convenience, we consider a deterministic reward function and simply write $r(s_k,a_k ,s_{k + 1}) =:r_{k+1},k \in \{ 0,1,\ldots \}$. A policy $\pi$ maps a state $s \in {\cal S}$ to an action distribution $\pi: {\cal S} \to \Delta_{\cal A}$, where $\Delta_{\cal A}$ denotes the set of all probability distributions over the action space ${\cal A}$. The objective of the Markov decision problem is to find an optimal policy, $\pi^*$, such that the cumulative discounted rewards over infinite time horizons is maximized, i.e.,
\begin{align*}
\pi^*:= \argmax_{\pi\in \Theta} {\mathbb E}\left[\left.\sum_{k=0}^\infty {\gamma^k r_{k+1}}\right|\pi\right],
\end{align*}
where $\gamma \in [0,1)$ is the discount factor, $\Theta$ is the set of all policies, $(s_0,a_0,s_1,a_1,\ldots)$ is a state-action trajectory generated by the Markov chain under policy $\pi$, and ${\mathbb E}[\cdot|\pi]$ is an expectation conditioned on the policy $\pi$. Moreover, Q-function under policy $\pi$ is defined as
\begin{align*}
Q^{\pi}(s,a)={\mathbb E}\left[ \left. \sum_{k=0}^\infty {\gamma^k r_{k+1}} \right|s_0=s,a_0=a,\pi \right],\quad (s,a)\in {\cal S} \times {\cal A},
\end{align*}
and the optimal Q-function is defined as $Q^*(s,a)=Q^{\pi^*}(s,a)$ for all $(s,a)\in {\cal S} \times {\cal A}$. Once $Q^*$ is known, then an optimal policy can be retrieved by the greedy policy $\pi^*(s)=\argmax_{a\in {\cal A}}Q^*(s,a)$. Throughout, we assume that the Markov decision process is ergodic so that the stationary state distribution exists.

\section{Deep Q-learning with gradient target tracking}

Before introducing the main results, we first briefly review the standard DQN in~\cite{mnih2015human}. DQN considers the two optimal Q-function estimates, $Q_\theta$ called the online Q-network, and $Q_{\theta'}$ called the target Q-network. Here, $\theta$ and $\theta'$ are called the online and target parameters, respectively.
To update these parameters, the following loss function is considered:
\[L(\theta ;B): = \frac{1}{2}\frac{1}{{|B|}}\sum\limits_{(s,a,r,s') \in B} {{{(y - {Q_\theta }(s,a))}^2}} \]
where $B$ is the mini-batch uniformly sampled from the replay buffer $D$, $|B|$ denotes the size of the mini-batch, and $y$, fixed as a constant, is called the target, which is defined as
\[y = r + {\bf 1}(s')\gamma {\max _{a \in A}}{Q_{\theta '}}(s',a)\]
Here, ${\bf 1}(s')$ is an indicator function defined as
\[{\bf{1}}(s): = \left\{ {\begin{array}{*{20}{c}}
0&{{\rm{if}}\,\,s = {\rm{terminal}}\,\,{\rm{state}}}\\
1&{{\rm{else}}}
\end{array}} \right.\]
The online Q-network $Q_\theta$ is then updated through the gradient descent step
\[\theta  \leftarrow \theta  - \alpha \nabla L(\theta ;B)\]
where the target parameter $\theta'$ is fixed as a constant.
Then, the target Q-network $Q_{\theta'}$ is updated periodically by the online parameter with period $C>0$
\[\theta'  \leftarrow \theta\]
\cref{fig:DQN-C} shows the learning curves of DQN for different $C\in \{1,10,100,200,500\}$ in Cartpole environment (Open AI gym). The results show that $C=10$ gives the best learning performance among the other choices of $C$.
\begin{figure}[h!]
\centering
\includegraphics[width=6cm,height=5cm]{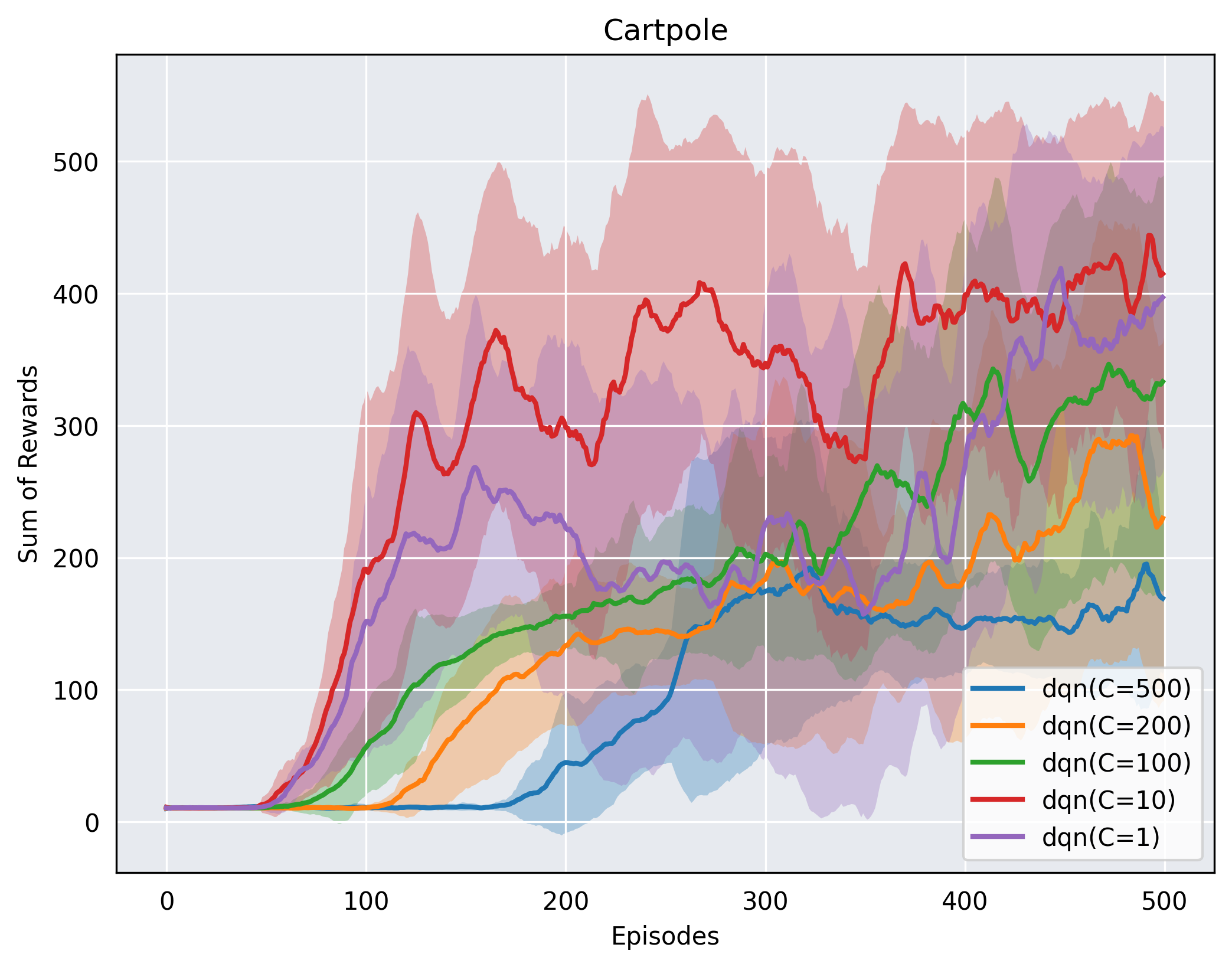}
\caption{Comparison of reward curves of DQN with different $C$ in Cartpole environment. }\label{fig:DQN-C}
\end{figure}
As the results indicate, achieving better learning performance requires careful selection of $C$, which demands extensive trials and time.

In this paper, we propose two gradient-based target learning methods for DQN: the first algorithm is called DQN with asymmetric gradient target tracking (AGT2-DQN), and the second is DQN with symmetric gradient target tracking (SGT2-DQN). In AGT2-DQN, we employ two distinct Q-networks, ${Q_{{\theta_1}}}$ and ${Q_{{\theta _2}}}$, referred to as the online and target Q-networks, respectively. The parameters $\theta_1$ and $\theta_2$ are called the online and target parameters, respectively. The corresponding loss functions are defined as
\[L_1(\theta _1;B): = \frac{1}{2}\frac{1}{|B|}\sum\limits_{(s,a,r,s') \in B} {{{(y_1 - Q_{\theta _1}(s,a))}^2}} \]
and
\[L_2(\theta _2;B): = \frac{\beta}{2}\frac{1}{{|B|}}\sum\limits_{(s,a,r,s') \in B} {{{({y_2} - {Q_{{\theta _2}}}(s,a))}^2}} \]
where $\beta>0$ is a constant weight, $y_1$ and $y_2$ are targets defined as
\[y_1 = r + {\bf 1}(s')\gamma {\max _{a \in {\cal A}}}{Q_{{\theta _2}}}(s',a),\quad {y_2} = {Q_{{\theta _1}}}(s,a) \]

Both $\theta_1$ and $\theta_2$ are updated through the gradient descent steps
\[{\theta _1} \leftarrow {\theta _1} - \alpha {\nabla _{{\theta _1}}}{L_1}({\theta _1};B),\quad {\theta _2} \leftarrow {\theta _2} - \alpha {\nabla _{{\theta _2}}}{L_2}({\theta _2};B),\]
where $\alpha >0$ is the step-size. The overall algorithm is summarized in Appendix~A.
Note that the online parameter update is identical to that in the standard DQN. The primary distinction arises from the target parameter update, which employs the following gradient descent step for the loss function $L_2$:
\[{\theta _2} \leftarrow {\theta _2} - \alpha {\nabla _{{\theta _2}}}{L_2}({\theta _2};B) = {\theta _2} + \alpha \frac{1}{{|B|}}\sum\limits_{(s,a,r,s') \in B} {({Q_{{\theta _1}}}(s,a) - {Q_{{\theta _2}}}(s,a)){\nabla _{{\theta _2}}}{Q_{{\theta _2}}}(s,a)} \]
The above update treats the target network in DQN as a learnable parameter updated via gradient descent, rather than updating it only periodically~\cite{mnih2015human} or by polyak averaging~\cite{lillicrap2015continuous}. This yields a form of continuous update where the target slowly learns to predict the online Q-values (or some combination thereof), instead of being replaced outright. 
\begin{figure}[h!]
\centering
\subfigure[AGT2-DQN]{\includegraphics[width=6cm,height=5cm]{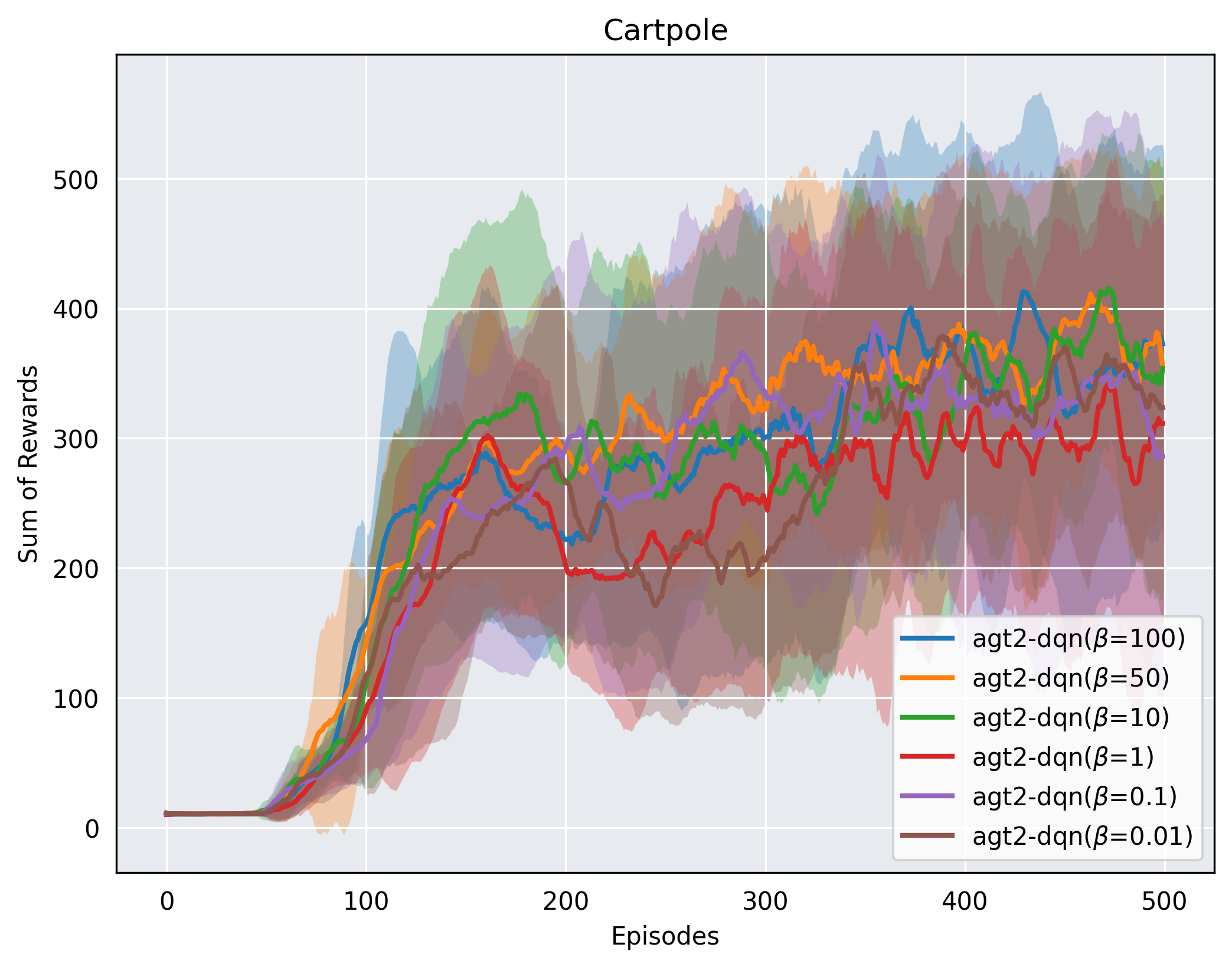}}
\subfigure[SGT2-DQN]{\includegraphics[width=6cm,height=5cm]{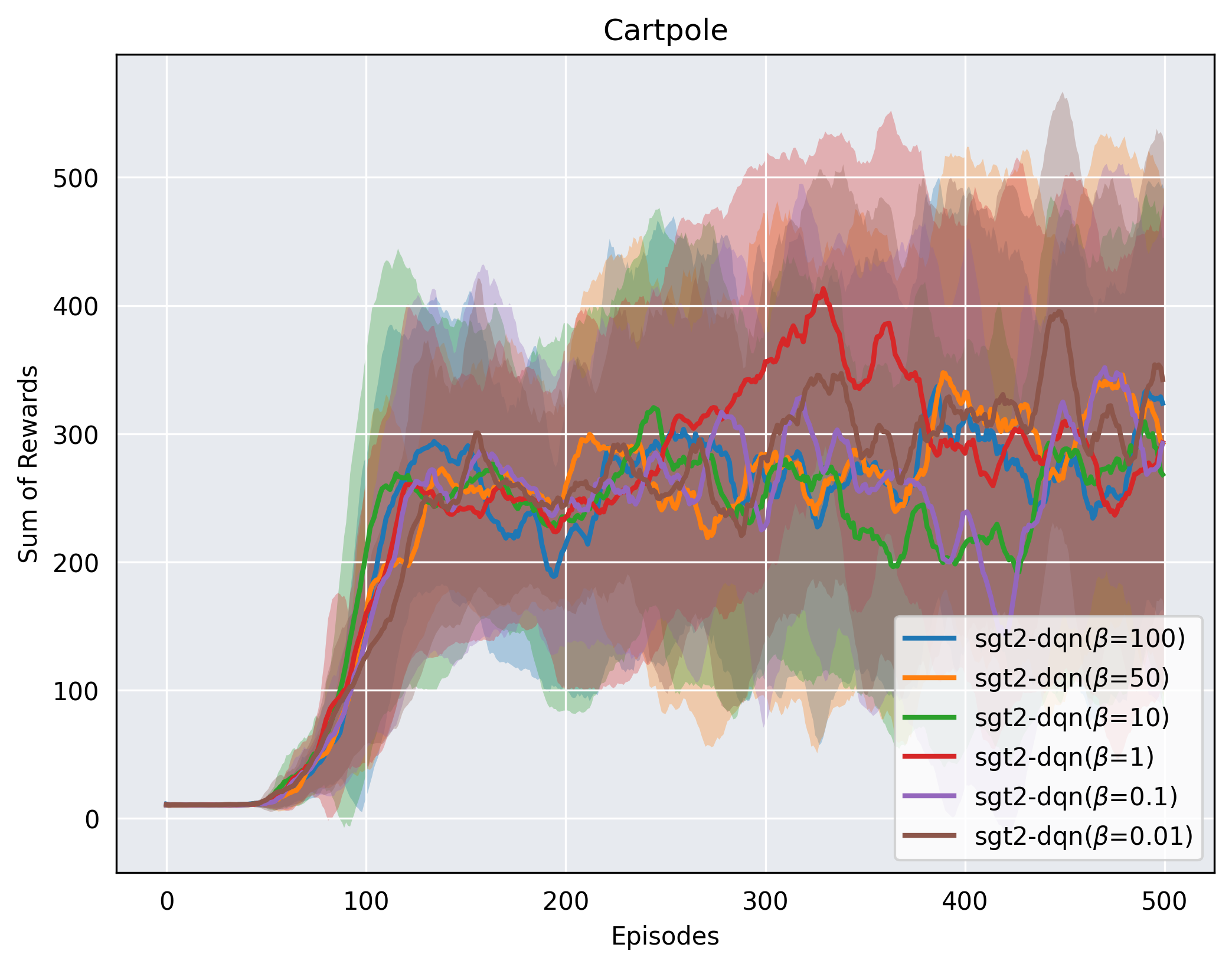}}
\caption{Comparison of reward curves of AGT2-DQN and SGT2-DQN with different $\beta$. Cartpole environment in OpenAI Gym is used here. }\label{fig:AGT2-DQN-beta}
\end{figure}

\begin{figure}[h!]
\centering
\includegraphics[width=6cm,height=5cm]{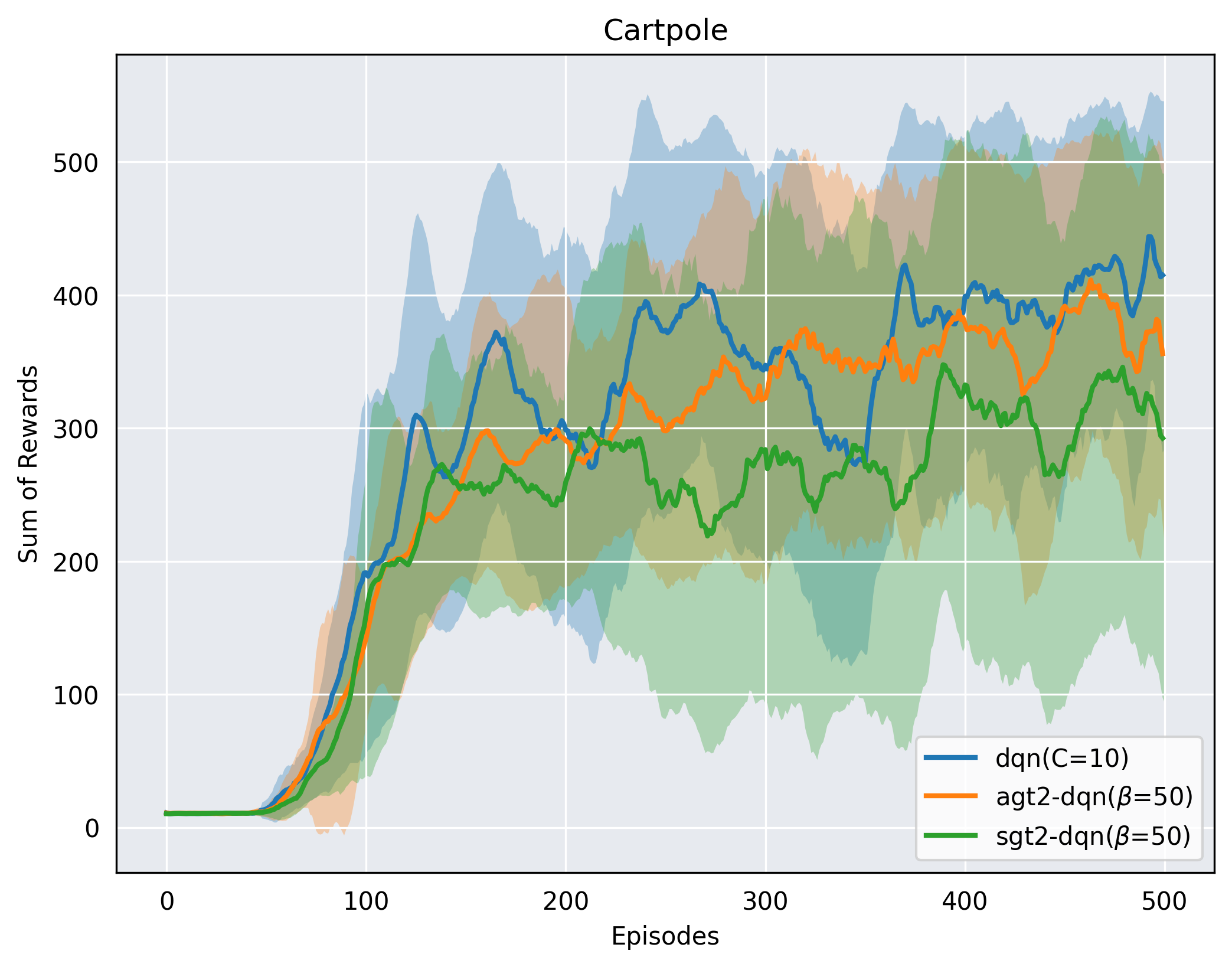}
\caption{Comparison of reward curves of DQN with $C=10$, AGT2-DQN and SGT2-DQN with $\beta = 50$. Cartpole environment in OpenAI Gym is used here. }\label{fig:DQN-AGT2-SGT2}
\end{figure}

As noted in the previous section, the updates for the online and target parameters are asymmetric. This asymmetry might lead to inefficiencies in finding the optimal solution since each update plays a distinct role and does not consider the influence of the other.

The algorithm can be conceptualized as a two-player game, where the first player (online parameter update) strives to solve the Bellman equation, while the second player (target parameter update) aims to track the first player's actions. In this setup, the utility functions of the two players are asymmetric and do not consider the objectives of each other. This mismatch can lead to instability and inefficiencies during the learning process. A potential alternative would be to design a game where each player’s utility function explicitly incorporates the other player's goal, leading to a more balanced interaction. In such a framework, the utility functions would be symmetric, which could potentially enhance the stability and efficiency of the algorithm.

Motivated by the previous discussions, we propose the second algorithm, referred to as SGT2-DQN. Similar to AGT2-DQN, this algorithm also employs two Q-networks ${Q_{{\theta_1}}}$ and ${Q_{{\theta _2}}}$, which are called the online and target Q-networks, respectively. However, SGT2-DQN introduces a symmetric approach to updating these networks, that aims to harmonize the interaction between them. In particular, SGT2-DQN employs the following loss functions:
\[{L_1}({\theta _1};B): = \frac{1}{2}\frac{1}{{|B|}}\sum\limits_{(s,a,r,s') \in B} {[{{(y_1 - Q_{{\theta _1}}(s,a))}^2} + \beta {{({Q_{{\theta _2}}}(s,a) - {Q_{{\theta _1}}}(s,a))}^2}]} \]
and
\[{L_2}({\theta _2};B): = \frac{1}{2}\frac{1}{{|B|}}\sum\limits_{(s,a,r,s') \in B} {[{{(y_2 - Q_{{\theta _2}}(s,a))}^2} + \beta {{({Q_{{\theta _1}}}(s,a) - {Q_{{\theta _2}}}(s,a))}^2}]}. \]
where $\beta >0 $ is a constant weight representing the regularization weight, and the targets $y_1$ and $y_2$ are defined as
\begin{align*}
y_1 = r + {\bf 1}(s')\gamma \max _{a \in {\cal A}}Q_{\theta _2}(s',a),\quad 
y_2 = r + {\bf 1}(s')\gamma {\max _{a \in {\cal A}}}Q_{\theta _1}(s',a).
\end{align*}

Next, both $\theta_1$ and $\theta_2$ are updated through the gradient descent steps
\[{\theta _1} \leftarrow {\theta _1} - \alpha {\nabla _{{\theta _1}}}{L_1}({\theta _1};B),\quad {\theta _2} \leftarrow {\theta _2} - \alpha {\nabla _{{\theta _2}}}{L_2}({\theta _2};B),\]
where $\alpha>0$ is the step-size. The overall algorithm is summarized in Appendix~B. In SGT2-DQN, both the online and target networks interact and adjust towards each other using gradient steps. Moreover, this algorithm echoes the idea of double Q-learning in a gradient setting. Double Q-learning maintains two estimates that learn from each other's predictions~\cite{hasselt2010double}.
SGT2-DQN similarly lets two function approximators co-evolve and correct one another. 

\cref{fig:AGT2-DQN-beta} illustrates the learning curves of AGT2-DQN and SGT2-DQN for different $\beta\in \{0.01,0.1,1,10,50,100\}$ in Cartpole environment. The results demonstrate that their learning efficiency is comparable to DQN. However, one can observe that they are less sensitive to the hyperparameter $\beta$.

\cref{fig:DQN-AGT2-SGT2} illustrates the learning curves of DQN with $C=10$, AGT2-DQN and SGT2-DQN with $\beta = 50$, where the hyperparameter for each method has been selected to achieve approximately the best performance among the tested grid points.

The results show that while DQN exhibits slightly better learning efficiency than the proposed methods in this environment. However, as mentioned before, the latter require less efforts for hyperparameter tuning in this case.

We conducted similar experiments across multiple environments, and the results varied depending on the environment. For instance, comparisons of learning curves are presented in~\cref{fig:comparison1}, where the hyperparameters were roughly tuned to achieve optimal learning performance for each method.

As shown in~\cref{fig:comparison1}(a), DQN with $C=50$, AGT2-DQN with $\beta = 0.1$, and SGT2-DQN with $\beta =1$ demonstrate comparable performance in the Acrobot environment. In contrast,~\cref{fig:comparison1}(b) illustrates that AGT2-DQN with $\beta = 0.1$ and SGT2-DQN with $\beta = 0.1$ exhibit better learning efficiency than DQN with $C=500$. A full comparison of results, including those with different hyperparameter settings, is provided in the Appendix~J. Moreover, Appendix~J also includes comparisons in other environments, which exhibit similar trends. 

Overall, DQN, AGT2-DQN, and SGT2-DQN exhibited comparable learning performance on average. In some environments, DQN outperformed the other two methods, while in others, AGT2-DQN and SGT2-DQN achieved better learning performance than DQN. However, we observed that AGT2-DQN and SGT2-DQN were slightly less sensitive to hyperparameters than DQN on average. Therefore, it is difficult to conclude that the proposed AGT2-DQN and SGT2-DQN outperform DQN. However, we argue that they offer interesting alternatives to the hard update paradigm in DQN.

\begin{figure}[h!]
\centering
\subfigure[Acrobot: DQN with $C=50$, AGT2-DQN with $\beta = 0.1$, and SGT2-DQN with $\beta = 1$.]{\includegraphics[width=6cm,height=5cm]{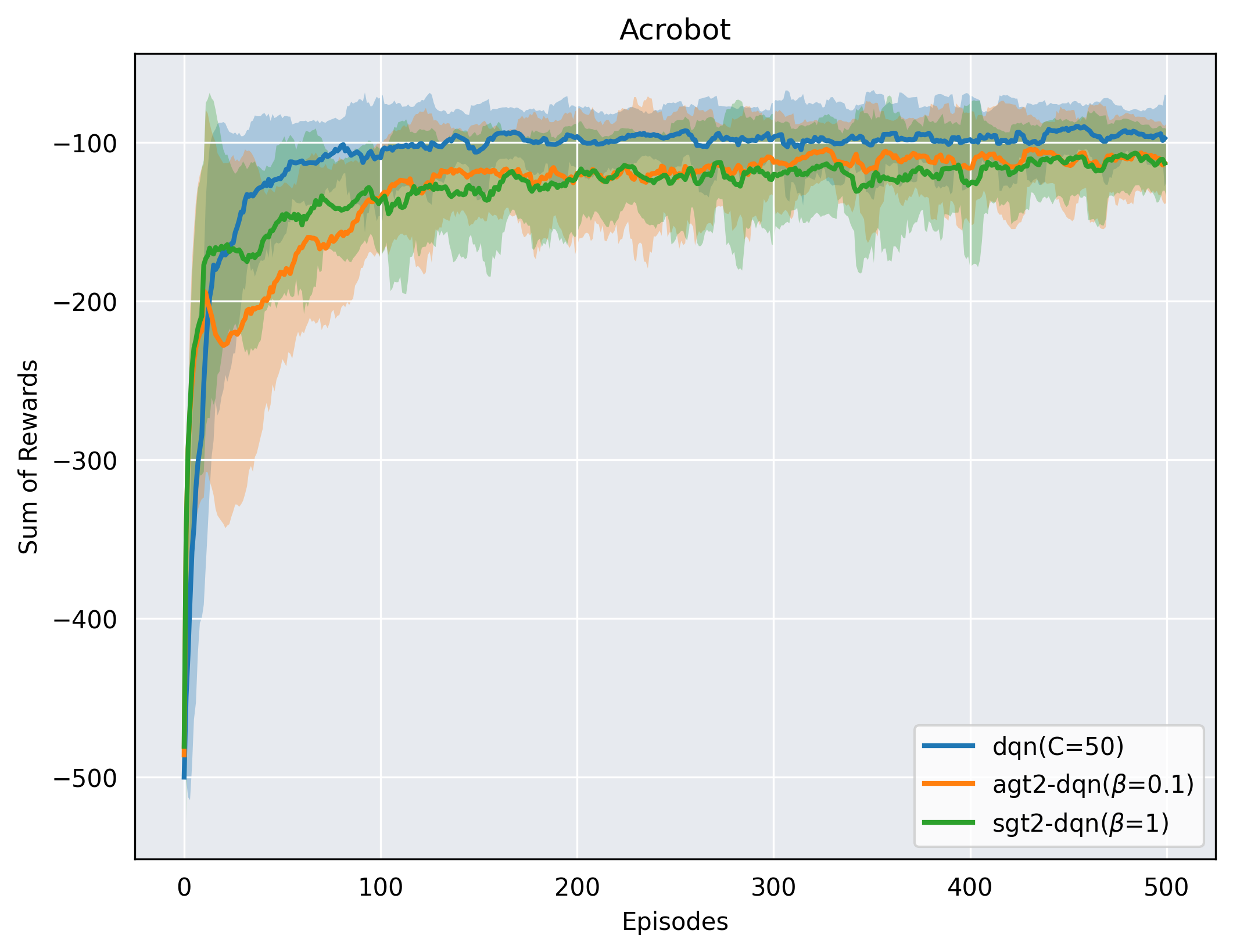}}
\subfigure[Pendulum: DQN with $C=500$, AGT2-DQN with $\beta = 0.1$, and SGT2-DQN with $\beta = 0.1$.]{\includegraphics[width=6cm,height=5cm]{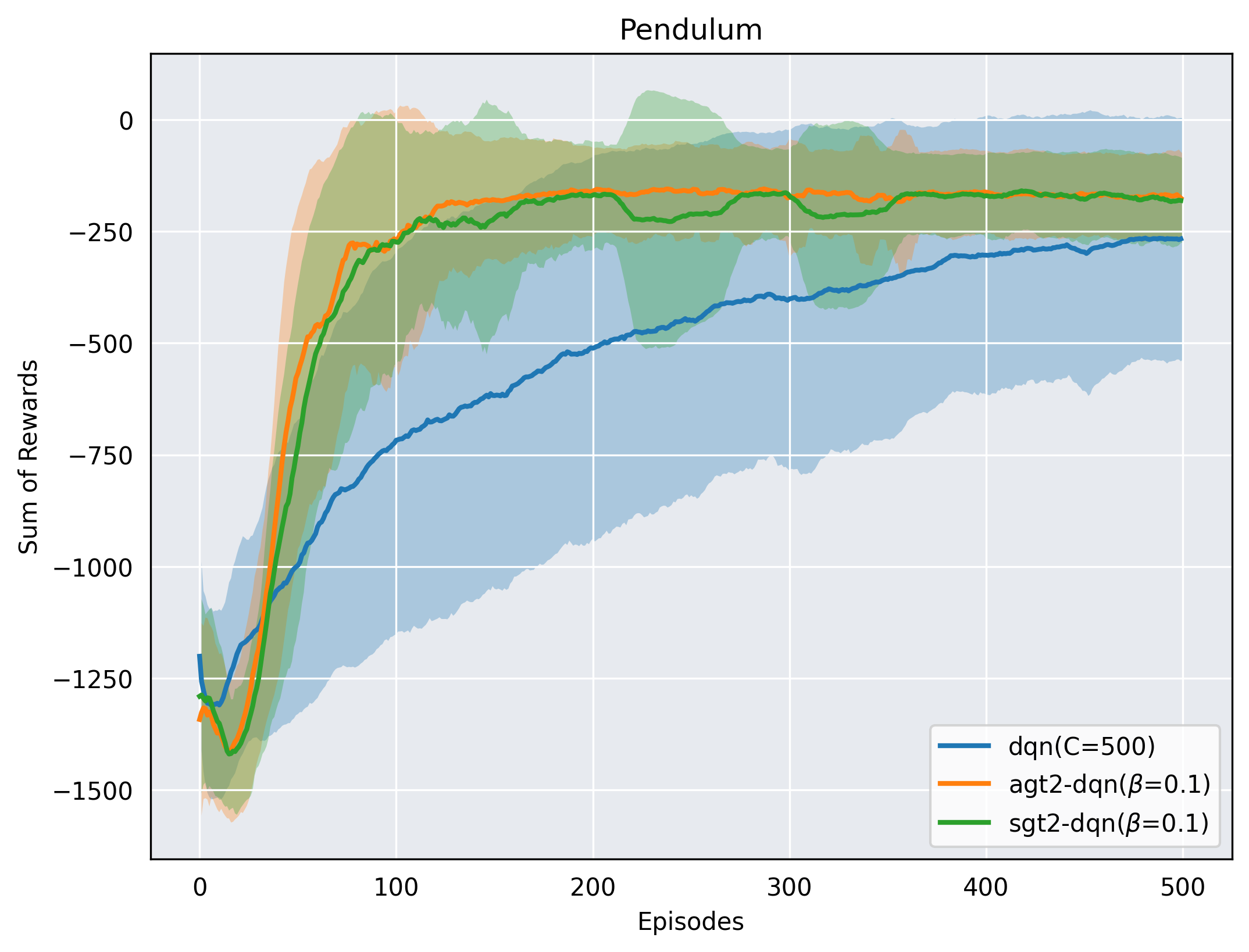}}
\caption{Comparison of simulated reward curves of AGT2-DQN,SGT2-DQN, DQN}\label{fig:comparison1}
\end{figure}

Beyond the tuning issue, our experience suggests that AGT2-DQN and SGT2-DQN achieve faster learning speeds than DQN during the early learning phase. However, they frequently encounter performance collapse after this phase. Similar to policy gradient methods, this collapse appears to result from overshooting in the gradient target update, a phenomenon that may not occur in standard DQN. We believe this issue can be mitigated by incorporating stabilization techniques, such as clipping the loss function, similar to the approach used in proximal policy optimization (PPO) in~\cite{schulman2017proximal}.

\section{Analysis of optimality}

The proposed AGT2-DQN and SGT2-DQN aim to minimize their respective loss functions through gradient descent steps. However, it is not currently clear how minimizing these errors in the loss functions ensures that the computed Q-estimates are closer to the optimal Q-function. Therefore, further investigation is required to establish a more definitive link between the reduction of loss function errors and the accuracy of the Q-estimates in approximating the optimal Q-function. This issue is the main goal of this section and will be critical for validating the effectiveness of the proposed methods.

To simplify the analysis, let us assume that by minimizing the loss functions of AGT2-DQN, we can approximately minimize the following expected loss functions:
\begin{align*}
{L_1}({\theta _1}) =& {{\mathbb E}_{(s,a) \sim U({\cal S} \times {\cal A}),s' \sim P( \cdot |s,a)}}\left[ {{{\left( {r(s,a,s') + \gamma {{\max }_{a \in {\cal A}}}{Q_{{\theta _2}}}(s',a) - {Q_{{\theta _1}}}(s,a)} \right)}^2}} \right]\\
{L_2}({\theta _2}) =& {{\mathbb E}_{(s,a) \sim U({\cal S} \times {\cal A}),s' \sim P( \cdot |s,a)}}\left[ {\frac{\beta }{2}{{({Q_{{\theta _1}}}(s,a) - {Q_{{\theta _2}}}(s,a))}^2}} \right]
\end{align*}
where $U({\cal S} \times {\cal A})$ means the uniform distribution over the set ${\cal S}\times {\cal A}$. Note that we employ the uniform distribution over ${\cal S}\times {\cal A}$ to simplify the analysis. In practice, this would correspond to the average distribution of ${\cal S}\times {\cal A}$ in the replay buffer.

Next, suppose that the loss functions are minimized with
\[{L_1}({\theta _1}) \le \varepsilon ,\quad {L_2}({\theta _2}) \le \varepsilon, \]
where $\varepsilon>0$ is a small number representing the maximum error of the two loss functions. Then, we can derive the following error bounds on the corresponding solution:
\begin{theorem}\label{thm:AGT2-DQN:optimality}
Suppose that the loss functions are minimized with
\[{L_1}({\theta _1}) \le \varepsilon ,\quad {L_2}({\theta _2}) \le \varepsilon. \]
For the expected loss functions of AGT2-DQN, we have
\begin{align*}
{\left\| Q_{\theta _1} - Q^* \right\|_\infty } \le& \frac{{\sqrt {\varepsilon |{\cal S}||{\cal A}|} }}{{1 - \gamma }} + \frac{\gamma }{{1 - \gamma }}\sqrt {\frac{2\varepsilon |{\cal S}||{\cal A}|}{\beta }} \\
{\left\| Q_{\theta_2} - Q^* \right\|_\infty } \le& \frac{{2\sqrt {\varepsilon |{\cal S}||{\cal A}|} }}{1 - \gamma} + \frac{\gamma }{1 - \gamma}\sqrt {\frac{2\varepsilon |{\cal S}||{\cal A}|}{\beta }}
\end{align*}
\end{theorem}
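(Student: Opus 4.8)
The plan is to convert the two expected mean-squared losses into max-norm bounds on the residuals $Q_{\theta_1}-Q_{\theta_2}$ and $Q_{\theta_1}-TQ_{\theta_2}$, and then exploit that the Bellman optimality operator $T$, defined by $(TQ)(s,a):={\mathbb E}_{s'\sim P(\cdot|s,a)}[r(s,a,s')+\gamma\max_{a'\in{\cal A}}Q(s',a')]$, is a $\gamma$-contraction in $\|\cdot\|_\infty$ with unique fixed point $Q^*=TQ^*$. Throughout I abbreviate $\rho_1:=\sqrt{\varepsilon|{\cal S}||{\cal A}|}$ and $\rho_2:=\sqrt{2\varepsilon|{\cal S}||{\cal A}|/\beta}$, which are precisely the quantities appearing in the stated bounds. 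The $L_2$ term is disposed of immediately: since $(s,a)\sim U({\cal S}\times{\cal A})$ places mass $1/(|{\cal S}||{\cal A}|)$ on each pair and every summand is nonnegative, the single inequality $L_2(\theta_2)\le\varepsilon$ forces each pointwise term to satisfy $\tfrac{\beta}{2}(Q_{\theta_1}(s,a)-Q_{\theta_2}(s,a))^2\le \varepsilon|{\cal S}||{\cal A}|$, hence $\|Q_{\theta_1}-Q_{\theta_2}\|_\infty\le\rho_2$.

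The crux is $L_1$, where I would fix $(s,a)$ and apply a bias--variance decomposition to the inner expectation over $s'$. Writing $X:=r(s,a,s')+\gamma\max_{a'}Q_{\theta_2}(s',a')$, one has ${\mathbb E}_{s'}[(X-Q_{\theta_1}(s,a))^2]={\rm Var}_{s'}(X)+({\mathbb E}_{s'}[X]-Q_{\theta_1}(s,a))^2\ge((TQ_{\theta_2})(s,a)-Q_{\theta_1}(s,a))^2$, because ${\mathbb E}_{s'}[X]=(TQ_{\theta_2})(s,a)$ and the conditional variance is nonnegative. Averaging this pointwise lower bound against the uniform distribution and invoking $L_1(\theta_1)\le\varepsilon$ exactly as above yields $\|TQ_{\theta_2}-Q_{\theta_1}\|_\infty\le\rho_1$.

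It then remains to combine these two estimates with the contraction. For $Q_{\theta_1}$ I would write $\|Q_{\theta_1}-Q^*\|_\infty=\|Q_{\theta_1}-TQ^*\|_\infty\le\|Q_{\theta_1}-TQ_{\theta_2}\|_\infty+\gamma\|Q_{\theta_2}-Q^*\|_\infty$, bound $\|Q_{\theta_2}-Q^*\|_\infty\le\|Q_{\theta_2}-Q_{\theta_1}\|_\infty+\|Q_{\theta_1}-Q^*\|_\infty\le\rho_2+\|Q_{\theta_1}-Q^*\|_\infty$, and substitute to obtain the self-referential inequality $\|Q_{\theta_1}-Q^*\|_\infty\le\rho_1+\gamma\rho_2+\gamma\|Q_{\theta_1}-Q^*\|_\infty$; solving gives $\|Q_{\theta_1}-Q^*\|_\infty\le(\rho_1+\gamma\rho_2)/(1-\gamma)$, the first claimed bound. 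The bound for $Q_{\theta_2}$ is then obtained by feeding this estimate back through $\|Q_{\theta_2}-Q^*\|_\infty\le\|Q_{\theta_2}-Q_{\theta_1}\|_\infty+\|Q_{\theta_1}-Q^*\|_\infty$, and if necessary passing once more through $T$.

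I expect the main obstacle to be twofold. First, the bias--variance step must be handled carefully: one must verify that the target inside the squared loss, despite the $\max$ over next-state actions, has conditional mean exactly $(TQ_{\theta_2})(s,a)$, and recognize that discarding the (unknown, nonnegative) conditional variance is precisely what lets the squared loss control the Bellman residual rather than the raw random sample. Second, because each network enters the other's loss, the resulting estimates are coupled and self-referential in $\|Q_{\theta_1}-Q^*\|_\infty$ and $\|Q_{\theta_2}-Q^*\|_\infty$; isolating each norm is routine, but recovering the exact constants of the theorem is delicate. A direct triangle-inequality split gives a $Q_{\theta_2}$-bound of the form $(\rho_1+\rho_2)/(1-\gamma)$, so matching the stated $\tfrac{2\rho_1}{1-\gamma}+\tfrac{\gamma\rho_2}{1-\gamma}$ form (with the factor $2$ on $\rho_1$ and the $\gamma$ on $\rho_2$) requires a specific ordering in which the contraction and the $Q_{\theta_1}-Q_{\theta_2}$ gap are inserted into the chain, and this bookkeeping is the step I would check most carefully.
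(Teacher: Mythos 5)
Your reduction of the two expected losses to the pointwise bounds $\|Q_{\theta_1}-TQ_{\theta_2}\|_\infty\le\rho_1$ and $\|Q_{\theta_1}-Q_{\theta_2}\|_\infty\le\rho_2$, writing $\rho_1:=\sqrt{\varepsilon|{\cal S}||{\cal A}|}$ and $\rho_2:=\sqrt{2\varepsilon|{\cal S}||{\cal A}|/\beta}$ as you do, is exactly the paper's first step (the paper invokes Jensen's inequality where you use the bias--variance identity; these are the same estimate). Your derivation of the bound on $Q_{\theta_1}$ is also correct and is in fact a streamlined version of the paper's: the paper runs a two-case argument based on the reverse triangle inequality, but in its main case it arrives at precisely your self-referential inequality $\|Q_{\theta_1}-Q^*\|_\infty\le\rho_1+\gamma\rho_2+\gamma\|Q_{\theta_1}-Q^*\|_\infty$ and solves it, so for the first display of the theorem the two proofs coincide and your constants match exactly.

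The discrepancy you flagged for $Q_{\theta_2}$ is not a bookkeeping issue that a cleverer ordering of triangle inequalities would repair; it is a defect in the paper's own proof. The paper obtains the factor $2$ from the display \eqref{eq:appendix:5}, whose first inequality asserts $|Q_{\theta_1}(s,a)-Q_{\theta_2}(s,a)|\le\rho_1/(1-\gamma)$. That inequality is not the reverse triangle inequality (which yields only the second inequality of that display), and it is never established anywhere: the only gap bound proved from the losses is $\rho_2$. Substituting the proved bound $\rho_2$ into the paper's own chain reproduces exactly your $(\rho_1+\rho_2)/(1-\gamma)$. Moreover, your bound is tight and the stated bound is actually false when $\beta<2(1-\gamma)^2$: take $|{\cal S}|=|{\cal A}|=1$ with reward $r$, so $Q^*=r/(1-\gamma)$, and choose the scalars $Q_{\theta_1}=r+\gamma Q_{\theta_2}+\rho_1$ and $Q_{\theta_2}=Q_{\theta_1}+\rho_2$; then $L_1(\theta_1)=\varepsilon$ and $L_2(\theta_2)=\varepsilon$ hold with equality, while solving the two equations gives $Q_{\theta_2}-Q^*=(\rho_1+\rho_2)/(1-\gamma)$, which strictly exceeds $(2\rho_1+\gamma\rho_2)/(1-\gamma)$ precisely when $(1-\gamma)\rho_2>\rho_1$, i.e.\ $\beta<2(1-\gamma)^2$. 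So your proposal should be viewed as proving the theorem with the second bound corrected to $(\rho_1+\rho_2)/(1-\gamma)$, which implies the stated bound whenever $\beta\ge 2(1-\gamma)^2$; the stated constants themselves cannot be recovered by any correct argument, and the ``specific ordering'' you were searching for does not exist.
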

Its proof can be found in Appendix~H. We can observe that the error converges to $0$ as $\varepsilon \to 0$. Moreover, the errors also depend on $\beta>0$ and as $\beta \to \infty$ the error tends to be reduced.

Similarly, let us assume that by minimizing the loss functions of SGT2-DQN, we can approximately minimize the following expected loss function:
\begin{align*}
{L_1}({\theta _1}) =& {\mathbb E}_{(s,a) \sim U({\cal S} \times {\cal A}),s' \sim P( \cdot |s,a)}\left[ {{{\left( {r(s,a,s') + \gamma {\max_{a \in {\cal A}}}Q_{\theta _2}(s',a) - Q_{\theta _1}(s,a)} \right)}^2}} \right.\\
&\left. { + \frac{\beta }{2}{{(Q_{\theta _2}(s,a) - Q_{\theta _1}(s,a))}^2}} \right],\\
L_2(\theta _2) =& {\mathbb E}_{(s,a) \sim U({\cal S} \times {\cal A}), s' \sim P( \cdot |s,a)}\left[ {{{\left( r(s,a,s') + \gamma \max_{a \in {\cal A}}Q_{\theta_1}(s',a) - Q_{\theta_2}(s,a) \right)}^2}} \right.\\
&\left. { + \frac{\beta }{2}{{(Q_{\theta_1}(s,a) - Q_{\theta _2}(s,a))}^2}} \right]
\end{align*}
where $U({\cal S} \times {\cal A})$ means the uniform distribution over the set ${\cal S} \times {\cal A}$. Then, we can establish the following error bounds on the corresponding solution:
\begin{theorem}\label{thm:SGT2-DQN:optimality}
Suppose that the loss functions are minimized with
\[{L_1}({\theta _1}) \le \varepsilon ,\quad {L_2}({\theta _2}) \le \varepsilon \]
For the expected loss functions of SGT2-DQN, we have
\[{\left\| Q_{\theta _1} - Q^* \right\|_\infty },{\left\| Q_{\theta _2} - Q^* \right\|_\infty } \le \frac{{\sqrt {\varepsilon |{\cal S}||{\cal A}|} }}{{1 - \gamma }} + \frac{\gamma }{1 - \gamma }\sqrt {\frac{2\varepsilon |{\cal S}||{\cal A}|}{\beta }} \]
\end{theorem}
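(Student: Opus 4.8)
The plan is to exploit the fact that both summands in each expected loss are nonnegative, so the hypotheses $L_1(\theta_1)\le\varepsilon$ and $L_2(\theta_2)\le\varepsilon$ each split into two independent bounds: a squared-Bellman-error bound and a squared-coupling bound. Writing $T$ for the Bellman optimality operator $(TQ)(s,a)=\mathbb{E}_{s'\sim P(\cdot|s,a)}[r(s,a,s')+\gamma\max_{a'}Q(s',a')]$, I would first handle the randomness in $s'$ via Jensen's inequality: for fixed $(s,a)$, $\mathbb{E}_{s'}[(r+\gamma\max_{a'}Q_{\theta_2}(s',a')-Q_{\theta_1}(s,a))^2]\ge((TQ_{\theta_2})(s,a)-Q_{\theta_1}(s,a))^2$, so the Bellman term of $L_1$ dominates the averaged squared Bellman error $\frac{1}{|\mathcal{S}||\mathcal{A}|}\sum_{(s,a)}((TQ_{\theta_2})(s,a)-Q_{\theta_1}(s,a))^2$, while the regularization term gives $\frac{\beta}{2}\cdot\frac{1}{|\mathcal{S}||\mathcal{A}|}\sum_{(s,a)}(Q_{\theta_2}(s,a)-Q_{\theta_1}(s,a))^2\le\varepsilon$.

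Second, I would convert these $\ell^2$-averages into $\ell^\infty$ bounds. Since each averaged sum is bounded by $\varepsilon$ and consists of $|\mathcal{S}||\mathcal{A}|$ nonnegative terms, every individual entry is at most $\varepsilon|\mathcal{S}||\mathcal{A}|$; taking square roots yields $\|TQ_{\theta_2}-Q_{\theta_1}\|_\infty\le\sqrt{\varepsilon|\mathcal{S}||\mathcal{A}|}$ from the Bellman part and $\|Q_{\theta_2}-Q_{\theta_1}\|_\infty\le\sqrt{2\varepsilon|\mathcal{S}||\mathcal{A}|/\beta}$ from the coupling part, the factor $2$ coming from the $\beta/2$ weight. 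Crucially, because the two loss functions are symmetric, the same argument applied to $L_2$ gives $\|TQ_{\theta_1}-Q_{\theta_2}\|_\infty\le\sqrt{\varepsilon|\mathcal{S}||\mathcal{A}|}$ and the identical coupling bound; it is this symmetry (each network carrying its own Bellman term) that I expect to make both final bounds equal, in contrast to the asymmetric bounds of \cref{thm:AGT2-DQN:optimality}.

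Third, I would combine these with the $\gamma$-contraction of $T$ and its fixed point $Q^*=TQ^*$. Writing $x=\|Q_{\theta_1}-Q^*\|_\infty$ and using $\|Q_{\theta_1}-Q^*\|_\infty=\|Q_{\theta_1}-TQ^*\|_\infty\le\|Q_{\theta_1}-TQ_{\theta_2}\|_\infty+\gamma\|Q_{\theta_2}-Q^*\|_\infty$, then bounding $\|Q_{\theta_2}-Q^*\|_\infty\le\|Q_{\theta_2}-Q_{\theta_1}\|_\infty+\|Q_{\theta_1}-Q^*\|_\infty$, produces the self-referential inequality $x\le\sqrt{\varepsilon|\mathcal{S}||\mathcal{A}|}+\gamma\sqrt{2\varepsilon|\mathcal{S}||\mathcal{A}|/\beta}+\gamma x$. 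Solving for $x$ gives the claimed bound, and the symmetric chain starting from $\|Q_{\theta_2}-Q^*\|_\infty$ yields the same bound for $Q_{\theta_2}$.

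The main obstacle I anticipate is the circular coupling between the two networks: neither $\|Q_{\theta_1}-Q^*\|_\infty$ nor $\|Q_{\theta_2}-Q^*\|_\infty$ can be controlled in isolation, since each enters the other's estimate through $T$. The resolution is precisely that the regularization terms force $Q_{\theta_1}$ and $Q_{\theta_2}$ to be close, which collapses the two-variable system into a single inequality that the contraction factor $1-\gamma$ closes. Secondary points to verify carefully are the Jensen step that discards the transition variance and the $\ell^2$-to-$\ell^\infty$ conversion, which is where the (pessimistic) $\sqrt{|\mathcal{S}||\mathcal{A}|}$ factor enters.
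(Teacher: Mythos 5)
Your proposal is correct and follows essentially the same route as the paper's proof: split each loss into its two nonnegative parts, apply Jensen's inequality over $s'$, convert the uniform averages into the pointwise bounds $\|TQ_{\theta_2}-Q_{\theta_1}\|_\infty\le\sqrt{\varepsilon|{\cal S}||{\cal A}|}$ and $\|Q_{\theta_2}-Q_{\theta_1}\|_\infty\le\sqrt{2\varepsilon|{\cal S}||{\cal A}|/\beta}$, then close a self-referential inequality using the $\gamma$-contraction of $T$ and handle $Q_{\theta_2}$ by symmetry. The only difference is cosmetic: where you invoke the direct triangle inequality $\|Q_{\theta_1}-Q^*\|_\infty\le\|Q_{\theta_1}-TQ_{\theta_2}\|_\infty+\gamma\|Q_{\theta_2}-Q^*\|_\infty$, the paper (importing the AGT2-DQN argument) runs a two-case reverse-triangle-inequality analysis that produces exactly the same inequality, so your version is a slightly cleaner write-up of the identical argument.
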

Its proof can be found in Appendix~I. As in the AGT2-DQN case, the error vanishes as $\varepsilon \to 0$.

\section{Convergence analysis for tabular case}

In this section, we provide theoretical convergence analysis of AGT2-DQN and SGT2-DQN in tabular cases for conceptual investigations, where the tabular versions of AGT2-DQN and SGT2-DQN are called Q-learning with asymmetric gradient tracking (AGT2-QL) and Q-learning with symmetric gradient tracking (SGT2-QL), respectively.
Let us first consider the following update of AGT2-QL:
\begin{align*}
Q^A_{k+1}(s_k,a_k)&=Q^A_k(s_k,a_k)+\alpha_k\left\{r_{k+1}+{\bf 1}(s_k') \gamma \max_{a \in {\cal A}} Q^B_k (s_{k}',a)-Q^A_k(s_k,a_k)\right\},\\
Q^B_{k+1}(s_k,a_k)&=Q^B_k(s_k,a_k)+\alpha_k \beta (Q^A_k(s_k,a_k)-Q^B_k(s_k,a_k)).
\end{align*}
where $\alpha_k >0$ is the learning rate, $\beta>0$ is a constant weight, and $s_{k}'$ can be interpreted in the following two ways: 1) in the Markovian observation model, $s_{k}' = s_{k+1}$, i.e., the next state and 2) in the i.i.d. observation model, $s_{k}'$ is the next state sampled at the current time, while it is not identical to $s_{k+1}$. In the second i.i.d. observation model, $s_{k}'$ is sampled independently from some distribution $d$. In this paper, we consider the i.i.d. observation model for our convergence analysis to simplify the overall analysis. However, we note that the analysis presented in this paper for the i.i.d. observation model can be extended to the Markovian observation model with some modifications~\cite{lim2024finite}. We also acknowledge that such more sophisticated analyses are not the main focus in this paper. The main goal of the convergence analysis in this section is to provide conceptual insights into the convergence of the proposed algorithms. We also note that simulation results are provided in the Appendix~G to demonstrate the validity and convergence of AGT2-QL. Empirically, the results show that AGT2-QL tends to converge faster for larger values of $\beta>0$.

We note that this algorithm is similar to the so-called averaging Q-learning developed in~\cite{lee2020unified}, which can be seen as a precursor. The averaging Q-learning in~\cite{lee2020unified} maintains two separate estimates, the target estimate $Q^B_k$
and the online estimate $Q^A_k$: the online estimate $Q^A_k$ is for approximating Q-function and updated through an online manner, whereas the target estimate $Q^B_k$ is for computing the target values and updated through taking Polyak’s averaging. Only the difference is the update rule for $Q^B_k$ which depends on the transition $(s_k,a_k)$.

AGT2-QL can be seen as a tabular version of AGT2-DQN due to the following insights: if we consider the two loss functions
\[{l_A}({Q^A}): = \frac{1}{2}{({y_1} - {Q^A}({s_k},{a_k}))^2},\quad {l_B}({Q^B}): = \frac{\beta }{2}{({y_2} - {Q^B}({s_k},{a_k}))^2}\]
with
\[{y_1} = {r_{k + 1}} + {\bf 1}(s_k') \gamma \max_{a \in {\cal A}}Q_k^B({s_{k + 1}},a),\quad {y_2} = Q_k^A({s_k},{a_k})\]
then each update in AGT2-QL can be interpreted as the gradient descent steps
\begin{align}
Q_{k + 1}^A({s_k},{a_k}) = Q_k^A({s_k},{a_k}) - \alpha_k \frac{{\partial {l_A}({Q^A})}}{{\partial {Q^A}({s_k},{a_k})}}\label{eq:gradient-step-A}
\end{align}
and
\begin{align}
Q_{k + 1}^B(s_k,a_k) = Q_k^B(s_k,a_k) - \alpha_k \frac{{\partial l_B(Q^B)}}{{\partial {Q^B}(s_k,a_k)}},\label{eq:gradient-step-B}
\end{align}
In other words, AGT2-QL can be seen as an tabular and online implementation of AGT2-DQN, whose convergence can be established as follows:
\begin{theorem}\label{thm:AGT2-QL-convergence}
Let us consider AGT2-QL and assume that the step-size satisfies
\begin{align}
0 \leq \alpha_k \leq 1,\quad \sum_{k=0}^\infty {\alpha_k}=\infty,\quad \sum_{k=0}^\infty{\alpha_k^2}<\infty.\label{eq:step-size-rule}
\end{align}
Assume that $(s_k,a_k,s_k')$ are i.i.d. samples, where $a_k$ is sampled from the fixed behavior policy $b$ and $s_k$ is sampled from a fixed state distribution $d$. Then for any $\beta> 0$, $Q^A_k\to Q^*$ and $Q^B_k\to Q^*$ with probability one.
\end{theorem}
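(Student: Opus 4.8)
The plan is to recast AGT2-QL as a coupled stochastic approximation recursion on the stacked iterate $x_k = (Q^A_k, Q^B_k)$ and then apply the ODE method (in the style of Borkar and Meyn), mirroring the treatment of the precursor averaging Q-learning in \cite{lee2020unified}. First I would write each update in Robbins--Monro form $x_{k+1} = x_k + \alpha_k\big(h(x_k) + M_{k+1}\big)$, where $h$ is the conditional mean of the increment and $M_{k+1}$ is a martingale-difference noise with respect to the natural filtration. Because the i.i.d. model samples $(s_k,a_k)$ from the fixed distribution induced by $d$ and $b$ and only updates the sampled coordinate, the conditional mean carries a positive diagonal matrix $D = \mathrm{diag}(d(s)b(a|s))$ of visitation probabilities; its strict positivity, which I would track as a standing full-support assumption on $b$ and $d$, is what drives every coordinate correctly. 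Writing the Bellman optimality operator $H$ as $(HQ)(s,a) = \mathbb{E}[r + \gamma\mathbf{1}(s')\max_{a'}Q(s',a')\mid s,a]$, the mean field becomes $h(Q^A,Q^B) = \big(D(HQ^B - Q^A),\ \beta D(Q^A - Q^B)\big)$, so the associated single-timescale coupled ODE is $\dot Q^A = D(HQ^B - Q^A)$ and $\dot Q^B = \beta D(Q^A - Q^B)$. Since $H$ is a $\gamma$-contraction in $\|\cdot\|_\infty$ with unique fixed point $Q^*$, the unique equilibrium is $(Q^*,Q^*)$.

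The core of the argument is to prove that $(Q^*,Q^*)$ is globally asymptotically stable for this ODE. I would pass to error coordinates $u = Q^A - Q^*$, $v = Q^B - Q^*$, where the dynamics become $\dot u = D(g(v) - u)$ with $g(v) := H(Q^*+v) - Q^*$ satisfying $g(0)=0$ and $\|g(v)\|_\infty \le \gamma\|v\|_\infty$, together with $\dot v = \beta D(u - v)$. To establish $(u,v)\to(0,0)$ I would use a weighted sup-norm Lyapunov function $V(u,v) = \max\{\|u\|_\infty,\ \rho\|v\|_\infty\}$ for a suitably chosen $\rho\in(\gamma,1)$, and show via the contraction bound on $g$ and the positivity of $D$ that $V$ strictly decreases off the equilibrium. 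Because the $\max$ operator inside $H$ makes $g$ only piecewise linear, this step will require a nonsmooth (Dini-derivative) comparison argument, or equivalently the switched-linear-system viewpoint that treats $H$ as a pointwise selection among linear maps and bounds a common Lyapunov decrease. I expect this coupled, nonsmooth stability analysis to be the main obstacle.

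Finally I would discharge the remaining hypotheses of the ODE theorem. For boundedness of the iterates I would verify that the scaled limiting field $h_\infty(x) = \lim_{c\to\infty}h(cx)/c$ exists, using positive homogeneity of $\max$ so that the bounded reward term drops out and $H$ is replaced by its homogeneous part $H_\infty$, which remains a $\gamma$-contraction fixing $0$; the same Lyapunov structure then makes the origin globally asymptotically stable for the limiting ODE, yielding almost-sure boundedness. The martingale noise has conditional second moment bounded by an affine function of $\|x_k\|^2$ since rewards are bounded and the $\max$ is Lipschitz, and the step sizes satisfy the Robbins--Monro conditions \eqref{eq:step-size-rule} by assumption. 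Combining stability of the iterates, the noise condition, and global asymptotic stability of $(Q^*,Q^*)$, the ODE method gives $x_k\to(Q^*,Q^*)$ almost surely, that is $Q^A_k\to Q^*$ and $Q^B_k\to Q^*$ with probability one for every $\beta>0$; I would close by remarking that $\beta$ only sets the tracking timescale of $Q^B$, so convergence holds uniformly in $\beta>0$ while the rate improves as $\beta$ grows, consistent with the empirical observation in Appendix~G.
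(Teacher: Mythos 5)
Your proposal is correct in its overall architecture and reaches the theorem through the same high-level pipeline as the paper (Robbins--Monro form on the stacked iterate, mean field with the visitation matrix $D$, error coordinates, global asymptotic stability of the ODE, then Borkar--Meyn with the $f_\infty$ scaling trick, martingale-noise bound, and step-size conditions). Where you genuinely diverge is the core stability step. The paper never constructs a Lyapunov function for the error dynamics; instead it observes that the error system is an \emph{affine} switched linear system not covered by standard switching-system stability results, and so it sandwiches its solution between an upper comparison system (a homogeneous switched system $\dot x = A_{\sigma(x)}x$ using the greedy policy of the error, shown stable via the strictly-negative-row-dominating-diagonal criterion of Lin and Antsaklis after the similarity transform $L=\mathrm{diag}(I,\gamma^{1/2}I)$) and a lower comparison system (a fixed linear system using $\Pi_{Q^*}$), with the sandwich justified by quasi-monotonicity of the vector fields and a vector comparison principle. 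Your route instead attacks the affine switched system directly with the weighted sup-norm function $V(u,v)=\max\{\|u\|_\infty,\rho\|v\|_\infty\}$, $\rho\in(\gamma,1)$, and this does work: in the region $V=\|u\|_\infty$ one gets $\dot V\le -d_{\min}(1-\gamma/\rho)V$ from $\|g(v)\|_\infty\le\gamma\|v\|_\infty$ and $\|v\|_\infty\le V/\rho$, in the region $V=\rho\|v\|_\infty$ one gets $\dot V\le -\beta d_{\min}(1-\rho)V$ from $\|u\|_\infty\le\rho\|v\|_\infty$, and the boundary is handled by the Dini-derivative rule for a max of finitely many functions, exactly the nonsmooth care you flag. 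Your approach buys a self-contained argument with an explicit exponential rate, avoids the comparison-principle machinery entirely, and absorbs the affine term into the contraction bound on $g$ in one shot; the paper's approach buys reuse of established switching-system theory and the framework of the earlier averaging Q-learning analysis, at the cost of two auxiliary systems and several monotonicity lemmas. (Note that the paper's weighting $\gamma^{1/2}$ in $L$ plays a role closely analogous to your $\rho$, so the two arguments are cousins, but yours is structurally a single common-Lyapunov-function proof rather than a squeeze between two systems.) The only caveats: keep the full-support condition $d(s,a)>0$ explicit, since $d_{\min}>0$ is what makes both of your decay rates strictly negative, and your parenthetical that convergence is "uniform in $\beta$" should be dropped or weakened --- the rate constant $\beta d_{\min}(1-\rho)$ degenerates as $\beta\to 0$, so convergence holds for each fixed $\beta>0$ but not uniformly over it.
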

The corresponding convergence analysis is given in Appendix~E. In AGT2-QL, the update for the target estimate $Q^B_k$ and the online estimate $Q^A_k$ are different, resulting in asymmetric updates for the two values. We can extend the idea in AGT2-QL by symmetrizing the update rule. In particular, by adding the two targets $y_1$ and $y_2$ and switching the roles of the target and online estimates, one can construct the following loss functions:
\[{l_A}({Q^A}): = \frac{1}{2}{({y_1} - {Q^A}({s_k},{a_k}))^2} + \frac{\beta }{2}{({Q^B}({s_k},{a_k}) - {Q^A}({s_k},{a_k}))^2}\]
and
\[{l_B}({Q^B}): = \frac{1}{2}{({y_2} - {Q^B}({s_k},{a_k}))^2} + \frac{\beta }{2}{({Q^A}({s_k},{a_k}) - {Q^B}({s_k},{a_k}))^2}\]
with the targets
\[y_1 = r_{k + 1} + {\bf 1}(s_k') \gamma {\max _{a \in {\cal A}}}Q_k^B(s_k',a),\quad {y_2} = r_{k + 1} + {\bf 1}(s_k') \gamma {\max _{a \in {\cal A}}}Q_k^A(s_k',a)\]
where $\beta >0$ is a constant representing the regularization weight.
Applying the gradient steps in~\eqref{eq:gradient-step-A} and ~\eqref{eq:gradient-step-B} results in the following SGT2-QL:
\begin{align*}
Q_{k + 1}^A({s_k},{a_k}) =& Q_k^A({s_k},{a_k})+ {\alpha _k}\{ r_{k + 1} + {\bf 1}(s_k') \gamma \max_{a \in {\cal A}}Q_k^B({s_k'},a) - Q_k^A(s_k,a_k)\\
  &+ \beta (Q_k^B({s_k},{a_k}) - Q_k^A({s_k},{a_k}))\}\\
Q_{k + 1}^B({s_k},a_k) =& Q_k^B(s_k,{a_k}) + {\alpha _k}\{ r_{k + 1} + {\bf 1}(s_k') \gamma \max_{a \in {\cal A}}Q_k^A(s_k',a) - Q_k^B(s_k,a_k)\\
& + \beta (Q_k^A({s_k},{a_k}) - Q_k^B({s_k},{a_k})) \}
\end{align*}
which can be seen as a tabular counterpart of SGT2-DQN.

We can observe that now the updates for the online and target estimates $Q^B_k$ and $Q^A_k$ are symmetric. We also note that this variant can be seen as a Q-learning counterpart of the double TD-learning developed in~\cite{lee2019target}.
In this paper, we establish its convergence as follows:
\begin{theorem}\label{thm:SGT2-QL-convergence}
Let us consider SGT2-QL and assume that the step-size satisfies~\eqref{eq:step-size-rule}.
Assume that $(s_k,a_k,s_k')$ are i.i.d. samples, where $a_k$ is sampled from the fixed behavior policy $b$ and $s_k$ is sampled from a fixed state distribution $d$.
Then for any $\beta>0$, $Q^A_k\to Q^*$ and $Q^B_k\to Q^*$ with probability one.
\end{theorem}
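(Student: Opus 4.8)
The plan is to recast the coupled SGT2-QL recursion as a single stochastic-approximation iteration on the stacked variable $\theta_k := (Q^A_k, Q^B_k)$ and to identify a sup-norm contraction whose fixed point is $(Q^*, Q^*)$. Writing the Bellman optimality operator as $(TQ)(s,a) := \mathbb{E}_{s' \sim P(\cdot\mid s,a)}\!\left[r(s,a,s') + \mathbf{1}(s')\gamma \max_{a' \in \mathcal{A}} Q(s',a')\right]$, I would first take the conditional expectation of each increment over the i.i.d. sample $(s_k,a_k,s_k')$. For the visited entry $(s,a)$ this produces the mean directions $(TQ^B)(s,a) - Q^A(s,a) + \beta\big(Q^B(s,a)-Q^A(s,a)\big)$ for the $Q^A$-update and the expression obtained by swapping $A$ and $B$ for the $Q^B$-update. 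Factoring out $(1+\beta)$, these equal $(1+\beta)\big(G^A(\theta)-Q^A\big)$ and $(1+\beta)\big(G^B(\theta)-Q^B\big)$ with
\[
G^A(\theta) := \frac{TQ^B + \beta Q^B}{1+\beta}, \qquad G^B(\theta) := \frac{TQ^A + \beta Q^A}{1+\beta},
\]
so the recursion assumes the canonical form $\theta_{k+1} = \theta_k + \tilde{\alpha}_k\big(G(\theta_k)-\theta_k + w_k\big)$, where $\tilde{\alpha}_k := (1+\beta)\alpha_k$ still obeys the summability conditions $\sum_k \tilde{\alpha}_k = \infty$ and $\sum_k \tilde{\alpha}_k^2 < \infty$ inherited from~\eqref{eq:step-size-rule}, and $w_k$ is the martingale-difference noise collecting the deviation of the sampled bootstrapped target from its conditional mean.

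The second step is to show $G = (G^A, G^B)$ is a contraction in $\|\theta\|_\infty := \max\{\|Q^A\|_\infty, \|Q^B\|_\infty\}$ with unique fixed point $(Q^*,Q^*)$. Since $T$ is a $\gamma$-contraction in $\|\cdot\|_\infty$, a direct estimate gives, for any $\theta$ and $\hat\theta$, $\|G^A(\theta) - G^A(\hat\theta)\|_\infty \le \tfrac{\gamma+\beta}{1+\beta}\|Q^B - \hat Q^B\|_\infty$ and symmetrically for $G^B$, so $G$ contracts with modulus $(\gamma+\beta)/(1+\beta) < 1$. To identify the fixed point I would set $G(\theta) = \theta$, subtract the two resulting equations to obtain $TQ^B - TQ^A = (1+2\beta)(Q^A - Q^B)$, and invoke the $\gamma$-contraction of $T$; since $1+2\beta > \gamma$, this forces $Q^A = Q^B$, and substituting back yields $TQ^A = Q^A$, whose unique solution is $Q^*$. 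Hence $(Q^*,Q^*)$ is the unique equilibrium.

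Finally I would verify the hypotheses of a standard stochastic-approximation theorem for sup-norm contractions and conclude. The noise $w_k$ has zero conditional mean by construction, and because the rewards are bounded and $\max_{a'} Q^B_k(s',a')$ is controlled by $\|Q^B_k\|_\infty$ on the finite space $\mathcal{S}\times\mathcal{A}$, its conditional second moment is bounded affinely in $\|\theta_k\|^2$; boundedness of the iterates then follows from the contraction structure. The step I expect to be the main obstacle is the asynchronous nature of the updates: only the sampled coordinate $(s_k,a_k)$ is modified at each iteration, so the effective per-coordinate step-size is $\tilde{\alpha}_k\mathbf{1}[(s_k,a_k)=(s,a)]$. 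I would handle this exactly as in the AGT2-QL analysis: under i.i.d. sampling in which $b$ and $d$ place positive mass on every state-action pair, each coordinate is visited infinitely often and its cumulative local step-sizes satisfy~\eqref{eq:step-size-rule} almost surely, which lets the asynchronous contraction-based convergence theorem apply and deliver $Q^A_k \to Q^*$ and $Q^B_k \to Q^*$ with probability one.
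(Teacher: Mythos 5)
Your proposal is correct, but it takes a genuinely different route from the paper. The paper proves this theorem via the ODE method: it rewrites SGT2-QL in matrix form, models its mean dynamics as an affine switching system driven by the greedy policies $\Pi_{Q^A_t}$ and $\Pi_{Q^B_t}$, sandwiches that system between a linear lower comparison system and a switching upper comparison system using a vector comparison principle (quasi-monotonicity plus Lipschitz continuity of the vector fields), establishes global asymptotic stability of both comparison systems through a strictly-negative-row-dominating-diagonal condition for switched linear systems, and finally invokes the Borkar--Meyn theorem. You instead exploit the symmetric coupling directly: the conditional-mean update factors as $(1+\beta)\bigl(G(\theta)-\theta\bigr)$ with $G^A(\theta)=\frac{TQ^B+\beta Q^B}{1+\beta}$, $G^B(\theta)=\frac{TQ^A+\beta Q^A}{1+\beta}$, and $G$ is a sup-norm contraction with modulus $\frac{\gamma+\beta}{1+\beta}<1$ whose unique fixed point is $(Q^*,Q^*)$; convergence then follows from the classical asynchronous stochastic-approximation theorems for sup-norm contractions (Jaakkola--Jordan--Singh/Tsitsiklis), with the asynchrony handled by the positive-visitation assumption exactly as you describe. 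Your computations check out: the factorization, the modulus $\frac{\gamma+\beta}{1+\beta}$, the identity $TQ^B-TQ^A=(1+2\beta)(Q^A-Q^B)$ forcing $Q^A=Q^B=Q^*$, and the almost-sure divergence of the per-coordinate step sums (which follows from Kolmogorov's two-series theorem since $\sum_k \alpha_k^2<\infty$) are all valid. What each approach buys: yours is more elementary, yields an explicit contraction modulus (hence a path to rates), and avoids the switching-system machinery entirely; but it succeeds only because the symmetric regularization makes the mean map a genuine contraction --- the analogous map for AGT2-QL, $G(\theta)=(TQ^B,\,Q^A)$, is merely a non-expansion in its second block, so your argument would not transfer, whereas the paper's comparison-system framework handles both algorithms uniformly (which is presumably why the authors chose it). Two minor technical points you should patch when writing this up: the effective step size $(1+\beta)\alpha_k$ can exceed $1$ for small $k$, so you need to note that $\alpha_k\to 0$ and restart the analysis from a sufficiently large index; and the asynchronous convergence theorem you invoke is an external citation not used by the paper, so it must be stated with its hypotheses verified explicitly (zero-mean noise, conditional variance affine in $\|\theta_k\|_\infty^2$, and per-coordinate Robbins--Monro conditions).
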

The proof is provided in the Appendix~F. We note that the convergence of SGT2-QL and its analysis have not yet been investigated in the literature. Additionally, empirical results demonstrating the convergence of this algorithm are included in the Appendix~G. These results show that SGT2-QL generally converges faster than AGT2-QL. Moreover, they indicate that the convergence speed of SGT2-QL is less sensitive to the weight $\beta>0$ compared to AGT2-QL.

\section{Conclusion}
This paper introduces gradient target tracking frameworks as alternatives to the hard target update in DQN. By replacing the periodic target updates with a continuous gradient-based tracking mechanism, the proposed approach mitigates tuning challenges.
Theoretical analysis establishes the convergence of the proposed methods in tabular settings. These findings suggest that the gradient-based target tracking is a promising alternative to conventional target update mechanisms in reinforcement learning.



\bibliographystyle{plainnat}
\bibliography{reference}

\newpage

\appendix

\section{Deep Q-learning with asymmetric gradient target tracking}
A detailed pseudo code of the proposed AGT2-DQN is given in~\cref{algo:appendix:AGT2-DQN}.

\begin{algorithm}[ht!]
\caption{Deep Q-learning with asymmetric gradient target tracking (AGT2-DQN)}
\begin{algorithmic}[1]
\State Initialize replay memory $D$ to capacity $|D|$
\State Randomly initialize the online parameter $\theta_1 \in {\mathbb R}^m$
\State Set $\theta_2 = \theta_1$
\State Set $k=0$

\For{Episode $i \in \{1,2,\ldots\}$}

\State Observe $s_0$

\For{$t \in\{0,1,\ldots, \tau-1\}$}

\State Take an action $a_t$ according to
\begin{align*}
a_t = \left\{ {\begin{array}{*{20}c}
   {\arg \max _{a \in {\cal A}} Q_{\theta_1} (s_t,a)\quad {\rm{with}}\,\,{\rm{probability}}\,\,1 - \varepsilon }  \\
   {a \sim {\rm{uniform}}({\cal A})\quad {\rm{with}}\,\,{\rm{probability}}\,\,\varepsilon }  \\
\end{array}} \right.
\end{align*}

\State Observe $r_{t+1},s_{t+1}$
\State Store the transition $(s_t ,a_t ,r_{t+1} ,s_{t + 1} )$ in $D$
\State Sample uniformly a random mini-batch $B$ of transitions $(s,a,r,s')$ from $D$

\State Set
\[{L_1}({\theta _1};B): = \frac{1}{2}\frac{1}{{|B|}}\sum\limits_{(s,a,r,s') \in B} {{{(y_1 - {Q_{\theta _1}}(s,a))}^2}} \]
and
\[{L_2}({\theta _2};B): = \frac{1}{2}\frac{1}{{|B|}}\sum\limits_{(s,a,r,s') \in B} {{{(y_2 - {Q_{\theta _2}}(s,a))}^2}} \]
where
\[y_1 = r + {\bf{1}}(s')\gamma \max_{a \in {\cal A}}{Q_{\theta _2}}(s',a) - {Q_{{\theta _1}}}(s,a)\]
and
\[{y_2} = {Q_{\theta_1}}(s,a)\]

\State Perform a gradient descent step
\[{\theta _1} \leftarrow {\theta _1} - \alpha {\nabla _{{\theta _1}}}{L_1}({\theta _1};B),\quad {\theta _2} \leftarrow {\theta _2} - \alpha {\nabla _{{\theta _2}}}{L_2}({\theta _2};B)\]

\State Set $k \leftarrow k+1$

\EndFor

\EndFor
\end{algorithmic}\label{algo:appendix:AGT2-DQN}
\end{algorithm}

\newpage
\section{Deep Q-learning with symmetric gradient target tracking}
A detailed pseudo code of the proposed SGT2-DQN is given in~\cref{algo:appendix:SGT2-DQN}.
\begin{algorithm}[ht!]
\caption{Deep Q-learning with symmetric gradient target tracking (SGT2-DQN)}
\begin{algorithmic}[1]
\State Initialize replay memory $D$ to capacity $|D|$
\State Randomly initialize the online parameter $\theta_1 \in {\mathbb R}^m$
\State Set $\theta_2 = \theta_1$
\State Set $k=0$

\For{Episode $i \in \{1,2,\ldots\}$}

\State Observe $s_0$

\For{$t \in\{0,1,\ldots, \tau-1\}$}

\State Take an action $a_t$ according to
\begin{align*}
a_t = \left\{ {\begin{array}{*{20}c}
   {\arg \max _{a \in {\cal A}} Q_{\theta_1} (s_t,a)\quad {\rm{with}}\,\,{\rm{probability}}\,\,1 - \varepsilon }  \\
   {a \sim {\rm{uniform}}({\cal A})\quad {\rm{with}}\,\,{\rm{probability}}\,\,\varepsilon }  \\
\end{array}} \right.
\end{align*}

\State Observe $r_{t+1},s_{t+1}$
\State Store the transition $(s_t ,a_t ,r_{t+1} ,s_{t + 1} )$ in $D$
\State Sample uniformly a random mini-batch $B$ of transitions $(s,a,r,s')$ from $D$

\State Set
\[{L_1}({\theta _1};B): = \frac{1}{2}\frac{1}{{|B|}}\sum\limits_{(s,a,r,s') \in B} {[{{(y_1 - {Q_{{\theta _1}}}(s,a))}^2} + \beta {{({Q_{{\theta _2}}}(s,a) - {Q_{{\theta _1}}}(s,a))}^2}]} \]
and
\[{L_2}({\theta _2};B): = \frac{1}{2}\frac{1}{{|B|}}\sum\limits_{(s,a,r,s') \in B} {[{{({y_2} - {Q_{{\theta _2}}}(s,a))}^2} + \beta {{(Q_{\theta _1}(s,a) - {Q_{{\theta _2}}}(s,a))}^2}]} \]
where
\[{y_1} = r + {\bf{1}}(s')\gamma \max_{a \in {\cal A}} Q_{\theta_2}(s',a) - {Q_{{\theta _1}}}(s,a)\]
and
\[{y_2} = r + {\bf{1}}(s')\gamma \max_{a \in {\cal A}}Q_{\theta_1}(s',a) - {Q_{{\theta _2}}}(s,a)\]

\State Perform a gradient descent step
\[{\theta _1} \leftarrow {\theta _1} - \alpha {\nabla _{{\theta _1}}}{L_1}({\theta _1};B),\quad {\theta _2} \leftarrow {\theta _2} - \alpha {\nabla _{{\theta _2}}}{L_2}({\theta _2};B)\]

\State Set $k \leftarrow k+1$

\EndFor

\EndFor
\end{algorithmic}\label{algo:appendix:SGT2-DQN}
\end{algorithm}

\newpage
\section{Convergence analysis with ODE model}

\subsection{Basics of nonlinear system theory}
Let us consider the nonlinear system
\begin{align}
\frac{d}{dt}x_t=f(x_t),\quad x_0=z,\quad t\in {\mathbb R}_+,\label{eq:nonlinear-system}
\end{align}
where $x_t\in {\mathbb R}^n$ is the state, $f:{\mathbb R}^n \to {\mathbb R}^n$ is a nonlinear mapping, and ${\mathbb R}_+$ denotes the set of nonnegative real numbers. For simplicity, we assume that the solution to~\eqref{eq:nonlinear-system} exists and is unique. In fact, this holds true so long as the mapping $f$ is globally Lipschitz continuous.
\begin{lemma}[{\cite[Theorem~3.2]{khalil2002nonlinear}}]\label{lemma:existence}
Let us consider the nonlinear system~\eqref{eq:nonlinear-system} and assume that $f$ is globally Lipschitz continuous, i.e.,
\begin{align}
&\|f(x)-f(y)\|\le L \|x-y\|,\quad \forall x,y \in {\mathbb R}^n,
\end{align}
for some $L>0$ and norm $\|\cdot\|$, then it has a unique solution $x(t)$ for all $t\geq 0$ and $x(0) \in {\mathbb R}^n$.
\end{lemma}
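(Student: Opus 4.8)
The plan is to rewrite the initial value problem as an equivalent integral equation and solve it by a fixed-point argument, then promote the resulting local solution to a global one using the uniformity of the Lipschitz constant. First I would observe that, since $f$ is continuous, a function $x$ solves~\eqref{eq:nonlinear-system} on an interval $[0,T]$ if and only if it satisfies
\begin{align}
x_t = z + \int_0^t f(x_s)\,ds,\quad t\in[0,T].\nonumber
\end{align}
This recasts the problem as finding a fixed point of the operator $(\Phi x)_t := z + \int_0^t f(x_s)\,ds$ acting on the Banach space $C([0,T];{\mathbb R}^n)$ of continuous trajectories equipped with the sup-norm.

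For existence and uniqueness I would invoke the Banach contraction principle, and there are two natural routes. In the first, one estimates directly with the Lipschitz bound, $\|(\Phi x)_t-(\Phi y)_t\|\le L\int_0^t\|x_s-y_s\|\,ds\le LT\sup_{s\in[0,T]}\|x_s-y_s\|$, so $\Phi$ is a contraction as soon as $LT<1$; this yields a unique solution on any interval of length $T<1/L$. The decisive feature is that this admissible length depends only on $L$, and not on the current state $z$, precisely because the Lipschitz condition holds uniformly over all of ${\mathbb R}^n$. A slicker alternative endows $C([0,T];{\mathbb R}^n)$ with the weighted norm $\|x\|_\lambda:=\sup_{t\in[0,T]}e^{-\lambda t}\|x_t\|$; the same estimate then gives $\|\Phi x-\Phi y\|_\lambda\le(L/\lambda)\|x-y\|_\lambda$, which is a contraction for any $\lambda>L$ and \emph{every} $T$, delivering existence and uniqueness on arbitrarily long intervals in a single application.

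With the uniform local result in hand, I would extend globally by repetition: taking $x_T$ as a fresh initial condition produces a unique solution on $[T,2T]$ that glues smoothly onto the previous piece, and since each step advances time by the same fixed amount $T$, iterating covers all of $[0,\infty)$. Uniqueness on the whole half-line then follows by setting $e_t:=\|x_t-\tilde{x}_t\|$ for two solutions sharing the initial datum, deducing $e_t\le L\int_0^t e_s\,ds$ from the integral form, and applying Gr\"onwall's inequality with $e_0=0$ to force $e_t\equiv 0$.

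The step I expect to be the main obstacle is the passage from local to global existence, namely excluding finite-time blow-up. For a merely \emph{locally} Lipschitz $f$ this genuinely fails, as solutions can escape to infinity in finite time. What rescues the argument is the global hypothesis: it yields the linear growth bound $\|f(x)\|\le\|f(0)\|+L\|x\|$, so Gr\"onwall gives an a priori bound on $\|x_t\|$ over every bounded interval; equivalently, it is what makes the admissible step length $T$ uniform in the state, so the extension procedure never stalls and the solution is well-defined for all $t\ge 0$.
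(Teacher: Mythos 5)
Your proof is correct, but note that the paper itself offers no proof of this lemma at all: it is quoted verbatim as a known result, with the citation to \cite[Theorem~3.2]{khalil2002nonlinear} standing in for the argument. What you have written is essentially the proof given in that cited source—Picard's integral reformulation, a contraction on intervals whose admissible length depends only on $L$ (or, equivalently, the weighted-norm shortcut), followed by gluing and a Gr\"onwall uniqueness step—and you correctly identify the one point where global Lipschitz continuity is indispensable, namely ruling out finite-time blow-up so that the extension never stalls. So there is nothing to reconcile with the paper; your argument faithfully reconstructs the standard proof the authors chose to cite rather than reproduce.
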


An important concept in dealing with the nonlinear system is the equilibrium point. A point $x=x^e$ in the state space is said to be an equilibrium point of~\eqref{eq:nonlinear-system} if it has the property that whenever the state of the system starts at $x^e$, it will remain at $x^e$~\citep{khalil2002nonlinear}. For~\eqref{eq:nonlinear-system}, the equilibrium points are the real roots of the equation $f(x)=0$. The equilibrium point $x^e$ is said to be globally asymptotically stable if for any initial state $x_0 \in {\mathbb R}^n$, $x_t \to x^e$ as $t \to \infty$. Now, we provide a vector comparison principle~\citep{walter1998ordinary,hirsch2006monotone,platzer2018vector} for multi-dimensional O.D.E., which will play a central role in the analysis below. We first introduce the quasi-monotone increasing function, which is a necessary prerequisite for the comparison principle.
\begin{definition}[Quasi-monotone function]\label{def:quasi-monotone}
A vector-valued function $f:{\mathbb R}^n \to {\mathbb R}^n$ with $f:=\begin{bmatrix} f_1 & f_2 & \cdots & f_n\\ \end{bmatrix}^T$ is said to be quasi-monotone increasing if $f_i (x) \le f_i (y)$ holds for all $i \in \{1,2,\ldots,n \}$ and $x,y \in {\mathbb R}^n$ such that $x_i=y_i$ and $x_j\le y_j$ for all $j\neq i$.
\end{definition}

An example of a quasi-monotone increasing function $f$ is $f(x)=Ax$ where $A$ is a Metzler matrix, which implies the off-diagonal elements of $A$ are nonnegative. Now, we introduce a vector comparison principle. For completeness, we provide a simple proof tailored to our interests.
\begin{lemma}[Vector comparison principle {\cite[Theorem~3.2]{hirsch2006monotone}}]\label{lemma:comparision-principle}
 Suppose that $\overline{f}$ and $\underline{f}$ are globally Lipschitz continuous. Let $v_t$ be a solution of the system
\begin{align*}
\frac{d}{dt}x_t=\overline{f}(x_t),\quad x_0\in {\mathbb R}^n,\forall \quad t  \geq 0,
\end{align*}
assume that $\overline{f}$ is quasi-monotone increasing, and let $v_t$ be a solution of the system
\begin{align}
\frac{d}{dt} v_t = \underline{f}(v_t), \quad v_0 < x_0, \quad \forall t \geq 0,\label{eq:lower-system}
\end{align}
where $\underline{f}(v) \le \overline{f}(v)$ holds for any $v \in {\mathbb R}^n$. Then, $v_t\leq x_t$  for all $t \geq 0$.
\end{lemma}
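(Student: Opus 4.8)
The plan is to prove the componentwise bound $v_t \le x_t$ by a first-touch argument on the difference $g_t := x_t - v_t$. By hypothesis $g_0 = x_0 - v_0 > 0$ strictly in every coordinate, and since both $\overline{f}$ and $\underline{f}$ are globally Lipschitz, \cref{lemma:existence} guarantees that $x_t$ and $v_t$ are unique $C^1$ trajectories, so $g_t$ is continuously differentiable. The natural idea is to look at the first time $t^{*}$ at which some coordinate $g_i$ reaches zero and to derive a contradiction from the sign of $\dot{g}_i(t^{*})$. The obstacle, and the reason a one-line argument fails, is that at such a $t^{*}$ the quasi-monotonicity of $\overline{f}$ together with $\underline{f} \le \overline{f}$ yields only $\dot{g}_i(t^{*}) \ge 0$; this is too weak, since it is perfectly consistent with $g_i$ decreasing down to zero and then remaining nonnegative. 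I therefore first build strictness into the inequality by perturbing the lower vector field.

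For $\epsilon > 0$ let $v^{\epsilon}_t$ solve $\tfrac{d}{dt} v^{\epsilon}_t = \underline{f}(v^{\epsilon}_t) - \epsilon \mathbf{1}$ with $v^{\epsilon}_0 = v_0$, where $\mathbf{1}$ is the all-ones vector. Since $\underline{f} - \epsilon \mathbf{1}$ is globally Lipschitz with the same constant as $\underline{f}$, this trajectory exists and is unique by \cref{lemma:existence}, and it satisfies $\underline{f}(v) - \epsilon \mathbf{1} < \overline{f}(v)$ strictly for all $v$. Set $g^{\epsilon}_t := x_t - v^{\epsilon}_t$, so $g^{\epsilon}_0 > 0$, and suppose for contradiction that some coordinate becomes nonpositive; let $t^{*} := \inf\{ t > 0 : \min_i g^{\epsilon}_i(t) \le 0 \}$. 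By continuity $t^{*} > 0$, every coordinate of $g^{\epsilon}_t$ is strictly positive on $[0,t^{*})$, and at $t^{*}$ there is an index $i$ with $g^{\epsilon}_i(t^{*}) = 0$ and $g^{\epsilon}_j(t^{*}) \ge 0$ for all $j$. At this configuration $x_{t^{*}}$ and $v^{\epsilon}_{t^{*}}$ agree in coordinate $i$ and satisfy $v^{\epsilon}_{t^{*}} \le x_{t^{*}}$ in every other coordinate, so quasi-monotonicity of $\overline{f}$ (\cref{def:quasi-monotone}) gives $\overline{f}_i(v^{\epsilon}_{t^{*}}) \le \overline{f}_i(x_{t^{*}})$. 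Combining this with $\underline{f} \le \overline{f}$ yields
\[
\dot{g}^{\epsilon}_i(t^{*}) = \overline{f}_i(x_{t^{*}}) - \underline{f}_i(v^{\epsilon}_{t^{*}}) + \epsilon \ge \overline{f}_i(v^{\epsilon}_{t^{*}}) - \underline{f}_i(v^{\epsilon}_{t^{*}}) + \epsilon \ge \epsilon > 0.
\]
But $g^{\epsilon}_i$ decreases to zero from strictly positive values on $[0,t^{*})$, so its left derivative at $t^{*}$ is at most zero, contradicting $\dot{g}^{\epsilon}_i(t^{*}) \ge \epsilon$. Hence no such $t^{*}$ exists and $v^{\epsilon}_t < x_t$ for all $t \ge 0$.

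Finally I let $\epsilon \to 0$. Because $\underline{f}$ is globally Lipschitz, the solution of $\tfrac{d}{dt}v^{\epsilon}_t = \underline{f}(v^{\epsilon}_t) - \epsilon\mathbf{1}$ depends continuously on the perturbation $\epsilon$, uniformly on compact time intervals, so $v^{\epsilon}_t \to v_t$ for every $t \ge 0$. Passing to the limit in the strict inequality $v^{\epsilon}_t < x_t$ gives $v_t \le x_t$ for all $t \ge 0$, which is the claim. The crux of the argument is the boundary degeneracy at the first-touch time, which is precisely what the $\epsilon$-perturbation removes; once strictness is in place, quasi-monotonicity of $\overline{f}$ controls the coordinate that is about to cross, and continuous dependence on parameters recovers the non-strict conclusion.
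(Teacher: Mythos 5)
Your proof is correct and follows essentially the same route as the paper's: the same $\epsilon$-perturbation of the lower vector field by $-\epsilon\mathbf{1}$, the same first-crossing-time contradiction using quasi-monotonicity of $\overline{f}$ together with $\underline{f}\le\overline{f}$, and the same passage to the limit $\epsilon\to 0$ via continuous dependence on parameters. If anything, your formulation through the difference $g^{\epsilon}_t = x_t - v^{\epsilon}_t$, contrasting $\dot g^{\epsilon}_i(t^*)\ge\epsilon>0$ with the nonpositive left derivative forced by $g^{\epsilon}_i>0$ on $[0,t^*)$, is slightly tighter than the paper's corresponding step, which asserts the reversed inequality on a whole interval $(t^*,t^*+\delta)$ when the definition of the infimum only directly supplies a sequence of violation times.
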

\begin{proof}
Instead of~\eqref{eq:lower-system}, first consider
\begin{align*}
\frac{d}{dt} v_{\varepsilon}(t) = \underline{f}(v_{\varepsilon}(t))-\varepsilon {\bf 1}_n,\quad v_{\varepsilon}(0) < x(0), \quad \forall t \geq 0
\end{align*}
where $\varepsilon>0$ is a sufficiently small real number and ${\bf 1}_n$ is a vector where all elements are ones, where we use a different notation for the time index for convenience. Suppose that the statement is not true, and let
\begin{align*}
t^* :=\inf \{t\ge 0:\exists i\,\,{\rm such\,\,that}\,\,v_{\varepsilon,i}(t)>x_{i}(t)\}<\infty,
\end{align*}
and let $i$ be such index. By the definition of $t^*$, we have that $v_{\varepsilon,i} (t^* ) = x_{i}(t^* )$ and $v_{\varepsilon ,j} (t^* ) \le x_j (t^* )$ for any $j\neq i$. Then, since $\overline{f}$ is quasi-monotone increasing, we have
\begin{align}
\overline{f}_i(v_\varepsilon (t^*))\le \overline{f}_i(x(t^*)).\label{eq:5}
\end{align}

On the other hand, by the definition of $t^*$, there exists a small $\delta >0$ such that
\begin{align*}
v_{\varepsilon,i} (t^*+\Delta t)>x_i (t^*+\Delta t)
\end{align*}
for all $0<\Delta t<\delta$. Dividing both sides by $\Delta t$ and taking the limit $\Delta t \to 0$, we have
\begin{align}
\dot v_{\varepsilon,i}(t^*)\ge\dot x_i(t^*)=\overline{f}_i(x(t^*)).\label{eq:2}
\end{align}

By the hypothesis, it holds that
\begin{align*}
\frac{d}{dt}v_\varepsilon(t)=\underline{f}(v_\varepsilon(t))-\varepsilon {\bf 1}_n < \underline{f}(v_\varepsilon(t))\le \overline{f}(v_\varepsilon(t))
\end{align*}
holds for all $t\geq 0$. The inequality implies $\dot v_{\varepsilon,i}(t)<\overline{f}_i (v_\varepsilon (t))$, which in combination with~\eqref{eq:2} leads to $\overline{f}_i (v_\varepsilon(t^*))>\overline{f}_i(x(t^*))$. However, it contradicts with~\eqref{eq:5}. Therefore, $v_\varepsilon(t)\le x(t)$ holds for all $t\geq 0$. Since the solution $v_\varepsilon(t)$ continuously depends on $\varepsilon >0$ \citep[Chap.~13]{walter1998ordinary}, taking the limit $\varepsilon \to 0$, we conclude $v_0(t)\le x(t)$ holds for all $t\geq 0$. This completes the proof.
\end{proof}

\subsection{Switching system theory}\label{sec:appendix:switching-system}

Let us consider the particular nonlinear system, the \emph{switched linear  system},
\begin{align}
&\frac{d}{dt} x_t=A_{\sigma_t} x_t,\quad x_0=z\in {\mathbb
R}^n,\quad t\in {\mathbb R}_+,\label{eq:switched-system}
\end{align}
where $x_t \in {\mathbb R}^n$ is the state,  $\sigma\in {\mathcal M}:=\{1,2,\ldots,M\}$ is called the mode,  $\sigma_t \in
{\mathcal M}$ is called the switching signal, and $\{A_\sigma,\sigma\in {\mathcal M}\}$ are called the subsystem matrices. The switching signal can be either arbitrary or controlled by the user under a certain switching policy. Especially, a state-feedback switching policy is denoted by $\sigma(x_t)$. To prove the global asymptotic stability of the switching system, we will use a fundamental algebraic stability condition of switching systems reported in~\cite{lin2009stability}.
More comprehensive surveys of stability of switching systems can be found in~\cite{lin2009stability} and~\cite{liberzon2003switching}.
\begin{lemma}[{\cite[Theorem~8]{lin2009stability}}]\label{lemma:fundamental-stability-lemma}
The origin of the linear switching system~\eqref{eq:switched-system} is the unique globally asymptotically stable equilibrium point under arbitrary switchings, $\sigma_t$, if and only if there exist a full column rank matrix , $L\in {\mathbb R}^{m\times n}$, $m\geq n$,
and a family of matrices, $\bar A_\sigma \in {\mathbb R}^{m\times n}, \sigma \in {\cal M}$, with
the so-called strictly negative row dominating diagonal condition, i.e., for each $\bar A_\sigma, \sigma \in {\cal M}$,
its elements satisfying
\begin{align*}
&[\bar A_\sigma]_{ii}+\sum_{j \in \{ 1,2, \ldots ,n\} \backslash \{ i\} } {|[\bar A_\sigma]_{ij}}|<0,\quad \forall i \in \{ 1,2, \ldots ,m\},
\end{align*}
where $[\cdot]_{ij}$ is the $(i,j)$-element of a matrix $(\cdot)$, such that the matrix relations
\begin{align*}
&L A_\sigma=\bar A_\sigma L,\quad\forall\sigma\in {\cal M},
\end{align*}
are satisfied.
\end{lemma}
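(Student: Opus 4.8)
The plan is to treat the two directions of the equivalence separately, with the ``if'' direction being the one actually needed in our convergence analysis. For sufficiency, suppose a full column rank $L$ and matrices $\bar A_\sigma$ exist with the strictly negative row dominating diagonal property and the intertwining relation $LA_\sigma = \bar A_\sigma L$. I would take as a candidate common Lyapunov function $V(x) = \|Lx\|_\infty$. Because $L$ has full column rank, $\ker L = \{0\}$, so $V$ is positive definite and radially unbounded with $V(x)=0$ iff $x=0$; this alone forces the origin to be the unique equilibrium of every subsystem and hence of~\eqref{eq:switched-system}. It remains to show that $V$ decreases uniformly along every mode.

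The core step is the decay estimate. Setting $y_t = Lx_t$, the intertwining relation gives $\dot y_t = LA_{\sigma_t}x_t = \bar A_{\sigma_t}Lx_t = \bar A_{\sigma_t}y_t$, so the lifted variable $y$ obeys a switched system with subsystem matrices $\bar A_\sigma$. I would evaluate the upper Dini derivative of $\|y_t\|_\infty$ at an index $i$ attaining the maximum modulus: for the active mode $\sigma=\sigma_t$, using $\dot y_{t,i} = \sum_j [\bar A_\sigma]_{ij}y_{t,j}$, the sign argument gives $D^+\|y_t\|_\infty \le [\bar A_\sigma]_{ii}|y_{t,i}| + \sum_{j\neq i}|[\bar A_\sigma]_{ij}|\,|y_{t,j}| \le ([\bar A_\sigma]_{ii} + \sum_{j\neq i}|[\bar A_\sigma]_{ij}|)\|y_t\|_\infty \le -c\,\|y_t\|_\infty$, where $c := -\max_{\sigma,i}([\bar A_\sigma]_{ii} + \sum_{j\neq i}|[\bar A_\sigma]_{ij}|) > 0$ is finite and positive by the row dominating diagonal condition together with finiteness of $\mathcal M$. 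This is exactly where the Metzler/quasi-monotone structure enters: the modulus dynamics of $y$ are dominated by the Metzler matrices carrying $|[\bar A_\sigma]_{ij}|$ off the diagonal, which are quasi-monotone increasing in the sense of~\cref{def:quasi-monotone}, so~\cref{lemma:comparision-principle} yields the same bound. Since the estimate holds uniformly over modes, a scalar Gr\"onwall argument gives $V(x_t)\le V(x_0)e^{-ct}$ for every switching signal, and positive definiteness and radial unboundedness of $V$ then give global (uniform) asymptotic, indeed exponential, stability of the origin under arbitrary switching.

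For necessity I would argue through a converse Lyapunov theorem. Global uniform asymptotic stability of~\eqref{eq:switched-system} under arbitrary switching is equivalent to the existence of a common Lyapunov function that can be chosen to be a polyhedral norm, namely the Minkowski functional of a symmetric polytope; see the converse results for switched linear systems in~\cite{lin2009stability,liberzon2003switching}. Such a polyhedral norm is representable as $V(x)=\|Lx\|_\infty$ for a full column rank matrix $L$ whose rows encode the facets of the polar polytope. From the fact that this $V$ strictly decreases along each $A_\sigma$, one extracts for each mode a matrix $\bar A_\sigma$ representing the action of $A_\sigma$ in the lifted coordinates $y=Lx$, i.e. satisfying $LA_\sigma = \bar A_\sigma L$, and shows that the forced contraction of the $\infty$-norm endows $\bar A_\sigma$ with the strictly negative row dominating diagonal structure.

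I expect the necessity direction to be the main obstacle, specifically the construction of $\bar A_\sigma$ satisfying \emph{both} the exact intertwining relation $LA_\sigma = \bar A_\sigma L$ and the required sign/dominance structure, since this couples the converse Lyapunov theorem (existence of a polyhedral common Lyapunov function) with a combinatorial argument relating the polytope vertices and facets to the entries of $\bar A_\sigma$, and must handle the non-uniqueness of the lift from the $n$-dimensional range of $L$ to all of $\mathbb R^m$. As this direction is precisely the content of~\cite[Theorem~8]{lin2009stability}, I would defer to that reference for necessity and present the self-contained Lyapunov/comparison argument above only for the sufficiency direction, which is the part on which our switched-system convergence proofs rely.
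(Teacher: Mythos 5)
Your proposal cannot really diverge from ``the paper's own proof'' here, because the paper offers none: this lemma is imported verbatim as \cite[Theorem~8]{lin2009stability}, with no argument given, and the appendices merely apply it (with square $L$ and $\bar A_\sigma$) to the upper and lower comparison systems. Your sufficiency argument is the standard one underlying the cited result and is sound: with $y_t = Lx_t$, the intertwining relation gives $\dot y_t = \bar A_{\sigma_t} y_t$; the Dini-derivative estimate at a maximizing index yields $D^+\|y_t\|_\infty \le -c\,\|y_t\|_\infty$ with $c>0$ uniform over the finite mode set; and full column rank of $L$ makes $x \mapsto \|Lx\|_\infty$ a genuine norm on ${\mathbb R}^n$, so Gr\"onwall gives uniform exponential decay of $x_t$ under arbitrary switching. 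Deferring necessity to the reference is defensible, since sufficiency is the only direction the paper ever uses (e.g., in~\cref{thm:appendix:AGT2-QL:stability:upper} and~\cref{thm:appendix:SGT2-QL:stability:upper}). Two small repairs are worth noting. First, for $LA_\sigma = \bar A_\sigma L$ to typecheck and for your lifted dynamics $\dot y = \bar A_\sigma y$ to make sense, $\bar A_\sigma$ must lie in ${\mathbb R}^{m\times m}$ and the row-dominance sum should run over $j \in \{1,\ldots,m\}\setminus\{i\}$; the statement as printed in the paper carries a typo (${\mathbb R}^{m\times n}$, inner index up to $n$), which your computation silently corrects in accordance with the original theorem in~\cite{lin2009stability}. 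Second, your claim that positive definiteness of $V$ ``alone forces'' uniqueness of the equilibrium is logically loose — a positive definite function is compatible with multiple equilibria — but the slip is harmless, since your decay estimate $V(x_t)\le V(x_0)e^{-ct}$ immediately rules out any nonzero equilibrium (a constant nonzero solution cannot decay). Finally, the appeal to~\cref{def:quasi-monotone} and~\cref{lemma:comparision-principle} is decorative rather than necessary — the direct Dini/Gr\"onwall computation already closes the bound — though it is not wrong, as the modulus dynamics are indeed dominated by the Metzler matrices with diagonal entries $[\bar A_\sigma]_{ii}$ and off-diagonal entries $|[\bar A_\sigma]_{ij}|$.
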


A more general class of systems is the {\em affine switching system}
\begin{align}
&\frac{d}{dt} x_t=A_{\sigma_t} x_t + b_{\sigma_t},\quad x_0=z\in {\mathbb
R}^n,\quad t\in {\mathbb R}_+,\label{eq:affine-switched-system}
\end{align}
where $b_{\sigma_t} \in {\mathbb R}^n$ is the additional input vector. Due to the additional input $b_{\sigma_k}$, its stabilization becomes much more challenging.

\subsection{ODE-based stochastic approximation}\label{sec:ODE-stochastic-approximation}

Due to its generality, the convergence analyses of many RL algorithms rely on the ODE (ordinary differential equation) approach~\citep{bhatnagar2012stochastic,kushner2003stochastic}. It analyzes convergence of general stochastic recursions by examining stability of the associated ODE model based on the fact that the stochastic recursions with diminishing step-sizes approximate the corresponding ODEs in the limit. One of the most popular approach is based on the Borkar and Meyn theorem~\citep{borkar2000ode}. We now briefly introduce the Borkar and Meyn's ODE approach for analyzing convergence of the general stochastic recursions
\begin{align}
&\theta_{k+1}=\theta_k+\alpha_k (f(\theta_k)+\varepsilon_{k+1})\label{eq:general-stochastic-recursion}
\end{align}
where $f:{\mathbb R}^n \to {\mathbb R}^n$ is a nonlinear mapping. Basic technical assumptions are given below.
\begin{assumption}\label{assumption:1}
$\,$\begin{enumerate}
\item The mapping $f:{\mathbb R}^n  \to {\mathbb R}^n$ is
globally Lipschitz continuous and there exists a function
$f_\infty:{\mathbb R}^n\to {\mathbb R}^n$ such that
\begin{align*}
&\lim_{c\to \infty}\frac{f(c x)}{c}= f_\infty(x),\quad \forall x \in {\mathbb R}^n.
\end{align*}

\item The origin in ${\mathbb R}^n$ is an asymptotically stable
equilibrium for the ODE $\dot x_t=f_\infty (x_t)$.

\item There exists a unique globally asymptotically stable equilibrium
$\theta^e\in {\mathbb R}^n$ for the ODE $\dot x_t=f(x_t)$, i.e., $x_t\to\theta^e$ as $t\to\infty$.

\item The sequence $\{\varepsilon_k,{\cal G}_k,k\ge 1\}$ with ${\cal G}_k=\sigma(\theta_i,\varepsilon_i,i\le k)$
is a Martingale difference sequence. In addition, there exists a constant $C_0<\infty $ such that for any initial $\theta_0\in
{\mathbb R}^n$, we have ${\mathbb E}[\|\varepsilon_{k+1} \|^2 |{\cal G}_k]\le C_0(1+\|\theta_k\|^2),\forall k \ge 0$.

\item The step-sizes satisfy
$\alpha_k>0, \sum_{k=0}^\infty {\alpha_k}=\infty, \sum_{k=0}^\infty{\alpha_k^2}<\infty$.
\end{enumerate}
\end{assumption}

\begin{lemma}[{\cite[Borkar and Meyn theorem]{borkar2000ode}}]\label{lemma:Borkar}
Suppose that~\cref{assumption:1} holds. For any initial $\theta_0\in
{\mathbb R}^n$, $\sup_{k\ge 0} \|\theta_k\|<\infty$
with probability one. In addition, $\theta_k\to\theta^e$ as
$k\to\infty$ with probability one.
\end{lemma}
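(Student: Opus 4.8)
The plan is to follow the two-stage strategy that is the hallmark of the ODE method equipped with the Borkar--Meyn stability argument: first establish almost-sure boundedness of the iterates using the scaling hypothesis, and only then invoke the standard ODE tracking argument to obtain convergence to $\theta^e$.

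First I would set up the continuous-time interpolation. Define the cumulative step-sizes $t_k := \sum_{i=0}^{k-1}\alpha_i$ with $t_0 = 0$, and let $\bar\theta(t)$ be the piecewise-linear interpolation satisfying $\bar\theta(t_k) = \theta_k$. Along this interpolation, the recursion~\eqref{eq:general-stochastic-recursion} is a noisy Euler discretization of the ODE $\dot x_t = f(x_t)$. Because $\sum_k \alpha_k^2 < \infty$ and the error terms form a martingale difference sequence obeying the conditional second-moment bound in~\cref{assumption:1}(4), Doob's maximal inequality shows that the accumulated martingale error over any fixed time window is controlled, so that over bounded horizons the interpolated path is well approximated by genuine solutions of the flow generated by $f$.

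The crux is the stability step, where I expect the main difficulty to lie. Here I would exploit~\cref{assumption:1}(1)--(2) through a fluid/scaling argument. Partition time into consecutive windows of a fixed length $T$, and on each window rescale the trajectory by the norm of its value at the left endpoint. The limiting homogeneity property $\lim_{c\to\infty} f(cx)/c = f_\infty(x)$ implies that whenever the iterate is large the rescaled path approximately obeys the scaled ODE $\dot x_t = f_\infty(x_t)$. Since the origin is asymptotically stable for this scaled ODE, a compactness argument over the unit sphere furnishes a horizon $T$ and a contraction constant $\kappa \in (0,1)$ such that the rescaled solution has norm at most $\kappa$ by time $T$. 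Transferring this back to the unscaled iterates and absorbing the asymptotically negligible martingale and discretization errors yields that whenever $\|\theta\|$ exceeds a fixed threshold it is contracted by a definite fraction over each window; a Borel--Cantelli / supermartingale bookkeeping of these contractions then forces $\sup_{k\ge 0}\|\theta_k\| < \infty$ almost surely. The delicate point is making the error control uniform across infinitely many windows while the rescaling constant itself varies from window to window, which is precisely the technical heart of the Borkar--Meyn boundedness theorem.

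Finally, with boundedness in hand, I would close with the classical ODE convergence step. Once the iterates lie in a bounded set almost surely, the interpolated process $\bar\theta(\cdot)$ is an asymptotic pseudotrajectory of the flow generated by the globally Lipschitz field $f$, so its limit set is nonempty, compact, invariant, and internally chain transitive under that flow. By~\cref{assumption:1}(3), $\theta^e$ is the unique globally asymptotically stable equilibrium of $\dot x_t = f(x_t)$, and the only invariant, internally chain transitive set of a globally asymptotically stable flow is the equilibrium itself. Hence the limit set collapses to $\{\theta^e\}$, giving $\theta_k \to \theta^e$ with probability one, which completes the argument.
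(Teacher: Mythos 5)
The paper never proves this lemma: it is imported as a black box, with the citation \cite{borkar2000ode} serving as the entire justification, and all of the paper's own labor goes into verifying the hypotheses of \cref{assumption:1} for the specific AGT2-QL and SGT2-QL recursions (Lipschitz continuity and scaling limit of $f$, stability of the limiting and original ODEs via the switching-system comparison arguments, and the martingale-difference property of the noise). Your sketch is therefore not comparable to anything in the paper itself; what it does is reconstruct, correctly at the level of an outline, the standard proof from the cited literature. The two-stage architecture you describe --- (i) almost-sure boundedness via the scaling limit $f_\infty$, window-wise rescaling of the piecewise-linear interpolation, and the uniform contraction over a fixed horizon $T$ furnished by asymptotic stability of the origin for $\dot x_t = f_\infty(x_t)$, with the martingale errors controlled through $\sum_k \alpha_k^2 < \infty$ and the conditional second-moment bound of \cref{assumption:1}(4); (ii) convergence of the now-bounded trajectory to the unique globally asymptotically stable equilibrium $\theta^e$ --- is exactly the structure of Borkar and Meyn's argument. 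One genuine difference worth noting: for stage (ii) you invoke Bena\"{\i}m's asymptotic-pseudotrajectory machinery (the limit set is compact, invariant, and internally chain transitive, hence collapses to $\{\theta^e\}$ under global asymptotic stability), whereas Borkar and Meyn argue more directly by comparing the interpolated path with solutions of $\dot x_t = f(x_t)$ over successive time windows and appealing to uniform attraction on compact sets; both routes are standard and yield the same conclusion. You also honestly flag, without executing, the one technically delicate point (uniformity of the error control across infinitely many windows while the rescaling constant varies), which is acceptable here since the statement is a quoted theorem rather than a result the paper establishes, but be aware that your Borel--Cantelli/supermartingale phrasing would need to be replaced by the subsequence-and-contradiction argument of the original paper to be made rigorous.
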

The Borkar and Meyn theorem states that under~\cref{assumption:1}, the stochastic process $(\theta_k)_{k=0}^\infty$ generated by~\eqref{eq:general-stochastic-recursion} is bounded and converges to $\theta^e$ almost surely.

\section{Assumptions and definitions}\label{sec:appendix:assumptions}

In this paper, we focus on the following setting: $\{(s_k,a_k,s_k')\}_{k=0}^{\infty}$ are i.i.d. samples under the behavior policy $\beta$, where the time-invariant behavior policy is the policy by which the RL agent actually behaves to collect experiences. Note that the notation $s_k'$ implies the next state sampled at the time step $k$, which is used instead of $s_{k+1}$ in order to distinguish $s_k'$ from $s_{k+1}$. In this paper, the notation $s_{k+1}$ indicate the current state at the iteration step $k+1$, while it does not depend on $s_k$. For simplicity, we assume that the state at each time is sampled from the stationary state distribution $p$, and in this case, the state-action distribution at each time is identically given by
\begin{align*}
d(s,a) = p (s)b (a|s),\quad (s,a) \in {\cal S} \times {\cal A}.
\end{align*}
In this paper, we assume that the behavior policy $b$ is time-invariant, and this scenario excludes the common method of using the $\varepsilon$-greedy behavior policy with $\varepsilon > 0$ because the $\varepsilon$-greedy behavior policy depends on the current Q-iterate, and hence is time-varying. Moreover, the proposed analysis cannot be easily extended to the analysis of Q-learning with the $\varepsilon$-greedy behavior policy. However, we note that this assumption is common in the ODE approaches for Q-learning and TD-learning~\citep{sutton1988learning}. This assumption can be relaxed by considering a time-varying distribution. However, this direction will not be addressed in this paper to simplify the presentation of the proofs.

Throughout, we make the following assumption for convenience.
\begin{assumption}\label{assumption:positive-distribution}
$d(s,a)> 0$ holds for all $(s,a)\in {\cal S} \times {\cal A}$.
\end{assumption}
This assumption ensures that every state-action pair is visited infinitely often for sufficient exploration. It is applied when the state-action occupation frequency is given and has also been considered in~\cite{li2021sample} and~\cite{chen2024lyapunov}.
The work in~\cite{beck2012error} introduces an alternative exploration condition, known as the cover time condition, which states that within a certain time period, every state-action pair is expected to be visited at least once. Slightly different variations of the cover time condition have been used in~\cite{even2003learning} and~\cite{li2021sample} for convergence rate analysis.

Throughout the paper, we will use the following matrix notations for compact dynamical system representations:
\begin{align*}
P:=& \begin{bmatrix}
   P_1\\
   \vdots\\
   P_{|{\cal A}|}\\
\end{bmatrix},\; R:= \begin{bmatrix}
   R_1 \\
   \vdots \\
   R_{|{\cal A}|} \\
\end{bmatrix},
\; Q:= \begin{bmatrix}
   Q(\cdot,1)\\
  \vdots\\
   Q(\cdot,|{\cal A}|)\\
\end{bmatrix},\\
D_a:=& \begin{bmatrix}
   d(1,a) & & \\
   & \ddots & \\
   & & d(|{\cal S}|,a)\\
\end{bmatrix},
\; D:= \begin{bmatrix}
   D_1 & & \\
    & \ddots  & \\
    & & D_{|{\cal A}|} \\
\end{bmatrix},
\end{align*}
where $P_a = P(\cdot,a,\cdot)\in {\mathbb R}^{|{\cal S}| \times |{\cal S}|}$, $Q(\cdot,a)\in {\mathbb R}^{|{\cal S}|},a\in {\cal A}$ and $R_a(s):={\mathbb E}[r(s,a,s')|s,a]$.
Note that $P\in{\mathbb R}^{|{\cal S}\times {\cal A}| \times |{\cal S}|  }$, $R \in {\mathbb R}^{|{\cal S}\times {\cal A}|}$, $Q\in {\mathbb R}^{|{\cal S}\times {\cal A}|}$, and $D\in {\mathbb R}^{|{\cal S}\times {\cal A}| \times |{\cal S}\times {\cal A}|}$.
In this notation, Q-function is encoded as a single vector $Q \in {\mathbb R}^{|{\cal S}\times {\cal A}|}$, which enumerates $Q(s,a)$ for all $s \in {\cal S}$ and $a \in {\cal A}$. In particular, the single value $Q(s,a)$ can be written as
\begin{align*}
Q(s,a) = (e_a  \otimes e_s )^T Q,
\end{align*}
where $e_s \in {\mathbb R}^{|{\cal S}|}$ and $e_a \in {\mathbb R}^{|{\cal A}|}$ are $s$-th basis vector (all components are $0$ except for the $s$-th component which is $1$) and $a$-th basis vector, respectively. Note also that under~\cref{assumption:positive-distribution}, $D$ is a nonsingular diagonal matrix with strictly positive diagonal elements.

For any stochastic policy, $\pi:{\cal S}\to \Delta_{\cal A}$, where $\Delta_{\cal A}$ is the set of all probability distributions over ${\cal A}$, we define the corresponding action transition matrix as
\begin{align}
\Pi^\pi:=\begin{bmatrix}
   \pi(1)^T \otimes e_1^T\\
   \pi(2)^T \otimes e_2^T\\
    \vdots\\
   \pi(|{\cal S}|)^T \otimes e_{|{\cal S}|}^T \\
\end{bmatrix}\in {\mathbb R}^{|{\cal S}| \times |{\cal S}\times {\cal A}|},\label{eq:swtiching-matrix}
\end{align}
where $e_s \in {\mathbb R}^{|{\cal S}|}$.
Then, it is well known that
$
P\Pi^\pi \in {\mathbb R}^{|{\cal S}\times {\cal A}| \times |{\cal S}\times {\cal A}|}
$
is the transition probability matrix of the state-action pair under policy $\pi$.
If we consider a deterministic policy, $\pi:{\cal S}\to {\cal A}$, the stochastic policy can be replaced with the corresponding one-hot encoding vector
$
\vec{\pi}(s):=e_{\pi(s)}\in \Delta_{\cal A},
$
where $e_a \in {\mathbb R}^{|{\cal A}|}$, and the corresponding action transition matrix is identical to~\eqref{eq:swtiching-matrix} with $\pi$ replaced with $\vec{\pi}$. For any given $Q \in {\mathbb R}^{|{\cal S}\times {\cal A}|}$, denote the greedy policy w.r.t. $Q$ as $\pi_Q(s):=\argmax_{a\in {\cal A}} Q(s,a)\in {\cal A}$.
We will use the following shorthand frequently:
$\Pi_Q:=\Pi^{\pi_Q}.$

\section{Convergence of AGT2-QL}

In this section, we study convergence of AGT2-QL. The full algorithm is described in~\cref{algo:AGT2-QL}.
\begin{algorithm}[h!]
\caption{AGT2-QL}
  \begin{algorithmic}[1]
    \State Initialize $Q_0^A$ and $Q_0^B$ randomly.
    \For{iteration $k=0,1,\ldots$}
    	\State Sample $(s,a)$
        \State Sample $s'\sim P(\cdot|s,a)$ and $r(s,a,s')$
        \State Update $Q^A_{k+1}(s,a)=Q^A_k(s,a)+\alpha_k \{r(s,a,s')+\gamma\max_{a\in {\cal A}} Q^B_k(s',a)-Q^A_k(s,a)\}$
        \State Update $Q^B_{k+1}(s,a)=Q^B_k(s,a)+\alpha_k\beta (Q^A_k(s,a)-Q^B_k(s,a))$
    \EndFor
  \end{algorithmic}\label{algo:AGT2-QL}
\end{algorithm}
We now analyze the convergence of AGT2-QL using the same switching system approach~\cite{lee2020unified}.

In particular, the main theoretical tool is the Borkar and Meyn theorem~\citep{borkar2000ode}. To apply this framework, it is crucial to study the corresponding ODE model and its asymptotic stability. Therefore, in the next subsection, we first derive the ODE model for AGT2-QL.

\subsection{Original system}
Using the notation introduced in~\cref{sec:appendix:assumptions}, the update of can be rewritten as
\begin{align*}
Q_{k + 1}^A =& Q_k^A + {\alpha _k}\left\{ {({e_a} \otimes {e_s}){{({e_a} \otimes {e_s})}^T}R} \right.\\
&\left. { + \gamma (e_a \otimes e_s){{({e_{s'}})}^T}{\max_{a \in {\cal A}}}Q_k^B( \cdot ,a) - ({e_a} \otimes {e_s}){{({e_a} \otimes {e_s})}^T}Q_k^A} \right\}\\
Q_{k + 1}^B =& Q_k^B + {\alpha _k}\beta\left\{ {({e_a} \otimes {e_s}){{({e_a} \otimes {e_s})}^T}Q_k^A - ({e_a} \otimes {e_s}){{({e_a} \otimes {e_s})}^T}Q_k^B} \right\}
\end{align*}
where $e_s \in {\mathbb R}^{|{\cal S}|}$ and $e_a \in {\mathbb R}^{|{\cal A}|}$ are $s$-th basis vector (all components are $0$ except for the $s$-th component which is $1$) and $a$-th basis vector, respectively.
The above update can be further expressed as
\begin{align*}
Q_{k + 1}^A =& Q_k^A + {\alpha _k}\{ DR + \gamma DP{\Pi _{Q_k^B}}Q_k^B - DQ_k^A + \varepsilon _{k + 1}^A\} \\
Q_{k + 1}^B =& Q_k^B + {\alpha _k}\{ \beta DQ_k^A - \beta DQ_k^B + \varepsilon _{k + 1}^B\}
\end{align*}
where
\begin{align*}
\varepsilon _{k + 1}^A =& ({e_a} \otimes {e_s}){({e_a} \otimes {e_s})^T}R + \gamma ({e_a} \otimes {e_s}){({e_{s'}})^T}{\Pi _{Q^B_k}}Q_k^B\\
& - ({e_a} \otimes {e_s}){({e_a} \otimes {e_s})^T}Q_k^A - (DR + \gamma DP{\Pi _{Q_k^B}}Q_k^B - DQ_k^A)\\
\varepsilon _{k + 1}^B =& ({e_a} \otimes {e_s}){({e_a} \otimes {e_s})^T}\beta Q_k^A - ({e_a} \otimes {e_s}){({e_a} \otimes {e_s})^T}\beta Q_k^B - \beta (DQ_k^A - DQ_k^B)
\end{align*}

As discussed in~\cref{sec:ODE-stochastic-approximation}, the convergence of AGT3-QL can be analyzed by evaluating the stability of the corresponding continuous-time ODE
\begin{align}
\frac{d}{{dt}}\left[ {\begin{array}{*{20}{c}}
{Q_t^A}\\
{Q_t^B}
\end{array}} \right] = \left[ {\begin{array}{*{20}{c}}
{ - D}&{\gamma DP{\Pi _{Q_t^B}}}\\
{\beta D}&{ - \beta D}
\end{array}} \right]\left[ {\begin{array}{*{20}{c}}
{Q_t^A}\\
{Q_t^B}
\end{array}} \right] + \left[ {\begin{array}{*{20}{c}}
{DR}\\
0
\end{array}} \right],\quad \left[ {\begin{array}{*{20}{c}}
{Q_0^A}\\
{Q_0^B}
\end{array}} \right] \in {R^{2|S||A|}},,\label{eq:appendix:AGT2-QL:original-system1}
\end{align}

Using the Bellman equation $(\gamma DP\Pi_{Q^*}-D)Q^*+DR=0$, the above expressions can be rewritten by
\begin{align}
\frac{d}{{dt}}\left[ {\begin{array}{*{20}{c}}
{Q_t^A - {Q^*}}\\
{Q_t^B - {Q^*}}
\end{array}} \right] =& \left[ {\begin{array}{*{20}{c}}
{ - D}&{\gamma DP{\Pi _{Q_t^B}}}\\
{\beta D}&{ - \beta D}
\end{array}} \right]\left[ {\begin{array}{*{20}{c}}
{Q_t^A - {Q^*}}\\
{Q_t^B - {Q^*}}
\end{array}} \right] + \left[ {\begin{array}{*{20}{c}}
{\gamma DP({\Pi _{Q_t^B}} - {\Pi _{{Q^*}}}){Q^*}}\\
0
\end{array}} \right],\nonumber\\
\left[ {\begin{array}{*{20}{c}}
{Q_0^A - {Q^*}}\\
{Q_0^B - {Q^*}}
\end{array}} \right] =& z \in {\mathbb R}^{2|{\cal S}||{\cal A}|}.\label{eq:appendix:AGT2-QL:original-system2}
\end{align}

The above system is a linear switching system discussed in~\cref{sec:appendix:switching-system}. More precisely, let $\Theta$ be the set of all deterministic policies, let us define a one-to-one mapping $\varphi :\Theta  \to \{ 1,2, \ldots ,|\Theta \times \Theta |\}$ from two deterministic policies $(\pi_A,\pi_B) \in \Theta \times \Theta$ to an integer in $\{ 1,2, \ldots ,|\Theta \times \Theta|\}$, and define
\begin{align*}
A_i = \left[ {\begin{array}{*{20}{c}}
{ - D}&{\gamma DP{\Pi ^{{\pi_B}}}}\\
{\beta D}&{ - \beta D}
\end{array}} \right] \in {\mathbb R}^{2|{\cal S} \times {\cal A}| \times 2|{\cal S} \times {\cal A}|},\quad b_i = \left[ {\begin{array}{*{20}{c}}
\gamma DP(\Pi^{\pi_B} - \Pi ^{\pi ^*})Q^*\\
0
\end{array}} \right] \in {\mathbb R}^{2|{\cal S} \times {\cal A}|}
\end{align*}
for all $i = \varphi (\pi_A,\pi_B )$ and $(\pi_A,\pi_B)  \in \Theta\times \Theta$.
Then, the above ODE can be written by the affine switching system
\begin{align*}
\frac{d}{dt}x_t= A_{\sigma(x_t)}x_t + b_{\sigma(x_t)},\quad x_0=z\in {\mathbb R}^{|{\cal S}||{\cal A}|},
\end{align*}
where
\begin{align*}
{x_t}: = \left[ {\begin{array}{*{20}{c}}
{Q_t^A - {Q^*}}\\
{Q_t^B - {Q^*}}
\end{array}} \right]
\end{align*}
is the state, $\sigma: {\mathbb R}^{|{\cal S}||{\cal A}|}\to \{1,2,\ldots ,|\Theta\times \Theta|\}$ is a state-feedback switching policy defined by $\sigma(x_t):=\psi(\pi_{Q^A_t},\pi_{Q^B_t})$, and $\pi_{Q}(s)=\argmax_{a\in {\cal A}}Q(s,a)$.

Until now, we have derived an ODE model of AGT2-QL. The next goal is to prove its asymptotic stability which is essential step to apply the Borkar and Meyn theorem~\citep{borkar2000ode}. 
Notably, proving the global asymptotic stability of the above switching system without the affine term is relatively straightforward using~\cref{lemma:fundamental-stability-lemma}. However, existing theories do not support switching systems that include affine terms.

To address this issue, we construct two comparison systems by leveraging the special structure of the switching system and the greedy policy, and we establish their global asymptotic stability. Building on the vector comparison principle introduced in~\cref{lemma:comparision-principle}, we then prove the asymptotic stability of the desired affine switching system. We note that this approach was first proposed in~\cite{lee2020unified}, and we follow its framework with modifications tailored to the proposed algorithms.

To proceed, defining the vector functions
\begin{align*}
f(x_1,x_2):=&\begin{bmatrix}
   f_1(x_1,x_2)\\
   f_2(x_1,x_2)\\
\end{bmatrix}:=\begin{bmatrix}
   -D & \gamma DP\Pi_{x_2+ Q^*}\\
   \beta D & -\beta D\\
\end{bmatrix} \begin{bmatrix}
   x_1\\
   x_2\\
\end{bmatrix}+\begin{bmatrix}
   \gamma DP(\Pi_{x_2+Q^*}-\Pi_{Q^*})Q^*\\
   0\\
\end{bmatrix},
\end{align*}
\eqref{eq:appendix:AGT2-QL:original-system2} can be written by the systems
\begin{align}
\frac{d}{dt}\begin{bmatrix}
   x_{t,1}\\
   x_{t,2}\\
\end{bmatrix}=\begin{bmatrix}
   f_1(x_{t,1},x_{t,2})\\
   f_2(x_{t,1},x_{t,2})\\
\end{bmatrix},\quad z_0 = \begin{bmatrix}
   Q_0^A-Q^*\\
   Q_0^B-Q^*\\
\end{bmatrix},\quad \forall t \geq 0\label{eq:appendix:4}
\end{align}
with
\[\left[ {\begin{array}{*{20}{c}}
{{x_{t,1}}}\\
{{x_{t,2}}}
\end{array}} \right] = \left[ {\begin{array}{*{20}{c}}
{Q_t^A - {Q^*}}\\
{Q_t^B - {Q^*}}
\end{array}} \right]\]
In the following, we present several lemmas to support the main analysis results.
\begin{lemma}\label{thm:appendix:AGT2-QL:property-f}
We have
\begin{align*}
f({x_1},{x_2}) =\left[ {\begin{array}{*{20}{c}}
{ - D{x_1} + \gamma DP{\Pi _{{x_2} + {Q^*}}}({x_2} + {Q^*}) - \gamma DP{\Pi _{{Q^*}}}{Q^*}}\\
{\beta D{x_1} - \beta D{x_2}}
\end{array}} \right]
\end{align*}
\end{lemma}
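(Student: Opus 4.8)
The plan is to verify this identity by a direct algebraic expansion of the definition of $f$, since the asserted expression differs from the definition only in the way the terms involving the matrix $\gamma D P \Pi_{x_2 + Q^*}$ are grouped. First I would expand the block matrix--vector product together with the affine input. The first block row yields $f_1(x_1,x_2) = -D x_1 + \gamma D P \Pi_{x_2+Q^*} x_2 + \gamma D P (\Pi_{x_2+Q^*} - \Pi_{Q^*}) Q^*$, and the second block row yields $f_2(x_1,x_2) = \beta D x_1 - \beta D x_2$, because the affine input has a zero second component.

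The only step requiring any care is collecting the two occurrences of $\gamma D P \Pi_{x_2+Q^*}$ appearing in $f_1$. Distributing the affine contribution as $\gamma D P \Pi_{x_2+Q^*} Q^* - \gamma D P \Pi_{Q^*} Q^*$ and then merging $\gamma D P \Pi_{x_2+Q^*} x_2$ with $\gamma D P \Pi_{x_2+Q^*} Q^*$ via distributivity of matrix multiplication over vector addition gives $\gamma D P \Pi_{x_2+Q^*}(x_2 + Q^*) - \gamma D P \Pi_{Q^*} Q^*$. This is exactly the claimed first component, and $f_2$ is already in the stated form, so the two expressions coincide.

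I do not anticipate any genuine obstacle here: the statement is merely a reparametrization of the definition of $f$, established in a single line of rearrangement rather than through any substantive argument. Its role is organizational rather than technical — it records a form of $f$ in which the greedy transition matrix $\Pi_{x_2+Q^*}$ acts on the shifted iterate $x_2 + Q^* = Q_t^B$, which will streamline the later comparison with the optimal Bellman operator and the construction of the upper and lower comparison systems invoked in the ensuing stability analysis via \cref{lemma:comparision-principle}.
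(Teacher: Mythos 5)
Your proposal is correct and matches the paper's own proof: both expand the block matrix--vector product plus the affine term row by row, then merge $\gamma DP\Pi_{x_2+Q^*}x_2$ with $\gamma DP\Pi_{x_2+Q^*}Q^*$ by distributivity to obtain $\gamma DP\Pi_{x_2+Q^*}(x_2+Q^*)-\gamma DP\Pi_{Q^*}Q^*$, with the second block row unchanged. There is no substantive difference between your argument and the paper's.
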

\begin{proof}
It can be proved through
\begin{align*}
f({x_1},{x_2}) =& \left[ {\begin{array}{*{20}{c}}
{ - D}&{\gamma DP{\Pi _{{x_2} + {Q^*}}}}\\
{\beta D}&{ - \beta D}
\end{array}} \right]\left[ {\begin{array}{*{20}{c}}
{{x_1}}\\
{{x_2}}
\end{array}} \right] + \left[ {\begin{array}{*{20}{c}}
{\gamma DP({\Pi _{{x_2} + {Q^*}}} - {\Pi _{{Q^*}}}){Q^*}}\\
0
\end{array}} \right]\\
=& \left[ {\begin{array}{*{20}{c}}
{ - D{x_1} + \gamma DP{\Pi _{{x_2} + {Q^*}}}{x_2} + \gamma DP({\Pi _{{x_2} + {Q^*}}} - {\Pi _{{Q^*}}}){Q^*}}\\
{\beta D{x_1} - \beta D{x_2}}
\end{array}} \right]\\
=& \left[ {\begin{array}{*{20}{c}}
{ - D{x_1} + \gamma DP{\Pi _{{x_2} + {Q^*}}}({x_2} + {Q^*}) - \gamma DP{\Pi _{{Q^*}}}{Q^*}}\\
{\beta D{x_1} - \beta D{x_2}}
\end{array}} \right],
\end{align*}
which completes the proof.
\end{proof}

\begin{lemma}\label{thm:appendix:AGT2-QL:quasi-monotone:f}
$f$ is quasi-monotone increasing.
\end{lemma}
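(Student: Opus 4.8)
The plan is to verify \cref{def:quasi-monotone} directly, starting from the closed form for $f$ established in \cref{thm:appendix:AGT2-QL:property-f}. Fix $x,y\in{\mathbb R}^{2|{\cal S}||{\cal A}|}$ with $x_i=y_i$ and $x_j\le y_j$ for all $j\neq i$; equivalently, $x\le y$ componentwise with equality in the $i$-th slot. I would split the $2|{\cal S}||{\cal A}|$ coordinates into the ``$x_1$ block'' (the first $|{\cal S}||{\cal A}|$ coordinates, governed by $f_1$) and the ``$x_2$ block'' (governed by $f_2$), and check $f_i(x)\le f_i(y)$ in each case.

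For a coordinate $i$ in the $x_2$ block, say corresponding to $(s,a)$, \cref{thm:appendix:AGT2-QL:property-f} gives $f_2(x_1,x_2)=\beta Dx_1-\beta Dx_2$, so the $(s,a)$-entry equals $\beta d(s,a)x_{1,(s,a)}-\beta d(s,a)x_{2,(s,a)}$. Because $D$ is diagonal with strictly positive entries, the self-coupling term $-\beta d(s,a)x_{2,(s,a)}$ depends only on the fixed coordinate $i$ and is unchanged, while the cross term $\beta d(s,a)x_{1,(s,a)}$ is non-decreasing since $x_{1,(s,a)}\le y_{1,(s,a)}$ and $\beta d(s,a)>0$. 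Hence $f_{2,(s,a)}(x)\le f_{2,(s,a)}(y)$; this block is a routine linear check.

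The substantive case is a coordinate $i$ in the $x_1$ block, corresponding to some $(s,a)$, where $f_1(x_1,x_2)=-Dx_1+\gamma DP\Pi_{x_2+Q^*}(x_2+Q^*)-\gamma DP\Pi_{Q^*}Q^*$. The diagonal term $-d(s,a)x_{1,(s,a)}$ depends only on the fixed coordinate $i$ and is unchanged, and the final term is constant, so everything reduces to showing that the map $x_2\mapsto\Pi_{x_2+Q^*}(x_2+Q^*)$ is monotone increasing in $x_2$; after that, entrywise non-negativity of $\gamma$, $D$, and $P$ preserves the monotonicity through left-multiplication by $\gamma DP$.

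The key observation, and the one place requiring care, is that $\Pi_{x_2+Q^*}(x_2+Q^*)$ is exactly the greedy-value (elementwise maximum) operation: its $s$-th entry equals $\max_{a\in{\cal A}}\big(x_{2,(s,a)}+Q^*(s,a)\big)$. Although the selected policy $\Pi_{x_2+Q^*}$ itself varies with $x_2$, the max operator is monotone, so $x_2\le y_2$ implies $\max_{a\in{\cal A}}(x_{2,(s,a)}+Q^*(s,a))\le\max_{a\in{\cal A}}(y_{2,(s,a)}+Q^*(s,a))$ for every $s$. Since all $x_2$-coordinates lie among the ``other'' coordinates $j\neq i$, we have $x_2\le y_2$, and this yields $f_{1,(s,a)}(x)\le f_{1,(s,a)}(y)$. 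Together the two cases establish that $f$ is quasi-monotone increasing. I expect the main obstacle to be precisely articulating this monotonicity of the greedy operation despite the policy-dependence of $\Pi_{x_2+Q^*}$, since one must not mistakenly treat it as a fixed linear map.
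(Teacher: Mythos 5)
Your proposal is correct and takes essentially the same route as the paper's proof: both verify the quasi-monotone condition blockwise from the closed form in \cref{thm:appendix:AGT2-QL:property-f}, using the diagonality of $D$ to eliminate the fixed $i$-th coordinate term and the monotonicity of the greedy row-max map $x_2 \mapsto \Pi_{x_2+Q^*}(x_2+Q^*)$ together with entrywise nonnegativity of $DP$ to handle the cross term. If anything, you spell out explicitly the max-monotonicity justification that the paper's inequality step leaves implicit.
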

\begin{proof}
We will check the condition of the quasi-monotone increasing function for $f_1$ and $f_2$, separately. Assume that $\Delta x_1\in {\mathbb R}^{|{\cal S}|||{\cal A}|}$ and $\Delta x_2\in {\mathbb R}^{|{\cal S}|||{\cal A}|}$ are nonnegative vectors, and an $i$the element of $\Delta x_1$ is zero.
For $f_1$, using~\cref{thm:appendix:AGT2-QL:property-f}, we have
\begin{align*}
&e_i^T{f_1}({x_1} + \Delta {x_1},{x_2} + \Delta {x_2})\\
 =&  - e_i^TD({x_1} + \Delta {x_1}) + \gamma e_i^TDP{\Pi _{{x_2} + \Delta {x_2} + {Q^*}}}({x_2} + \Delta {x_2} + {Q^*}) - \gamma e_i^TDP{\Pi _{{Q^*}}}{Q^*}\\
=&  - e_i^TD{x_1} + \gamma e_i^TDP{\Pi _{{x_2} + \Delta {x_2} + {Q^*}}}({x_2} + \Delta {x_2} + {Q^*}) - \gamma e_i^TDP{\Pi _{{Q^*}}}{Q^*}\\
\ge&  - e_i^TD{x_1} + \gamma e_i^TDP{\Pi _{{x_2} + {Q^*}}}({x_2} + {Q^*}) - \gamma e_i^TDP{\Pi _{{Q^*}}}{Q^*}\\
=& e_i^T{f_1}({x_1},{x_2}),
\end{align*}
where the second line is due to $-e_i^T D \Delta x_1 = 0$. Similarly, assuming that $\Delta x_1\in {\mathbb R}^{|{\cal S}|||{\cal A}|}$ and $\Delta x_2\in {\mathbb R}^{|{\cal S}|||{\cal A}|}$ are nonnegative vectors, and an $i$the element of $\Delta x_2$ is zero, we get
\begin{align*}
e_i^T f_2(x_1+\Delta x_1,x_2+\Delta x_2)=&\beta e_i^T D (x_1+\Delta x_1)-\beta e_i^T D (x_2+\Delta x_2)\\
=&\beta e_i^T D (x_1+\Delta x_1)-\beta e_i^T D x_2\\
\ge&\beta e_i^T D x_1- \beta e_i^T D x_2\\
=& e_i^T f_2(x_1,x_2),
\end{align*}
where the second line is due to $e_i^T D \Delta x_2=0$. Therefore, $f$ is quasi-monotone increasing.
\end{proof}

\begin{lemma}\label{thm:appendix:AGT2-QL:Lipschits:f}
$f$ is globally Lipshcitz continuous.
\end{lemma}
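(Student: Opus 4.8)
The plan is to exploit the explicit form of $f$ given in~\cref{thm:appendix:AGT2-QL:property-f} and to split the map into an affine part, which is trivially Lipschitz, and a single nonlinear term governed by the greedy maximization, which is nonexpansive. First I would observe that the second block $f_2(x_1,x_2)=\beta Dx_1-\beta Dx_2$ is linear in $(x_1,x_2)$, hence globally Lipschitz with constant at most $\beta\|D\|_\infty$. The only nontrivial piece is the first block $f_1$, and within it the term $\gamma DP\,\Pi_{x_2+Q^*}(x_2+Q^*)$, where the greedy policy matrix $\Pi_{x_2+Q^*}$ itself depends on $x_2$.

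The key observation is that the composite map $Q\mapsto \Pi_Q Q$ is precisely the state-wise maximum over actions, i.e.\ $(\Pi_Q Q)(s)=\max_{a\in{\cal A}}Q(s,a)$, and this max operator is nonexpansive in the sup-norm. Concretely, defining $M$ by $(Mu)(s):=\max_{a\in{\cal A}}u(s,a)$, for any $u,v$ one has $|\max_a u(s,a)-\max_a v(s,a)|\le \max_a|u(s,a)-v(s,a)|\le\|u-v\|_\infty$, so $\|Mu-Mv\|_\infty\le\|u-v\|_\infty$. Writing $\Pi_{x_2+Q^*}(x_2+Q^*)=M(x_2+Q^*)$ and noting that the shift by the fixed vector $Q^*$ leaves the Lipschitz constant unchanged, I would obtain
\[
\|M(x_2+Q^*)-M(y_2+Q^*)\|_\infty\le\|x_2-y_2\|_\infty.
\]

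With this in hand, for any two points $(x_1,x_2)$ and $(y_1,y_2)$ the difference $f_1(x_1,x_2)-f_1(y_1,y_2)$ equals $-D(x_1-y_1)+\gamma DP\big(M(x_2+Q^*)-M(y_2+Q^*)\big)$, since the constant term $-\gamma DP\,\Pi_{Q^*}Q^*$ cancels. The triangle inequality together with the nonexpansiveness bound then yields $\|f_1(x_1,x_2)-f_1(y_1,y_2)\|_\infty\le\|D\|_\infty\|x_1-y_1\|_\infty+\gamma\|DP\|_\infty\|x_2-y_2\|_\infty$. Combining the estimates for $f_1$ and $f_2$ gives a single Lipschitz constant $L$ for $f$ depending only on $\gamma$, $\beta$, $\|D\|_\infty$ and $\|DP\|_\infty$; since $D$ is a diagonal matrix with bounded positive entries and $P$ is row-stochastic, each of these quantities is finite, which establishes the claim.

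I expect the main---and essentially only---obstacle to be the state-dependent greedy matrix $\Pi_{x_2+Q^*}$: a naive attempt to bound the product $\Pi_{x_2+Q^*}(x_2+Q^*)$ by separately controlling $\Pi_{x_2+Q^*}$ and its argument fails, because $\Pi_{\,\cdot\,}$ is discontinuous wherever the argmax changes. The resolution, which is the crux of the argument, is to recognize that this product collapses to the max operator $M$ and never to treat the policy matrix in isolation; once this is seen, the nonexpansiveness of the max does all the work.
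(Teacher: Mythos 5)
Your proof is correct and takes essentially the same route as the paper's: decompose $f$ into a linear block plus the single nonlinear term $\gamma DP\,\Pi_{x_2+Q^*}(x_2+Q^*)$ plus a constant, apply the triangle inequality, and bound the nonlinear term by the nonexpansiveness of the greedy maximum. You are in fact more explicit than the paper, which asserts the key inequality $\left\|\Pi_{x_2+Q^*}(x_2+Q^*)-\Pi_{y_2+Q^*}(y_2+Q^*)\right\|_\infty\le\left\|x_2-y_2\right\|_\infty$ without spelling out that it follows from the identity $(\Pi_Q Q)(s)=\max_{a\in{\cal A}}Q(s,a)$ and the sup-norm nonexpansiveness of the max operator.
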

\begin{proof}
From~\cref{thm:appendix:AGT2-QL:property-f}, one gets
\begin{align*}
f({x_1},{x_2}) = \left[ {\begin{array}{*{20}{c}}
{ - D}&0\\
{\beta D}&{ - \beta D}
\end{array}} \right]\left[ {\begin{array}{*{20}{c}}
{{x_1}}\\
{{x_2}}
\end{array}} \right] + \left[ {\begin{array}{*{20}{c}}
{\gamma DP{\Pi _{{x_2} + {Q^*}}}({x_2} + {Q^*})}\\
0
\end{array}} \right] + \left[ {\begin{array}{*{20}{c}}
{ - \gamma DP{\Pi _{{Q^*}}}{Q^*}}\\
0
\end{array}} \right]
\end{align*}
Therefore, we have the inequalities
\begin{align*}
&{\left\| {f({x_1},{x_2}) - f({y_1},{y_2})} \right\|_\infty }\\
\le& {\left\| {\left[ {\begin{array}{*{20}{c}}
{ - D}&0\\
{\beta D}&{ - \beta D}
\end{array}} \right]\left[ {\begin{array}{*{20}{c}}
{{x_1}}\\
{{x_2}}
\end{array}} \right] - \left[ {\begin{array}{*{20}{c}}
{ - D}&0\\
{\beta D}&{ - \beta D}
\end{array}} \right]\left[ {\begin{array}{*{20}{c}}
{{y_1}}\\
{{y_2}}
\end{array}} \right]} \right\|_\infty }\\
& + {\left\| {\gamma DP{\Pi _{{x_2} + {Q^*}}}({x_2} + {Q^*}) - \gamma DP{\Pi _{{y_2} + {Q^*}}}({y_2} + {Q^*})} \right\|_\infty }\\
\le& {\left\| {\left[ {\begin{array}{*{20}{c}}
{ - D}&0\\
{\beta D}&{ - \beta D}
\end{array}} \right]} \right\|_\infty }{\left\| {\left[ {\begin{array}{*{20}{c}}
{{x_1}}\\
{{x_2}}
\end{array}} \right] - \left[ {\begin{array}{*{20}{c}}
{{y_1}}\\
{{y_2}}
\end{array}} \right]} \right\|_\infty } + {\left\| {\gamma DP} \right\|_\infty }{\left\| {{\Pi _{{x_2} + {Q^*}}}({x_2} + {Q^*}) - {\Pi _{{y_2} + {Q^*}}}({y_2} + {Q^*})} \right\|_\infty }\\
\le& {\left\| {\left[ {\begin{array}{*{20}{c}}
{ - D}&0\\
{\beta D}&{ - \beta D}
\end{array}} \right]} \right\|_\infty }{\left\| {\left[ {\begin{array}{*{20}{c}}
{{x_1}}\\
{{x_2}}
\end{array}} \right] - \left[ {\begin{array}{*{20}{c}}
{{y_1}}\\
{{y_2}}
\end{array}} \right]} \right\|_\infty } + {\left\| {\gamma DP} \right\|_\infty }{\left\| {\left[ {\begin{array}{*{20}{c}}
{{x_1}}\\
{{x_2}}
\end{array}} \right] - \left[ {\begin{array}{*{20}{c}}
{{y_1}}\\
{{y_2}}
\end{array}} \right]} \right\|_\infty }
\end{align*}
indicating that $f$ is globally Lipschitz continuous with respect to the $\|\cdot\|_\infty$ norm. This completes the proof.
\end{proof}

Our main goal here is to establish the asymptotic stability of the system given in~\eqref{eq:appendix:AGT2-QL:original-system2}. To this end, we will apply the tools introduced in~\cref{sec:appendix:switching-system}. In particular, we will derive the upper and lower comparison systems as discussed before.

\subsection{Upper comparison system}
We consider the following system which is called the upper comparison system:
\begin{align}
&\frac{d}{dt}\begin{bmatrix}
   Q_t^{A,u}-Q^*\\
   Q_t^{B,u}-Q^*\\
\end{bmatrix}=\begin{bmatrix}
   - D & \gamma DP\Pi_{Q_t^{B,u}-Q^*}\\
   \beta D & -\beta D\\
\end{bmatrix} \begin{bmatrix}
   Q_t^{A,u}-Q^*\\
   Q_t^{B,u}-Q^*\\
\end{bmatrix},\;\begin{bmatrix}
   Q_0^{A,u}-Q^*\\
   Q_0^{B,u}-Q^*\\
\end{bmatrix}> \begin{bmatrix}
   Q_0^A-Q^*\\
   Q_0^B-Q^*\\
\end{bmatrix}\in {\mathbb R}^{2|{\cal S}||{\cal A}|},\label{eq:appendix:AGT2-QL:upper-system}
\end{align}
where `$>$' in the above equation implies the element-wise inequality.

Defining the vector functions
\begin{align*}
h(x_1,x_2):=&\begin{bmatrix}
   h_1(x_1,x_2)\\
   h_2(x_1,x_2)\\
\end{bmatrix}:=\begin{bmatrix}
   -D & \gamma DP\Pi_{x_2}\\
   \beta D & - \beta D\\
\end{bmatrix} \begin{bmatrix}
   x_1\\
   x_2\\
\end{bmatrix}
\end{align*}
the upper comparison system can be written by the system
\begin{align*}
\frac{d}{dt}\begin{bmatrix}
   x_{t,1}\\
   x_{t,2}\\
\end{bmatrix}=\begin{bmatrix}
   h_1(x_{t,1},x_{t,2})\\
   h_2(x_{t,1},x_{t,2})\\
\end{bmatrix},\quad x_0 > \begin{bmatrix}
   Q_0^A-Q^*\\
   Q_0^B-Q^*\\
\end{bmatrix},\quad \forall t \geq 0
\end{align*}
with
\[\left[ {\begin{array}{*{20}{c}}
{{x_{t,1}}}\\
{{x_{t,2}}}
\end{array}} \right] = \left[ {\begin{array}{*{20}{c}}
{Q_t^{A,u} - {Q^*}}\\
{Q_t^{B,u} - {Q^*}}
\end{array}} \right].\]

In the following lemmas, we establish key properties of the upper comparison system.
\begin{lemma}\label{thm:appendix:AGT2-QL:quasi-monotone:h}
$h$ is quasi-monotone increasing.
\end{lemma}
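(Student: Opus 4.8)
The plan is to follow the template of the proof of \cref{thm:appendix:AGT2-QL:quasi-monotone:f}, verifying the defining inequality of \cref{def:quasi-monotone} separately for the two blocks $h_1$ and $h_2$. Writing out $h_1(x_1,x_2) = -Dx_1 + \gamma DP\Pi_{x_2}x_2$ and $h_2(x_1,x_2) = \beta Dx_1 - \beta Dx_2$, I would fix an index $i$ and take nonnegative increments $\Delta x_1 \ge 0$ and $\Delta x_2 \ge 0$, where the $i$-th entry of $\Delta x_1$ vanishes when testing the $h_1$ block and the $i$-th entry of $\Delta x_2$ vanishes when testing the $h_2$ block, exactly as in the definition of quasi-monotonicity.

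For the $h_1$ block, I would evaluate $e_i^T h_1(x_1+\Delta x_1, x_2 + \Delta x_2)$. Since $D$ is diagonal and the $i$-th entry of $\Delta x_1$ is zero, one has $e_i^T D\Delta x_1 = 0$, so the $-Dx_1$ contribution is unaffected by the perturbation. For the remaining term, the key observation is that $\Pi_{x_2}x_2$ is precisely the coordinate-wise greedy value $\max_{a\in{\cal A}} x_2(\cdot,a)$, which is monotone in $x_2$: from $\Delta x_2 \ge 0$ it follows that $\Pi_{x_2+\Delta x_2}(x_2+\Delta x_2) \ge \Pi_{x_2}x_2$ entrywise. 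Because $DP$ has nonnegative entries ($D$ is a nonnegative diagonal matrix and $P$ is row-stochastic), left-multiplication by $\gamma e_i^T DP \ge 0$ preserves this inequality, yielding $e_i^T h_1(x_1+\Delta x_1, x_2+\Delta x_2) \ge e_i^T h_1(x_1,x_2)$. For the $h_2$ block, I would expand $e_i^T h_2(x_1+\Delta x_1, x_2+\Delta x_2) = \beta e_i^T Dx_1 + \beta e_i^T D\Delta x_1 - \beta e_i^T Dx_2 - \beta e_i^T D\Delta x_2$; the vanishing $i$-th entry of $\Delta x_2$ kills the last term, while $\beta e_i^T D\Delta x_1 \ge 0$ by nonnegativity of $D$ and $\Delta x_1$, giving $e_i^T h_2(x_1+\Delta x_1, x_2+\Delta x_2) \ge e_i^T h_2(x_1,x_2)$. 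Combining the two cases establishes the claim.

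I expect the only substantive point to be the monotonicity of the max term $\Pi_{x_2}x_2$ under nonnegative perturbations of $x_2$; everything else is a direct consequence of the Metzler-type structure of the coefficient matrix (nonnegative off-block coupling through $\gamma DP$ and $\beta D$) together with the diagonal nonnegativity of $D$. This argument mirrors \cref{thm:appendix:AGT2-QL:quasi-monotone:f} but is in fact simpler, since $h$ omits the affine terms involving $Q^*$ that appear in $f$.
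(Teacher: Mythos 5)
Your proposal is correct and follows essentially the same argument as the paper: checking the quasi-monotonicity condition blockwise, using $e_i^T D\Delta x_1 = 0$ (resp. $e_i^T D\Delta x_2 = 0$) for the diagonal terms and nonnegativity of $D$, $\gamma DP$ for the coupling terms. The only difference is cosmetic — you spell out explicitly that $\Pi_{x_2+\Delta x_2}(x_2+\Delta x_2)\ge \Pi_{x_2}x_2$ entrywise is the monotonicity of the coordinate-wise max, a fact the paper's inequality step uses implicitly.
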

\begin{proof}
We will check the condition of the quasi-monotone increasing function for $h_1$ and $h_2$, separately. Assume that $\Delta x_1\in {\mathbb R}^{|{\cal S}|||{\cal A}|}$ and $\Delta x_2\in {\mathbb R}^{|{\cal S}|||{\cal A}|}$ are nonnegative vectors, and an $i$the element of $\Delta x_1$ is zero. For $h_1$, we have
\begin{align*}
e_i^T h_1(x_1+\Delta x_1,x_2+\Delta x_2)=&-e_i^T D(x_1+\Delta x_1)+\gamma e_i^T DP\Pi_{x_2+\Delta x_2}(x_2+\Delta x_2)\\
=&-e_i^T D x_1+\gamma e_i^T DP\Pi_{x_2+\Delta x_2}(x_2+\Delta x_2)\\
\ge& -e_i^T Dy_1+\gamma e_i^T DP\Pi_{x_2} x_2\\
=& e_i^T h_1(x_1,x_2),
\end{align*}
where the second line is due to $-e_i^T D \Delta x_1 = 0$. Similarly, assuming that $\Delta x_1\in {\mathbb R}^{|{\cal S}|||{\cal A}|}$ and $\Delta x_2\in {\mathbb R}^{|{\cal S}|||{\cal A}|}$ are nonnegative vectors, and an $i$the element of $\Delta x_2$ is zero, we get
\begin{align*}
e_i^T h_2(x_1+\Delta x_1,x_2+\Delta x_2)=&\beta e_i^T D (x_1+\Delta x_1)-\beta e_i^T D (x_2+\Delta x_2)\\
=&\beta e_i^T D (x_1+\Delta x_1)-\beta e_i^T D x_2\\
\ge&\beta e_i^T D x_1- \beta e_i^T D x_2\\
=& e_i^T h_2(x_1,x_2),
\end{align*}
where the second line is due to $e_i^T D \Delta x_2=0$. Therefore, $h$ is quasi-monotone increasing.
\end{proof}
\begin{lemma}\label{thm:appendix:AGT2-QL:Lipschits:h}
$h$ is globally Lipshcitz continuous.
\end{lemma}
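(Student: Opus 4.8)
The plan is to reuse the structure of the proof of Lemma~\ref{thm:appendix:AGT2-QL:Lipschits:f}, noting that $h$ has the same form as $f$ but without the $Q^*$-shift inside the greedy operator and without the constant affine term; this makes the argument strictly simpler. First I would decompose $h$ into a linear part and a single nonlinear part by writing
\[
h(x_1,x_2) = \begin{bmatrix} -D & 0 \\ \beta D & -\beta D \end{bmatrix}\begin{bmatrix} x_1 \\ x_2 \end{bmatrix} + \begin{bmatrix} \gamma DP\Pi_{x_2}x_2 \\ 0 \end{bmatrix}.
\]
The first summand is a fixed bounded linear operator, hence globally Lipschitz with constant equal to its induced $\infty$-norm, so all the work lies in controlling the nonlinear block $\gamma DP\Pi_{x_2}x_2$.

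The crux is the nonexpansiveness of the greedy maximization map in the sup-norm. By the definition of $\Pi_{x_2}$ as the action-transition matrix of the greedy policy with respect to $x_2$, the vector $\Pi_{x_2}x_2$ has $s$-th entry $\max_{a\in{\cal A}} x_2(s,a)$. The elementary inequality $|\max_a x_2(s,a) - \max_a y_2(s,a)| \le \max_a |x_2(s,a)-y_2(s,a)|$, applied state by state, then yields
\[
\left\| \Pi_{x_2}x_2 - \Pi_{y_2}y_2 \right\|_\infty \le \left\| x_2 - y_2 \right\|_\infty.
\]
Multiplying through by the bounded matrix $\gamma DP$ and using submultiplicativity of the induced norm bounds the nonlinear term by $\left\| \gamma DP \right\|_\infty \left\| x_2-y_2 \right\|_\infty$.

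Finally I would combine the two pieces with the triangle inequality, exactly as in the proof of Lemma~\ref{thm:appendix:AGT2-QL:Lipschits:f}, to obtain a single finite Lipschitz constant for $h$ with respect to $\left\| \cdot \right\|_\infty$. The only step requiring any care---and it is entirely routine---is the nonexpansiveness of the max operator above; every remaining bound is a direct consequence of $D$, $P$, and the block coefficient matrix being fixed and finite-dimensional.
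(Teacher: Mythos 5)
Your proposal is correct and follows essentially the same route as the paper's proof: split $h$ into the fixed linear block $\begin{bmatrix} -D & 0 \\ \beta D & -\beta D \end{bmatrix}$ plus the nonlinear term $\gamma DP\Pi_{x_2}x_2$, bound the latter via nonexpansiveness of the greedy max in $\|\cdot\|_\infty$, and combine with the triangle inequality. If anything, yours is slightly cleaner, since you state and justify the key inequality $\|\Pi_{x_2}x_2-\Pi_{y_2}y_2\|_\infty\le\|x_2-y_2\|_\infty$ explicitly, whereas the paper invokes it implicitly (and its displayed chain contains minor typos such as $\Pi_{x_2}x_1$ and $\|x_1-y_2\|_\infty$ where $\Pi_{x_2}x_2$ and $\|x_2-y_2\|_\infty$ are meant).
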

\begin{proof}
We complete the proof through the inequalities
\begin{align*}
&{\left\| {h({x_1},{x_2}) - h({y_1},{y_2})} \right\|_\infty }\\
 =& {\left\| {\left[ {\begin{array}{*{20}{c}}
{ - D}&{\gamma DP{\Pi _{{x_2}}}}\\
{\beta D}&{ - \beta D}
\end{array}} \right]\left[ {\begin{array}{*{20}{c}}
{{x_1}}\\
{{x_2}}
\end{array}} \right] - \left[ {\begin{array}{*{20}{c}}
{ - D}&{\gamma DP{\Pi _{{y_2}}}}\\
{\beta D}&{ - \beta D}
\end{array}} \right]\left[ {\begin{array}{*{20}{c}}
{{y_1}}\\
{{y_2}}
\end{array}} \right]} \right\|_\infty }\\
\le& {\left\| {\left[ {\begin{array}{*{20}{c}}
{ - D}&0\\
{\beta D}&{ - \beta D}
\end{array}} \right]\left[ {\begin{array}{*{20}{c}}
{{x_1}}\\
{{x_2}}
\end{array}} \right] - \left[ {\begin{array}{*{20}{c}}
{ - D}&0\\
{\beta D}&{ - \beta D}
\end{array}} \right]\left[ {\begin{array}{*{20}{c}}
{{y_1}}\\
{{y_2}}
\end{array}} \right]} \right\|_\infty } + {\left\| {\gamma DP{\Pi _{{x_2}}}{x_1} - \gamma DP{\Pi _{{y_2}}}{y_2}} \right\|_\infty }\\
\le& {\left\| {\left[ {\begin{array}{*{20}{c}}
{ - D}&0\\
{\beta D}&{ - \beta D}
\end{array}} \right]} \right\|_\infty }{\left\| {\left[ {\begin{array}{*{20}{c}}
{{x_1}}\\
{{x_2}}
\end{array}} \right] - \left[ {\begin{array}{*{20}{c}}
{{y_1}}\\
{{y_2}}
\end{array}} \right]} \right\|_\infty } + {\left\| {\gamma DP} \right\|_\infty }{\left\| {{x_1} - {y_2}} \right\|_\infty }\\
\le& {\left\| {\left[ {\begin{array}{*{20}{c}}
{ - D}&0\\
{\beta D}&{ - \beta D}
\end{array}} \right]} \right\|_\infty }{\left\| {\left[ {\begin{array}{*{20}{c}}
{{x_1}}\\
{{x_2}}
\end{array}} \right] - \left[ {\begin{array}{*{20}{c}}
{{y_1}}\\
{{y_2}}
\end{array}} \right]} \right\|_\infty } + {\left\| {\gamma DP} \right\|_\infty }{\left\| {\left[ {\begin{array}{*{20}{c}}
{{x_1}}\\
{{x_2}}
\end{array}} \right] - \left[ {\begin{array}{*{20}{c}}
{{y_1}}\\
{{y_2}}
\end{array}} \right]} \right\|_\infty }
\end{align*}
indicating that $h$ is globally Lipschitz continuous with respect to the $\|\cdot\|_\infty$ norm. This completes the proof.
\end{proof}

\begin{lemma}\label{thm:appendix:AGT2-QL:hf}
$h(x_1,x_2)\geq f(x_1,x_2)$ for all $(x_1,x_2)\in {\mathbb R}^{|{\cal S}||{\cal A}|}\times {\mathbb R}^{|{\cal S}||{\cal A}|}$, where `$\geq$' denotes the element-wise inequality.
\end{lemma}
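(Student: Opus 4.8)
The plan is to compare the two vector fields block by block, exploiting the fact that they differ only in their first block. Writing $h$ out explicitly from its definition gives
\[
h(x_1,x_2)=\begin{bmatrix} -Dx_1+\gamma DP\Pi_{x_2}x_2\\ \beta Dx_1-\beta Dx_2\end{bmatrix},
\]
while \cref{thm:appendix:AGT2-QL:property-f} supplies the analogous closed form for $f$. I would immediately note that the second blocks coincide, $h_2(x_1,x_2)=f_2(x_1,x_2)=\beta Dx_1-\beta Dx_2$, so the inequality holds (with equality) in that block and the entire problem reduces to showing $h_1(x_1,x_2)\ge f_1(x_1,x_2)$ componentwise.

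For the first block, I would subtract and factor out the common nonnegative prefactor:
\[
h_1(x_1,x_2)-f_1(x_1,x_2)=\gamma DP\big[\Pi_{x_2}x_2-\Pi_{x_2+Q^*}(x_2+Q^*)+\Pi_{Q^*}Q^*\big].
\]
Here the $-Dx_1$ terms cancel exactly. The whole argument then rests on showing that the bracketed vector is nonnegative entrywise, after which I would invoke the fact that $\gamma\ge 0$, $D$ is a nonnegative (diagonal) matrix, and $P$ is a stochastic, hence entrywise nonnegative, matrix; thus $\gamma DP$ has nonnegative entries and preserves the sign of any nonnegative vector it multiplies.

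To handle the bracket I would use the defining property of the greedy action-transition matrix, namely that for any $Q$ the product $\Pi_Q Q$ is exactly the vector of state-wise maxima, $(\Pi_Q Q)(s)=\max_{a\in{\cal A}}Q(s,a)$. Evaluating the bracket at an arbitrary state $s$ thus turns it into
\[
\max_{a}x_2(s,a)-\max_{a}\big(x_2(s,a)+Q^*(s,a)\big)+\max_{a}Q^*(s,a),
\]
and the \emph{subadditivity of the maximum}, $\max_a(u_a+v_a)\le \max_a u_a+\max_a v_a$, gives $\max_a(x_2+Q^*)(s,a)\le \max_a x_2(s,a)+\max_a Q^*(s,a)$, making this quantity nonnegative. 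This subadditivity step is the crux of the proof and the only genuinely nontrivial point; everything else is bookkeeping and the entrywise nonnegativity of $\gamma DP$.

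Combining the pieces, the bracketed vector is nonnegative componentwise, multiplication by $\gamma DP\ge 0$ preserves this, so $h_1\ge f_1$, and together with $h_2=f_2$ we conclude $h(x_1,x_2)\ge f(x_1,x_2)$ entrywise for all $(x_1,x_2)$, completing the proof. The main obstacle I anticipate is purely notational rather than conceptual: one must be careful that $\Pi_{x_2+Q^*}$, $\Pi_{x_2}$, and $\Pi_{Q^*}$ are in general three \emph{different} greedy matrices, so the cancellation cannot be done at the level of the matrices $\Pi$ themselves but only after applying the $\Pi_Q Q=\max_a Q(\cdot,a)$ identity to collapse each term to a state-wise maximum.
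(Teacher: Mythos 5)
Your proof is correct and is in substance the same as the paper's: the paper proves the bound by a two-step chain of matrix inequalities --- first dropping the affine term via $\gamma DP(\Pi_{x_2+Q^*}-\Pi_{Q^*})Q^*\le 0$, then replacing $\Pi_{x_2+Q^*}x_2$ by $\Pi_{x_2}x_2$ --- and your single subadditivity step $\max_a(x_2(s,a)+Q^*(s,a))\le\max_a x_2(s,a)+\max_a Q^*(s,a)$ is exactly the sum of those two inequalities (each being an instance of $\Pi_\pi Q\le \Pi_Q Q$ for the greedy matrix), combined with the same entrywise nonnegativity of $\gamma DP$. The only difference is presentational: you collapse the $\Pi_Q Q$ products to state-wise maxima and compare componentwise, while the paper keeps the comparison at the matrix--vector level.
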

\begin{proof}
Using $\gamma DP(\Pi_{x_2}-\Pi_{Q^*})Q^*\leq 0$ for all $x_2 \in {\mathbb R}^{|{\cal S}||{\cal A}|}$, we obtain
\begin{align*}
f({x_1},{x_2}) =& \left[ {\begin{array}{*{20}{c}}
{ - D}&{\gamma DP{\Pi _{{x_2} + {Q^*}}}}\\
{\beta D}&{ - \beta D}
\end{array}} \right]\left[ {\begin{array}{*{20}{c}}
{{x_1}}\\
{{x_2}}
\end{array}} \right] + \left[ {\begin{array}{*{20}{c}}
{\gamma DP({\Pi _{{x_2} + {Q^*}}} - {\Pi _{{Q^*}}}){Q^*}}\\
0
\end{array}} \right]\\
\le& \left[ {\begin{array}{*{20}{c}}
{ - D}&{\gamma DP{\Pi _{{x_2} + {Q^*}}}}\\
{\beta D}&{ - \beta D}
\end{array}} \right]\left[ {\begin{array}{*{20}{c}}
{{x_1}}\\
{{x_2}}
\end{array}} \right]\\
\le& \left[ {\begin{array}{*{20}{c}}
{ - D}&{\gamma DP{\Pi _{{x_2}}}}\\
{\beta D}&{ - \beta D}
\end{array}} \right]\left[ {\begin{array}{*{20}{c}}
{{x_1}}\\
{{x_2}}
\end{array}} \right]\\
 =& h({x_1},{x_2})
\end{align*}
for all $(x_1,x_2)\in {\mathbb R}^{|{\cal S}||{\cal A}|}\times {\mathbb R}^{|{\cal S}||{\cal A}|}$. This completes the proof.
\end{proof}

Based on the previous lemmas, we are now ready to prove that the solution of the upper comparison system indeed upper-bounds the solution of the original system, which is the reason why~\eqref{eq:appendix:AGT2-QL:upper-system} is named as such.
\begin{lemma}\label{thm:appendix:AGT2-QL:upper-original}
We have
\begin{align*}
\left[ {\begin{array}{*{20}{c}}
{Q_t^{A,u} - {Q^*}}\\
{Q_t^{B,u} - {Q^*}}
\end{array}} \right] \ge \left[ {\begin{array}{*{20}{c}}
{Q_t^A - {Q^*}}\\
{Q_t^B - {Q^*}}
\end{array}} \right],\quad \forall t \ge 0,
\end{align*}
where `$\geq$' denotes the element-wise inequality
\end{lemma}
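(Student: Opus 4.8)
The plan is to apply the vector comparison principle (\cref{lemma:comparision-principle}) directly, assigning the upper comparison system~\eqref{eq:appendix:AGT2-QL:upper-system} the role of the over-approximating system and the original system~\eqref{eq:appendix:4} the role of the lower system. Concretely, I would set $\overline{f} := h$ and $\underline{f} := f$, and identify the states as $x_t := (Q_t^{A,u} - Q^*,\, Q_t^{B,u} - Q^*)$, the solution of the upper comparison system, and $v_t := (Q_t^{A} - Q^*,\, Q_t^{B} - Q^*)$, the solution of the original system. The desired conclusion $v_t \le x_t$ is then exactly the claimed element-wise bound.

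To invoke \cref{lemma:comparision-principle}, I would verify its hypotheses in turn, each of which is already supplied by a preceding lemma. First, global Lipschitz continuity of $\overline{f} = h$ and $\underline{f} = f$ follows from \cref{thm:appendix:AGT2-QL:Lipschits:h,thm:appendix:AGT2-QL:Lipschits:f}, respectively; via \cref{lemma:existence} this also guarantees that both systems admit unique solutions for all $t \ge 0$, so the comparison is well-posed. Second, the over-approximating dynamics $\overline{f} = h$ is quasi-monotone increasing by \cref{thm:appendix:AGT2-QL:quasi-monotone:h}. Third, the domination $\underline{f}(x_1,x_2) = f(x_1,x_2) \le h(x_1,x_2) = \overline{f}(x_1,x_2)$ holds element-wise for every $(x_1,x_2)$ by \cref{thm:appendix:AGT2-QL:hf}. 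Finally, the initial-condition requirement $v_0 < x_0$ is precisely the strict element-wise inequality built into the definition of the upper comparison system, namely $(Q_0^{A,u} - Q^*,\, Q_0^{B,u} - Q^*) > (Q_0^{A} - Q^*,\, Q_0^{B} - Q^*)$. With all hypotheses satisfied, \cref{lemma:comparision-principle} yields $v_t \le x_t$ for all $t \ge 0$, which is the assertion.

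Since essentially all of the analytical work has been front-loaded into the auxiliary lemmas, there is no genuine obstacle remaining in this step; the only point demanding care is the bookkeeping of which system plays which role. It is crucial that it is the \emph{upper} dynamics $h$ — not $f$ — whose quasi-monotonicity is invoked, because the comparison principle requires the over-approximating vector field to be quasi-monotone increasing. The quasi-monotonicity of $f$ established in \cref{thm:appendix:AGT2-QL:quasi-monotone:f} is not used here; it will instead be needed later, in the construction and validation of the lower comparison system, where $f$ itself takes the role of $\overline{f}$.
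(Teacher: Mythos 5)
Your proposal is correct and matches the paper's own proof, which likewise obtains the bound by invoking the vector comparison principle (\cref{lemma:comparision-principle}) with exactly the four supporting lemmas you cite: quasi-monotonicity of $h$ (\cref{thm:appendix:AGT2-QL:quasi-monotone:h}), the domination $f \le h$ (\cref{thm:appendix:AGT2-QL:hf}), and global Lipschitz continuity of $f$ and $h$ (\cref{thm:appendix:AGT2-QL:Lipschits:f,thm:appendix:AGT2-QL:Lipschits:h}). Your added remark that it is $h$'s quasi-monotonicity (not $f$'s) that matters here, with $f$'s reserved for the lower comparison system, is also consistent with how the paper deploys these lemmas.
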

\begin{proof}
The desired conclusion is obtained by~\cref{lemma:comparision-principle} with~\cref{thm:appendix:AGT2-QL:quasi-monotone:h,thm:appendix:AGT2-QL:hf,thm:appendix:AGT2-QL:Lipschits:f,thm:appendix:AGT2-QL:Lipschits:h}.
\end{proof}

Next, we prove that the upper comparison system~\eqref{eq:appendix:AGT2-QL:upper-system} is globally asymptotically stable at the origin. Notably, the proof is simpler than that of the original system, as the upper comparison system does not include the affine term.
\begin{lemma}\label{thm:appendix:AGT2-QL:stability:upper}
For any $\beta>0$, the origin is the unique globally asymptotically stable equilibrium point of the upper comparison system~\eqref{eq:appendix:AGT2-QL:upper-system}.
\end{lemma}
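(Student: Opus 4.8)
The plan is to recognize the upper comparison system~\eqref{eq:appendix:AGT2-QL:upper-system} as a linear switching system of the form~\eqref{eq:switched-system}, and then to invoke the algebraic stability criterion in~\cref{lemma:fundamental-stability-lemma}. Writing $n:=|{\cal S}||{\cal A}|$ and $x_t=[(Q_t^{A,u}-Q^*)^T\ (Q_t^{B,u}-Q^*)^T]^T\in{\mathbb R}^{2n}$, the system reads $\tfrac{d}{dt}x_t=A_{\sigma(x_t)}x_t$, where for each deterministic policy $\pi_B\in\Theta$ the subsystem matrix is
\begin{align*}
A_\sigma=\begin{bmatrix}-D & \gamma DP\Pi^{\pi_B}\\ \beta D & -\beta D\end{bmatrix},
\end{align*}
and the switching is governed by the greedy policy $\pi_{x_{t,2}}$ associated with the second block $x_{t,2}=Q_t^{B,u}-Q^*$. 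Crucially, the affine term is absent here, so—unlike the original system—no comparison argument is needed, and~\cref{lemma:fundamental-stability-lemma} applies directly once its hypotheses are verified. Existence and uniqueness of the trajectory follow from the global Lipschitz continuity of $h$ established in~\cref{thm:appendix:AGT2-QL:Lipschits:h} together with~\cref{lemma:existence}.

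The heart of the argument is to exhibit a full column rank matrix $L$ and matrices $\bar A_\sigma$ satisfying the strictly negative row dominating diagonal condition with $LA_\sigma=\bar A_\sigma L$ for all modes $\sigma$. The naive guess $L=I_{2n}$ fails: in the first block of rows the dominance condition holds with margin $-(1-\gamma)d(s,a)<0$ because $P\Pi^{\pi_B}$ is row-stochastic with nonnegative entries, but in the second block the coupling terms $\beta D$ and $-\beta D$ cancel exactly, producing a row sum of precisely zero instead of a strictly negative number. The remedy I would use is to scale the two blocks asymmetrically: take
\begin{align*}
L=\begin{bmatrix}c_1 I_n & 0\\ 0 & c_2 I_n\end{bmatrix},\qquad \bar A_\sigma=LA_\sigma L^{-1}=\begin{bmatrix}-D & \tfrac{c_1}{c_2}\gamma DP\Pi^{\pi_B}\\ \tfrac{c_2}{c_1}\beta D & -\beta D\end{bmatrix},
\end{align*}
so that $LA_\sigma=\bar A_\sigma L$ holds by construction. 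The dominance margin then becomes $-d(s,a)\bigl(1-\tfrac{c_1}{c_2}\gamma\bigr)$ for the first block and $-\beta d(s,a)\bigl(1-\tfrac{c_2}{c_1}\bigr)$ for the second block, both strictly negative provided $1<c_1/c_2<1/\gamma$; such constants exist for every $\beta>0$ precisely because $\gamma<1$. Since the entries of $P\Pi^{\pi_B}$ are nonnegative and each row sums to one regardless of $\pi_B$, these margins are uniform over all modes.

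I expect the main obstacle to be exactly this choice of $L$: the obvious scaling is marginal in the target block, and one must notice that the strictly positive discount gap $1-\gamma$ leaves room to tilt the block weights by a factor $c_1/c_2\in(1,1/\gamma)$ that simultaneously keeps the online rows contracting and the target rows strictly diagonally dominant. Once $L$ and $\bar A_\sigma$ are in place, \cref{lemma:fundamental-stability-lemma} yields that the origin is the unique globally asymptotically stable equilibrium under arbitrary switching, and the state-feedback switching induced by the greedy policy in~\eqref{eq:appendix:AGT2-QL:upper-system} is merely a special case, which completes the argument.
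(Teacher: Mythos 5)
Your proof is correct and takes essentially the same approach as the paper: both recognize \eqref{eq:appendix:AGT2-QL:upper-system} as a linear switching system without affine term and apply \cref{lemma:fundamental-stability-lemma} with a block-diagonal scaling matrix, the paper's choice $L=\mathrm{diag}(I,\gamma^{1/2}I)$ being exactly your construction with $c_1/c_2=\gamma^{-1/2}\in(1,1/\gamma)$. The only (immaterial) difference is that the paper commits to this specific ratio, whereas you keep $c_1/c_2$ as a free parameter in $(1,1/\gamma)$ and note that any such choice works.
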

\begin{proof}
For notational convenience, we define $\Pi_\sigma$, $\sigma\in {\cal M}$ as $\Pi_{Q_t^B}$ such that $\sigma=\psi(\pi_{Q_t^B})$. Then, for the upper comparison switching system, we apply~\cref{lemma:fundamental-stability-lemma} with
\begin{align*}
A_{\sigma}=\begin{bmatrix}
   -D & \gamma D P\Pi_{\sigma}\\
   \beta D & -\beta D\\
\end{bmatrix}
\end{align*}
and
\begin{align*}
L= \begin{bmatrix}
   I & 0\\
   0 & \gamma^{1/2} I\\
\end{bmatrix},
\end{align*}
which satisfies
\[L{A_\sigma } = \left[ {\begin{array}{*{20}{c}}
I&0\\
0&{{\gamma ^{1/2}}I}
\end{array}} \right]\left[ {\begin{array}{*{20}{c}}
{ - D}&{\gamma DP{\Pi _\sigma }}\\
{\beta D}&{ - \beta D}
\end{array}} \right] = \left[ {\begin{array}{*{20}{c}}
{ - D}&{\gamma DP{\Pi _\sigma }}\\
{{\gamma ^{1/2}}\beta D}&{ - {\gamma ^{1/2}}\beta D}
\end{array}} \right] = {{\bar A}_\sigma }\left[ {\begin{array}{*{20}{c}}
I&0\\
0&{{\gamma ^{1/2}}I}
\end{array}} \right]\]
with
\begin{align*}
\bar A_\sigma=\begin{bmatrix}
   -D & \gamma^{1/2} D P\Pi_\sigma\\
   \gamma^{1/2} \beta D & -\beta D\\
\end{bmatrix}.
\end{align*}

To check the strictly negative row dominating diagonal condition, for $i \in\{1,2,\ldots,|{\cal S}||{\cal A}|\}$, we have
\begin{align*}
[\bar A_\sigma]_{ii}+\sum_{j\in \{1,2,\ldots ,n\} \backslash \{ i\}}{|[\bar A_\sigma]_{ij}|}=& [-D]_{ii}+\gamma^{1/2}[D ]_{ii} \sum_{j \in \{1,2,\ldots,n\}\backslash \{i\}} {|[P\Pi_\sigma]_{ij}|}\\
\le& [-D]_{ii}+\gamma^{1/2}[D]_{ii}\\
\le& (-1+\gamma^{1/2})[D]_{ii}
< 0.
\end{align*}

For $i\in \{|{\cal S}||{\cal A}|+1,|{\cal S}||{\cal A}|+2,\ldots,2|{\cal S}||{\cal A}|\}$, we also have
\begin{align*}
{[{{\bar A}_\sigma }]_{ii}} + {\sum _{j \in \{ 1,2, \ldots ,n\} \backslash \{ i\} }}|{[{{\bar A}_\sigma }]_{ij}}| \le  - {[D]_{ii}}\beta  + {[D]_{ii}}\beta {\gamma ^{1/2}} = {[D]_{ii}}\beta ( - 1 + {\gamma ^{1/2}}) < 0
\end{align*}
for any $\beta>0$. Therefore, the strictly negative row dominating diagonal condition is satisfied. By~\cref{lemma:fundamental-stability-lemma}, the origin of the upper comparison system is globally asymptotically stable.
\end{proof}

\subsection{Lower comparison system}
Let us consider the so-called lower comparison system
\begin{align}
\frac{d}{dt}\begin{bmatrix}
   Q_t^{A,l}-Q^*\\
   Q_t^{B,l}-Q^*\\
\end{bmatrix}=\begin{bmatrix}
   -D & \gamma DP\Pi_{\pi_{Q^*}}\\
   \beta D & -\beta D\\
\end{bmatrix}\begin{bmatrix}
   Q_t^{A,l}-Q^*\\
   Q_t^{B,l}-Q^*\\
\end{bmatrix},\quad \begin{bmatrix}
   Q_0^{A,l}-Q^*\\
   Q_0^{B,l}-Q^*\\
\end{bmatrix}< \begin{bmatrix}
   Q_0^A-Q^*\\
   Q_0^B-Q^*\\
\end{bmatrix}\in {\mathbb R}^{2|{\cal S}||{\cal A}|},\label{eq:appendix:AGT2-QL:lower-system}
\end{align}
where `$<$' in the above equation implies the element-wise inequality. Here, we note that the lower comparison system is simply a linear system. Therefore, its analysis is much simpler than the upper and original system. Defining the vector functions
\begin{align*}
g(x_1,x_2):=&\begin{bmatrix}
   g_1(x_1,x_2)\\
   g_2(x_1,x_2)\\
\end{bmatrix}=\begin{bmatrix}
   -D & \gamma DP\Pi_{\pi_{Q^*}}\\
   \beta D & -\beta D\\
\end{bmatrix} \begin{bmatrix}
   x_1\\
   x_2\\
\end{bmatrix},
\end{align*}
the lower comparison system can be written by
\begin{align*}
\frac{d}{dt}\begin{bmatrix}
   x_{t,1}\\
   x_{t,2}\\
\end{bmatrix}=\begin{bmatrix}
   g_1(x_{t,1},x_{t,2})\\
   g_2(x_{t,1},x_{t,2})\\
\end{bmatrix},\quad x_0 < \begin{bmatrix}
   Q_0^A-Q^*\\
   Q_0^B-Q^*\\
\end{bmatrix},
\end{align*}
for all $t \geq 0$.

In the sequel, we present key properties of the lower comparison system.
\begin{lemma}\label{thm:appendix:AGT2-QL:Lipschits:g}
$g$ is globally Lipschitz continuous.
\end{lemma}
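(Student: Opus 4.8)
The plan is to exploit the fact that, in contrast to $f$ and $h$, the map $g$ is genuinely \emph{linear} with a constant coefficient matrix, so global Lipschitz continuity is essentially immediate. The key observation is that $\Pi_{\pi_{Q^*}}$ is the action transition matrix associated with the greedy policy with respect to the \emph{fixed} optimal Q-function $Q^*$. Since $Q^*$ does not depend on the state variables $(x_1,x_2)$, the matrix $\Pi_{\pi_{Q^*}}$ carries no dependence on $(x_1,x_2)$ and is simply a fixed constant. Collecting the block matrix
\begin{align*}
M := \begin{bmatrix} -D & \gamma DP\Pi_{\pi_{Q^*}} \\ \beta D & -\beta D \end{bmatrix},
\end{align*}
we may write $g(x_1,x_2) = M \begin{bmatrix} x_1 \\ x_2 \end{bmatrix}$, which is a linear map with constant coefficients.

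From here I would bound, using submultiplicativity of the induced $\|\cdot\|_\infty$ operator norm,
\begin{align*}
\left\| g(x_1,x_2) - g(y_1,y_2) \right\|_\infty = \left\| M \begin{bmatrix} x_1 - y_1 \\ x_2 - y_2 \end{bmatrix} \right\|_\infty \le \left\| M \right\|_\infty \left\| \begin{bmatrix} x_1 - y_1 \\ x_2 - y_2 \end{bmatrix} \right\|_\infty.
\end{align*}
Since $D$, $P$, and $\Pi_{\pi_{Q^*}}$ all have finite entries (indeed $P$ and $\Pi_{\pi_{Q^*}}$ are row-stochastic and $D$ is a bounded diagonal matrix), we have $\|M\|_\infty < \infty$, and this constant serves as a global Lipschitz constant with respect to the $\|\cdot\|_\infty$ norm. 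This argument parallels the structure of the proofs of \cref{thm:appendix:AGT2-QL:Lipschits:f} and \cref{thm:appendix:AGT2-QL:Lipschits:h}, but is strictly simpler.

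There is essentially no obstacle here: the only point that must be made explicit is that freezing the policy at $\pi_{Q^*}$ eliminates the argument-dependent $\Pi$-term present in $f$ and $h$, reducing $g$ to a constant linear map. Consequently, the nonexpansiveness / boundedness properties of the greedy operator that were implicitly needed in the earlier two Lipschitz lemmas are not required here, and Lipschitz continuity follows directly from the finiteness of the single fixed coefficient matrix $\|M\|_\infty$.
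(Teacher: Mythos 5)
Your proposal is correct and matches the paper's own proof: both exploit that $g$ is a linear map with a constant coefficient matrix (since $\Pi_{\pi_{Q^*}}$ is fixed, independent of the argument) and bound $\|g(x_1,x_2)-g(y_1,y_2)\|_\infty$ by the induced $\|\cdot\|_\infty$ operator norm of that matrix via submultiplicativity. No gap to report.
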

\begin{proof}
It is straightforward from the linearity. In particular, we complete the proof through the inequalities
\begin{align*}
{\left\| {g({x_1},{x_2}) - g({y_1},{y_2})} \right\|_\infty } =& {\left\| {\left[ {\begin{array}{*{20}{c}}
{ - D}&{\gamma DP \Pi _{Q^*}}\\
{\beta D}&{ - \beta D}
\end{array}} \right]\left[ {\begin{array}{*{20}{c}}
{{x_1}}\\
{{x_2}}
\end{array}} \right] - \left[ {\begin{array}{*{20}{c}}
{ - D}&{\gamma DP{\Pi _{Q^*}}}\\
{\beta D}&{ - \beta D}
\end{array}} \right]\left[ {\begin{array}{*{20}{c}}
{{y_1}}\\
{{y_2}}
\end{array}} \right]} \right\|_\infty }\\
\le& {\left\| {\left[ {\begin{array}{*{20}{c}}
{ - D}&{\gamma DP{\Pi _{Q^*}}}\\
{\beta D}&{ - \beta D}
\end{array}} \right]} \right\|_\infty }{\left\| {\left[ {\begin{array}{*{20}{c}}
{{x_1}}\\
{{x_2}}
\end{array}} \right] - \left[ {\begin{array}{*{20}{c}}
{{y_1}}\\
{{y_2}}
\end{array}} \right]} \right\|_\infty }
\end{align*}
indicating that $g$ is globally Lipschitz continuous with respect to the $\|\cdot\|_\infty$ norm. This completes the proof.
\end{proof}

\begin{lemma}\label{thm:appendix:AGT2-QL:fg}
$f(x_1,x_2)\geq g(x_1,x_2)$ for all $(x_1,x_2)\in {\mathbb R}^{|{\cal S}||{\cal A}|}\times {\mathbb R}^{|{\cal S}||{\cal A}|}$, where `$\geq$' denotes the element-wise inequality.
\end{lemma}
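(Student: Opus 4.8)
The plan is to compare $f$ and $g$ block by block, using the explicit form of $f$ from \cref{thm:appendix:AGT2-QL:property-f} and observing that $\Pi_{\pi_{Q^*}}=\Pi_{Q^*}$, so that only the difference in the policy matrices applied to $x_2+Q^*$ matters.

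First I would note that the lower (second) blocks of $f$ and $g$ are both equal to $\beta D x_1-\beta D x_2$, so the claimed elementwise inequality holds there with equality, and it remains to treat the upper block. Writing $f_1(x_1,x_2)=-Dx_1+\gamma DP\,\Pi_{x_2+Q^*}(x_2+Q^*)-\gamma DP\,\Pi_{Q^*}Q^*$ and $g_1(x_1,x_2)=-Dx_1+\gamma DP\,\Pi_{Q^*}x_2$, the common term $-Dx_1$ cancels and, collecting the $\Pi_{Q^*}$ contributions, I would reduce $f_1\ge g_1$ to
\begin{align*}
\gamma DP\left(\Pi_{x_2+Q^*}-\Pi_{Q^*}\right)(x_2+Q^*)\ge 0.
\end{align*}

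The key step is to verify this last inequality, which is the mirror image of the argument used in \cref{thm:appendix:AGT2-QL:hf}. There one uses that $\Pi_{Q^*}$ is greedy for $Q^*$, so $\gamma DP(\Pi_{x_2}-\Pi_{Q^*})Q^*\le 0$; here, by contrast, $\Pi_{x_2+Q^*}$ is greedy for $x_2+Q^*$, so at each state $\left(\Pi_{x_2+Q^*}(x_2+Q^*)\right)(s)=\max_a(x_2+Q^*)(s,a)$ dominates $\left(\Pi_{Q^*}(x_2+Q^*)\right)(s)$, which merely evaluates $x_2+Q^*$ at the $Q^*$-greedy action. Hence $\left(\Pi_{x_2+Q^*}-\Pi_{Q^*}\right)(x_2+Q^*)\ge 0$ componentwise, and since $\gamma>0$ and $D,P$ have nonnegative entries, the entrywise-nonnegative matrix $\gamma DP$ preserves this inequality. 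This yields $f_1\ge g_1$ and, together with the equality on the lower block, completes the proof.

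The only genuinely nontrivial point is the greedy-maximality step, where one must be careful that the relevant greedy policy is taken with respect to $x_2+Q^*$ (not $Q^*$); this is precisely what orients the inequality correctly. Everything else is cancellation and the nonnegativity of $\gamma DP$.
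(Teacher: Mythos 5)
Your proof is correct and takes essentially the same route as the paper: both start from the regrouped form of $f$ in \cref{thm:appendix:AGT2-QL:property-f}, observe that the second blocks of $f$ and $g$ coincide, reduce the first-block comparison to $\gamma DP\left(\Pi_{x_2+Q^*}-\Pi_{Q^*}\right)(x_2+Q^*)\ge 0$, and conclude via greedy maximality of $\Pi_{x_2+Q^*}$ together with entrywise nonnegativity of $\gamma DP$. The only difference is cosmetic: you make the greedy-maximality justification explicit, whereas the paper leaves it implicit in its chain of inequalities.
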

\begin{proof}
We obtain
\begin{align*}
f({x_1},{x_2}) =& \left[ {\begin{array}{*{20}{c}}
{ - D}&{\gamma DP{\Pi _{{x_2} + {Q^*}}}}\\
{\beta D}&{ - \beta D}
\end{array}} \right]\left[ {\begin{array}{*{20}{c}}
{{x_1}}\\
{{x_2}}
\end{array}} \right] + \left[ {\begin{array}{*{20}{c}}
{\gamma DP({\Pi _{{x_2} + {Q^*}}} - {\Pi _{{Q^*}}}){Q^*}}\\
0
\end{array}} \right]\\
=& \left[ {\begin{array}{*{20}{c}}
{ - D{x_1} + \gamma DP{\Pi _{{x_2} + {Q^*}}}{x_2} + \gamma DP({\Pi _{{x_2} + {Q^*}}} - {\Pi _{{Q^*}}}){Q^*}}\\
{\beta D{x_1} - \beta D{x_2}}
\end{array}} \right]\\
=& \left[ {\begin{array}{*{20}{c}}
{ - D{x_1} + \gamma DP{\Pi _{{x_2} + {Q^*}}}({x_2} + {Q^*}) - \gamma DP{\Pi _{{Q^*}}}{Q^*}}\\
{\beta D{x_1} - \beta D{x_2}}
\end{array}} \right]\\
\ge& \left[ {\begin{array}{*{20}{c}}
{ - D{x_1} + \gamma DP{\Pi _{{Q^*}}}({x_2} + {Q^*}) - \gamma DP{\Pi _{{Q^*}}}{Q^*}}\\
{\beta D{x_1} - \beta D{x_2}}
\end{array}} \right]\\
=& \left[ {\begin{array}{*{20}{c}}
{ - D{x_1} + \gamma DP{\Pi _{{Q^*}}}{x_2}}\\
{\beta D{x_1} - \beta D{x_2}}
\end{array}} \right]\\
=& \left[ {\begin{array}{*{20}{c}}
{ - D}&{\gamma DP{\Pi _{{Q^*}}}}\\
{\beta D}&{ - \beta D}
\end{array}} \right]\left[ {\begin{array}{*{20}{c}}
{{x_1}}\\
{{x_2}}
\end{array}} \right]\\
=& g({x_1},{x_2})
\end{align*}
for all $(x_1,x_2)\in {\mathbb R}^{|{\cal S}||{\cal A}|}\times {\mathbb R}^{|{\cal S}||{\cal A}|}$. This completes the proof.
\end{proof}

Similar to the upper comparison system, we prove that the solution of the lower comparison system indeed provides a lower bound for the solution of the original system.
\begin{lemma}\label{thm:appendix:AGT2-QL:original-lower}
We have
\begin{align*}
\begin{bmatrix}
   Q_t^A-Q^*\\
   Q_t^B-Q^*\\
\end{bmatrix} \ge \begin{bmatrix}
 Q_t^{A,l}- Q^* \\
 Q_t^{B,l}-Q^* \\
\end{bmatrix},\quad \forall t \geq 0,
\end{align*}
where `$\geq$' denotes the element-wise inequality.
\end{lemma}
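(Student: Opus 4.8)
The plan is to invoke the vector comparison principle (\cref{lemma:comparision-principle}) in precisely the same way as in the proof of the upper bound (\cref{thm:appendix:AGT2-QL:upper-original}), but with the roles of the two systems reversed. This time the \emph{original} system~\eqref{eq:appendix:4} plays the part of the dominating (quasi-monotone) system, while the \emph{lower comparison} system~\eqref{eq:appendix:AGT2-QL:lower-system} plays the part of the dominated system. Concretely, I would set $\overline{f} = f$ (the vector field of the original system) and $\underline{f} = g$ (the vector field of the lower comparison system), and then verify the hypotheses of \cref{lemma:comparision-principle} one by one.

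First, both vector fields are globally Lipschitz continuous: $f$ by \cref{thm:appendix:AGT2-QL:Lipschits:f} and $g$ by \cref{thm:appendix:AGT2-QL:Lipschits:g}. Second, the dominating field $f$ is quasi-monotone increasing by \cref{thm:appendix:AGT2-QL:quasi-monotone:f}; this is the key structural property, and it is crucial that it is $f$ rather than $g$ that occupies the quasi-monotone slot in the comparison principle. Third, the pointwise ordering $\underline{f}(x_1,x_2) = g(x_1,x_2) \le f(x_1,x_2) = \overline{f}(x_1,x_2)$ holds for all $(x_1,x_2)$ by \cref{thm:appendix:AGT2-QL:fg}, which points in exactly the direction $\underline{f}\le\overline{f}$ required by the lemma. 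Fourth, the strict initial-condition ordering $v_0 < x_0$ is precisely the element-wise inequality built into the definition of the lower comparison system in~\eqref{eq:appendix:AGT2-QL:lower-system}, namely that the initial value of $(Q_0^{A,l}-Q^*,\,Q_0^{B,l}-Q^*)$ is chosen strictly below $(Q_0^A-Q^*,\,Q_0^B-Q^*)$.

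With all four hypotheses verified, \cref{lemma:comparision-principle} yields $v_t \le x_t$ for all $t \ge 0$, which unpacks to the claimed element-wise bound $(Q_t^{A,l}-Q^*,\,Q_t^{B,l}-Q^*) \le (Q_t^A-Q^*,\,Q_t^B-Q^*)$, i.e.\ the original solution dominates the lower comparison solution. I do not anticipate any substantive obstacle, since this is a direct transcription of the upper-bound argument with $f$ now in the quasi-monotone position and $g$ as the dominated field. The only point requiring genuine care is bookkeeping: confirming that it is the quasi-monotonicity of $f$ (not of the linear field $g$) that is needed, and that the inequality $g \le f$ of \cref{thm:appendix:AGT2-QL:fg} is oriented so as to match the comparison principle's requirement $\underline{f}\le\overline{f}$. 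Both of these align with the lemmas already established, so the conclusion follows immediately.
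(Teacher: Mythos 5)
Your proposal is correct and matches the paper's proof exactly: the paper likewise invokes the vector comparison principle (\cref{lemma:comparision-principle}) with $f$ in the quasi-monotone dominating slot and $g$ as the dominated field, citing precisely \cref{thm:appendix:AGT2-QL:quasi-monotone:f,thm:appendix:AGT2-QL:fg,thm:appendix:AGT2-QL:Lipschits:f,thm:appendix:AGT2-QL:Lipschits:g}. Your write-up simply makes explicit the role bookkeeping that the paper leaves implicit.
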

\begin{proof}
The desired conclusion is obtained by~\cref{lemma:comparision-principle}with~\cref{thm:appendix:AGT2-QL:quasi-monotone:f,thm:appendix:AGT2-QL:fg,thm:appendix:AGT2-QL:Lipschits:f,thm:appendix:AGT2-QL:Lipschits:g}.
\end{proof}

Moreover, the next lemma proves that the lower comparison system is also globally asymptotically stable at the origin.
\begin{lemma}\label{thm:appendix:AGT2-QL:stability:lower}
For any $\beta>0$, the origin is the unique globally asymptotically stable equilibrium point of the lower comparison system~\eqref{eq:appendix:AGT2-QL:lower-system}.
\end{lemma}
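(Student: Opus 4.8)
The plan is to exploit the fact, already noted in the text, that the lower comparison system~\eqref{eq:appendix:AGT2-QL:lower-system} is \emph{linear and time-invariant}: the transition matrix $\Pi_{Q^*}$ is fixed, since the greedy policy is taken with respect to the constant optimal value $Q^*$ rather than an evolving iterate. Consequently there is no switching, and the system reduces to $\frac{d}{dt}x_t = A x_t$ with the single constant matrix
\[
A = \begin{bmatrix} -D & \gamma DP\Pi_{Q^*} \\ \beta D & -\beta D \end{bmatrix}.
\]
The first step is therefore to view this as a degenerate (single-mode) instance of the switching system~\eqref{eq:switched-system}, so that \cref{lemma:fundamental-stability-lemma} applies with $\mathcal{M} = \{1\}$ and $A_1 = A$; establishing global asymptotic stability then amounts to exhibiting a full-column-rank $L$ and an $\bar A$ satisfying $LA = \bar A L$ together with the strictly negative row dominating diagonal condition for this one matrix.

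Next I would reuse, essentially verbatim, the transformation constructed in the proof of \cref{thm:appendix:AGT2-QL:stability:upper}. Taking the full-column-rank matrix
\[
L = \begin{bmatrix} I & 0 \\ 0 & \gamma^{1/2} I \end{bmatrix},
\]
one checks $L A = \bar A L$ with
\[
\bar A = \begin{bmatrix} -D & \gamma^{1/2} DP\Pi_{Q^*} \\ \gamma^{1/2} \beta D & -\beta D \end{bmatrix}.
\]
For the first block of rows the diagonal entry is $-[D]_{ii}$ while the off-diagonal terms sum to $\gamma^{1/2}[D]_{ii}\sum_{j} |[P\Pi_{Q^*}]_{ij}| \le \gamma^{1/2}[D]_{ii}$, giving a row sum bounded by $(-1+\gamma^{1/2})[D]_{ii} < 0$; the second block is identical up to the factor $\beta$, yielding $[D]_{ii}\beta(-1+\gamma^{1/2}) < 0$ for every $\beta > 0$. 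By \cref{lemma:fundamental-stability-lemma} the origin is then the unique globally asymptotically stable equilibrium point of~\eqref{eq:appendix:AGT2-QL:lower-system}.

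The reason this argument transfers with essentially no new work — so that there is no genuine obstacle here — is that the row dominating diagonal computation in the upper-system proof relied only on $\Pi_\sigma$ being a nonnegative row-stochastic matrix, whence each row of $P\Pi_\sigma$ sums to at most one. Since $\Pi_{Q^*} = \Pi^{\pi_{Q^*}}$ is exactly such a matrix (the action transition matrix of the deterministic greedy policy $\pi_{Q^*}$), the key bound $\sum_{j} |[P\Pi_{Q^*}]_{ij}| \le 1$ holds identically, and the switching structure played no role in that estimate. Thus the same Hurwitz certificate that handled every mode of the upper comparison system certifies the stability of the single matrix $A$, completing the proof for all $\beta > 0$.
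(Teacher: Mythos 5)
Your proposal is correct and matches the paper's intent exactly: the paper's own proof simply states that the lower comparison system is handled by ``the same procedure as that of the upper comparison system,'' and your argument fills in precisely those details --- the same similarity transformation $L = \begin{bmatrix} I & 0 \\ 0 & \gamma^{1/2} I \end{bmatrix}$, the same matrix relation $LA = \bar{A}L$, and the same strictly negative row dominating diagonal computation, which goes through because $P\Pi_{Q^*}$ is row-stochastic just as $P\Pi_\sigma$ was for every mode of the upper system. Your observation that the lower system is a single-mode (hence switching-free) instance, so that \cref{lemma:fundamental-stability-lemma} applies trivially with $\mathcal{M}=\{1\}$, is exactly the right way to formalize the paper's ``omitted to avoid repetition'' step.
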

\begin{proof}
The proof follows the same procedure as that of the upper comparison system. Therefore, the detailed proof is omitted here to avoid repetition.
\end{proof}

So far, we have established several key properties of the upper and lower comparison systems, including their global asymptotic stability. In the next subsection, we prove the global asymptotic stability of the original system based on these results. 

\subsection{Stability of the original system}
We establish the global asymptotic stability of~\eqref{eq:appendix:AGT2-QL:original-system2}.
\begin{theorem}\label{thm:appendix:AGT2-QL:stability:f}
For any $\beta>0$, the origin is the unique globally asymptotically stable equilibrium point of the original system~\eqref{eq:appendix:AGT2-QL:original-system2}.
Equivalently, $\left[ {\begin{array}{*{20}{c}}
{{Q^*}}\\
{{Q^*}}
\end{array}} \right]$ is the unique globally asymptotically stable equilibrium point of the original system~\eqref{eq:appendix:AGT2-QL:original-system1}.
\end{theorem}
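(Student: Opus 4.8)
The plan is to prove global asymptotic stability of the origin by a squeeze (sandwich) argument: the trajectory of the original affine switching system in~\eqref{eq:appendix:AGT2-QL:original-system2} is trapped, element-wise, between the solutions of the lower and upper comparison systems, both of which have already been shown to be globally asymptotically stable at the origin.

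First I would fix an arbitrary initial condition $z = \left[ \begin{smallmatrix} Q_0^A - Q^* \\ Q_0^B - Q^* \end{smallmatrix} \right]$ of the original system and instantiate the two comparison systems with initial data that strictly bracket $z$ element-wise, e.g.\ by taking the lower-system and upper-system initial conditions to be $z - \eta \mathbf{1}$ and $z + \eta \mathbf{1}$ for some $\eta > 0$, so that the strict-inequality hypotheses of \cref{thm:appendix:AGT2-QL:upper-original,thm:appendix:AGT2-QL:original-lower} are met. Chaining those two lemmas then yields the element-wise sandwich
\begin{align*}
\begin{bmatrix} Q_t^{A,l} - Q^* \\ Q_t^{B,l} - Q^* \end{bmatrix} \le \begin{bmatrix} Q_t^{A} - Q^* \\ Q_t^{B} - Q^* \end{bmatrix} \le \begin{bmatrix} Q_t^{A,u} - Q^* \\ Q_t^{B,u} - Q^* \end{bmatrix}, \qquad \forall t \ge 0.
\end{align*}
Invoking the global asymptotic stability of both comparison systems (\cref{thm:appendix:AGT2-QL:stability:upper,thm:appendix:AGT2-QL:stability:lower}), the upper and lower bounds both converge to the origin as $t \to \infty$. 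Since each coordinate of the original state is pinned between two quantities that both tend to $0$, the squeeze theorem forces $\left[ \begin{smallmatrix} Q_t^A - Q^* \\ Q_t^B - Q^* \end{smallmatrix} \right] \to 0$, i.e.\ global attractivity of the origin, which is equivalently $Q_t^A \to Q^*$ and $Q_t^B \to Q^*$ in the unshifted coordinates of~\eqref{eq:appendix:AGT2-QL:original-system1}.

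To upgrade attractivity to full global asymptotic stability, I would reuse the same sandwich to obtain Lyapunov stability: because the comparison systems are globally asymptotically stable (switched-)linear systems, their solutions admit uniform bounds that shrink with their initial conditions, and taking $\eta$ small keeps the bracketing initial data close to $z$; hence small $\|z\|$ confines the entire original trajectory to any prescribed neighborhood of the origin. Uniqueness of the equilibrium then follows from attractivity together with a direct check that $f(0,0) = 0$ (the defining feature of the shift, via the Bellman equation $(\gamma DP\Pi_{Q^*} - D)Q^* + DR = 0$): any equilibrium is a constant trajectory squeezed between comparison solutions tending to $0$, so it must itself be $0$. Translating back to the original coordinates yields that $\left[ \begin{smallmatrix} Q^* \\ Q^* \end{smallmatrix} \right]$ is the unique globally asymptotically stable equilibrium of~\eqref{eq:appendix:AGT2-QL:original-system1}.

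The main obstacle I anticipate is not the convergence itself—that is a routine squeeze once the sandwich and the comparison-system stability are in hand—but rather the bookkeeping around the strict initial inequalities required by the vector comparison principle (\cref{lemma:comparision-principle}) and the separate argument for Lyapunov stability as opposed to mere attractivity. Care is needed to ensure the bracketing is valid for \emph{every} initial condition $z$ and that passing $\eta \to 0$ (or simply fixing a small $\eta$) preserves the bounds through the limit $t \to \infty$.
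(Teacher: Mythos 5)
Your proposal is correct and takes essentially the same route as the paper's own proof: it sandwiches the original trajectory element-wise between the lower and upper comparison systems via \cref{thm:appendix:AGT2-QL:original-lower,thm:appendix:AGT2-QL:upper-original}, then invokes their global asymptotic stability (\cref{thm:appendix:AGT2-QL:stability:lower,thm:appendix:AGT2-QL:stability:upper}) to squeeze the state to the origin. Your additional bookkeeping --- bracketing an arbitrary initial condition by $z \pm \eta \mathbf{1}$ to satisfy the strict-inequality hypotheses, and separating attractivity from Lyapunov stability and equilibrium uniqueness --- is in fact more careful than the paper's terser argument, which states only the squeeze.
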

\begin{proof}
By~\cref{thm:appendix:AGT2-QL:original-lower,thm:appendix:AGT2-QL:upper-original}, we have
\[\left[ {\begin{array}{*{20}{c}}
{Q_t^{A,u} - {Q^*}}\\
{Q_t^{B,u} - {Q^*}}
\end{array}} \right] \ge \left[ {\begin{array}{*{20}{c}}
{Q_t^A - {Q^*}}\\
{Q_t^B - {Q^*}}
\end{array}} \right] \ge \left[ {\begin{array}{*{20}{c}}
{Q_t^{A,l} - {Q^*}}\\
{Q_t^{B,l} - {Q^*}}
\end{array}} \right],\quad \forall t \ge 0\]
Moreover, by~\cref{thm:appendix:AGT2-QL:stability:lower} and~\cref{thm:appendix:AGT2-QL:stability:upper}, we have
\[\left[ {\begin{array}{*{20}{c}}
{Q_t^{A,u} - {Q^*}}\\
{Q_t^{B,u} - {Q^*}}
\end{array}} \right] \to 0,\quad \left[ {\begin{array}{*{20}{c}}
{Q_t^{A,l} - {Q^*}}\\
{Q_t^{B,l} - {Q^*}}
\end{array}} \right] \to 0\]
as $t\to\infty$. Therefore, the state of the original system also asymptotically converges to the origin. This completes the proof.
\end{proof}

\subsection{Numerical example}
In this subsection, we present a simple example to illustrate the validity of the properties of the upper and lower comparison systems established in the previous sections. To this end, let us consider an MDP with ${\cal S}=\{1,2\}$, ${\cal A}=\{1,2\}$, $\gamma = 0.9$,
\begin{align*}
&P_1=\begin{bmatrix}
   0.2 & 0.8\\
   0.3 & 0.7 \\
\end{bmatrix},\quad P_2=\begin{bmatrix}
   0.5 & 0.5 \\
   0.7 & 0.3\\
\end{bmatrix},
\end{align*}
a behavior policy $\beta$ such that
\begin{align*}
&{\mathbb P}[a = 1|s = 1] = 0.2,\quad {\mathbb P}[a = 2|s = 1] = 0.8,\\
&{\mathbb P}[a = 1|s = 2] = 0.7,\quad {\mathbb P}[a = 2|s = 2] = 0.3,
\end{align*}
and
\begin{align*}
R_1  = \begin{bmatrix}
   3  \\
   1  \\
\end{bmatrix},\quad R_2  = \begin{bmatrix}
   2  \\
   1  \\
\end{bmatrix}
\end{align*}

Simulated trajectories of the ODE model of AGT2-QL including the upper and lower comparison systems are depicted in~\cref{fig:appendix:AGT2-QL:1} for $Q^A_t$ part and~\cref{fig:appendix:AGT2-QL:2} for $Q^B_t$ part. The results empirically prove that the ODE model associated with AGT2-QL is asymptotically stable. Moreover, they illustrate that the solutions of the upper and lower comparison systems bound the solution of the original system, as established by the theory.
\begin{figure}[h!]
\centering\includegraphics[width=14cm,height=10cm]{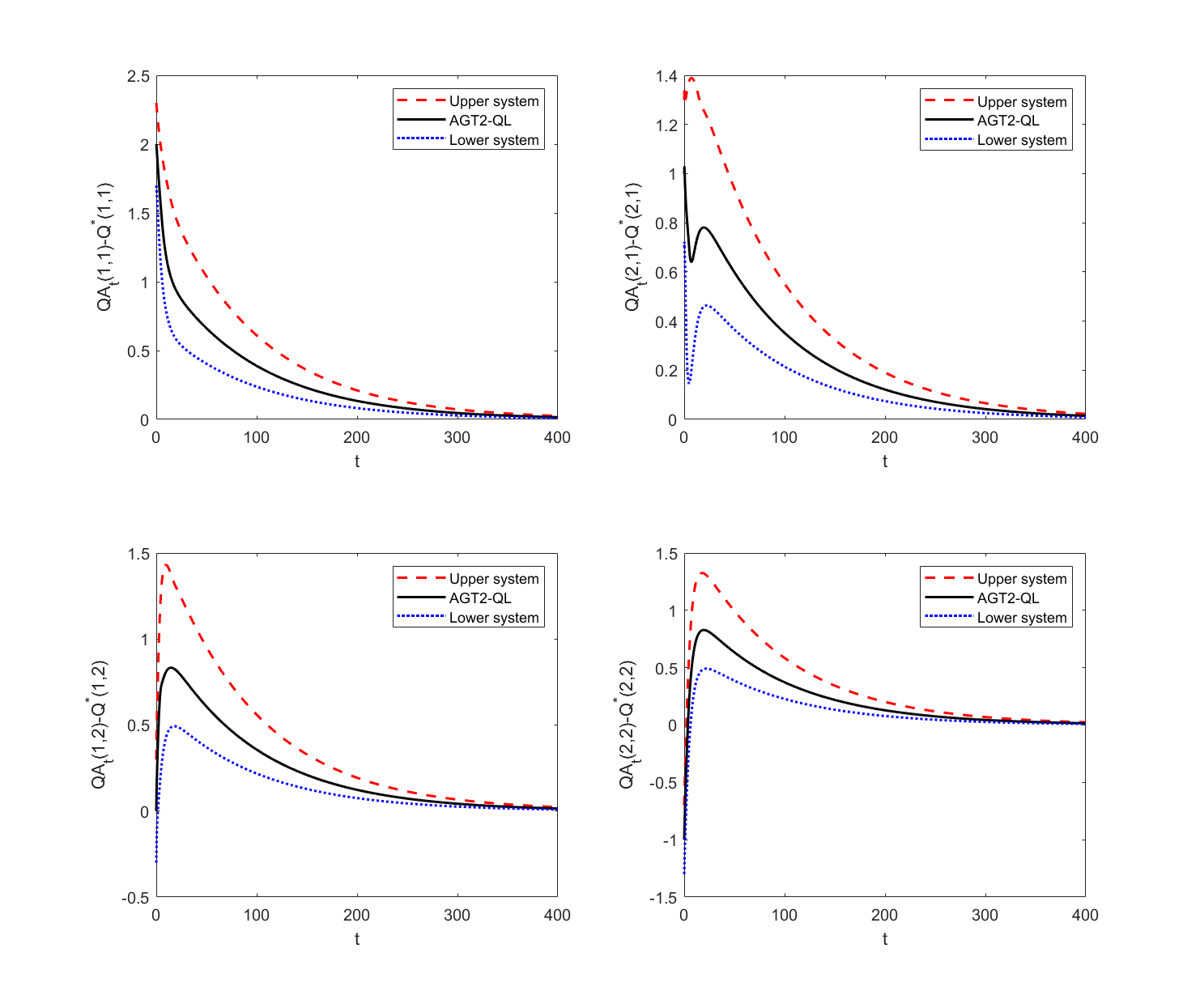}
\caption{Trajectories of the O.D.E. model of AGT2-QL and the corresponding upper and lower comparison systems ($Q^A_t$ part). 
The solution of the ODE model (black line) is upper and lower bounded by the upper and lower comparison systems, respectively (red and blue lines, respectively). This result provides numerical evidence that the bounding rules hold.}\label{fig:appendix:AGT2-QL:1}
\end{figure}
\begin{figure}[h!]
\centering\includegraphics[width=14cm,height=10cm]{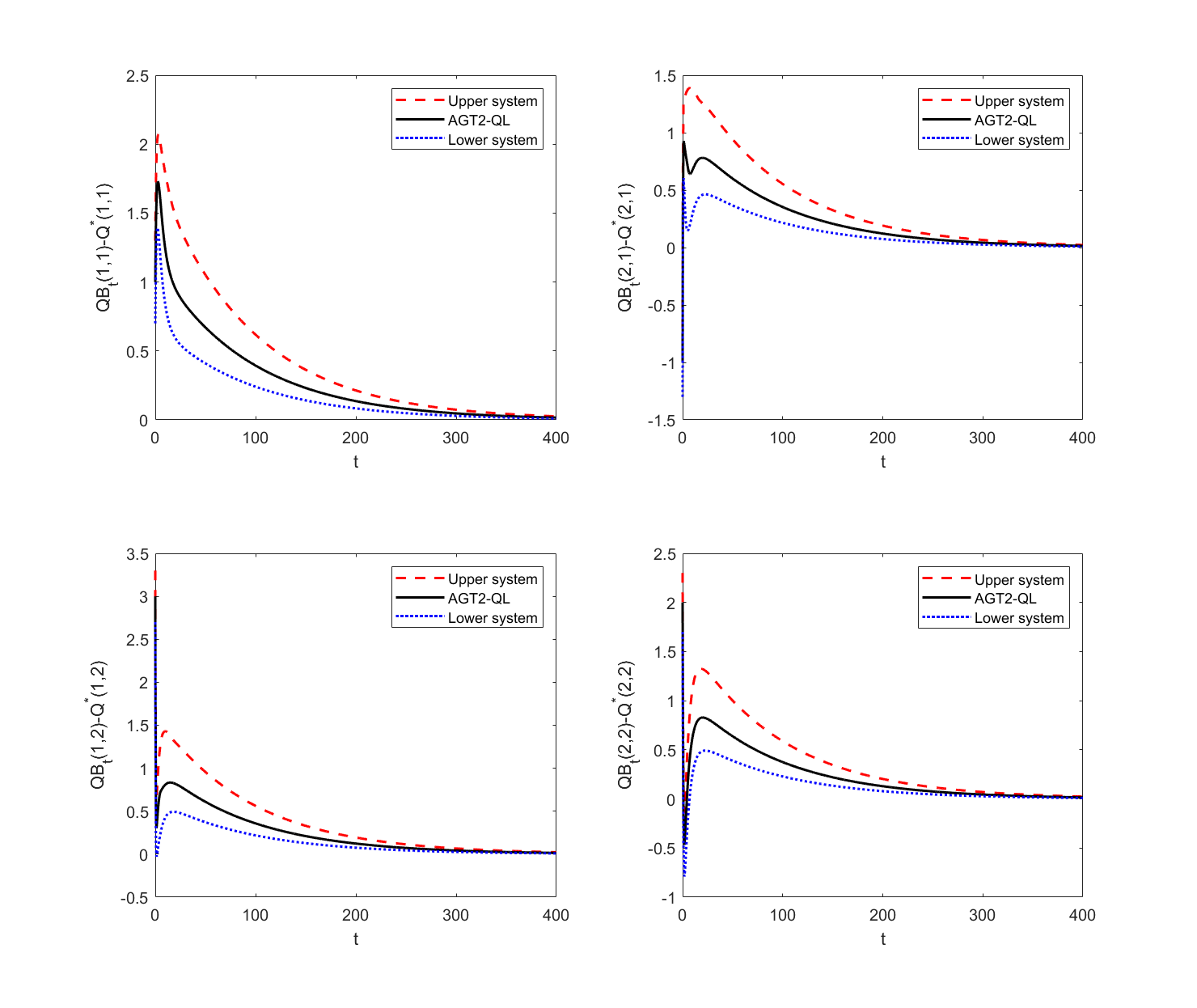}
\caption{Trajectories of the original O.D.E. model of AGT2-QL and the corresponding upper and lower comparison systems ($Q^B_t$ part). The solution of the ODE model (black line) is upper and lower bounded by the upper and lower comparison systems, respectively (red and blue lines, respectively). This result provides numerical evidence that the bounding rules hold.}\label{fig:appendix:AGT2-QL:2}
\end{figure}

\subsection{Convergence of AGT2-QL}
We can now apply the Borkar and Meyn theorem,~\cref{lemma:Borkar}, to prove~\cref{thm:AGT2-QL-convergence}.
First of all, note that the system in~\eqref{eq:appendix:4} corresponds to the ODE model in~\cref{assumption:1}. The proof is completed by examining all the statements in~\cref{assumption:1}:

\subsubsection{Step 1:}
AGT2-QL can be expressed as the stochastic recursion in~\eqref{eq:general-stochastic-recursion} with
\begin{align*}
f\left( {\left[ {\begin{array}{*{20}{c}}
{{\theta _1}}\\
{{\theta _2}}
\end{array}} \right]} \right): = \left[ {\begin{array}{*{20}{c}}
{ - D}&{\gamma DP{\Pi _{{\theta _2}}}}\\
{\beta D}&{ - \beta D}
\end{array}} \right]\left[ {\begin{array}{*{20}{c}}
{{\theta _1}}\\
{{\theta _2}}
\end{array}} \right] + \left[ {\begin{array}{*{20}{c}}
{DR}\\
0
\end{array}} \right].
\end{align*}
Moreover, $f$ is globally Lipschitz continuous according to~\cref{thm:appendix:AGT2-QL:Lipschits:f}.

To prove the first statement of~\cref{assumption:1}, we note that
\begin{align*}
{f_\infty }\left( {\left[ {\begin{array}{*{20}{c}}
{{\theta _1}}\\
{{\theta _2}}
\end{array}} \right]} \right) =& \mathop {\lim }\limits_{c \to \infty } \frac{{f\left( {\left[ {\begin{array}{*{20}{c}}
{c{\theta _1}}\\
{c{\theta _2}}
\end{array}} \right]} \right)}}{c}\\
=& \mathop {\lim }\limits_{c \to \infty } \frac{{\left[ {\begin{array}{*{20}{c}}
{ - D}&{\gamma DP{\Pi _{c{\theta _2}}}}\\
{\beta D}&{ - \beta D}
\end{array}} \right]\left[ {\begin{array}{*{20}{c}}
{c{\theta _1}}\\
{c{\theta _2}}
\end{array}} \right] + \left[ {\begin{array}{*{20}{c}}
{DR}\\
0
\end{array}} \right]}}{c}\\
=& \mathop {\lim }\limits_{c \to \infty } \left[ {\begin{array}{*{20}{c}}
{ - D}&{\gamma DP{\Pi _{c{\theta _2}}}}\\
{\beta D}&{ - \beta D}
\end{array}} \right]\left[ {\begin{array}{*{20}{c}}
{{\theta _1}}\\
{{\theta _2}}
\end{array}} \right]\\
=& \left[ {\begin{array}{*{20}{c}}
{ - D}&{\gamma DP{\Pi _{{\theta _2}}}}\\
{\beta D}&{ - \beta D}
\end{array}} \right]\left[ {\begin{array}{*{20}{c}}
{{\theta _1}}\\
{{\theta _2}}
\end{array}} \right]
\end{align*}
where the last equality is due to the homogeneity of the policy, $\pi _{c\theta_2 }(s) = \argmax _{a \in {\cal A}}c\theta_2 (s,a) = \arg {\max _{a \in A}}\theta_2 (s,a)$, where $\theta_2 (s,a)$ denotes the entry in the parameter vector $\theta_2$ corresponding to the state-action pair $(s,a)\in {\cal S}\times {\cal A}$.

\subsubsection{Step 2:}
Let us consider the system
\begin{align*}
\frac{d}{{dt}}\left[ {\begin{array}{*{20}{c}}
{{\theta _{1,t}}}\\
{{\theta _{2,t}}}
\end{array}} \right] = {f_\infty }\left( {\left[ {\begin{array}{*{20}{c}}
{{\theta _{1,t}}}\\
{{\theta _{2,t}}}
\end{array}} \right]} \right) = \left[ {\begin{array}{*{20}{c}}
{ - D}&{\gamma DP{\Pi _{{\theta _{2,t}}}}}\\
{\beta D}&{ - \beta D}
\end{array}} \right]\left[ {\begin{array}{*{20}{c}}
{{\theta _{1,t}}}\\
{{\theta _{2,t}}}
\end{array}} \right].
\end{align*}
Its global asymptotic stability around the origin can be easily proved following similar lines as in the proof of the upper comparison system in~\cref{thm:appendix:AGT2-QL:stability:upper}.

\subsubsection{Step 3:}
According to~\cref{thm:appendix:AGT2-QL:stability:f}, the ODE model of AGT2-QL is globally asymptotically stable around the optimal solution $\left[ {\begin{array}{*{20}{c}}
{{\theta _1}}\\
{{\theta _2}}
\end{array}} \right] = \left[ {\begin{array}{*{20}{c}}
{{Q^*}}\\
{{Q^*}}
\end{array}} \right]$.

\subsubsection{Step 4:}
Next, we prove the remaining parts. Recall that AGT2-QL is expressed as
\begin{align*}
{{\bar Q}_{k + 1}} = {{\bar Q}_k} + {\alpha _k}\{ f({{\bar Q}_k}) + {\varepsilon _{k + 1}}\}
\end{align*}
where ${{\bar Q}_k}: = \left[ {\begin{array}{*{20}{c}}
{Q_k^A}\\
{Q_k^B}
\end{array}} \right]$
and
\begin{align*}
{\varepsilon _{k + 1}}:=& \left[ {\begin{array}{*{20}{c}}
{ - ({e_s} \otimes {e_a}){{({e_s} \otimes {e_a})}^T}}&{  \gamma ({e_s} \otimes {e_a})e_{s'}^T{\Pi _{{Q^B}}}}\\
{\beta ({e_s} \otimes {e_a}){{({e_s} \otimes {e_a})}^T}}&{ - \beta ({e_s} \otimes {e_a}){{({e_s} \otimes {e_a})}^T}}
\end{array}} \right]\left[ {\begin{array}{*{20}{c}}
{{Q^A}}\\
{{Q^B}}
\end{array}} \right] + \left[ {\begin{array}{*{20}{c}}
{({e_s} \otimes {e_a})r(s,a,r')}\\
0
\end{array}} \right]\\
&- \left[ {\begin{array}{*{20}{c}}
{ - D}&{\gamma DP{\Pi _{{Q^B}}}}\\
{\beta D}&{ - \beta D}
\end{array}} \right]\left[ {\begin{array}{*{20}{c}}
{{Q^A}}\\
{{Q^B}}
\end{array}} \right] - \left[ {\begin{array}{*{20}{c}}
R\\
0
\end{array}} \right]
\end{align*}
and
\begin{align*}
f\left( {\left[ {\begin{array}{*{20}{c}}
{{Q^A}}\\
{{Q^B}}
\end{array}} \right]} \right): = \left[ {\begin{array}{*{20}{c}}
{ - D}&{\gamma DP{\Pi _{{Q^B}}}}\\
{\beta D}&{ - \beta D}
\end{array}} \right]\left[ {\begin{array}{*{20}{c}}
{{Q^A}}\\
{{Q^B}}
\end{array}} \right] + \left[ {\begin{array}{*{20}{c}}
{DR}\\
0
\end{array}} \right],
\end{align*}

To proceed, let us define the history
\[{\cal G}_k: = ({{\bar \varepsilon }_k},{{\bar \varepsilon }_{k - 1}}, \ldots ,{{\bar \varepsilon }_1},{{\bar Q}_k},{{\bar Q}_{k - 1}}, \ldots ,{{\bar Q}_0})\]
Moreover, let us define the corresponding process $(M_k)_{k=0}^\infty$ with $M_k:=\sum_{i=1}^k {\varepsilon_i}$. Then, we can prove that $(M_k)_{k=0}^\infty$ is Martingale. To do so, we can easily prove ${\mathbb E}[\varepsilon_{k+1}|{\cal G}_k]=0$ by
\begin{align*}
{\mathbb E}[\varepsilon _{k + 1}|{\cal G}_k] =& {\mathbb E}\left[ {\left. {\left[ {\begin{array}{*{20}{c}}
{ - (e_s \otimes e_a){(e_s \otimes e_a)^T}}&{\gamma  - (e_s \otimes e_a)e_{s'}^T \Pi_{Q^B}}\\
{\beta (e_s \otimes e_a){(e_s \otimes e_a)^T}}&{ - \beta (e_s \otimes e_a){{(e_s \otimes e_a)}^T}}
\end{array}} \right]\left[ {\begin{array}{*{20}{c}}
{{Q^A}}\\
{{Q^B}}
\end{array}} \right]} \right|{\cal G}_k} \right]\\
&- \left[ {\begin{array}{*{20}{c}}
{ - D}&{\gamma DP{\Pi _{{Q^B}}}}\\
{\beta D}&{ - \beta D}
\end{array}} \right]\left[ {\begin{array}{*{20}{c}}
{{Q^A}}\\
{{Q^B}}
\end{array}} \right]\\
=& 0
\end{align*}

Using this identity, we have
\begin{align*}
{\mathbb E}[M_{k+1}|{\cal G}_k]=& {\mathbb E}\left[ \left. \sum_{i=1}^{k+1}{\varepsilon_i} \right|{\cal G}_k\right]={\mathbb E}[\varepsilon_{k+1}|{\cal G}_k]+{\mathbb E}\left[ \left. \sum_{i=1}^k {\varepsilon_i} \right|{\cal G}_k \right]\\
=&{\mathbb E}\left[\left.\sum_{i=1}^k{\varepsilon_i} \right|{\cal G}_k \right]=\sum_{i=1}^k {\varepsilon_i}=M_k.
\end{align*}
Therefore, $(M_k)_{k=0}^\infty$ is a Martingale sequence, and $\varepsilon_{k+1} = M_{k+1}-M_k$ is a Martingale difference. Moreover, it can be easily proved that the fourth condition of~\cref{assumption:1} is satisfied by algebraic calculations. Therefore, the fourth condition is met.

\section{Convergence of SGT2-QL}

In this section, we study convergence of SGT2-QL. The full algorithm is described in~\cref{algo:SGT2-QL}.
\begin{algorithm}[h!]
\caption{SGT2-QL}
  \begin{algorithmic}[1]
    \State Initialize $Q_0^A$ and $Q_0^B$ randomly.
    \For{iteration $k=0,1,\ldots$}
    	\State Sample $(s,a)$
        \State Sample $s'\sim P(\cdot|s,a)$ and $r(s,a,s')$
        \State Update $Q^A_{k+1}(s,a)=Q^A_k(s,a)+\alpha_k \{r(s,a,s')+\gamma\max_{a\in {\cal A}} Q^B_k(s',a)-Q^A_k(s,a)\}+\alpha_k\beta (Q^B(s,a)_k-Q^A_k(s,a))$
        \State Update $Q^B_{k+1}(s,a)=Q^B_k(s,a)+\alpha_k \{r(s,a,s')+\gamma\max_{a\in {\cal A}} Q^A_k(s',a)-Q^B_k(s,a)\}+\alpha_k\beta (Q^A(s,a)_k-Q^B_k(s,a))$
    \EndFor
  \end{algorithmic}\label{algo:SGT2-QL}
\end{algorithm}
Similar to ATG2-GL, we analyze the convergence of SGT2-QL based on the same switching system approach~\cite{lee2020unified}.

\subsection{Original system}
As before, our first goal is to derive an ODE model of SGT2-QL to apply the Borkar and Meyn theorem~\cref{lemma:Borkar}. To begin with, the update rule of SGT2-QL in~\cref{algo:SGT2-QL} can be rewritten as follows:
\begin{align*}
Q_{k + 1}^A =& Q_k^A + {\alpha _k}\left\{ {({e_a} \otimes e_s){{(e_a \otimes e_s)}^T}R} \right.\\
&+ \gamma ({e_a} \otimes e_s){(e_{s'})^T}{\max _{a \in {\cal A}}}Q_k^B( \cdot ,a) - ({e_a} \otimes {e_s}){(e_a \otimes e_s)^T}Q_k^A\\
&\left. { + \beta ((e_a \otimes e_s){{(e_a \otimes e_s)}^T}Q_k^B - ({e_a} \otimes {e_s}){{({e_a} \otimes {e_s})}^T}Q_k^A)} \right\},\\
Q_{k + 1}^B =& Q_k^B + {\alpha _k}\left\{ {({e_a} \otimes {e_s}){{({e_a} \otimes {e_s})}^T}R} \right.\\
& + \gamma ({e_a} \otimes {e_s}){({e_{s'}})^T}{\max _{a \in {\cal A}}}Q_k^A( \cdot ,a) - ({e_a} \otimes {e_s}){({e_a} \otimes {e_s})^T}Q_k^B\\
&\left. { + \beta (({e_a} \otimes {e_s}){{({e_a} \otimes {e_s})}^T}Q_k^A - ({e_a} \otimes {e_s}){{({e_a} \otimes {e_s})}^T}Q_k^B)} \right\},
\end{align*}
where $e_s \in {\mathbb R}^{|{\cal S}|}$ and $e_a \in {\mathbb R}^{|{\cal A}|}$ are $s$-th basis vector (all components are $0$ except for the $s$-th component which is $1$) and $a$-th basis vector, respectively.
The above update can be further expressed as
\begin{align*}
Q_{k + 1}^A = Q_k^A + {\alpha _k}\{ DR + \gamma DP{\Pi _{Q_k^B}}Q_k^B - DQ_k^A + \beta (Q_k^B - Q_k^A) + \varepsilon _{k + 1}^A\}\\
Q_{k + 1}^B = Q_k^B + {\alpha _k}\{ DR + \gamma DP{\Pi _{Q_k^B}}Q_k^A - DQ_k^B + \beta (Q_k^A - Q_k^B) + \varepsilon _{k + 1}^B\}
\end{align*}
where
\begin{align*}
\varepsilon _{k + 1}^A =& ({e_a} \otimes {e_s}){({e_a} \otimes {e_s})^T}R + \gamma ({e_a} \otimes {e_s}){({e_{s'}})^T}{\Pi _{Q_k^B}}Q_k^B\\
& - ({e_a} \otimes {e_s}){({e_a} \otimes {e_s})^T}Q_k^A + ({e_a} \otimes {e_s}){({e_a} \otimes {e_s})^T}\beta (Q_k^B - Q_k^B)\\
&- (DR + \gamma DP{\Pi _{Q_k^B}}Q_k^B - DQ_k^A + \beta (Q_k^B - Q_k^a))\\
\varepsilon _{k + 1}^A =& ({e_a} \otimes {e_s}){({e_a} \otimes {e_s})^T}R + \gamma ({e_a} \otimes {e_s}){({e_{s'}})^T}{\Pi _{Q_k^B}}Q_k^B\\
& - ({e_a} \otimes {e_s}){({e_a} \otimes {e_s})^T}Q_k^A + ({e_a} \otimes {e_s}){({e_a} \otimes {e_s})^T}\beta (Q_k^B - Q_k^B)\\
& - (DR + \gamma DP{\Pi _{Q_k^B}}Q_k^B - DQ_k^A + \beta (Q_k^B - Q_k^a))
\end{align*}

As discussed in~\cref{sec:ODE-stochastic-approximation}, the convergence of SGT3-QL can be analyzed by evaluating the stability of the corresponding continuous-time ODE model
\begin{align}
\frac{d}{{dt}}\left[ {\begin{array}{*{20}{c}}
{Q_t^A}\\
{Q_t^B}
\end{array}} \right] = \left[ {\begin{array}{*{20}{c}}
{ - (1 + \beta )D}&{\gamma DP{\Pi _{Q_t^B}} + \beta D}\\
{\gamma DP{\Pi _{Q_t^A}} + \beta D}&{ - (1 + \beta )D}
\end{array}} \right]\left[ {\begin{array}{*{20}{c}}
{Q_t^A}\\
{Q_t^B}
\end{array}} \right] + \left[ {\begin{array}{*{20}{c}}
{DR}\\
{DR}
\end{array}} \right],\quad \left[ {\begin{array}{*{20}{c}}
{Q_0^A}\\
{Q_0^B}
\end{array}} \right] \in {\mathbb R}^{2|{\cal S}||{\cal A}|}.\label{eq:appendix:SGT2-QL:original-system1}
\end{align}

Using the Bellman equation $(\gamma DP\Pi_{Q^*}-D)Q^*+DR=0$, the above expressions can be rewritten by
\begin{align}
\frac{d}{{dt}}\left[ {\begin{array}{*{20}{c}}
{Q_t^A - {Q^*}}\\
{Q_t^B - {Q^*}}
\end{array}} \right] =& \left[ {\begin{array}{*{20}{c}}
{ - (1 + \beta )D}&{\gamma DP{\Pi _{Q_t^B}} + \beta D}\\
{\gamma DP{\Pi _{Q_t^A}} + \beta D}&{ - (1 + \beta )D}
\end{array}} \right]\left[ {\begin{array}{*{20}{c}}
{Q_t^A - Q^*}\\
{Q_t^B - Q^*}
\end{array}} \right]\nonumber\\
&+ \left[ {\begin{array}{*{20}{c}}
{\gamma DP({\Pi _{Q_t^B}} - {\Pi _{{Q^*}}}){Q^*}}\\
{\gamma DP({\Pi _{Q_t^A}} - {\Pi _{{Q^*}}}){Q^*}}
\end{array}} \right],\quad \left[ {\begin{array}{*{20}{c}}
{Q_0^A - {Q^*}}\\
{Q_0^B - {Q^*}}
\end{array}} \right] = z \in {\mathbb R}^{2|{\cal S}||{\cal A}|}.\label{eq:appendix:SGT2-QL:original-system2}
\end{align}

The above system is a linear switching system. More precisely, let $\Theta$ be the set of all deterministic policies, let us define a one-to-one mapping $\varphi :\Theta  \to \{ 1,2, \ldots ,| \Theta \times \Theta | \}$ from two deterministic policies $(\pi_A,\pi_B) \in \Theta \times \Theta$ to an integer in $\{ 1,2, \ldots ,|\Theta \times \Theta|\}$, and define
\begin{align*}
{A_i} =& \left[ {\begin{array}{*{20}{c}}
{ - (1 + \beta )D}&{\gamma DP{\Pi ^{{\pi _B}}} + \beta D}\\
{\gamma DP \Pi^{\pi _A} + \beta D}&{ - (1 + \beta )D}
\end{array}} \right] \in {\mathbb R}^{2|{\cal S} \times {\cal A}| \times 2|{\cal S} \times {\cal A}|},\\
b_i =& \left[ {\begin{array}{*{20}{c}}
{\gamma DP({\Pi ^{{\pi _B}}} - {\Pi ^{{\pi ^*}}}){Q^*}}\\
{\gamma DP({\Pi ^{{\pi _A}}} - {\Pi ^{{\pi ^*}}}){Q^*}}
\end{array}} \right] \in {\mathbb R}^{2|{\cal S} \times {\cal A}|}
\end{align*}
for all $i = \varphi (\pi_A,\pi_B )$ and $(\pi_A,\pi_B)  \in \Theta\times \Theta$.
Then, the above ODE can be written by the affine switching system
\begin{align*}
\frac{d}{dt}x_t= A_{\sigma(x_t)}x_t + b_{\sigma(x_t)},\quad x_0=z\in {\mathbb R}^{|{\cal S}||{\cal A}|},
\end{align*}
where
\begin{align*}
x_t: = \left[ {\begin{array}{*{20}{c}}
{Q_t^A - {Q^*}}\\
{Q_t^B - {Q^*}}
\end{array}} \right]
\end{align*}
is the state, $\sigma: {\mathbb R}^{|{\cal S}||{\cal A}|}\to \{1,2,\ldots ,|\Theta\times \Theta|\}$ is a state-feedback switching policy defined by $\sigma(x_t):=\psi(\pi_{Q^A_t},\pi_{Q^B_t})$, and $\pi_{Q}(s)=\argmax_{a\in {\cal A}}Q(s,a)$.

For convenience of analysis, let us define the following vector functions:
\begin{align*}
f(x_1,x_2): =& \left[ {\begin{array}{*{20}{c}}
{{f_1}({x_1},{x_2})}\\
{{f_2}({x_1},{x_2})}
\end{array}} \right]\\
:=& \left[ {\begin{array}{*{20}{c}}
{ - (1 + \beta )D}&{\gamma DP{\Pi _{{x_2} + {Q^*}}} + \beta D}\\
{\gamma DP{\Pi _{{x_1} + {Q^*}}} + \beta D}&{ - (1 + \beta )D}
\end{array}} \right]\left[ {\begin{array}{*{20}{c}}
{{x_1}}\\
{{x_2}}
\end{array}} \right] + \left[ {\begin{array}{*{20}{c}}
{\gamma DP({\Pi _{{x_2} + {Q^*}}} - {\Pi _{{Q^*}}}){Q^*}}\\
{\gamma DP({\Pi _{{x_1} + {Q^*}}} - {\Pi _{{Q^*}}}){Q^*}}
\end{array}} \right].
\end{align*}
Then,~\eqref{eq:appendix:SGT2-QL:original-system2} can be written by the system
\begin{align*}
\frac{d}{dt}\begin{bmatrix}
   x_{t,1}\\
   x_{t,2}\\
\end{bmatrix}=\begin{bmatrix}
   f_1(x_{t,1},x_{t,2})\\
   f_2(x_{t,1},x_{t,2})\\
\end{bmatrix},\quad z_0 = \begin{bmatrix}
   Q_0^A-Q^*\\
   Q_0^B-Q^*\\
\end{bmatrix},\quad \forall t \geq 0,
\end{align*}
where
\[\left[ {\begin{array}{*{20}{c}}
{{x_{t,1}}}\\
{{x_{t,2}}}
\end{array}} \right] = \left[ {\begin{array}{*{20}{c}}
{Q_t^A - {Q^*}}\\
{Q_t^B - {Q^*}}
\end{array}} \right]\]
is the state vector. Next, we present several useful lemmas that aid in the analysis of the ODE model.
\begin{lemma}\label{thm:appendix:SGT2-QL:property-f} The function $f$ can be represented by
\begin{align*}
f({x_1},{x_2}) = \left[ {\begin{array}{*{20}{c}}
{ - (1 + \beta )D{x_1} + \beta D{x_2} + \gamma DP{\Pi _{{x_2} + {Q^*}}}({x_2} + {Q^*}) - \gamma DP{\Pi _{{Q^*}}}{Q^*}}\\
{ - (1 + \beta )D{x_2} + \beta D{x_1} + \gamma DP{\Pi _{{x_1} + {Q^*}}}({x_1} + {Q^*}) - \gamma DP{\Pi _{{Q^*}}}{Q^*}}
\end{array}} \right].
\end{align*}
\end{lemma}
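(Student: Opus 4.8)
The plan is to prove this by direct expansion of the matrix--vector product in the definition of $f$, exactly mirroring the computation already carried out for \cref{thm:appendix:AGT2-QL:property-f}. First I would perform the block multiplication on the first row, expanding $-(1+\beta)D x_1 + (\gamma DP\Pi_{x_2+Q^*} + \beta D)x_2$ and then adding the affine contribution $\gamma DP(\Pi_{x_2+Q^*} - \Pi_{Q^*})Q^*$. This produces five terms, and the crucial step is to collect the two terms carrying the common factor $\gamma DP\Pi_{x_2+Q^*}$---namely $\gamma DP\Pi_{x_2+Q^*}x_2$ coming from the linear part and $\gamma DP\Pi_{x_2+Q^*}Q^*$ coming from the affine part---into the single product $\gamma DP\Pi_{x_2+Q^*}(x_2+Q^*)$. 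The three remaining terms $-(1+\beta)Dx_1$, $\beta Dx_2$, and $-\gamma DP\Pi_{Q^*}Q^*$ are simply carried over unchanged, yielding precisely the claimed first component of $f$.

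Then I would repeat the identical manipulation for the second row, with the roles of $x_1$ and $x_2$ interchanged and with the policy $\Pi_{x_1+Q^*}$ replacing $\Pi_{x_2+Q^*}$. Because the block structure of the system matrix is symmetric under this swap, the same factorization produces $\gamma DP\Pi_{x_1+Q^*}(x_1+Q^*)$, and the leftover terms assemble into the stated second component. Stacking the two rows gives the asserted closed form for $f(x_1,x_2)$.

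Since every operation is a linear regrouping of matrix products, there is no genuine analytical obstacle here. The only point requiring care is correctly tracking which variable ($x_1$ or $x_2$) each $\Pi$-subscript depends on, since the asymmetry between the diagonal linear terms and the off-diagonal $\Pi$-dependent terms makes a transcription error easy to commit. This identity is purely mechanical, and its role is to recast $f$ into a form from which the subsequent quasi-monotonicity and global Lipschitz properties of the SGT2-QL system---the analogues of \cref{thm:appendix:AGT2-QL:quasi-monotone:f,thm:appendix:AGT2-QL:Lipschits:f}---can be read off directly.
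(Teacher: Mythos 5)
Your proposal is correct and follows exactly the same route as the paper's proof: expand the block matrix--vector product, absorb the affine term $\gamma DP(\Pi_{x_2+Q^*}-\Pi_{Q^*})Q^*$ by collecting $\gamma DP\Pi_{x_2+Q^*}x_2 + \gamma DP\Pi_{x_2+Q^*}Q^*$ into $\gamma DP\Pi_{x_2+Q^*}(x_2+Q^*)$, and repeat symmetrically for the second row. Nothing is missing; this is the same purely mechanical regrouping the paper performs.
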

\begin{proof}
The proof follows from the following algebraic manipulations:
\begin{align*}
f({x_1},{x_2}) =& \left[ {\begin{array}{*{20}{c}}
{ - (1 + \beta )D}&{\gamma DP{\Pi _{{x_2} + {Q^*}}} + \beta D}\\
{\gamma DP{\Pi _{{x_1} + {Q^*}}} + \beta D}&{ - (1 + \beta )D}
\end{array}} \right]\left[ {\begin{array}{*{20}{c}}
{{x_1}}\\
{{x_2}}
\end{array}} \right] + \left[ {\begin{array}{*{20}{c}}
{\gamma DP({\Pi _{{x_2} + {Q^*}}} - {\Pi _{{Q^*}}}){Q^*}}\\
{\gamma DP({\Pi _{{x_1} + {Q^*}}} - {\Pi _{{Q^*}}}){Q^*}}
\end{array}} \right]\\
=& \left[ {\begin{array}{*{20}{c}}
{ - (1 + \beta )D{x_1} + \beta D{x_2} + \gamma DP{\Pi _{{x_2} + {Q^*}}}{x_2} + \gamma DP({\Pi _{{x_2} + {Q^*}}} - {\Pi _{{Q^*}}}){Q^*}}\\
{ - (1 + \beta )D{x_2} + \beta D{x_1} + \gamma DP{\Pi _{{x_1} + {Q^*}}}{x_1} + \gamma DP({\Pi _{{x_1} + {Q^*}}} - {\Pi _{{Q^*}}}){Q^*}}
\end{array}} \right]\\
=& \left[ {\begin{array}{*{20}{c}}
{ - (1 + \beta )D{x_1} + \beta D{x_2} + \gamma DP{\Pi _{{x_2} + {Q^*}}}({x_2} + {Q^*}) - \gamma DP{\Pi _{{Q^*}}}{Q^*}}\\
{ - (1 + \beta )D{x_2} + \beta D{x_1} + \gamma DP{\Pi _{{x_1} + {Q^*}}}({x_1} + {Q^*}) - \gamma DP{\Pi _{{Q^*}}}{Q^*}}
\end{array}} \right].
\end{align*}
This completes the proof.
\end{proof}

\begin{lemma}\label{thm:appendix:SGT2-QL:quasi-monotone:f}
$f$ is quasi-monotone increasing.
\end{lemma}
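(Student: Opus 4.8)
The plan is to verify the defining inequality of \cref{def:quasi-monotone} directly, adapting the component-wise perturbation argument already used for AGT2-QL in \cref{thm:appendix:AGT2-QL:quasi-monotone:f}. Starting from the explicit form of $f$ supplied by \cref{thm:appendix:SGT2-QL:property-f}, I would introduce nonnegative perturbation vectors $\Delta x_1,\Delta x_2\in{\mathbb R}^{|{\cal S}||{\cal A}|}$ and check the two blocks $f_1$ and $f_2$ in turn. Concretely, I fix an index $i$ in the first block with $[\Delta x_1]_i=0$ and compare $e_i^T f_1(x_1+\Delta x_1,x_2+\Delta x_2)$ with $e_i^T f_1(x_1,x_2)$; the second block is then handled identically after interchanging the roles of $x_1$ and $x_2$, using an index $i$ with $[\Delta x_2]_i=0$.

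For the $f_1$ block, I expect three contributions to behave as follows. The diagonal term $-(1+\beta)e_i^T D(x_1+\Delta x_1)$ collapses to $-(1+\beta)e_i^T D x_1$, because $D$ is diagonal and $[\Delta x_1]_i=0$, so the negative diagonal entry contributes nothing under the perturbation. The linear cross term $\beta e_i^T D(x_2+\Delta x_2)$ can only increase, since $\beta[D]_{ii}>0$ and $[\Delta x_2]_i\ge 0$. Finally, the Bellman term $\gamma e_i^T DP\Pi_{x_2+\Delta x_2+Q^*}(x_2+\Delta x_2+Q^*)$ dominates $\gamma e_i^T DP\Pi_{x_2+Q^*}(x_2+Q^*)$, because $e_i^T DP$ is a nonnegative row and the map $Q\mapsto \Pi_Q Q=\max_{a\in{\cal A}}Q(\cdot,a)$ is order-preserving. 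Summing the three gives $e_i^T f_1(x_1+\Delta x_1,x_2+\Delta x_2)\ge e_i^T f_1(x_1,x_2)$, and the symmetric computation for $f_2$ completes the verification.

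The only substantive point---more a checkpoint than an obstacle---is the monotonicity of the Bellman term $\gamma DP\Pi_{x+Q^*}(x+Q^*)$ in its argument; this holds because $\Pi_Q Q$ is exactly the entrywise maximum over actions, which preserves the component-wise order, and because $D$ and $P$ have nonnegative entries so that multiplication by $e_i^T DP\ge 0$ keeps the inequality intact. The additional coupling term $\beta D$, absent from the first block of AGT2-QL, only reinforces the Metzler-type (nonnegative off-diagonal) structure, so it poses no new difficulty relative to \cref{thm:appendix:AGT2-QL:quasi-monotone:f}.
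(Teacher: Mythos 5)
Your proposal is correct and follows essentially the same route as the paper's proof: both verify \cref{def:quasi-monotone} directly from the explicit form in \cref{thm:appendix:SGT2-QL:property-f}, splitting $e_i^T f_1$ into the vanishing diagonal term (using $[\Delta x_1]_i=0$ and diagonality of $D$), the nondecreasing $\beta D$ coupling term, and the Bellman term handled via monotonicity of the entrywise max together with nonnegativity of $e_i^T DP$, then dispatching $f_2$ by symmetry. Your explicit justification of the order-preservation of $Q\mapsto\Pi_Q Q$ is a point the paper leaves implicit, but the argument is the same.
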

\begin{proof}
We will check the condition of the quasi-monotone increasing function for $f_1$ and $f_2$, separately. Assume that $\Delta x_1\in {\mathbb R}^{|{\cal S}|||{\cal A}|}$ and $\Delta x_2\in {\mathbb R}^{|{\cal S}|||{\cal A}|}$ are nonnegative vectors, and an $i$the element of $\Delta x_1$ is zero.
For $f_1$, using~\cref{thm:appendix:SGT2-QL:property-f}, we have
\begin{align*}
&e_i^T{f_1}({x_1} + \Delta {x_1},{x_2} + \Delta {x_2})\\
=&  - (1 + \beta )e_i^TD({x_1} + \Delta {x_1}) + \beta D({x_2} + \Delta {x_2})\\
& + \gamma e_i^TDP{\Pi _{{x_2} + \Delta {x_2} + {Q^*}}}({x_2} + \Delta {x_2} + {Q^*}) - \gamma e_i^TDP{\Pi _{{Q^*}}}{Q^*}\\
=&  - (1 + \beta )e_i^TD{x_1} + \beta D({x_2} + \Delta {x_2}) + \gamma e_i^TDP{\Pi _{{x_2} + \Delta {x_2} + {Q^*}}}({x_2} + \Delta {x_2} + {Q^*}) - \gamma e_i^TDP{\Pi _{{Q^*}}}{Q^*}\\
\ge&  - (1 + \beta )e_i^TD{x_1} + \beta D{x_2} + \gamma e_i^TDP{\Pi _{{x_2} + {Q^*}}}({x_2} + {Q^*}) - \gamma e_i^TDP{\Pi _{{Q^*}}}{Q^*}\\
=& e_i^T{f_1}({x_1},{x_2}),
\end{align*}
where the second line is due to $-(1+\beta)e_i^T D \Delta x_1 = 0$. Similarly, assuming that $\Delta x_1\in {\mathbb R}^{|{\cal S}|||{\cal A}|}$ and $\Delta x_2\in {\mathbb R}^{|{\cal S}|||{\cal A}|}$ are nonnegative vectors, and an $i$the element of $\Delta x_2$ is zero, we get
\begin{align*}
e_i^T{f_2}({x_1} + \Delta {x_1},{x_2} + \Delta {x_2}) \ge e_i^T{f_2}({x_1},{x_2}),
\end{align*}
by symmetry. Therefore, $h$ is quasi-monotone increasing.
\end{proof}

\begin{lemma}\label{thm:appendix:SGT2-QL:Lipschits:f}
$f$ is globally Lipshcitz continuous.
\end{lemma}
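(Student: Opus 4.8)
The plan is to mirror the decomposition used for AGT2-QL in~\cref{thm:appendix:AGT2-QL:Lipschits:f}, adapting it to the symmetric structure, which now places a nonlinear greedy term in \emph{both} block rows rather than only the first. Using the representation from~\cref{thm:appendix:SGT2-QL:property-f}, I would first split $f$ into a purely linear block, two nonlinear max-type blocks, and a constant offset:
\begin{align*}
f(x_1,x_2) = \begin{bmatrix} -(1+\beta)D & \beta D \\ \beta D & -(1+\beta)D \end{bmatrix}\begin{bmatrix} x_1 \\ x_2 \end{bmatrix} + \begin{bmatrix} \gamma DP\Pi_{x_2+Q^*}(x_2+Q^*) \\ \gamma DP\Pi_{x_1+Q^*}(x_1+Q^*) \end{bmatrix} + \begin{bmatrix} -\gamma DP\Pi_{Q^*}Q^* \\ -\gamma DP\Pi_{Q^*}Q^* \end{bmatrix}.
\end{align*}
The constant offset cancels when forming $f(x_1,x_2)-f(y_1,y_2)$, so by the triangle inequality in $\|\cdot\|_\infty$ it suffices to bound the linear block and the nonlinear block separately.

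For the linear block I would bound its $\|\cdot\|_\infty$ by the induced matrix $\infty$-norm of $M:=\begin{bmatrix} -(1+\beta)D & \beta D \\ \beta D & -(1+\beta)D \end{bmatrix}$ times $\|[x_1;x_2]-[y_1;y_2]\|_\infty$, exactly as in the AGT2-QL case; this factor is finite and $\beta$-dependent but fixed. For the nonlinear block, the essential ingredient is the nonexpansiveness of the greedy evaluation map $Q \mapsto \Pi_Q Q$ in the sup norm: since the $s$-th entry of $\Pi_Q Q$ is $\max_{a} Q(s,a)$ and the max operator is $1$-Lipschitz, one has $\|\Pi_{x_2+Q^*}(x_2+Q^*)-\Pi_{y_2+Q^*}(y_2+Q^*)\|_\infty \le \|x_2-y_2\|_\infty \le \|[x_1;x_2]-[y_1;y_2]\|_\infty$, and symmetrically for the $x_1$-dependent entry in the second row. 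Multiplying by $\|\gamma DP\|_\infty$ (which is finite, indeed bounded by $\gamma$ since $D$ and $P$ are substochastic) gives a finite contribution from each nonlinear term, so the $\infty$-norm of the whole nonlinear block is at most $\|\gamma DP\|_\infty\,\|[x_1;x_2]-[y_1;y_2]\|_\infty$.

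Combining these via the triangle inequality yields a finite global Lipschitz constant of the form $\|M\|_\infty + \|\gamma DP\|_\infty$, establishing that $f$ is globally Lipschitz in $\|\cdot\|_\infty$. I do not expect a genuine obstacle here: the argument is a direct symmetric extension of~\cref{thm:appendix:AGT2-QL:Lipschits:f}, and its only substantive component is the nonexpansiveness of the max operator, which is standard. The one point requiring mild care is that, unlike AGT2-QL, the lower-left off-diagonal block $\gamma DP\Pi_{x_1+Q^*}$ now also carries a nonlinear dependence, so the second block row cannot be dismissed as linear and must be estimated explicitly; by the symmetry of $f$, however, it is handled by the identical bound applied to the $x_1$ argument.
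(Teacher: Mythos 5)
Your proof is correct and takes essentially the same route as the paper's: the identical decomposition of $f$ into the linear block with matrix $M$, the stacked nonlinear greedy block, and a constant offset, with the nonexpansiveness of $Q \mapsto \Pi_Q Q$ in $\|\cdot\|_\infty$ (i.e., the $1$-Lipschitzness of the max operator) supplying the key estimate. The only immaterial differences are that you bound the stacked nonlinear block by the maximum over its two rows, yielding the slightly tighter constant $\|M\|_\infty + \|\gamma DP\|_\infty$ where the paper sums the two row contributions to get $\|M\|_\infty + 2\|\gamma DP\|_\infty$, and that you make explicit the nonexpansiveness argument that the paper uses implicitly.
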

\begin{proof}
From~\cref{thm:appendix:SGT2-QL:property-f}, one has
\begin{align*}
f({x_1},{x_2}) =& \left[ {\begin{array}{*{20}{c}}
{ - (1 + \beta )D}&{\beta D}\\
{\beta D}&{ - (1 + \beta )D}
\end{array}} \right]\left[ {\begin{array}{*{20}{c}}
{{x_1}}\\
{{x_2}}
\end{array}} \right]\\
& + \left[ {\begin{array}{*{20}{c}}
{\gamma DP{\Pi _{{x_2} + {Q^*}}}({x_2} + {Q^*})}\\
{\gamma DP{\Pi _{{x_1} + {Q^*}}}({x_1} + {Q^*})}
\end{array}} \right] + \left[ {\begin{array}{*{20}{c}}
{ - \gamma DP{\Pi _{{Q^*}}}{Q^*}}\\
{ - \gamma DP{\Pi _{{Q^*}}}{Q^*}}
\end{array}} \right].
\end{align*}
sing this relation, we derive the following sequence of inequalities:
\begin{align*}
&{\left\| {f({x_1},{x_2}) - f({y_1},{y_2})} \right\|_\infty }\\
\le& {\left\| {\left[ {\begin{array}{*{20}{c}}
{ - (1 + \beta )D}&{\beta D}\\
{\beta D}&{ - (1 + \beta )D}
\end{array}} \right]\left[ {\begin{array}{*{20}{c}}
{{x_1}}\\
{{x_2}}
\end{array}} \right] - \left[ {\begin{array}{*{20}{c}}
{ - (1 + \beta )D}&{\beta D}\\
{\beta D}&{ - (1 + \beta )D}
\end{array}} \right]\left[ {\begin{array}{*{20}{c}}
{{y_1}}\\
{{y_2}}
\end{array}} \right]} \right\|_\infty }\\
&+ {\left\| {\gamma DP{\Pi _{{x_2} + {Q^*}}}({x_2} + {Q^*}) - \gamma DP{\Pi _{{y_2} + {Q^*}}}({y_2} + {Q^*})} \right\|_\infty } \\
&+ {\left\| {\gamma DP{\Pi _{{x_1} + {Q^*}}}({x_1} + {Q^*}) - \gamma DP{\Pi _{{y_1} + {Q^*}}}({y_1} + {Q^*})} \right\|_\infty }\\
\le& {\left\| {\left[ {\begin{array}{*{20}{c}}
{ - (1 + \beta )D}&{\beta D}\\
{\beta D}&{ - (1 + \beta )D}
\end{array}} \right]} \right\|_\infty }{\left\| {\left[ {\begin{array}{*{20}{c}}
{{x_1}}\\
{{x_2}}
\end{array}} \right] - \left[ {\begin{array}{*{20}{c}}
{{y_1}}\\
{{y_2}}
\end{array}} \right]} \right\|_\infty }\\
& + {\left\| {\gamma DP} \right\|_\infty }{\left\| {{\Pi _{{x_2} + {Q^*}}}({x_2} + {Q^*}) - {\Pi _{{y_2} + {Q^*}}}({y_2} + {Q^*})} \right\|_\infty }\\
& + {\left\| {\gamma DP} \right\|_\infty }{\left\| {{\Pi _{{x_1} + {Q^*}}}({x_1} + {Q^*}) - {\Pi _{{y_1} + {Q^*}}}({y_1} + {Q^*})} \right\|_\infty }\\
\le& {\left\| {\left[ {\begin{array}{*{20}{c}}
{ - (1 + \beta )D}&{\beta D}\\
{\beta D}&{ - (1 + \beta )D}
\end{array}} \right]} \right\|_\infty }{\left\| {\left[ {\begin{array}{*{20}{c}}
{{x_1}}\\
{{x_2}}
\end{array}} \right] - \left[ {\begin{array}{*{20}{c}}
{{y_1}}\\
{{y_2}}
\end{array}} \right]} \right\|_\infty } + {\left\| {\gamma DP} \right\|_\infty }{\left\| {{x_1} - {y_1}} \right\|_\infty }\\
& + {\left\| {\gamma DP} \right\|_\infty }{\left\| {{x_2} - {y_2}} \right\|_\infty }\\
\le& {\left\| {\left[ {\begin{array}{*{20}{c}}
{ - (1 + \beta )D}&{\beta D}\\
{\beta D}&{ - (1 + \beta )D}
\end{array}} \right]} \right\|_\infty }{\left\| {\left[ {\begin{array}{*{20}{c}}
{{x_1}}\\
{{x_2}}
\end{array}} \right] - \left[ {\begin{array}{*{20}{c}}
{{y_1}}\\
{{y_2}}
\end{array}} \right]} \right\|_\infty } + 2{\left\| {\gamma DP} \right\|_\infty }{\left\| {\left[ {\begin{array}{*{20}{c}}
{{x_1}}\\
{{x_2}}
\end{array}} \right] - \left[ {\begin{array}{*{20}{c}}
{{y_1}}\\
{{y_2}}
\end{array}} \right]} \right\|_\infty }
\end{align*}
indicating that  $f$ is globally Lipschitz continuous with respect to the $\|\cdot\|_\infty$ norm. This completes the proof.
\end{proof}

\subsection{Upper comparison system}

We consider the following system which is called the upper comparison system:
\begin{align}
\frac{d}{dt}\left[ {\begin{array}{*{20}{c}}
Q_t^{A,u} - Q^*\\
Q_t^{B,u} - Q^*
\end{array}} \right] =& \left[ {\begin{array}{*{20}{c}}
{ - (1 + \beta )D}&\gamma DP{\Pi _{Q_t^{B,u} - {Q^*}}} + \beta D\\
{\gamma DP{\Pi _{Q_t^{A,u} - {Q^*}}} + \beta D}&{ - (1 + \beta )D}
\end{array}} \right]\left[ {\begin{array}{*{20}{c}}
Q_t^{A,u} - Q^*\\
Q_t^{B,u} - Q^*
\end{array}} \right],\nonumber\\
&\left[ {\begin{array}{*{20}{c}}
Q_0^{A,u} - Q^*\\
Q_0^{B,u} - Q^*
\end{array}} \right] > \left[ {\begin{array}{*{20}{c}}
{Q_0^A - {Q^*}}\\
{Q_0^B - {Q^*}}
\end{array}} \right] \in {\mathbb R}^{2|{\cal S}||{\cal A}|}.\label{eq:appendix:SGT2-QL:upper-system}
\end{align}

Defining the following vector function:
\begin{align*}
h({x_1},{x_2}): = \left[ {\begin{array}{*{20}{c}}
{{h_1}({x_1},{x_2})}\\
{{h_2}({x_1},{x_2})}
\end{array}} \right]: = \left[ {\begin{array}{*{20}{c}}
{ - (1 + \beta )D}&{\gamma DP{\Pi _{{x_2}}} + \beta D}\\
{\gamma DP{\Pi _{{x_1}}} + \beta D}&{ - (1 + \beta )D}
\end{array}} \right]\left[ {\begin{array}{*{20}{c}}
{{x_1}}\\
{{x_2}}
\end{array}} \right]
\end{align*}
the upper comparison system can be written by the system
\begin{align*}
\frac{d}{dt}\begin{bmatrix}
   x_{t,1}\\
   x_{t,2}\\
\end{bmatrix}=\begin{bmatrix}
   h_1(x_{t,1},x_{t,2})\\
   h_2(x_{t,1},x_{t,2})\\
\end{bmatrix},\quad x_0 > \begin{bmatrix}
   Q_0^A-Q^*\\
   Q_0^B-Q^*\\
\end{bmatrix},\quad \forall t \geq 0
\end{align*}
where
\[\left[ {\begin{array}{*{20}{c}}
{{x_{t,1}}}\\
{{x_{t,2}}}
\end{array}} \right] = \left[ {\begin{array}{*{20}{c}}
{Q_t^{A,u} - {Q^*}}\\
{Q_t^{B,u} - {Q^*}}
\end{array}} \right]\]
is the state vector. Similar to the original ODE model, we derive several useful properties of the upper comparison system.
\begin{lemma}\label{thm:appendix:SGT2-QL:quasi-monotone:h}
$h$ is quasi-monotone increasing.
\end{lemma}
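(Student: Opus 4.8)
The plan is to verify the defining condition of \cref{def:quasi-monotone} directly and componentwise, treating $h_1$ and $h_2$ separately, exactly mirroring the proof of the AGT2-QL analog in \cref{thm:appendix:AGT2-QL:quasi-monotone:h}. Writing the full state as $(x_1,x_2)\in{\mathbb R}^{|{\cal S}||{\cal A}|}\times{\mathbb R}^{|{\cal S}||{\cal A}|}$, the first block of coordinates governs $h_1$ and the second governs $h_2$. For an index $i$ in the first block I would take nonnegative perturbations $\Delta x_1,\Delta x_2\ge 0$ whose $i$-th entry of $\Delta x_1$ vanishes, and establish $e_i^T h_1(x_1+\Delta x_1,x_2+\Delta x_2)\ge e_i^T h_1(x_1,x_2)$; the case of $h_2$ is then obtained by symmetry.

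For $h_1$, expanding the matrix-vector product in the definition of $h$ gives $h_1(x_1,x_2) = -(1+\beta)Dx_1 + \gamma DP\Pi_{x_2}x_2 + \beta D x_2$, and I would handle the three terms in turn. The diagonal term contributes $-(1+\beta)e_i^T D\Delta x_1 = 0$, since $D$ is diagonal and the $i$-th entry of $\Delta x_1$ is zero. The regularization term contributes $\beta e_i^T D\Delta x_2\ge 0$, since $\beta D$ is a nonnegative diagonal matrix and $\Delta x_2\ge 0$; this is the crucial structural point, namely that the symmetric coupling enters with a positive sign, which is precisely what quasi-monotonicity requires. Finally, the max term satisfies $\gamma e_i^T DP\Pi_{x_2+\Delta x_2}(x_2+\Delta x_2)\ge \gamma e_i^T DP\Pi_{x_2}x_2$. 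Combining these three facts yields the desired inequality for $h_1$.

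The one nonlinear ingredient, and the step I expect to require the most care, is the monotonicity of the max term. The point is that $\Pi_v v$ is the stacked vector of state-wise maxima $(\max_{a}v(s,a))_s$, which is elementwise nondecreasing in $v$; hence $v\mapsto \Pi_v v$ is monotone even though the greedy policy $\Pi_v$ itself switches with $v$. Since $DP$ is a nonnegative matrix (a product of the nonnegative diagonal $D$ with the row-stochastic $P$), the row vector $e_i^T DP$ is nonnegative, so left-multiplication preserves the elementwise inequality $\Pi_{x_2+\Delta x_2}(x_2+\Delta x_2)\ge \Pi_{x_2}x_2$. The argument for $h_2$ is identical after interchanging the roles of $x_1$ and $x_2$, because $h$ is invariant under this swap; this gives $e_i^T h_2(x_1+\Delta x_1,x_2+\Delta x_2)\ge e_i^T h_2(x_1,x_2)$ for indices $i$ in the second block, completing the verification that $h$ is quasi-monotone increasing.
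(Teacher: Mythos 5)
Your proposal is correct and follows essentially the same route as the paper's proof: a componentwise check of \cref{def:quasi-monotone} for $h_1$ and $h_2$ separately, using $-(1+\beta)e_i^T D\Delta x_1=0$, the nonnegativity of $\beta D\Delta x_2$, the monotonicity of the greedy-max term under left-multiplication by the nonnegative row $e_i^T DP$, and symmetry for $h_2$. Your writeup is in fact slightly more careful than the paper's, which asserts the inequality $\gamma e_i^T DP\Pi_{x_2+\Delta x_2}(x_2+\Delta x_2)\ge \gamma e_i^T DP\Pi_{x_2}x_2$ without spelling out the elementwise monotonicity of $v\mapsto \Pi_v v$.
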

\begin{proof}
We will check the condition of the quasi-monotone increasing function for $h_1$ and $h_2$, separately. Assume that $\Delta x_1\in {\mathbb R}^{|{\cal S}|||{\cal A}|}$ and $\Delta x_2\in {\mathbb R}^{|{\cal S}|||{\cal A}|}$ are nonnegative vectors, and an $i$the element of $\Delta x_1$ is zero. For $h_1$, we have
\begin{align*}
&e_i^T h_1({x_1} + \Delta {x_1},{x_2} + \Delta {x_2})\\
=&  - (1 + \beta )e_i^TD({x_1} + \Delta {x_1}) + \beta D({x_2} + \Delta {x_2}) + \gamma e_i^TDP{\Pi _{{x_2} + \Delta {x_2}}}({x_2} + \Delta {x_2})\\
=&  - (1 + \beta )e_i^TD{x_1} + \beta D({x_2} + \Delta {x_2}) + \gamma e_i^TDP{\Pi _{{x_2} + \Delta {x_2}}}({x_2} + \Delta {x_2})\\
\ge & - (1 + \beta )e_i^TD{x_1} + \beta D({x_2}) + \gamma e_i^TDP{\Pi _{{x_2}}}{x_2}\\
=& e_i^T h_1({x_1},{x_2}),
\end{align*}
where the second line is due to $-e_i^T D \Delta x_1 = 0$. Similarly, assuming that $\Delta x_1\in {\mathbb R}^{|{\cal S}|||{\cal A}|}$ and $\Delta x_2\in {\mathbb R}^{|{\cal S}|||{\cal A}|}$ are nonnegative vectors, and an $i$the element of $\Delta x_2$ is zero, we get
\begin{align*}
e_i^T{h_2}({x_1} + \Delta {x_1},{x_2} + \Delta {x_2}) \ge e_i^T{h_2}({x_1},{x_2}),
\end{align*}
by symmetry. Therefore, $h$ is quasi-monotone increasing.
\end{proof}
\begin{lemma}\label{thm:appendix:SGT2-QL:Lipschits:h}
$h$ is globally Lipshcitz continuous.
\end{lemma}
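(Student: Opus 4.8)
The plan is to mirror the structure used for the original function $f$ in~\cref{thm:appendix:SGT2-QL:Lipschits:f}, exploiting the fact that $h$ differs from $f$ only in that the greedy terms carry no $Q^*$ shift. First I would split $h$ into a constant linear part and a policy-dependent nonlinear part by writing
\begin{align*}
h(x_1,x_2) = \left[ {\begin{array}{*{20}{c}}
{ - (1 + \beta )D}&{\beta D}\\
{\beta D}&{ - (1 + \beta )D}
\end{array}} \right]\left[ {\begin{array}{*{20}{c}}
{x_1}\\
{x_2}
\end{array}} \right] + \left[ {\begin{array}{*{20}{c}}
{\gamma DP{\Pi _{x_2}}{x_2}}\\
{\gamma DP{\Pi _{x_1}}{x_1}}
\end{array}} \right],
\end{align*}
which follows immediately from the definitions of $h_1$ and $h_2$. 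The linear block is a fixed matrix, so after subtracting $h(y_1,y_2)$ and applying the triangle inequality, its contribution to ${\left\| {h(x_1,x_2) - h(y_1,y_2)} \right\|_\infty}$ is bounded by its induced $\infty$-norm times ${\left\| {[x_1;x_2] - [y_1;y_2]} \right\|_\infty}$.

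The core of the argument is to control the two nonlinear terms $\gamma DP\Pi_{x_i}x_i$. After the triangle inequality and factoring out ${\left\| {\gamma DP} \right\|_\infty}$, everything reduces to bounding ${\left\| {\Pi_{x_i}x_i - \Pi_{y_i}y_i} \right\|_\infty}$ for $i \in \{1,2\}$. The only nontrivial ingredient is the observation that $z \mapsto \Pi_z z$ is precisely the componentwise greedy max, i.e. $(\Pi_z z)(s) = \max_{a \in {\cal A}} z(s,a)$, and that this max operator is nonexpansive in the $\infty$-norm, since $|\max_a z_1(s,a) - \max_a z_2(s,a)| \le \max_a |z_1(s,a) - z_2(s,a)| \le {\left\| {z_1 - z_2} \right\|_\infty}$. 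This yields ${\left\| {\Pi_{x_i}x_i - \Pi_{y_i}y_i} \right\|_\infty} \le {\left\| {x_i - y_i} \right\|_\infty}$, exactly as in the $f$ case but without the cancelling $Q^*$ offset.

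Combining the three bounds gives
\begin{align*}
{\left\| {h(x_1,x_2) - h(y_1,y_2)} \right\|_\infty} \le \left( {\left\| {\left[ {\begin{array}{*{20}{c}}
{ - (1 + \beta )D}&{\beta D}\\
{\beta D}&{ - (1 + \beta )D}
\end{array}} \right]} \right\|_\infty} + 2{\left\| {\gamma DP} \right\|_\infty} \right) {\left\| {\left[ {\begin{array}{*{20}{c}}
{x_1}\\
{x_2}
\end{array}} \right] - \left[ {\begin{array}{*{20}{c}}
{y_1}\\
{y_2}
\end{array}} \right]} \right\|_\infty},
\end{align*}
where the factor of two appears because the $x_1$ and $x_2$ greedy blocks each contribute a ${\left\| {\gamma DP} \right\|_\infty}$ bound and ${\left\| {x_i - y_i} \right\|_\infty} \le {\left\| {[x_1;x_2] - [y_1;y_2]} \right\|_\infty}$. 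Since $D$, $P$, and $\gamma$ are fixed, the prefactor is finite, which establishes global Lipschitz continuity. The main obstacle is really just recognizing and justifying the nonexpansiveness of the greedy max operator; once that is in place, the rest is routine operator-norm bookkeeping identical in spirit to the $f$ and upper-system proofs, so I would keep that portion terse.
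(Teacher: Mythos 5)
Your proof is correct, and it follows the same template the paper uses for all of these Lipschitz lemmas: split $h$ into a fixed linear block plus the policy-dependent greedy terms, apply the triangle inequality, and control the greedy terms through nonexpansiveness in the $\|\cdot\|_\infty$ norm. In fact, your write-up is more faithful to the statement than the paper's own proof of \cref{thm:appendix:SGT2-QL:Lipschits:h}, which appears to have been carried over verbatim from the AGT2-QL case: it bounds the map $(x_1,x_2)\mapsto \left[\begin{smallmatrix} -D & \gamma DP\Pi_{x_2} \\ \beta D & -\beta D \end{smallmatrix}\right]\left[\begin{smallmatrix} x_1 \\ x_2 \end{smallmatrix}\right]$, which has only one greedy block and the asymmetric diagonal, rather than the SGT2-QL $h$ with diagonal blocks $-(1+\beta)D$ and the two greedy blocks $\gamma DP\Pi_{x_1}$ and $\gamma DP\Pi_{x_2}$ (it also contains slips such as $\|\gamma DP\Pi_{x_2}x_1 - \gamma DP\Pi_{y_2}y_2\|_\infty$ and $\|x_1-y_2\|_\infty$). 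Your handling of both greedy blocks mirrors the paper's proof of \cref{thm:appendix:SGT2-QL:Lipschits:f} for the original system, and you make explicit the one ingredient the paper everywhere uses silently, namely that $(\Pi_z z)(s) = \max_{a\in{\cal A}} z(s,a)$, so $\|\Pi_x x - \Pi_y y\|_\infty \le \|x-y\|_\infty$ by nonexpansiveness of the componentwise max. Your factor of $2$ on $\|\gamma DP\|_\infty$ is a slight overestimate (the stacked difference vector's $\infty$-norm is the maximum, not the sum, of the two block norms), but it is harmless and matches the constant the paper itself obtains in \cref{thm:appendix:SGT2-QL:Lipschits:f}.
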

\begin{proof}
We complete the proof through the inequalities
\begin{align*}
&{\left\| {h({x_1},{x_2}) - h({y_1},{y_2})} \right\|_\infty }\\
 =& {\left\| {\left[ {\begin{array}{*{20}{c}}
{ - D}&{\gamma DP{\Pi _{{x_2}}}}\\
{\beta D}&{ - \beta D}
\end{array}} \right]\left[ {\begin{array}{*{20}{c}}
{{x_1}}\\
{{x_2}}
\end{array}} \right] - \left[ {\begin{array}{*{20}{c}}
{ - D}&{\gamma DP{\Pi _{{y_2}}}}\\
{\beta D}&{ - \beta D}
\end{array}} \right]\left[ {\begin{array}{*{20}{c}}
{{y_1}}\\
{{y_2}}
\end{array}} \right]} \right\|_\infty }\\
\le& {\left\| {\left[ {\begin{array}{*{20}{c}}
{ - D}&0\\
{\beta D}&{ - \beta D}
\end{array}} \right]\left[ {\begin{array}{*{20}{c}}
{{x_1}}\\
{{x_2}}
\end{array}} \right] - \left[ {\begin{array}{*{20}{c}}
{ - D}&0\\
{\beta D}&{ - \beta D}
\end{array}} \right]\left[ {\begin{array}{*{20}{c}}
{{y_1}}\\
{{y_2}}
\end{array}} \right]} \right\|_\infty } + {\left\| {\gamma DP{\Pi _{{x_2}}}{x_1} - \gamma DP{\Pi _{{y_2}}}{y_2}} \right\|_\infty }\\
\le& {\left\| {\left[ {\begin{array}{*{20}{c}}
{ - D}&0\\
{\beta D}&{ - \beta D}
\end{array}} \right]} \right\|_\infty }{\left\| {\left[ {\begin{array}{*{20}{c}}
{{x_1}}\\
{{x_2}}
\end{array}} \right] - \left[ {\begin{array}{*{20}{c}}
{{y_1}}\\
{{y_2}}
\end{array}} \right]} \right\|_\infty } + {\left\| {\gamma DP} \right\|_\infty }{\left\| {{x_1} - {y_2}} \right\|_\infty }\\
\le& {\left\| {\left[ {\begin{array}{*{20}{c}}
{ - D}&0\\
{\beta D}&{ - \beta D}
\end{array}} \right]} \right\|_\infty }{\left\| {\left[ {\begin{array}{*{20}{c}}
{{x_1}}\\
{{x_2}}
\end{array}} \right] - \left[ {\begin{array}{*{20}{c}}
{{y_1}}\\
{{y_2}}
\end{array}} \right]} \right\|_\infty } + {\left\| {\gamma DP} \right\|_\infty }{\left\| {\left[ {\begin{array}{*{20}{c}}
{{x_1}}\\
{{x_2}}
\end{array}} \right] - \left[ {\begin{array}{*{20}{c}}
{{y_1}}\\
{{y_2}}
\end{array}} \right]} \right\|_\infty }
\end{align*}
indicating that $h$ is globally Lipschitz continuous with respect to the $\|\cdot\|_\infty$ norm. This completes the proof.
\end{proof}

\begin{lemma}\label{thm:appendix:SGT2-QL:hf}
$h(x_1,x_2)\geq f(x_1,x_2)$ for all $(x_1,x_2)\in {\mathbb R}^{|{\cal S}||{\cal A}|}\times {\mathbb R}^{|{\cal S}||{\cal A}|}$, where `$\geq$' denotes the element-wise inequality.
\end{lemma}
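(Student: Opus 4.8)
The plan is to reduce the vector inequality $h\ge f$ to two scalar (block) inequalities and to dispatch each block using the subadditivity of the max operator together with the nonnegativity of $\gamma DP$. First I would invoke \cref{thm:appendix:SGT2-QL:property-f} to write $f_1$ explicitly and compare it term by term with $h_1$ as defined just above the lemma. The affine pieces $-(1+\beta)Dx_1+\beta Dx_2$ are identical in both functions, so the entire difference collapses onto the $\gamma DP$ terms, giving
\[ h_1(x_1,x_2)-f_1(x_1,x_2)=\gamma DP\left[\Pi_{x_2}x_2-\Pi_{x_2+Q^*}(x_2+Q^*)+\Pi_{Q^*}Q^*\right]. \]

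The crux is showing that the bracketed vector is nonnegative componentwise. Recall that for any $Q$ the greedy policy satisfies $\Pi_Q Q=\max_{a}Q(\cdot,a)$, since $\Pi_Q$ selects the maximizing action at each state. Hence $\Pi_{x_2+Q^*}(x_2+Q^*)$ is the componentwise maximum of $x_2+Q^*$ over actions, while $\Pi_{x_2}x_2$ and $\Pi_{Q^*}Q^*$ are the maxima of $x_2$ and $Q^*$ respectively. The key step is then the subadditivity of the max, $\max_a(u_a+v_a)\le \max_a u_a+\max_a v_a$, applied state by state with $u=x_2(s,\cdot)$ and $v=Q^*(s,\cdot)$. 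Equivalently, one may split it into the two greedy inequalities $\Pi_{x_2+Q^*}x_2\le \Pi_{x_2}x_2$ and $\Pi_{x_2+Q^*}Q^*\le \Pi_{Q^*}Q^*$, exactly mirroring the asymmetric argument in \cref{thm:appendix:AGT2-QL:hf}. This makes the bracket nonnegative, and since $D$ is diagonal with positive entries and $P$ is row-stochastic, $\gamma DP$ has nonnegative entries and maps the nonnegative bracket to a nonnegative vector, yielding $h_1\ge f_1$.

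The second block follows by symmetry. The function $f$ in \cref{thm:appendix:SGT2-QL:property-f} and the matrix defining $h$ are invariant under the simultaneous swap $x_1\leftrightarrow x_2$ together with the interchange of the two block rows, so repeating the computation with the roles of $x_1$ and $x_2$ exchanged gives $h_2-f_2=\gamma DP[\Pi_{x_1}x_1-\Pi_{x_1+Q^*}(x_1+Q^*)+\Pi_{Q^*}Q^*]\ge 0$. Combining the two blocks yields $h(x_1,x_2)\ge f(x_1,x_2)$ componentwise for all $(x_1,x_2)\in{\mathbb R}^{|{\cal S}||{\cal A}|}\times{\mathbb R}^{|{\cal S}||{\cal A}|}$, which is the claim.

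I do not anticipate a serious obstacle here. The only things to get right are the identification $\Pi_Q Q=\max_a Q(\cdot,a)$ and the observation that the affine Bellman correction $-\gamma DP\Pi_{Q^*}Q^*$ carried in $f$ is precisely the term needed to convert the raw difference into the max-subadditivity gap. The step doing the real work is the subadditivity inequality; everything else is bookkeeping that parallels the already-established asymmetric case.
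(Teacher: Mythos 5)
Your proof is correct and follows essentially the same route as the paper: the paper establishes $f\le h$ by first dropping the affine term via $\Pi_{x_2+Q^*}Q^*\le \Pi_{Q^*}Q^*$ and then replacing $\Pi_{x_2+Q^*}x_2$ by $\Pi_{x_2}x_2$ (and symmetrically for the second block), which are exactly the two greedy inequalities into which you note your max-subadditivity step decomposes. Your packaging of the argument as a single nonnegative-bracket computation, $h_i-f_i=\gamma DP\left[\Pi_{x_i'}x_{i'}-\Pi_{x_{i'}+Q^*}(x_{i'}+Q^*)+\Pi_{Q^*}Q^*\right]\ge 0$, is just a reorganization of the same inequalities.
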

\begin{proof}
The proof follows from the inequalities
\begin{align*}
f({x_1},{x_2}) =& \left[ {\begin{array}{*{20}{c}}
{ - (1 + \beta )D}&{\gamma DP{\Pi _{{x_2} + {Q^*}}} + \beta D}\\
{\gamma DP{\Pi _{{x_1} + {Q^*}}} + \beta D}&{ - (1 + \beta )D}
\end{array}} \right]\left[ {\begin{array}{*{20}{c}}
{{x_1}}\\
{{x_2}}
\end{array}} \right] + \left[ {\begin{array}{*{20}{c}}
{\gamma DP({\Pi _{{x_2} + {Q^*}}} - {\Pi _{{Q^*}}}){Q^*}}\\
{\gamma DP({\Pi _{{x_1} + {Q^*}}} - {\Pi _{{Q^*}}}){Q^*}}
\end{array}} \right]\\
\le& \left[ {\begin{array}{*{20}{c}}
{ - (1 + \beta )D}&{\gamma DP{\Pi _{{x_2} + {Q^*}}} + \beta D}\\
{\gamma DP{\Pi _{{x_1} + {Q^*}}} + \beta D}&{ - (1 + \beta )D}
\end{array}} \right]\left[ {\begin{array}{*{20}{c}}
{{x_1}}\\
{{x_2}}
\end{array}} \right]\\
\le& \left[ {\begin{array}{*{20}{c}}
{ - (1 + \beta )D}&{\gamma DP{\Pi _{{x_2}}} + \beta D}\\
{\gamma DP{\Pi _{{x_1}}} + \beta D}&{ - (1 + \beta )D}
\end{array}} \right]\left[ {\begin{array}{*{20}{c}}
{{x_1}}\\
{{x_2}}
\end{array}} \right]\\
=& g({x_1},{x_2})
\end{align*}
for all $(x_1,x_2)\in {\mathbb R}^{|{\cal S}||{\cal A}|}\times {\mathbb R}^{|{\cal S}||{\cal A}|}$.
\end{proof}

Based on the previous lemmas, we are now ready to prove that the solution of the upper comparison system indeed upper-bounds the solution of the original system, which is the reason why~\eqref{eq:appendix:SGT2-QL:upper-system} is named as such.
\begin{lemma}\label{thm:appendix:SGT2-QL:upper-original}
We have
\begin{align*}
\left[ {\begin{array}{*{20}{c}}
{Q_t^{A,u} - {Q^*}}\\
{Q_t^{B,u} - {Q^*}}
\end{array}} \right] \ge \left[ {\begin{array}{*{20}{c}}
{Q_t^A - {Q^*}}\\
{Q_t^B - {Q^*}}
\end{array}} \right],\quad \forall t \ge 0,
\end{align*}
where `$\geq$' denotes the element-wise inequality.
\end{lemma}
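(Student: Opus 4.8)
The plan is to instantiate the vector comparison principle, \cref{lemma:comparision-principle}, exactly as in the AGT2-QL counterpart \cref{thm:appendix:AGT2-QL:upper-original}. I would take the upper comparison system \eqref{eq:appendix:SGT2-QL:upper-system} as the ``$\overline{f}$'' system, with vector field $h$ and solution $x_t := [(Q_t^{A,u}-Q^*)^T,\ (Q_t^{B,u}-Q^*)^T]^T$, and the original system \eqref{eq:appendix:SGT2-QL:original-system2} as the ``$\underline{f}$'' system, with vector field $f$ and solution $v_t := [(Q_t^{A}-Q^*)^T,\ (Q_t^{B}-Q^*)^T]^T$. The desired claim $x_t \ge v_t$ is then precisely the conclusion of the comparison principle.

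To apply \cref{lemma:comparision-principle}, I would verify its four hypotheses, each already established in the preceding lemmas: (i) the dominating vector field $h$ is quasi-monotone increasing by \cref{thm:appendix:SGT2-QL:quasi-monotone:h}; (ii) both $h$ and $f$ are globally Lipschitz continuous by \cref{thm:appendix:SGT2-QL:Lipschits:h} and \cref{thm:appendix:SGT2-QL:Lipschits:f}; (iii) the pointwise domination $f(x_1,x_2)\le h(x_1,x_2)$ holds for every $(x_1,x_2)$ by \cref{thm:appendix:SGT2-QL:hf}; and (iv) the strict initial-condition inequality $x_0 > v_0$ is built into the definition of \eqref{eq:appendix:SGT2-QL:upper-system}. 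The comparison principle then delivers $v_t \le x_t$ for all $t\ge 0$, which is the asserted element-wise bound.

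I expect no substantive obstacle, since this is a direct instantiation of machinery already assembled; the main thing demanding care is simply matching the orientation of the inequalities to the hypotheses of \cref{lemma:comparision-principle}, namely that quasi-monotonicity is required of the dominating field $h$ rather than of $f$, and that the conclusion ``$v_t\le x_t$'' is to be read as the original trajectory lying below the upper one. Because SGT2 couples $Q^A$ and $Q^B$ symmetrically through the off-diagonal $\beta D$ blocks, I would additionally confirm that the domination $f\le h$ of \cref{thm:appendix:SGT2-QL:hf} holds coordinate-wise in both the $f_1$ and $f_2$ components; this is exactly what is proved there, using that the affine terms $\gamma DP(\Pi_{x_i+Q^*}-\Pi_{Q^*})Q^*$ are entrywise nonpositive and that the greedy policy maximizes, so $\Pi_{x_i+Q^*}x_i \le \Pi_{x_i}x_i$ for $i\in\{1,2\}$. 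Hence no computation beyond citing the established lemmas is needed.
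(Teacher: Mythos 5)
Your proposal is correct and is essentially identical to the paper's own proof: the paper likewise obtains the result by invoking the vector comparison principle (\cref{lemma:comparision-principle}) with the quasi-monotonicity of $h$ (\cref{thm:appendix:SGT2-QL:quasi-monotone:h}), the pointwise domination $f \le h$ (\cref{thm:appendix:SGT2-QL:hf}), and the global Lipschitz continuity of both fields (\cref{thm:appendix:SGT2-QL:Lipschits:f,thm:appendix:SGT2-QL:Lipschits:h}). Your additional remarks on the orientation of the inequalities and the coordinate-wise domination in both components are consistent with how those cited lemmas are proved.
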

\begin{proof}
The desired conclusion is obtained by~\cref{lemma:comparision-principle} with~\cref{thm:appendix:SGT2-QL:quasi-monotone:h,thm:appendix:SGT2-QL:hf,thm:appendix:SGT2-QL:Lipschits:f,thm:appendix:SGT2-QL:Lipschits:h}.
\end{proof}

Next, we prove that the upper comparison system~\eqref{eq:appendix:SGT2-QL:upper-system} is globally asymptotically stable at the origin. 
\begin{lemma}\label{thm:appendix:SGT2-QL:stability:upper}
For any $\beta>0$, the origin is the unique globally asymptotically stable equilibrium point of the upper comparison system~\eqref{eq:appendix:SGT2-QL:upper-system}.
\end{lemma}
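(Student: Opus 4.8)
The plan is to recognize the upper comparison system~\eqref{eq:appendix:SGT2-QL:upper-system} as a switched \emph{linear} system (the affine term has been dropped by construction) whose subsystem matrices are
\[
A_\sigma = \begin{bmatrix} -(1+\beta)D & \gamma DP\Pi_{\sigma_2} + \beta D \\ \gamma DP\Pi_{\sigma_1} + \beta D & -(1+\beta)D \end{bmatrix},
\]
indexed by the pair of greedy policies $(\sigma_1,\sigma_2)$ determined by $x_1$ and $x_2$, and then to invoke the fundamental algebraic stability criterion~\cref{lemma:fundamental-stability-lemma}. Following the template of the AGT2-QL upper-system proof in~\cref{thm:appendix:AGT2-QL:stability:upper}, I would exhibit a full-column-rank matrix $L$ and matrices $\bar A_\sigma$ with $LA_\sigma = \bar A_\sigma L$ satisfying the strictly negative row dominating diagonal condition for every mode $\sigma$.

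The key observation that makes this case cleaner than AGT2-QL is that the choice $L = I$ (hence $\bar A_\sigma = A_\sigma$) already suffices, whereas the AGT2-QL case required the scaling $L = \mathrm{diag}(I, \gamma^{1/2}I)$. The reason is that the regularization term $\beta D$ enters the off-diagonal blocks with a matching $-\beta D$ contribution sitting inside the diagonal block $-(1+\beta)D$, so the two $\beta$-terms cancel in the row-dominance test. Concretely, for any row index $i$ in the top block, the diagonal entry of the full $2|{\cal S}||{\cal A}| \times 2|{\cal S}||{\cal A}|$ matrix is $-(1+\beta)[D]_{ii}$, while every nonzero off-diagonal entry in that row lies in the top-right block $\gamma DP\Pi_{\sigma_2} + \beta D$ (the off-diagonal entries of $-(1+\beta)D$ vanish since $D$ is diagonal). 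Since $D$, $P$, and $\Pi_{\sigma_2}$ have nonnegative entries and $P\Pi_{\sigma_2}$ is row-stochastic, the absolute off-diagonal row sum equals $\gamma[D]_{ii} + \beta[D]_{ii}$, and therefore
\[
[A_\sigma]_{ii} + \sum_{j \neq i} |[A_\sigma]_{ij}| = \bigl(-(1+\beta) + \gamma + \beta\bigr)[D]_{ii} = (\gamma - 1)[D]_{ii} < 0,
\]
using $\gamma < 1$ and $[D]_{ii} > 0$ from~\cref{assumption:positive-distribution}. An identical computation applies to the rows of the bottom block using $\Pi_{\sigma_1}$, so the condition holds for all rows and all modes simultaneously.

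I would then conclude by~\cref{lemma:fundamental-stability-lemma} that the origin is the unique globally asymptotically stable equilibrium under arbitrary switchings, which in particular covers the state-feedback signal $\sigma(x_t) = \psi(\pi_{Q_t^{A,u}}, \pi_{Q_t^{B,u}})$ induced by the greedy policies. The global Lipschitz continuity of $h$ needed for existence and uniqueness of the solution is already supplied by~\cref{thm:appendix:SGT2-QL:Lipschits:h}. I do not anticipate a genuine obstacle here; the only point requiring care is the block-matrix bookkeeping of which entries count as off-diagonal, specifically confirming that the diagonal $\beta D$ residing inside the off-diagonal block contributes to the off-diagonal row sum rather than the diagonal. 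This is exactly what makes the $\beta$ terms cancel and renders the bound $(\gamma-1)[D]_{ii}$ independent of $\beta$, thereby yielding stability for every $\beta > 0$.
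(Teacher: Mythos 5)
Your proposal is correct and follows essentially the same route as the paper: both invoke \cref{lemma:fundamental-stability-lemma} with $L=I$ (so $\bar A_\sigma = A_\sigma$) and verify the strictly negative row dominating diagonal condition via the cancellation $-(1+\beta)[D]_{ii} + (\gamma+\beta)[D]_{ii} = (\gamma-1)[D]_{ii} < 0$, concluding by symmetry for the bottom block. Your bookkeeping of the off-diagonal row sum is in fact slightly more careful than the paper's, which omits the within-block diagonal term $\gamma[D]_{ii}[P\Pi_\sigma]_{ii}$ from its sum before bounding, though this does not affect the conclusion.
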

\begin{proof}
For notational convenience, we define $\Pi_\sigma$, $\sigma\in {\cal M}$ as $\Pi_{Q_t^B}$ such that $\sigma=\psi(\pi_{Q_t^B})$. Then, for the upper comparison switching system, we apply~\cref{lemma:fundamental-stability-lemma} with
\begin{align*}
{A_{{\sigma _1},{\sigma _2}}} = \left[ {\begin{array}{*{20}{c}}
{ - (1 + \beta )D}&{\gamma DP{\Pi _{{\sigma _1}}} + \beta D}\\
{\gamma DP{\Pi _{{\sigma _2}}} + \beta D}&{ - (1 + \beta )D}
\end{array}} \right]
\end{align*}
and
\begin{align*}
L= \begin{bmatrix}
   I & 0\\
   0 & I\\
\end{bmatrix},
\end{align*}
which satisfies
\[L{A_{{\sigma _1},{\sigma _2}}} = {{\bar A}_{{\sigma _1},{\sigma _2}}}L\]
with
\[{{\bar A}_{{\sigma _1},{\sigma _2}}} = \left[ {\begin{array}{*{20}{c}}
{ - (1 + \beta )D}&{\gamma DP{\Pi _{{\sigma _1}}} + \beta D}\\
{\gamma DP{\Pi _{{\sigma _2}}} + \beta D}&{ - (1 + \beta )D}
\end{array}} \right]\]

To check the strictly negative row dominating diagonal condition, for $i \in\{1,2,\ldots,|{\cal S}||{\cal A}|\}$, we have
\begin{align*}
{[{{\bar A}_{{\sigma _1},{\sigma _2}}}]_{ii}} + {\sum _{j \in \{ 1,2, \ldots ,n\} \backslash \{ i\} }}|{[{{\bar A}_{{\sigma _1},{\sigma _2}}}]_{ij}}| =&  - (1 + \beta ){[D]_{ii}} + \gamma {[D]_{ii}}{\sum _{j \in \{ 1,2, \ldots ,n\} \backslash \{ i\} }}|{[P{\Pi _\sigma }]_{ij}}| + \beta {[D]_{ii}}\\
\le & - (1 + \beta ){[D]_{ii}} + (\gamma  + \beta ){[D]_{ii}}\\
=& (\gamma  - 1){[D]_{ii}}\\
<& 0
\end{align*}

For $i\in \{|{\cal S}||{\cal A}|+1,|{\cal S}||{\cal A}|+2,\ldots,2|{\cal S}||{\cal A}|\}$, by symmetry, we also have the same result
\[{[{{\bar A}_{{\sigma _1},{\sigma _2}}}]_{ii}} + {\sum _{j \in \{ 1,2, \ldots ,n\} \backslash \{ i\} }}|{[{{\bar A}_{{\sigma _1},{\sigma _2}}}]_{ij}}| < 0\]
for any $\beta>0$. Therefore, the strictly negative row dominating diagonal condition is satisfied. By~\cref{lemma:fundamental-stability-lemma}, the origin of the upper comparison system is globally asymptotically stable.
\end{proof}

\subsection{Lower comparison system}
Let us consider the so-called lower comparison system
\begin{align}
\frac{d}{{dt}}\left[ {\begin{array}{*{20}{c}}
{Q_t^{A,l} - {Q^*}}\\
{Q_t^{B,l} - {Q^*}}
\end{array}} \right] =& \left[ {\begin{array}{*{20}{c}}
{ - (1 + \beta )D}&{\gamma DP{\Pi _{{\pi _{{Q^*}}}}} + \beta D}\\
{\gamma DP{\Pi _{{\pi _{{Q^*}}}}} + \beta D}&{ - (1 + \beta )D}
\end{array}} \right]\left[ {\begin{array}{*{20}{c}}
{Q_t^{A,l} - {Q^*}}\\
{Q_t^{B,l} - {Q^*}}
\end{array}} \right],\nonumber\\
&\left[ {\begin{array}{*{20}{c}}
{Q_0^{A,l} - {Q^*}}\\
{Q_0^{B,l} - {Q^*}}
\end{array}} \right] < \left[ {\begin{array}{*{20}{c}}
{Q_0^A - {Q^*}}\\
{Q_0^B - {Q^*}}
\end{array}} \right] \in {\mathbb R}^{2|{\cal S}||{\cal A}|},\label{eq:appendix:SGT2-QL:lower-system}
\end{align}
We note that the lower comparison system is simply a linear system. Defining the vector function
\begin{align*}
g({x_1},{x_2}): = \left[ {\begin{array}{*{20}{c}}
{{g_1}({x_1},{x_2})}\\
{{g_2}({x_1},{x_2})}
\end{array}} \right] = \left[ {\begin{array}{*{20}{c}}
{ - (1 + \beta )D}&{\gamma DP{\Pi _{{\pi _{{Q^*}}}}} + \beta D}\\
{\gamma DP\Pi _{\pi _{Q^*}} + \beta D}&{ - (1 + \beta )D}
\end{array}} \right]\left[ {\begin{array}{*{20}{c}}
{{x_1}}\\
{{x_2}}
\end{array}} \right],
\end{align*}
the lower comparison system can be written by
\begin{align*}
\frac{d}{dt}\begin{bmatrix}
   x_{t,1}\\
   x_{t,2}\\
\end{bmatrix}=\begin{bmatrix}
   g_1(x_{t,1},x_{t,2})\\
   g_2(x_{t,1},x_{t,2})\\
\end{bmatrix},\quad x_0 < \begin{bmatrix}
   Q_0^A-Q^*\\
   Q_0^B-Q^*\\
\end{bmatrix},
\end{align*}
for all $t \geq 0$.
Below, we present several key properties of the lower comparison system.
\begin{lemma}\label{thm:appendix:SGT2-QL:Lipschits:g}
$g$ is globally Lipschitz continuous.
\end{lemma}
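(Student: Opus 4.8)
The plan is to exploit that, unlike $f$ and $h$, the map $g$ is genuinely \emph{linear}: the greedy action transition matrix $\Pi_{\pi_{Q^*}}$ is constructed from the fixed optimal Q-function $Q^*$, and therefore does not vary with the state $(x_1,x_2)$. Consequently the entire coefficient matrix
\begin{align*}
M := \begin{bmatrix} -(1+\beta)D & \gamma DP\Pi_{\pi_{Q^*}} + \beta D \\ \gamma DP\Pi_{\pi_{Q^*}} + \beta D & -(1+\beta)D \end{bmatrix}
\end{align*}
is state-independent, and $g$ can be written as $g(x_1,x_2) = M \begin{bmatrix} x_1 \\ x_2 \end{bmatrix}$.

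First I would invoke linearity to write
\begin{align*}
g(x_1,x_2)-g(y_1,y_2) = M \left( \begin{bmatrix} x_1 \\ x_2 \end{bmatrix} - \begin{bmatrix} y_1 \\ y_2 \end{bmatrix} \right).
\end{align*}
Then applying the submultiplicativity of the induced $\|\cdot\|_\infty$ operator norm yields
\begin{align*}
\|g(x_1,x_2)-g(y_1,y_2)\|_\infty \le \|M\|_\infty \left\| \begin{bmatrix} x_1 \\ x_2 \end{bmatrix} - \begin{bmatrix} y_1 \\ y_2 \end{bmatrix} \right\|_\infty.
\end{align*}
Since $M$ is a finite-dimensional constant matrix, $\|M\|_\infty < \infty$ is a valid global Lipschitz constant, which completes the argument. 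This mirrors the proof of the corresponding lower-comparison lemma for AGT2-QL (\cref{thm:appendix:AGT2-QL:Lipschits:g}).

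I expect there to be no genuine obstacle here, which is precisely why this is the simplest of the three Lipschitz lemmas. The difficulty in the $f$ and $h$ cases arose from controlling the state-dependent term $\gamma DP\Pi_{x_2+Q^*}(x_2+Q^*)$, which required the nonexpansiveness of the greedy $\max$ operation to bound $\|\Pi_{x_2+Q^*}(x_2+Q^*)-\Pi_{y_2+Q^*}(y_2+Q^*)\|_\infty$ in terms of $\|x_2-y_2\|_\infty$. Here, because the policy is frozen at $\pi_{Q^*}$, this term is linear and no such argument is needed; the bound follows immediately from linearity alone.
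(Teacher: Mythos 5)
Your proposal is correct and matches the paper's own proof: both exploit that $\Pi_{\pi_{Q^*}}$ is fixed, so $g$ is linear with a constant coefficient matrix, and conclude via submultiplicativity of the induced $\|\cdot\|_\infty$ norm that this matrix norm serves as the global Lipschitz constant. No gap to report.
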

\begin{proof}
It is straightforward from the linearity. In particular, we complete the proof through the inequalities
\begin{align*}
{\left\| {g({x_1},{x_2}) - g({y_1},{y_2})} \right\|_\infty } \le {\left\| {\left[ {\begin{array}{*{20}{c}}
{ - (1 + \beta )D}&{\gamma DP{\Pi _{{\pi _{{Q^*}}}}} + \beta D}\\
{\gamma DP{\Pi _{{\pi _{{Q^*}}}}} + \beta D}&{ - (1 + \beta )D}
\end{array}} \right]} \right\|_\infty }{\left\| {\left[ {\begin{array}{*{20}{c}}
{{x_1}}\\
{{x_2}}
\end{array}} \right] - \left[ {\begin{array}{*{20}{c}}
{{y_1}}\\
{{y_2}}
\end{array}} \right]} \right\|_\infty }
\end{align*}
indicating that $g$ is globally Lipschitz continuous with respect to the $\|\cdot\|_\infty$ norm. This completes the proof.
\end{proof}
\begin{lemma}\label{thm:appendix:SGT2-QL:fg}
$f(x_1,x_2)\geq g(x_1,x_2)$ for all $(x_1,x_2)\in {\mathbb R}^{|{\cal S}||{\cal A}|}\times {\mathbb R}^{|{\cal S}||{\cal A}|}$, where `$\geq$' denotes the element-wise inequality.
\end{lemma}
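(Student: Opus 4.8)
The plan is to mirror the argument used for AGT2-QL in \cref{thm:appendix:AGT2-QL:fg}, exploiting the symmetry of the SGT2-QL dynamics. First I would invoke the explicit representation of $f$ from \cref{thm:appendix:SGT2-QL:property-f} and write out the two components of $g$ from its definition, namely $g_1(x_1,x_2) = -(1+\beta)Dx_1 + \beta D x_2 + \gamma DP\Pi_{Q^*} x_2$ and $g_2(x_1,x_2) = -(1+\beta)Dx_2 + \beta D x_1 + \gamma DP\Pi_{Q^*} x_1$. Subtracting, the linear terms $-(1+\beta)Dx_1 + \beta D x_2$ cancel identically, so the inequality $f_1 \geq g_1$ reduces to showing $\gamma DP\Pi_{x_2 + Q^*}(x_2 + Q^*) - \gamma DP\Pi_{Q^*}Q^* \geq \gamma DP\Pi_{Q^*} x_2$, which rearranges to $\gamma DP\Pi_{x_2 + Q^*}(x_2 + Q^*) \geq \gamma DP\Pi_{Q^*}(x_2 + Q^*)$, i.e. $\gamma DP[\Pi_{x_2 + Q^*}(x_2 + Q^*) - \Pi_{Q^*}(x_2 + Q^*)] \geq 0$.

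Next I would establish the key inequality $\Pi_{x_2 + Q^*}(x_2 + Q^*) \geq \Pi_{Q^*}(x_2 + Q^*)$ holding element-wise. This is the crux and follows from the greedy optimality of the policy $\pi_{x_2 + Q^*}$: for any vector $v$ and any deterministic policy $\pi$, one has $(\Pi^{\pi} v)(s) = v(s,\pi(s)) \leq \max_a v(s,a) = (\Pi_v v)(s)$, so taking $v = x_2 + Q^*$ and $\pi = \pi_{Q^*}$ gives the claim. I would then use that $\gamma > 0$, that $D$ is diagonal with strictly positive entries (by \cref{assumption:positive-distribution}), and that $P$ has nonnegative entries, so the product $\gamma DP$ is an entrywise-nonnegative matrix; multiplying both sides of the element-wise inequality by $\gamma DP$ therefore preserves it, yielding $f_1 \geq g_1$.

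Finally, the component $f_2 \geq g_2$ follows by the exact same argument with the roles of $x_1$ and $x_2$ interchanged, since the SGT2-QL system is symmetric under this swap; I would simply note this symmetry rather than repeat the computation verbatim.

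I do not expect a genuine obstacle here: the only nontrivial ingredient is the greedy-maximality inequality $\Pi_v v \geq \Pi^{\pi} v$, which is the same observation that drives the corresponding AGT2-QL lemma. The one point that warrants care is confirming that $\gamma DP$ preserves element-wise inequalities, which is immediate from the nonnegativity of all entries of $D$ and $P$ together with $\gamma > 0$.
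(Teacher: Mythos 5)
Your proposal is correct and follows essentially the same route as the paper's proof: both start from the representation of $f$ in \cref{thm:appendix:SGT2-QL:property-f}, apply the greedy-maximality inequality $\Pi_{x_i+Q^*}(x_i+Q^*)\ge \Pi_{Q^*}(x_i+Q^*)$ together with the entrywise nonnegativity of $\gamma DP$, and recover $g$ after the remaining terms cancel, with the second component handled by symmetry. The only difference is presentational (you reduce component-wise after cancelling the linear terms, while the paper writes the same chain of inequalities in block-matrix form), so there is nothing substantive to reconcile.
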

\begin{proof}
The proof follows directly from the inequalities
\begin{align*}
f({x_1},{x_2}) =& \left[ {\begin{array}{*{20}{c}}
{ - (1 + \beta )D{x_1} + \beta D{x_2} + \gamma DP{\Pi _{{x_2} + {Q^*}}}({x_2} + {Q^*}) - \gamma DP{\Pi _{{Q^*}}}{Q^*}}\\
{ - (1 + \beta )D{x_2} + \beta D{x_1} + \gamma DP{\Pi _{{x_1} + {Q^*}}}({x_1} + {Q^*}) - \gamma DP{\Pi _{{Q^*}}}{Q^*}}
\end{array}} \right]\\
\ge& \left[ {\begin{array}{*{20}{c}}
{ - (1 + \beta )D{x_1} + \beta D{x_2} + \gamma DP{\Pi _{{Q^*}}}({x_2} + {Q^*}) - \gamma DP{\Pi _{{Q^*}}}{Q^*}}\\
{ - (1 + \beta )D{x_2} + \beta D{x_1} + \gamma DP{\Pi _{{Q^*}}}({x_1} + {Q^*}) - \gamma DP{\Pi _{{Q^*}}}{Q^*}}
\end{array}} \right]\\
=& \left[ {\begin{array}{*{20}{c}}
{ - (1 + \beta )D}&{\gamma DP{\Pi _{{\pi _{{Q^*}}}}} + \beta D}\\
{\gamma DP{\Pi _{{\pi _{{Q^*}}}}} + \beta D}&{ - (1 + \beta )D}
\end{array}} \right]\left[ {\begin{array}{*{20}{c}}
{{x_1}}\\
{{x_2}}
\end{array}} \right]\\
=& g({x_1},{x_2})
\end{align*}
for all $(x_1,x_2)\in {\mathbb R}^{|{\cal S}||{\cal A}|}\times {\mathbb R}^{|{\cal S}||{\cal A}|}$. This completes the proof.
\end{proof}

Similar to the upper comparison system, we prove that the solution of the lower comparison system indeed provides a lower bound for the solution of the original system.
\begin{lemma}\label{thm:appendix:SGT2-QL:original-lower}
We have
\begin{align*}
\begin{bmatrix}
   Q_t^A-Q^*\\
   Q_t^B-Q^*\\
\end{bmatrix} \ge \begin{bmatrix}
 Q_t^{A,l}- Q^* \\
 Q_t^{B,l}-Q^* \\
\end{bmatrix},\quad \forall t \geq 0,
\end{align*}
where `$\geq$' denotes the element-wise inequality.
\end{lemma}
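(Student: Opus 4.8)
The plan is to invoke the vector comparison principle (\cref{lemma:comparision-principle}) directly, exactly mirroring the argument already used for AGT2-QL in \cref{thm:appendix:AGT2-QL:original-lower}. The idea is to treat the original SGT2-QL field $f$ as the upper driving function $\overline f$ and the lower comparison field $g$ as the lower function $\underline f$ in that principle. Its conclusion $v_t\le x_t$, with $x_t$ the original trajectory and $v_t$ the lower-comparison trajectory, is precisely the claimed element-wise bound.

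First I would verify that all hypotheses of \cref{lemma:comparision-principle} are met. Global Lipschitz continuity of the upper field $f$ is supplied by \cref{thm:appendix:SGT2-QL:Lipschits:f}, and of the lower field $g$ by \cref{thm:appendix:SGT2-QL:Lipschits:g}; together these guarantee existence and uniqueness of both trajectories. The required quasi-monotonicity of the upper field is \cref{thm:appendix:SGT2-QL:quasi-monotone:f}. The domination hypothesis $\underline f(v)\le\overline f(v)$, that is $g(x_1,x_2)\le f(x_1,x_2)$ for every $(x_1,x_2)$, is exactly \cref{thm:appendix:SGT2-QL:fg}. Finally, the strict initial ordering $v_0<x_0$ holds by construction, since the lower comparison system is initialized strictly below the original system's initial condition, as stipulated in \eqref{eq:appendix:SGT2-QL:lower-system}. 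With these four facts assembled, applying \cref{lemma:comparision-principle} yields $v_t\le x_t$ for all $t\ge 0$, which is the stated inequality.

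Regarding difficulty, there is essentially no substantive obstacle here, since every prerequisite has already been established and the proof reduces to a bookkeeping assembly of earlier lemmas. The one point demanding care is the correct role assignment in \cref{lemma:comparision-principle}: quasi-monotonicity must be verified for the field governing the system that is bounded from \emph{below} (here the original field $f$), not for $g$. Because SGT2-QL is symmetric and $g$ is linear, $g$ would in fact be trivially quasi-monotone as well, but this is not what the principle demands, so I would be careful to cite \cref{thm:appendix:SGT2-QL:quasi-monotone:f} rather than any analogous property of $g$.
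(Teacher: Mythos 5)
Your proposal is correct and is essentially identical to the paper's own proof, which likewise obtains the conclusion by applying \cref{lemma:comparision-principle} together with \cref{thm:appendix:SGT2-QL:quasi-monotone:f,thm:appendix:SGT2-QL:fg,thm:appendix:SGT2-QL:Lipschits:f,thm:appendix:SGT2-QL:Lipschits:g}. Your additional remark about assigning quasi-monotonicity to the field of the system bounded from below (the original field $f$, not $g$) is exactly the right reading of the comparison principle's hypotheses.
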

\begin{proof}
The desired conclusion is obtained by~\cref{lemma:comparision-principle}with~\cref{thm:appendix:SGT2-QL:quasi-monotone:f,thm:appendix:SGT2-QL:fg,thm:appendix:SGT2-QL:Lipschits:f,thm:appendix:SGT2-QL:Lipschits:g}.
\end{proof}

Moreover, the next lemma proves that the lower comparison system is also globally asymptotically stable at the origin.
\begin{lemma}\label{thm:appendix:SGT2-QL:stability:lower}
For any $\beta>0$, the origin is the unique globally asymptotically stable equilibrium point of the lower comparison system~\eqref{eq:appendix:SGT2-QL:lower-system}.
\end{lemma}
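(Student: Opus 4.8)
The plan is to exploit the fact that, unlike the original affine switching system, the lower comparison system \eqref{eq:appendix:SGT2-QL:lower-system} is a genuinely \emph{linear, time-invariant, single-mode} system: since the greedy policy $\pi_{Q^*}$ is fixed, the coefficient matrix
\[
A_l := \left[ {\begin{array}{*{20}{c}}
{ - (1 + \beta )D}&{\gamma DP{\Pi _{{\pi _{{Q^*}}}}} + \beta D}\\
{\gamma DP{\Pi _{{\pi _{{Q^*}}}}} + \beta D}&{ - (1 + \beta )D}
\end{array}} \right]
\]
does not depend on the state. Consequently, global asymptotic stability of the origin is equivalent to $A_l$ being Hurwitz, and the whole argument reduces to the single-mode specialization of \cref{thm:appendix:SGT2-QL:stability:upper}.

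First I would invoke \cref{lemma:fundamental-stability-lemma} with the full-column-rank matrix $L = I \in {\mathbb R}^{2|{\cal S}||{\cal A}| \times 2|{\cal S}||{\cal A}|}$ and the single subsystem matrix $\bar A_l = A_l$, so that the required relation $L A_l = \bar A_l L$ holds trivially. It then remains only to verify the strictly negative row dominating diagonal condition for $A_l$. The key structural fact I would use is that $\Pi_{\pi_{Q^*}}$ is a deterministic stochastic matrix and $P$ is row-stochastic, so the product $P\Pi_{\pi_{Q^*}}$ is again row-stochastic; hence each of its rows has nonnegative entries summing to one, giving $\sum_{j\ne i}|[P\Pi_{\pi_{Q^*}}]_{ij}| \le 1$.

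With this bound in hand, for any row index $i$ in the top block the diagonal entry plus the off-diagonal absolute row sum is
\[
-(1+\beta)[D]_{ii} + \gamma [D]_{ii}\sum_{j\ne i}|[P\Pi_{\pi_{Q^*}}]_{ij}| + \beta [D]_{ii} \le -(1+\beta)[D]_{ii} + (\gamma+\beta)[D]_{ii} = (\gamma-1)[D]_{ii} < 0,
\]
exactly as in \cref{thm:appendix:SGT2-QL:stability:upper}, where I use $\gamma < 1$ and $[D]_{ii} > 0$ (which holds under \cref{assumption:positive-distribution}). By the symmetry of $A_l$ between its two block rows, the identical bound holds for $i \in \{|{\cal S}||{\cal A}|+1,\dots,2|{\cal S}||{\cal A}|\}$. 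Therefore the strictly negative row dominating diagonal condition is met, and \cref{lemma:fundamental-stability-lemma} yields that the origin is the unique globally asymptotically stable equilibrium point.

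I anticipate no genuine obstacle here, since the content is a verbatim specialization of the upper comparison proof to a single fixed greedy policy; this is why the analogous \cref{thm:appendix:AGT2-QL:stability:lower} in the AGT2-QL section was stated without a detailed proof. The only point deserving a moment of care is to record explicitly that the lower comparison system carries neither an affine term nor any state dependence, so that diagonal dominance is fully decisive for stability. Equivalently, one could bypass \cref{lemma:fundamental-stability-lemma} entirely and apply Gershgorin's circle theorem directly to $A_l$, since the same diagonal-dominance computation places every Gershgorin disc strictly in the open left half-plane.
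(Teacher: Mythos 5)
Your proposal is correct and is essentially the paper's own argument: the paper's proof of \cref{thm:appendix:SGT2-QL:stability:lower} simply states that one applies the same procedure as for the upper comparison system, and you have carried that out explicitly — invoking \cref{lemma:fundamental-stability-lemma} with $L=I$ and verifying the strictly negative row dominating diagonal condition via row-stochasticity of $P\Pi_{\pi_{Q^*}}$, yielding the bound $(\gamma-1)[D]_{ii}<0$ exactly as in \cref{thm:appendix:SGT2-QL:stability:upper}. Your closing observation that the single-mode, affine-free structure also permits a direct Gershgorin argument is a valid (and slightly more elementary) alternative, but the main route coincides with the paper's.
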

\begin{proof}
For the proof, one can apply the same procedure as in the proof of the upper comparison system.
\end{proof}

So far, we have established several key properties of the upper and lower comparison systems, including their global asymptotic stability. In the next subsection, we prove the global asymptotic stability of the original system based on these results.

\subsection{Stability of the original system}
We establish the global asymptotic stability of~\eqref{eq:appendix:SGT2-QL:original-system2}.
\begin{theorem}\label{thm:stability:SGT2-QL:2}
For any $\beta>0$, the origin is the unique globally asymptotically stable equilibrium point of the original system~\eqref{eq:appendix:SGT2-QL:original-system2}.
Equivalently, $\left[ {\begin{array}{*{20}{c}}
{{Q^*}}\\
{{Q^*}}
\end{array}} \right]$ is the unique globally asymptotically stable equilibrium point of the original system~\eqref{eq:appendix:SGT2-QL:original-system1}.
\end{theorem}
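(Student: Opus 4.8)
The plan is to mirror the squeeze argument used for the original AGT2-QL system in \cref{thm:appendix:AGT2-QL:stability:f}, since all the necessary building blocks have already been assembled for SGT2-QL in the preceding lemmas. The central idea is to trap the trajectory of the original system \eqref{eq:appendix:SGT2-QL:original-system2} between the trajectories of the upper and lower comparison systems, both of which are already known to decay to the origin, and then invoke an element-wise squeeze.

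Concretely, I would first chain the two bounding lemmas. By \cref{thm:appendix:SGT2-QL:upper-original} the upper comparison trajectory dominates the original trajectory element-wise, and by \cref{thm:appendix:SGT2-QL:original-lower} the original trajectory dominates the lower comparison trajectory element-wise. Together these give
\begin{align*}
\begin{bmatrix} Q_t^{A,u} - Q^* \\ Q_t^{B,u} - Q^* \end{bmatrix} \ge \begin{bmatrix} Q_t^A - Q^* \\ Q_t^B - Q^* \end{bmatrix} \ge \begin{bmatrix} Q_t^{A,l} - Q^* \\ Q_t^{B,l} - Q^* \end{bmatrix}, \quad \forall t \ge 0,
\end{align*}
where the inequalities are understood component-wise; this only requires that the comparison systems be initialized with the strict element-wise inequalities demanded by \eqref{eq:appendix:SGT2-QL:upper-system} and \eqref{eq:appendix:SGT2-QL:lower-system}, which is always achievable for any fixed initial state $z$ of the original system. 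Next I would invoke \cref{thm:appendix:SGT2-QL:stability:upper} and \cref{thm:appendix:SGT2-QL:stability:lower}, which guarantee that for every $\beta > 0$ both the upper and lower comparison trajectories converge to the origin as $t \to \infty$. Since each coordinate of the original trajectory is sandwiched between two quantities that both tend to zero, the squeeze forces $[Q_t^A - Q^*;\, Q_t^B - Q^*] \to 0$, i.e. the original system converges globally to the origin. To finish, I would note that global convergence of every trajectory to the origin precludes any second equilibrium (such a point would generate a stationary trajectory failing to converge), establishing uniqueness; translating back, $\left[ {\begin{array}{*{20}{c}} {{Q^*}} \\ {{Q^*}} \end{array}} \right]$ is the unique globally asymptotically stable equilibrium of \eqref{eq:appendix:SGT2-QL:original-system1}.

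I do not anticipate any genuine obstacle at this stage, because every hard ingredient has already been discharged: the construction of the comparison systems, their stability via the strictly negative row-dominating-diagonal condition of \cref{lemma:fundamental-stability-lemma}, and the bounding relations obtained through the vector comparison principle (\cref{lemma:comparision-principle}). The argument is a direct squeeze, so the proof is essentially identical in structure to the symmetric counterpart in \cref{thm:appendix:AGT2-QL:stability:f}. The only point demanding mild care is the bookkeeping of the element-wise inequalities and the remark that the comparison initial conditions can always be chosen to bound the given initial condition, exactly as in the asymmetric case.
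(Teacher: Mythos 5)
Your proposal is correct and takes essentially the same route as the paper's own proof: sandwich the original trajectory between the upper and lower comparison systems via \cref{thm:appendix:SGT2-QL:upper-original} and \cref{thm:appendix:SGT2-QL:original-lower}, then apply \cref{thm:appendix:SGT2-QL:stability:upper} and \cref{thm:appendix:SGT2-QL:stability:lower} so that the squeeze forces convergence to the origin. Your added remarks on the admissibility of the comparison initial conditions and on uniqueness of the equilibrium simply make explicit what the paper leaves implicit.
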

\begin{proof}
By~\cref{thm:appendix:SGT2-QL:original-lower,thm:appendix:SGT2-QL:upper-original}, we have
\[\left[ {\begin{array}{*{20}{c}}
{Q_t^{A,u} - {Q^*}}\\
{Q_t^{B,u} - {Q^*}}
\end{array}} \right] \ge \left[ {\begin{array}{*{20}{c}}
{Q_t^A - {Q^*}}\\
{Q_t^B - {Q^*}}
\end{array}} \right] \ge \left[ {\begin{array}{*{20}{c}}
{Q_t^{A,l} - {Q^*}}\\
{Q_t^{B,l} - {Q^*}}
\end{array}} \right],\quad \forall t \ge 0.\]
Moreover, by~\cref{thm:appendix:SGT2-QL:stability:lower} and~\cref{thm:appendix:SGT2-QL:stability:upper}, we have
\[\left[ {\begin{array}{*{20}{c}}
{Q_t^{A,u} - {Q^*}}\\
{Q_t^{B,u} - {Q^*}}
\end{array}} \right] \to 0,\quad \left[ {\begin{array}{*{20}{c}}
{Q_t^{A,l} - {Q^*}}\\
{Q_t^{B,l} - {Q^*}}
\end{array}} \right] \to 0\]
as $t\to\infty$. Therefore, the state of the original system also asymptotically converges to the origin. This completes the proof.
\end{proof}

\subsection{Numerical example}
In this subsection, we provide a simple example to illustrate the validity of the properties of the upper and lower comparison systems established in the previous sections.
Let us consider the MDP previously considered for AGT2-QL with ${\cal S}=\{1,2\}$, ${\cal A}=\{1,2\}$, $\gamma = 0.9$,
\begin{align*}
&P_1=\begin{bmatrix}
   0.2 & 0.8\\
   0.3 & 0.7 \\
\end{bmatrix},\quad P_2=\begin{bmatrix}
   0.5 & 0.5 \\
   0.7 & 0.3\\
\end{bmatrix},
\end{align*}
a behavior policy $\beta$ such that
\begin{align*}
&{\mathbb P}[a = 1|s = 1] = 0.2,\quad {\mathbb P}[a = 2|s = 1] = 0.8,\\
&{\mathbb P}[a = 1|s = 2] = 0.7,\quad {\mathbb P}[a = 2|s = 2] = 0.3,
\end{align*}
and the expected reward vectors
\begin{align*}
R_1  = \begin{bmatrix}
   3  \\
   1  \\
\end{bmatrix},\quad R_2  = \begin{bmatrix}
   2  \\
   1  \\
\end{bmatrix}
\end{align*}

Solutions of the O.D.E. model of SGT2-QL and the upper and lower comparison systems are depicted in~\cref{fig:appendix:SGT2-QL:1} for $Q^A_t$ part and~\cref{fig:appendix:SGT2-QL:2} for $Q^B_t$ part.
\begin{figure}[h!]
\centering\includegraphics[width=14cm,height=10cm]{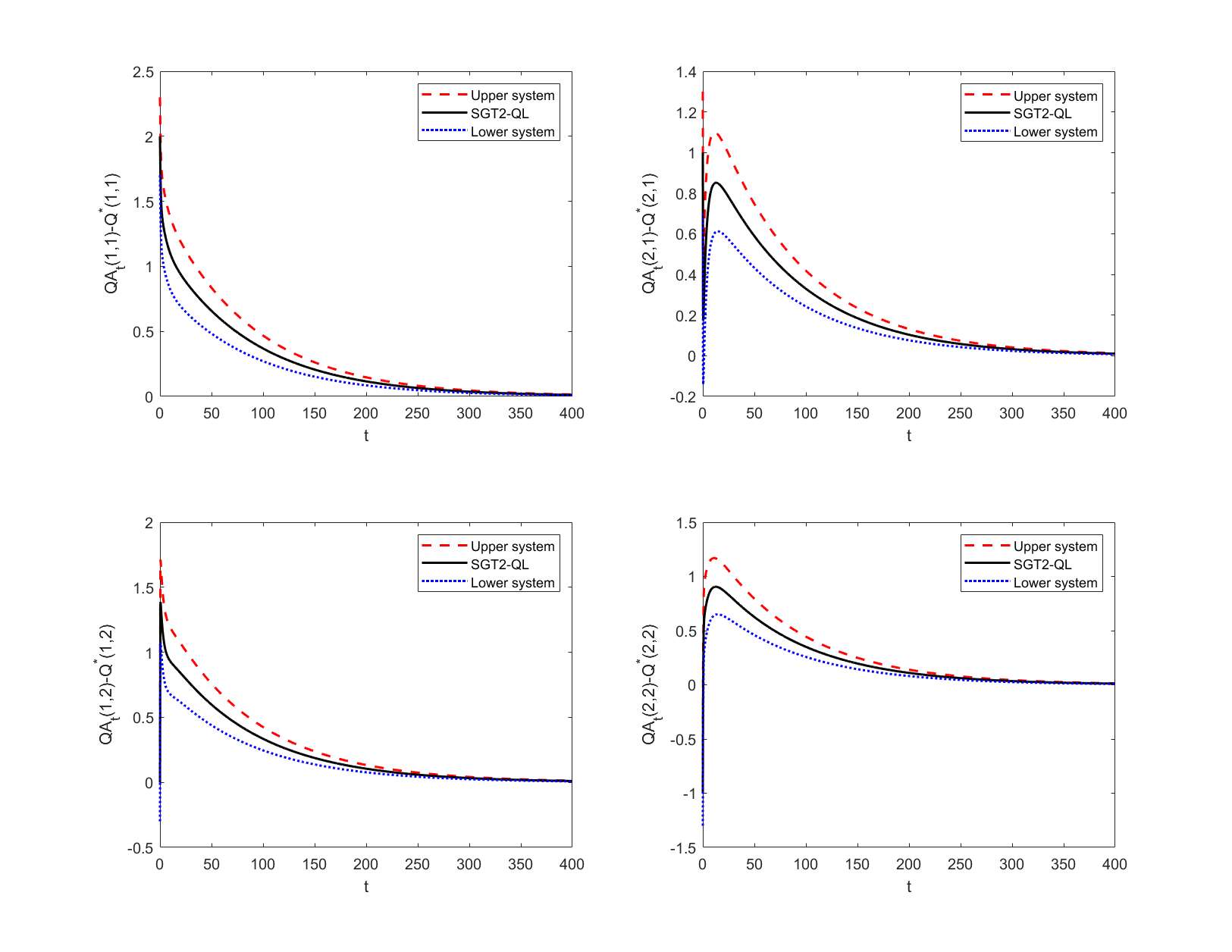}
\caption{Trajectories of the O.D.E. model of SGT2-QL and the corresponding upper and lower comparison systems ($Q^A_t$ part). he solution of the ODE model (black line) is upper and lower bounded by the upper and lower comparison systems, respectively (red and blue lines, respectively). This result provides numerical evidence that the bounding rules hold.}\label{fig:appendix:SGT2-QL:1}
\end{figure}
\begin{figure}[h!]
\centering\includegraphics[width=14cm,height=10cm]{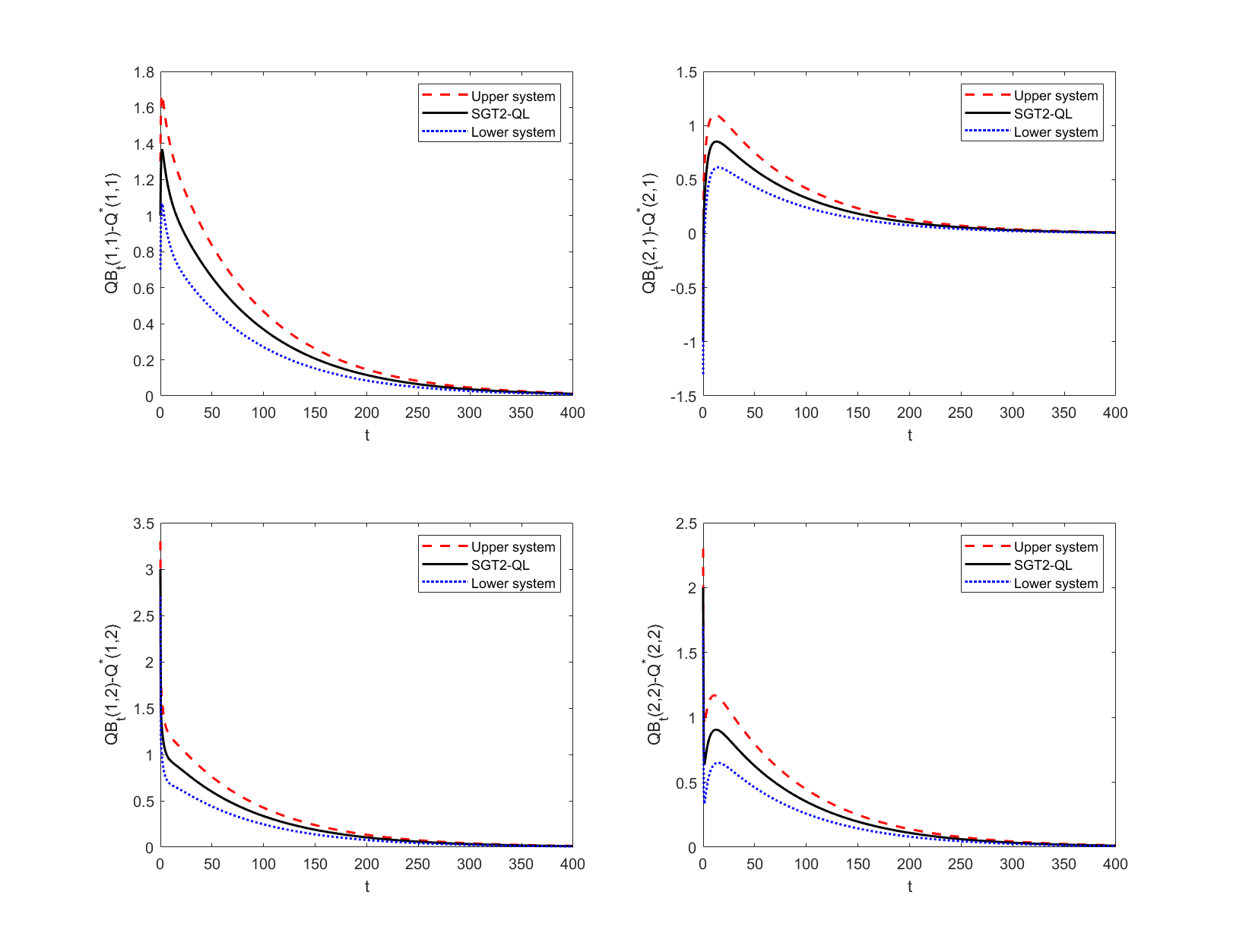}
\caption{Trajectories of the original O.D.E. model of SGT2-QL and the corresponding upper and lower comparison systems ($Q^B_t$ part). he solution of the ODE model (black line) is upper and lower bounded by the upper and lower comparison systems, respectively (red and blue lines, respectively). This result provides numerical evidence that the bounding rules hold.}\label{fig:appendix:SGT2-QL:2}
\end{figure}

As before, the simulation study empirically proves that the ODE model associated with SGT2-QL is asymptotically stable. Moreover, they illustrate that the solutions of the upper and lower comparison systems bound the solution of the original system, as established by the theory.

So far, we have examined key properties of the ODE model of SGT2-QL and proved its global asymptotic stability, which is essential for establishing the convergence of SGT2-QL. In the next subsection, we will analyze its convergence in detail.

\subsection{Convergence of SGT2-QL}
In this subsection, we apply the Borkar and Meyn theorem,~\cref{lemma:Borkar}, to prove the convergence of SGT2-QL in~\cref{thm:SGT2-QL-convergence}. First of all, note that the system in~\eqref{eq:appendix:4} corresponds to the ODE model in~\cref{assumption:1}. The proof is completed by examining all the statements in~\cref{assumption:1}:

\subsubsection{Step 1:}
SGT2-QL can be expressed as the stochastic recursion in~\eqref{eq:general-stochastic-recursion} with
\begin{align*}
f\left( {\left[ {\begin{array}{*{20}{c}}
{{\theta _1}}\\
{{\theta _2}}
\end{array}} \right]} \right): = \left[ {\begin{array}{*{20}{c}}
{ - (1 + \beta )D}&{\gamma DP{\Pi _{{\theta _2}}} + \beta D}\\
{\gamma DP{\Pi _{{\theta _1}}} + \beta D}&{ - (1 + \beta )D}
\end{array}} \right]\left[ {\begin{array}{*{20}{c}}
{{\theta _1}}\\
{{\theta _2}}
\end{array}} \right] + \left[ {\begin{array}{*{20}{c}}
{DR}\\
{DR}
\end{array}} \right].
\end{align*}
Moreover, $f$ is globally Lipschitz continuous according to~\cref{thm:appendix:SGT2-QL:Lipschits:f}.

To prove the first statement of~\cref{assumption:1}, we note that
\begin{align*}
{f_\infty }\left( {\left[ {\begin{array}{*{20}{c}}
{{\theta _1}}\\
{{\theta _2}}
\end{array}} \right]} \right) =& \mathop {\lim }\limits_{c \to \infty } \frac{{f\left( {\left[ {\begin{array}{*{20}{c}}
{c{\theta _1}}\\
{c{\theta _2}}
\end{array}} \right]} \right)}}{c}\\
=& \mathop {\lim }\limits_{c \to \infty } \frac{{\left[ {\begin{array}{*{20}{c}}
{ - (1 + \beta )D}&{\gamma DP{\Pi _{{c\theta _2}}} + \beta D}\\
{\gamma DP{\Pi _{{c\theta _1}}} + \beta D}&{ - (1 + \beta )D}
\end{array}} \right]\left[ {\begin{array}{*{20}{c}}
{{\theta _1}}\\
{{\theta _2}}
\end{array}} \right] + \left[ {\begin{array}{*{20}{c}}
{DR}\\
{DR}
\end{array}} \right]}}{c}\\
=& \mathop {\lim }\limits_{c \to \infty } \left[ {\begin{array}{*{20}{c}}
{ - (1 + \beta )D}&{\gamma DP{\Pi _{c{\theta _2}}} + \beta D}\\
{\gamma DP{\Pi _{c{\theta _1}}} + \beta D}&{ - (1 + \beta )D}
\end{array}} \right]\left[ {\begin{array}{*{20}{c}}
{{\theta _1}}\\
{{\theta _2}}
\end{array}} \right]\\
=& \left[ {\begin{array}{*{20}{c}}
{ - (1 + \beta )D}&{\gamma DP{\Pi _{{\theta _2}}} + \beta D}\\
{\gamma DP{\Pi _{{\theta _1}}} + \beta D}&{ - (1 + \beta )D}
\end{array}} \right]\left[ {\begin{array}{*{20}{c}}
{{\theta _1}}\\
{{\theta _2}}
\end{array}} \right]
\end{align*}
where the last equality is due to the homogeneity of the policy, ${\pi _{c\theta_i }}(s) = \arg {\max _{a \in A}}c\theta_i (s,a) = \arg {\max _{a \in A}}\theta_i (s,a)$ for $i\in \{1,2\}$, where $\theta_i (s,a)$ denotes the entry in the parameter vector $\theta_i$ corresponding to the state-action pair $(s,a)\in {\cal S}\times {\cal A}$.

\subsubsection{Step 2:}
Let us consider the system
\begin{align*}
\frac{d}{{dt}}\left[ {\begin{array}{*{20}{c}}
{{\theta _{1,t}}}\\
{{\theta _{2,t}}}
\end{array}} \right] = {f_\infty }\left( {\left[ {\begin{array}{*{20}{c}}
{{\theta _{1,t}}}\\
{{\theta _{2,t}}}
\end{array}} \right]} \right) = \left[ {\begin{array}{*{20}{c}}
{ - (1 + \beta )D}&{\gamma DP{\Pi _{{\theta _{2,t}}}} + \beta D}\\
{\gamma DP{\Pi _{{\theta _{1,t}}}} + \beta D}&{ - (1 + \beta )D}
\end{array}} \right]\left[ {\begin{array}{*{20}{c}}
{{\theta _{1,t}}}\\
{{\theta _{2,t}}}
\end{array}} \right].
\end{align*}
Its global asymptotic stability around the origin can be easily proved following similar lines as in the proof of the upper comparison system in~\cref{thm:appendix:AGT2-QL:stability:upper}.

\subsubsection{Step 3:}
According to~\cref{thm:appendix:AGT2-QL:stability:f}, the ODE model of SGT2-QL is globally asymptotically stable around $\left[ {\begin{array}{*{20}{c}}
{{\theta _1}}\\
{{\theta _2}}
\end{array}} \right] = \left[ {\begin{array}{*{20}{c}}
{{Q^*}}\\
{{Q^*}}
\end{array}} \right].$

\subsubsection{Step 4:}
Next, we prove the remaining parts. Recall that SGT2-QL is expressed as
\begin{align*}
{{\bar Q}_{k + 1}} = {{\bar Q}_k} + {\alpha _k}\{ f({{\bar Q}_k}) + {\varepsilon _{k + 1}}\}
\end{align*}
where ${{\bar Q}_k}: = \left[ {\begin{array}{*{20}{c}}
{Q_k^A}\\
{Q_k^B}
\end{array}} \right]$
and
\begin{align*}
{\varepsilon _{k + 1}}: =& \left[ {\begin{array}{*{20}{c}}
{ - (1 + \beta )({e_s} \otimes {e_a}){{({e_s} \otimes {e_a})}^T}}&{\gamma ({e_s} \otimes {e_a})e_{s'}^T{\Pi _{{Q^B}}} + \beta ({e_s} \otimes {e_a}){{({e_s} \otimes {e_a})}^T}}\\
{\gamma ({e_s} \otimes {e_a})e_{s'}^T{\Pi _{{Q^A}}} + \beta ({e_s} \otimes {e_a}){{({e_s} \otimes {e_a})}^T}}&{ - (1 + \beta )({e_s} \otimes {e_a}){{({e_s} \otimes {e_a})}^T}}
\end{array}} \right]\left[ {\begin{array}{*{20}{c}}
{{Q^A}}\\
{{Q^B}}
\end{array}} \right]\\
& + \left[ {\begin{array}{*{20}{c}}
{({e_s} \otimes {e_a}){{({e_s} \otimes {e_a})}^T}r(s,a,s')}\\
{({e_s} \otimes {e_a}){{({e_s} \otimes {e_a})}^T}r(s,a,s')}
\end{array}} \right] - \left[ {\begin{array}{*{20}{c}}
{ - (1 + \beta )D}&{\gamma DP{\Pi _{{Q^B}}} + \beta D}\\
{\gamma DP{\Pi _{{Q^A}}} + \beta D}&{ - (1 + \beta )D}
\end{array}} \right]\left[ {\begin{array}{*{20}{c}}
{{Q^A}}\\
{{Q^B}}
\end{array}} \right] + \left[ {\begin{array}{*{20}{c}}
R\\
R
\end{array}} \right]
\end{align*}
and
\begin{align*}
f\left( {\left[ {\begin{array}{*{20}{c}}
{{Q^A}}\\
{{Q^B}}
\end{array}} \right]} \right): = \left[ {\begin{array}{*{20}{c}}
{ - (1 + \beta )D}&{\gamma DP{\Pi _{{Q^B}}} + \beta D}\\
{\gamma DP{\Pi _{{Q^A}}} + \beta D}&{ - (1 + \beta )D}
\end{array}} \right]\left[ {\begin{array}{*{20}{c}}
{{Q^A}}\\
{{Q^B}}
\end{array}} \right] + \left[ {\begin{array}{*{20}{c}}
{DR}\\
{DR}
\end{array}} \right].
\end{align*}

To proceed, let us define the history
\[{\cal G}_k: = ({{\bar \varepsilon }_k},{{\bar \varepsilon }_{k - 1}}, \ldots ,{{\bar \varepsilon }_1},{{\bar Q}_k},{{\bar Q}_{k - 1}}, \ldots ,{{\bar Q}_0})\]
Moreover, let us define the corresponding process $(M_k)_{k=0}^\infty$ with $M_k:=\sum_{i=1}^k {\varepsilon_i}$. Then, we can prove that $(M_k)_{k=0}^\infty$ is Martingale. To do so, we can easily prove ${\mathbb E}[\varepsilon_{k+1}|{\cal G}_k]=0$. Using this identity, we have
\begin{align*}
{\mathbb E}[M_{k+1}|{\cal G}_k]=& {\mathbb E}\left[ \left. \sum_{i=1}^{k+1}{\varepsilon_i} \right|{\cal G}_k\right]={\mathbb E}[\varepsilon_{k+1}|{\cal G}_k]+{\mathbb E}\left[ \left. \sum_{i=1}^k {\varepsilon_i} \right|{\cal G}_k \right]\\
=&{\mathbb E}\left[\left.\sum_{i=1}^k{\varepsilon_i} \right|{\cal G}_k \right]=\sum_{i=1}^k {\varepsilon_i}=M_k.
\end{align*}
Therefore, $(M_k)_{k=0}^\infty$ is a Martingale sequence, and $\varepsilon_{k+1} = M_{k+1}-M_k$ is a Martingale difference. Moreover, it can be easily proved that the fourth condition of~\cref{assumption:1} is satisfied by algebraic calculations. Therefore, the fourth condition is met.

\section{Experiments on AGT2-QL and SGT2-QL}
In this section, we empirically demonstrate the convergence of the proposed AGT2-QL and SGT2-QL algorithms. 

First of all, we consider the simple MDP considered in the previous sections with ${\cal S}=\{1,2\}$, ${\cal A}=\{1,2\}$, $\gamma = 0.9$,
\begin{align*}
&P_1=\begin{bmatrix}
   0.2 & 0.8\\
   0.3 & 0.7 \\
\end{bmatrix},\quad P_2=\begin{bmatrix}
   0.5 & 0.5 \\
   0.7 & 0.3\\
\end{bmatrix},
\end{align*}
a behavior policy $\beta$ such that
\begin{align*}
&{\mathbb P}[a = 1|s = 1] = 0.2,\quad {\mathbb P}[a = 2|s = 1] = 0.8,\\
&{\mathbb P}[a = 1|s = 2] = 0.7,\quad {\mathbb P}[a = 2|s = 2] = 0.3,
\end{align*}
and the expected reward vectors
\begin{align*}
R_1  = \begin{bmatrix}
   3  \\
   1  \\
\end{bmatrix},\quad R_2  = \begin{bmatrix}
   2  \\
   1  \\
\end{bmatrix}.
\end{align*}

Simulated errors
$|Q_k^A(s,a) - {Q^*}(s,a)|,|Q_k^B(s,a) - {Q^*}(s,a)|,(s,a) \in {\cal S} \times {\cal A}$ of AGT2-QL are depicted in~\cref{fig:appendix:AGT2-QL:Qa} for the $Q^A_k$ part and in~\cref{fig:appendix:AGT2-QL:Qb} for the $Q^B_k$ part. The results are presented for different weight values $\beta \in \{0.01,0.05,0.1,0.2,0.5\}$ and the diminishing step-size $\alpha_k = \frac{80}{200 + k}$. The results demonstrate the convergence of AGT2-QL to the optimal $Q^*$ over time. Moreover, the results illustrate the convergence trends for varying weight $\beta$. In particular, the larger the $\beta$, the faster the convergence.
\begin{figure}[h!]
\centering\includegraphics[width=14cm,height=11cm]{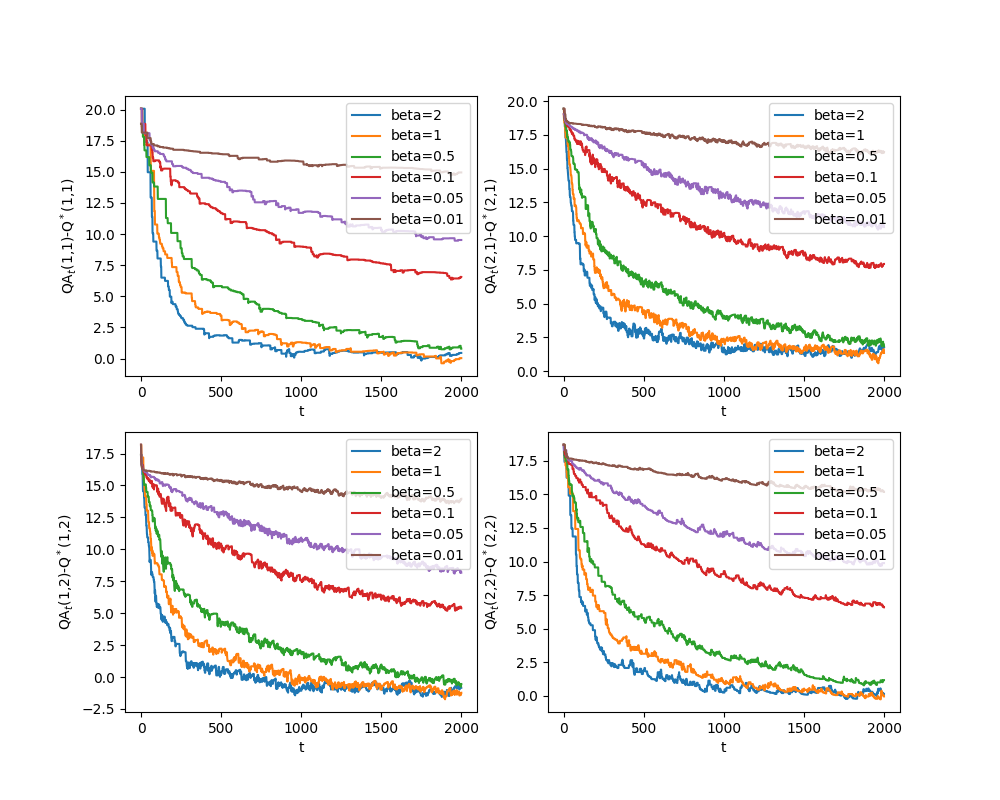}
\caption{Simulated errors of AGT2-QL ($Q^A_k$ part)}\label{fig:appendix:AGT2-QL:Qa}
\end{figure}
\begin{figure}[h!]
\centering\includegraphics[width=14cm,height=11cm]{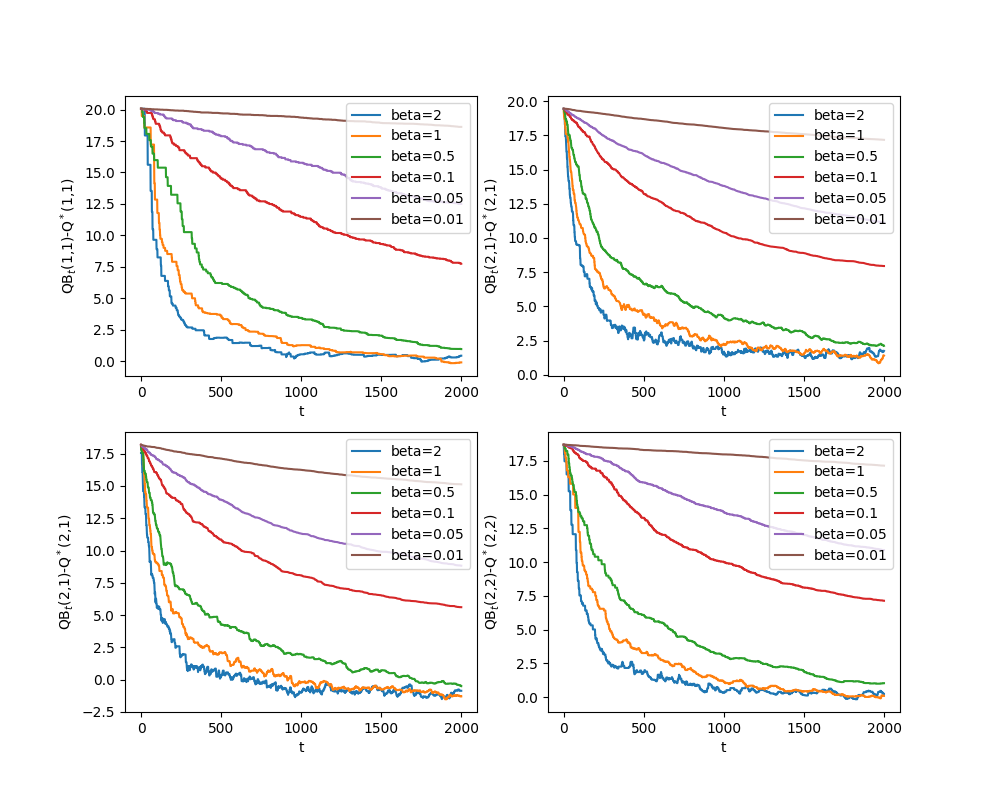}
\caption{Simulated errors of AGT2-QL ($Q^B_k$ part)}\label{fig:appendix:AGT2-QL:Qb}
\end{figure}

Similarly, for the same environment, simulated errors
$|Q_k^A(s,a) - {Q^*}(s,a)|,|Q_k^B(s,a) - {Q^*}(s,a)|,(s,a) \in S \times A$ of AGT2-QL are depicted in~\cref{fig:appendix:SGT2-QL:Qa} for the $Q^A_k$ part and in~\cref{fig:appendix:SGT2-QL:Qb} for the $Q^B_k$ part with different weight values $\beta \in \{0.01,0.05,0.1,0.2,0.5\}$. The results also demonstrate the convergence, but with different convergence trends for varying weight $\beta$. In particular, the error evolutions show that SGT2-QL generally converges faster than AGT2-QL, and the convergence speeds are less sensitive to $\beta$.
\begin{figure}[h!]
\centering\includegraphics[width=14cm,height=11cm]{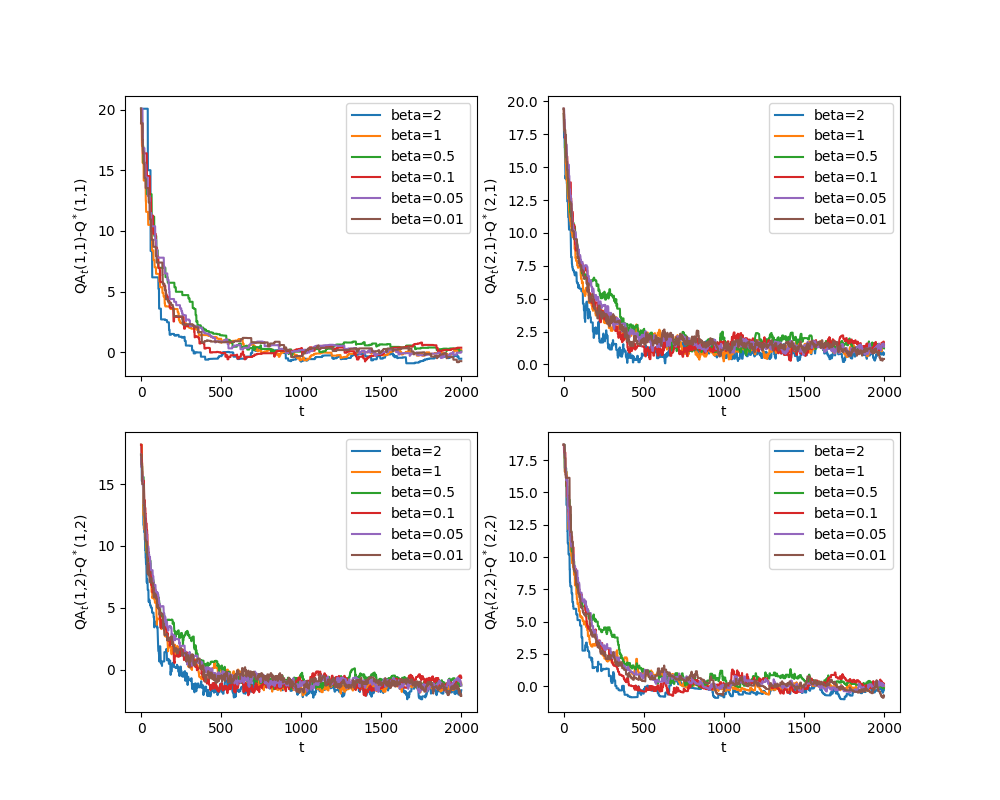}
\caption{Simulated errors of SGT2-QL ($Q^A_k$ part)}\label{fig:appendix:SGT2-QL:Qa}
\end{figure}
\begin{figure}[h!]
\centering\includegraphics[width=14cm,height=11cm]{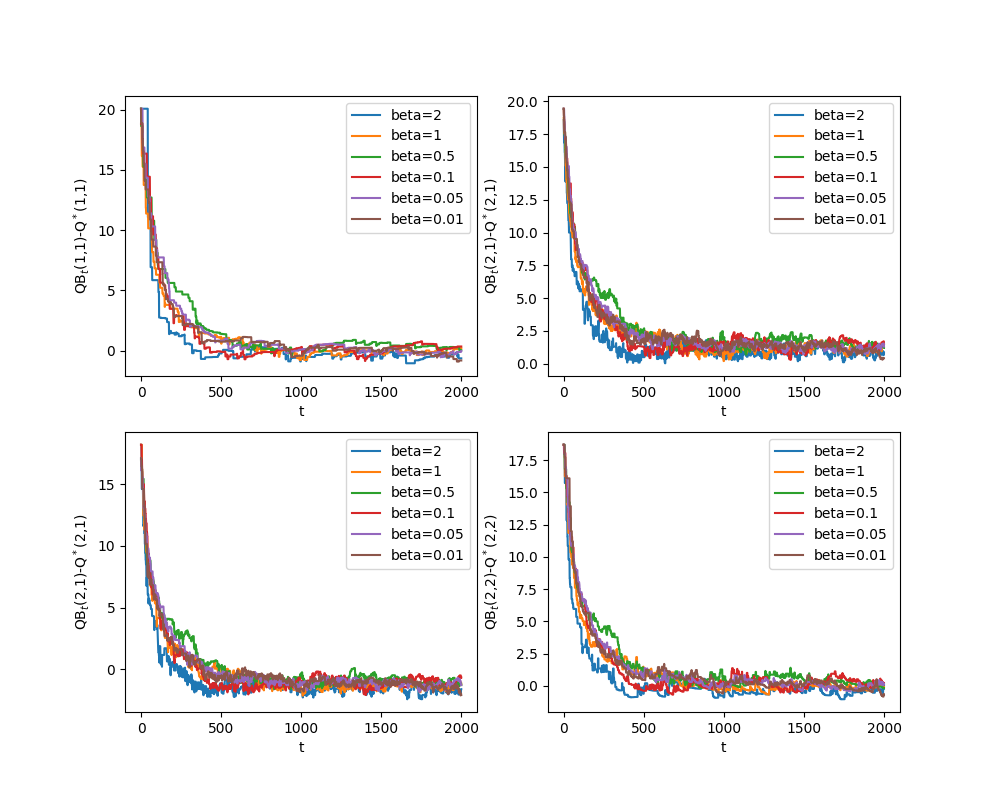}
\caption{Simulated errors of SGT2-QL ($Q^B_k$ part)}\label{fig:appendix:SGT2-QL:Qb}
\end{figure}

Additionally, we conduct experiments using grid world environments, including Taxi, FrozenLake, and CliffWalk, in OpenAI Gym. \cref{fig:appendix:comparison1} presents reward curves for the Taxi environment, which shows trends similar to previous results: AGT2-QL exhibits better learning performance for higher values of $\beta$, whereas SGT2-QL is not significantly affected by the choice of $\beta$.
\begin{figure}[h!]
\centering
\subfigure[Reward curves of AGT2-QL with different $\beta$]{\includegraphics[width=5cm,height=4cm]{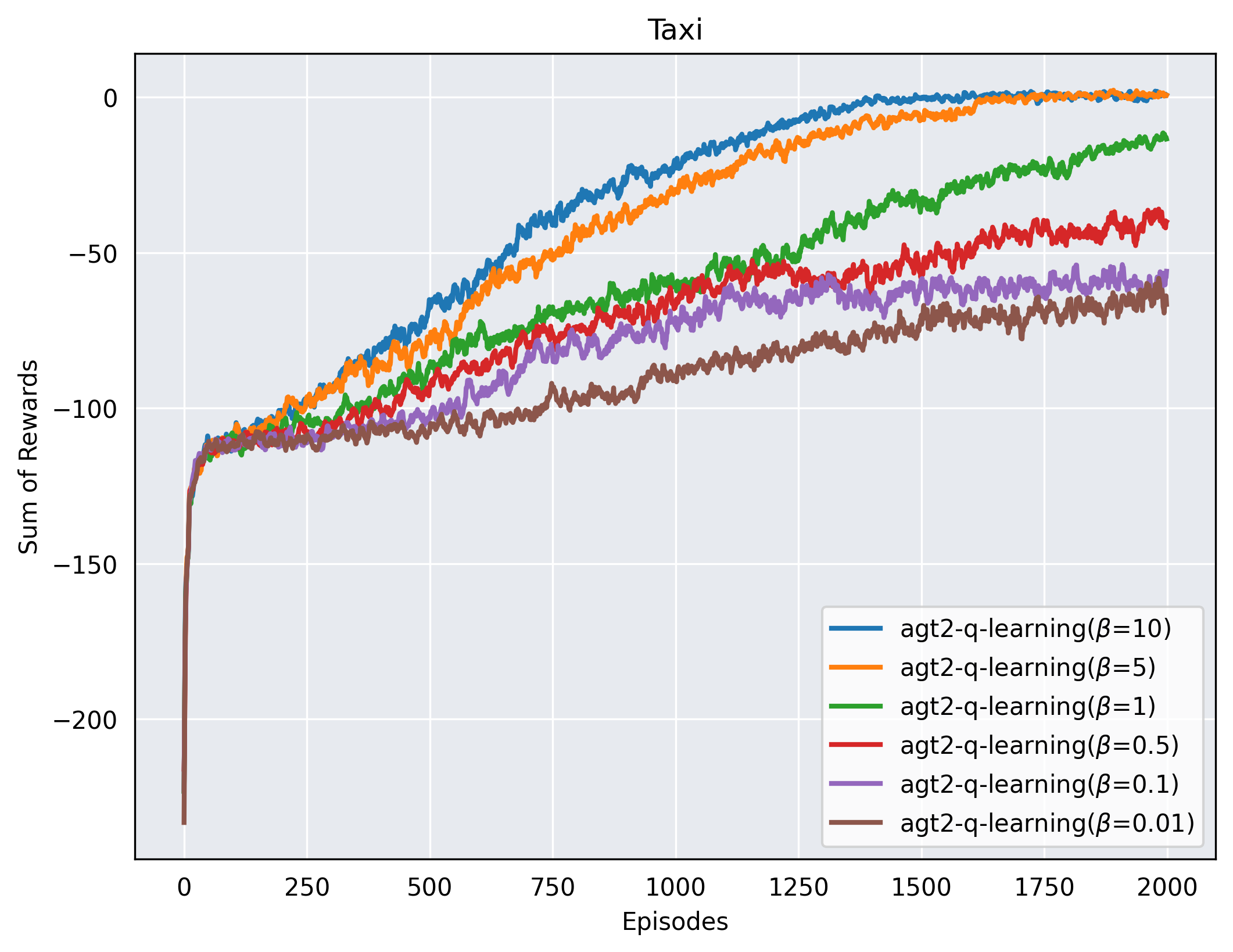}}
\subfigure[Reward curves of SGT2-QL with different $\beta$]{\includegraphics[width=5cm,height=4cm]{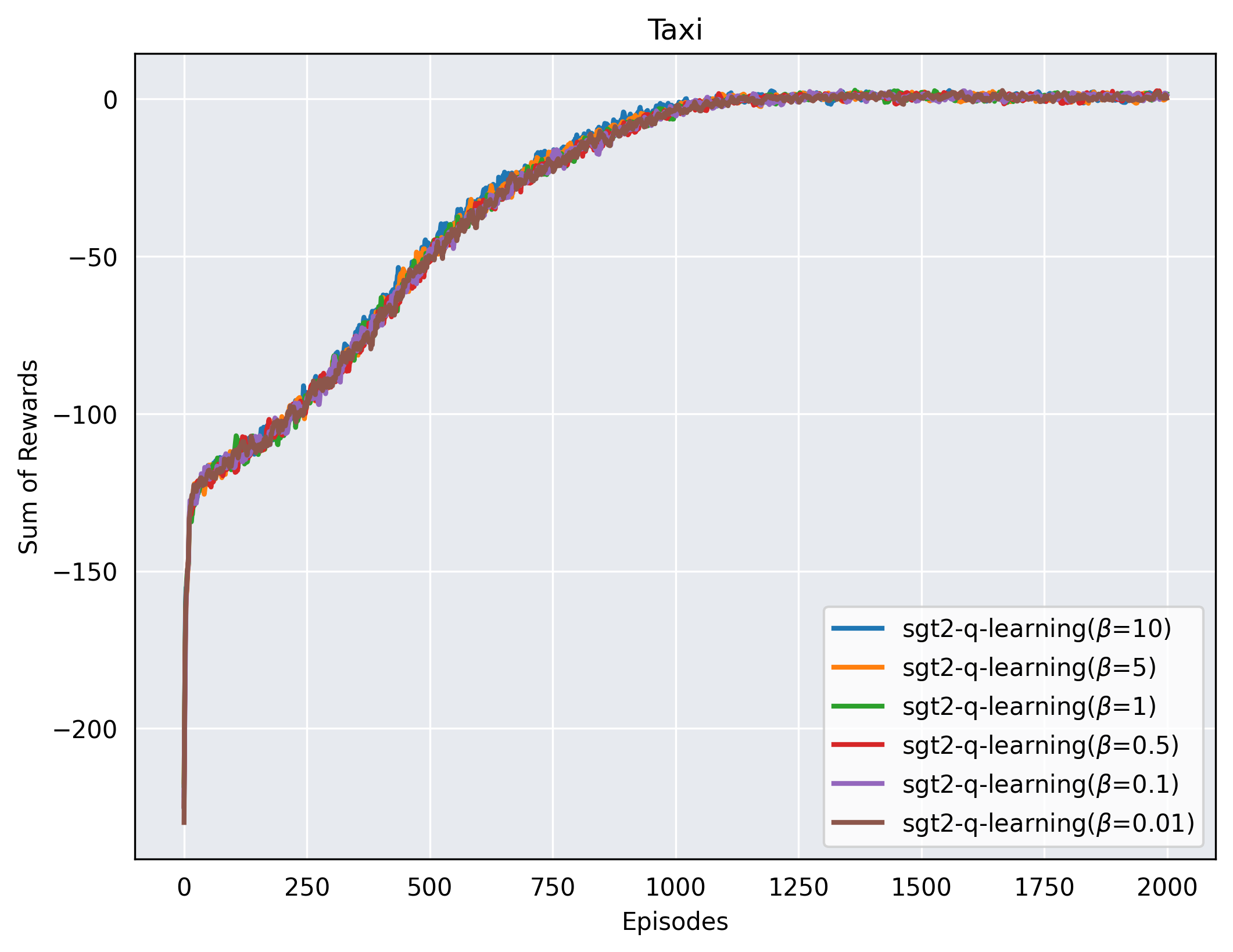}}\\
\subfigure[Reward curves of AGT2-QL, SGT2-QL, Q-learning, and double Q-learning]{\includegraphics[width=5cm,height=4cm]{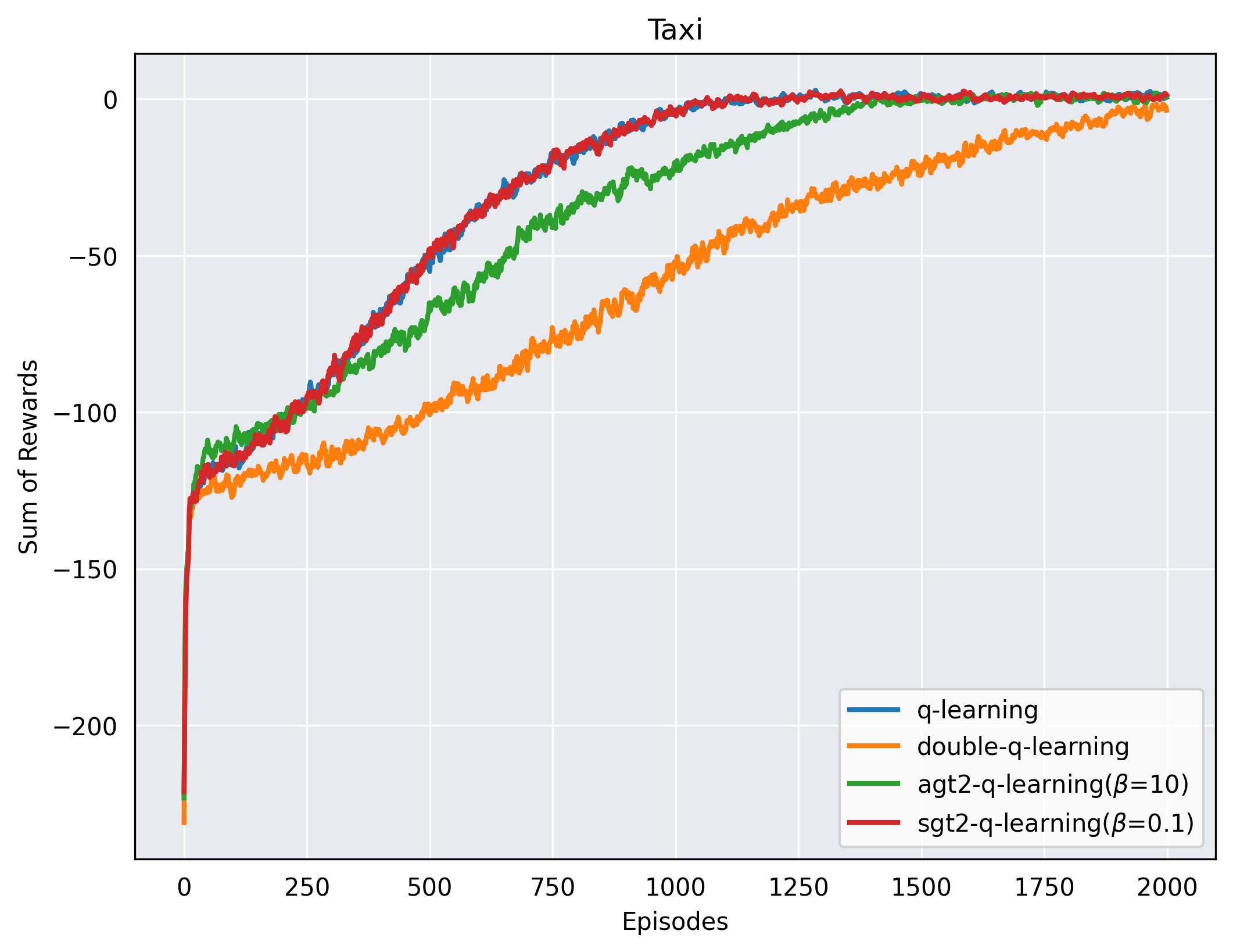}}
\caption{Comparison of simulated reward curves of AGT2-QL, SGT2-QL, Q-learning, double Q-learning in Taxi environment}\label{fig:appendix:comparison1}
\end{figure}

Similarly,\cref{fig:appendix:comparison2} presents reward curves for the FrozenLake environment, while\cref{fig:appendix:comparison3} shows reward curves for the CliffWalk environment. All results demonstrate that the proposed approaches, AGT2-QL and SGT2-QL, are valid and effectively learn the optimal policy. Moreover, the learning speeds of AGT2-QL and SGT2-QL are comparable to that of standard Q-learning.
\begin{figure}[h!]
\centering
\subfigure[Reward curves of AGT2-QL with different $\beta$]{\includegraphics[width=5cm,height=4cm]{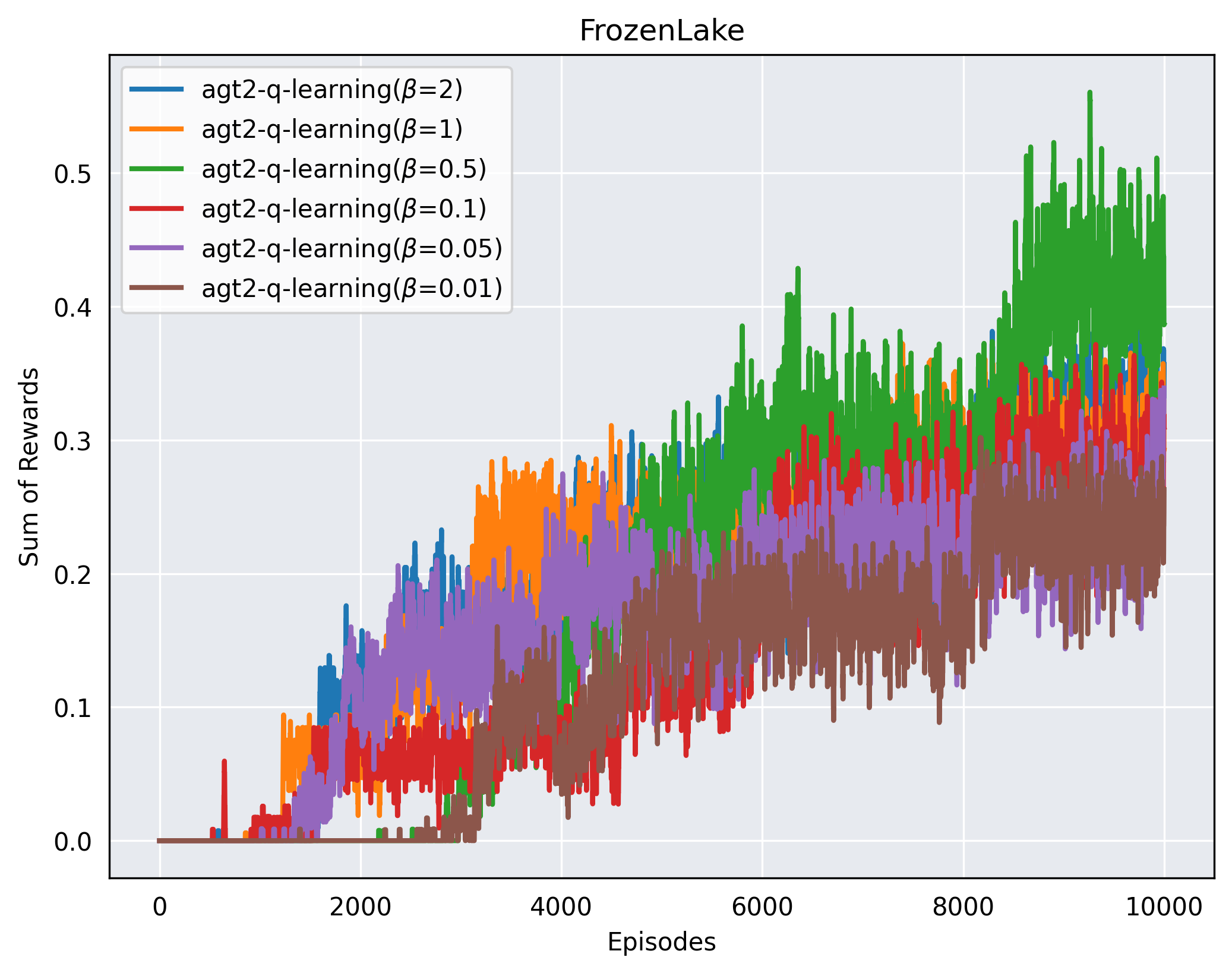}}
\subfigure[Reward curves of SGT2-QL with different $\beta$]{\includegraphics[width=5cm,height=4cm]{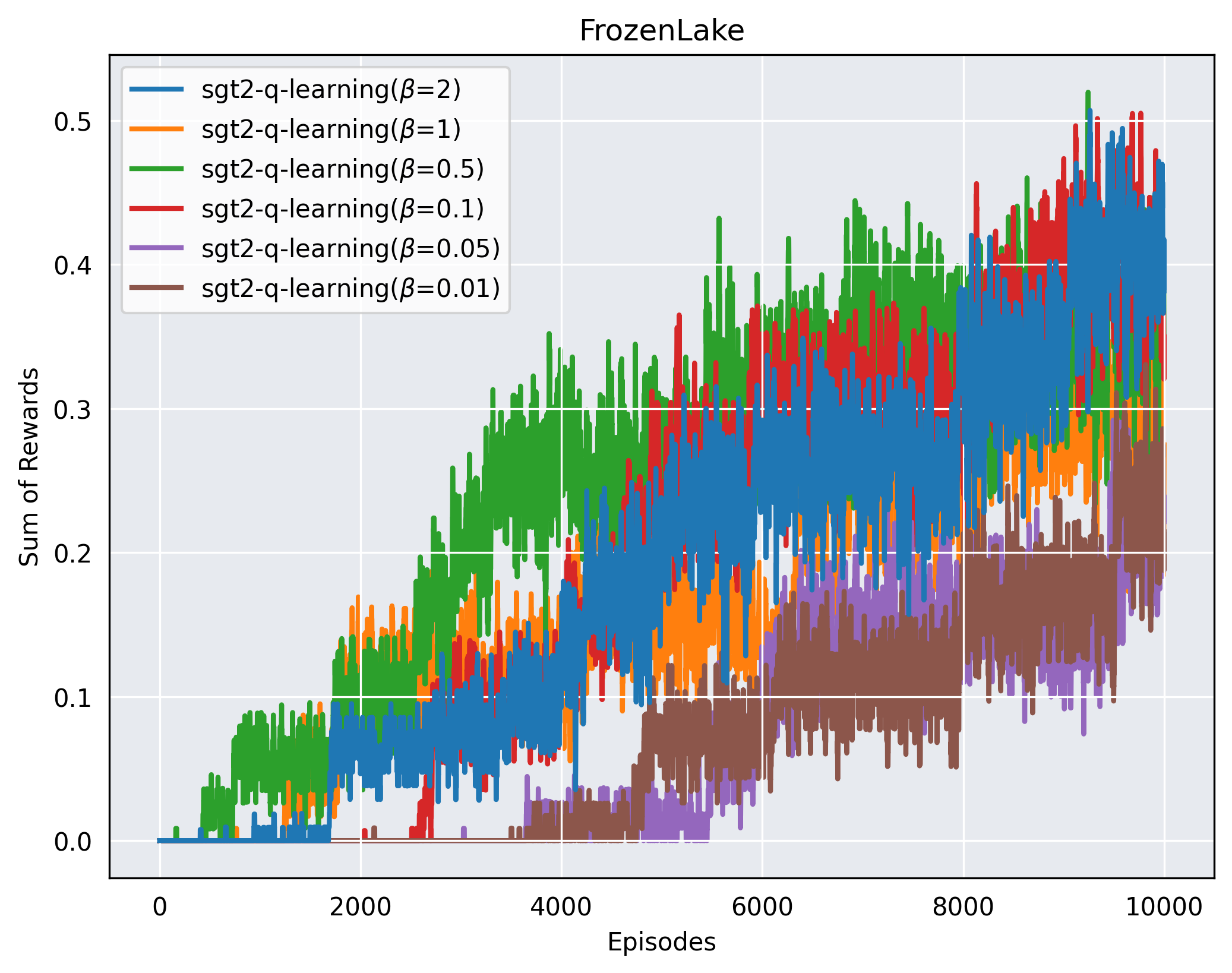}}\\
\subfigure[Reward curves of AGT2-QL, SGT2-QL, Q-learning, and double Q-learning]{\includegraphics[width=5cm,height=4cm]{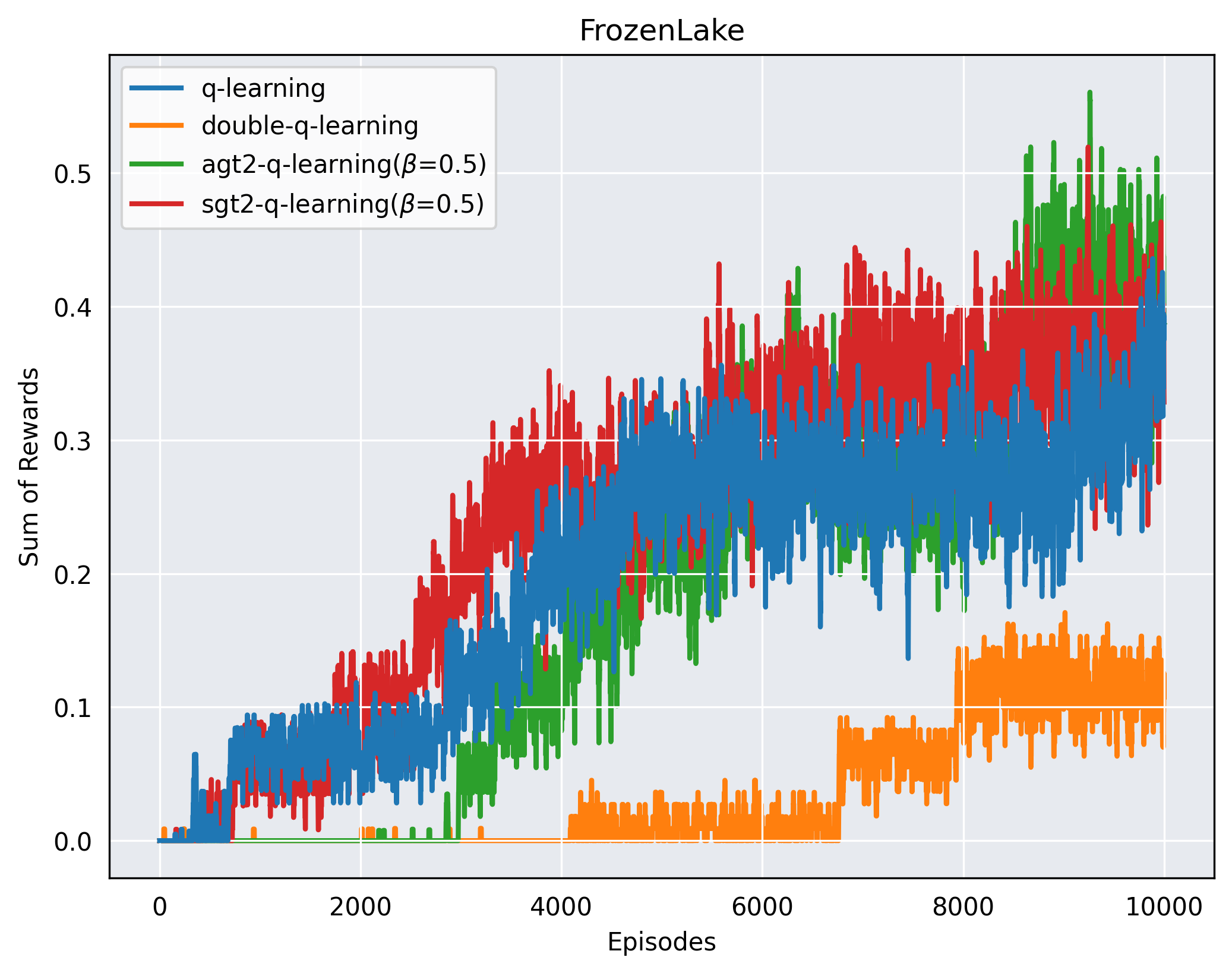}}
\caption{Comparison of reward curves of AGT2-QL, SGT2-QL, Q-learning, double Q-learning in Frozenlake environment}\label{fig:appendix:comparison2}
\end{figure}
\begin{figure}[h!]
\centering
\subfigure[]{\includegraphics[width=5cm,height=4cm]{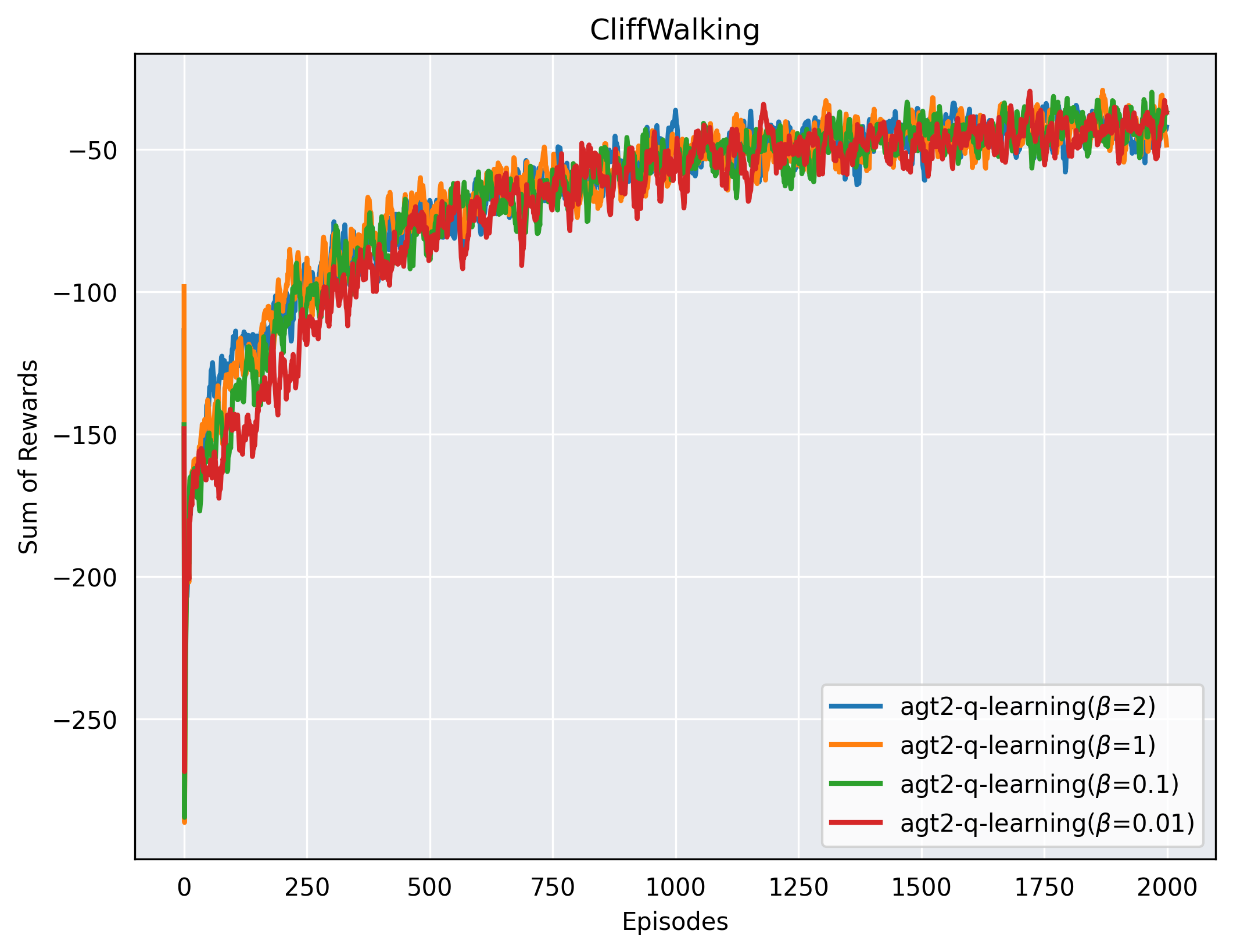}}
\subfigure[]{\includegraphics[width=5cm,height=4cm]{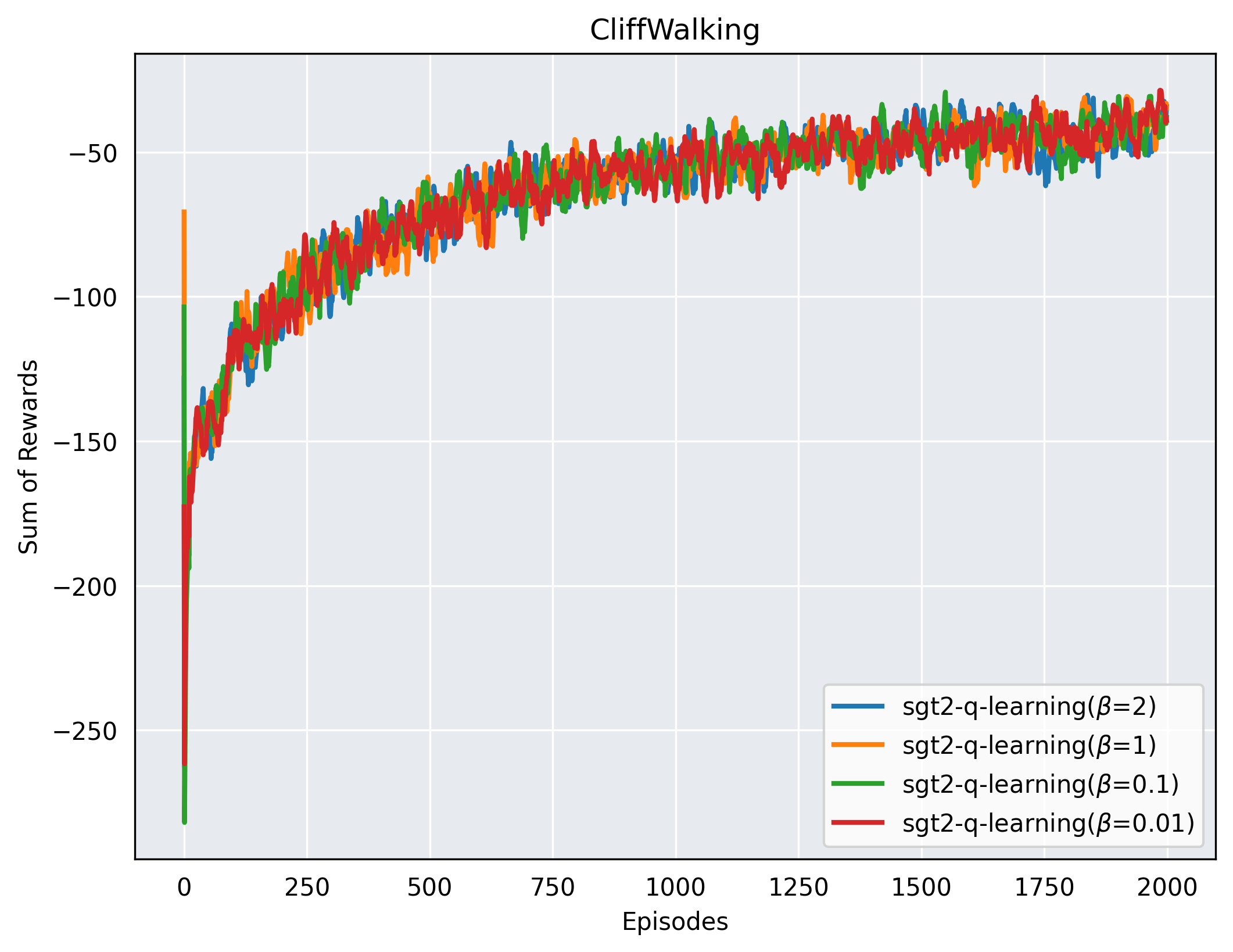}}\\
\subfigure[]{\includegraphics[width=5cm,height=4cm]{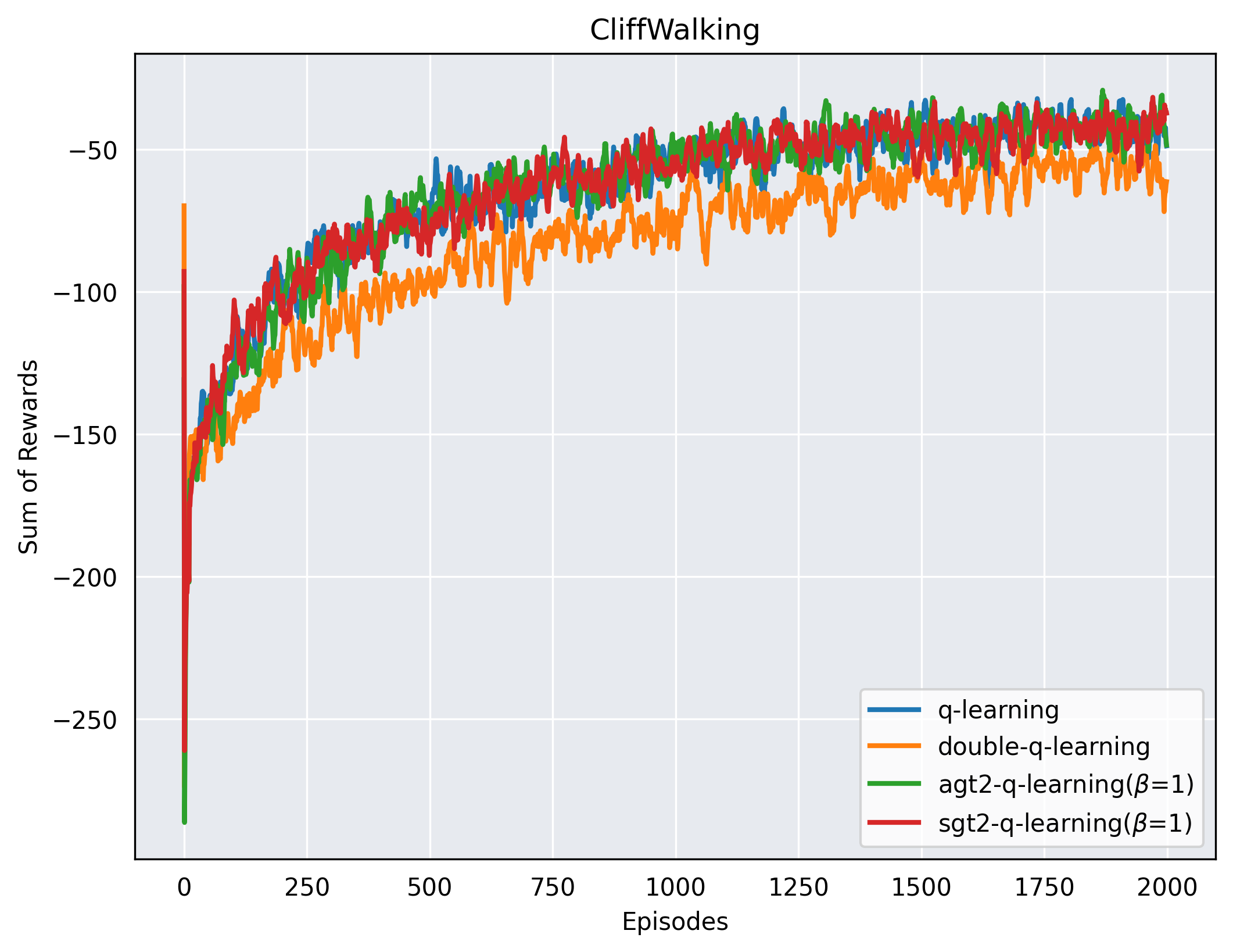}}
\caption{Comparison of simulated reward curves of AGT2-QL, SGT2-QL, Q-learning, double Q-learning in CliffWalk environment}\label{fig:appendix:comparison3}
\end{figure}

\clearpage
\section{Proof of~\cref{thm:AGT2-DQN:optimality}}
Let us assume that by minimizing the loss functions of AGT2-DQN, we can approximately minimize the following expected loss function
\begin{align*}
{L_1}({\theta _1}) =& {\mathbb E}_{(s,a) \sim U({\cal S} \times {\cal A}),s' \sim P( \cdot |s,a)}\left[ {{{\left( {r(s,a,s') + \gamma {{\max }_{a \in {\cal A}}}{Q_{{\theta _2}}}(s',a) - {Q_{{\theta _1}}}(s,a)} \right)}^2}} \right]\\
{L_2}({\theta _2}) =& {\mathbb E}_{(s,a) \sim U({\cal S} \times {\cal A}),s' \sim P( \cdot |s,a)}\left[ {\frac{\beta }{2}{{({Q_{{\theta _1}}}(s,a) - {Q_{{\theta _2}}}(s,a))}^2}} \right]
\end{align*}
where $U({\cal S} \times {\cal A})$ means the uniform distribution over the set ${\cal S}\times {\cal A}$. Let us suppose that the loss functions are minimized with the error
\[{L_1}({\theta _1}) \le \varepsilon ,\quad {L_2}({\theta _2}) \le \varepsilon \]
To proceed further, note that using the law of iterated expectations, the expected loss functions can be written by
\[{L_1}({\theta _1}) = {\mathbb E}_{(s,a) \sim U({\cal S} \times {\cal A})}\left[ {{\mathbb E}_{s' \sim P( \cdot |s,a)}\left[ {\left. {{{\left( {r(s,a,s') + \gamma {{\max }_{a \in {\cal A}}}{Q_{{\theta _2}}}(s',a) - {Q_{{\theta _1}}}(s,a)} \right)}^2}} \right|s,a} \right]} \right]\]
and
\[{L_2}({\theta _2}) = {\mathbb E}_{(s,a) \sim U({\cal S} \times {\cal A})}\left[ {{\mathbb E}_{s' \sim P( \cdot |s,a)}\left[ {\left. {\frac{\beta }{2}{{(Q_{\theta_1}(s,a) - Q_{\theta _2}(s,a))}^2}} \right|s,a} \right]} \right]\]

For convenience, let us define the Bellman operator
\[(TQ)(s,a): = R(s,a) + \gamma \sum\limits_{s' \in {\cal S}} {P(s'|s,a)} {\max _{a \in A}}Q(s',a)\]

Using Jensen's inequality, we can prove that
\begin{align*}
L_1(\theta _1) \ge& {\mathbb E}_{(s,a) \sim U({\cal S} \times {\cal A})}\left[ {{{\left( {{{\mathbb E}_{s' \sim P( \cdot |s,a)}}\left[ {r(s,a,s') + \gamma {{\max }_{a \in A}}Q_{\theta _2}(s',a) - Q_{\theta _1}(s,a)} \right]} \right)}^2}} \right]\\
=& {{\mathbb E}_{(s,a) \sim U({\cal S} \times {\cal A})}}\left[ {{{((T{Q_{{\theta _2}}})(s,a) - {Q_{{\theta _1}}}(s,a))}^2}} \right]\\
\ge &\frac{1}{{|{\cal S}||{\cal A}|}}{\left( {(TQ_{\theta _2})(s,a) - Q_{\theta _1}(s,a)} \right)^2}
\end{align*}
for all $(s,a)\in {\cal S}\times {\cal A}$, which implies
\[\left| {(TQ_{\theta _2})(s,a) - {Q_{\theta _1}}(s,a)} \right| \le \sqrt {\varepsilon |{\cal S}||{\cal A}|},\quad \forall (s,a)\in {\cal S}\times {\cal A} \]
Similarly, for the loss function $L_2(\theta _2)$, we have
\begin{align*}
L_2(\theta _2) = {{\mathbb E}_{(s,a) \sim U({\cal S} \times {\cal A})}}\left[ {\frac{\beta }{2}{{\left( {{Q_{\theta _2}}(s,a) - {Q_{\theta _1}}(s,a)} \right)}^2}} \right]\ge \frac{1}{{|{\cal S}||{\cal A}|}}\frac{\beta }{2}{\left( {{Q_{{\theta _2}}}(s,a) - {Q_{{\theta _1}}}(s,a)} \right)^2}
\end{align*}
for all $(s,a)\in {\cal S}\times {\cal A}$, which implies
\begin{align}
\left| {{Q_{{\theta _2}}}(s,a) - {Q_{{\theta _1}}}(s,a)} \right| \le \sqrt {\frac{{2\varepsilon }}{\beta }|{\cal S}||{\cal A}|},\quad (s,a)\in {\cal S}\times {\cal A}.\label{eq:appendix:Q1-Q2-bound}
\end{align}

Based on the above results, we will establish bounds on $Q_{\theta _1}$ and $Q_{\theta _2}$ in the sequel. 

\subsection{Bound on $Q_{{\theta _1}}$}

First of all, using the reverse triangle inequality leads to
\begin{align}
\left| {|{Q_{{\theta _1}}}(s,a) - {Q^*}(s,a)| - |(T{Q_{{\theta _2}}})(s,a) - (T{Q^*})(s,a)|} \right| \le \sqrt {\varepsilon |{\cal S}||{\cal A}|},\quad (s,a)\in {\cal S}\times {\cal A}. \label{eq:appendix:1}
\end{align}
To proceed, we consider the two cases:
\paragraph{Case 1:} Let us consider the case
\[|{Q_{{\theta _1}}}(s,a) - {Q^*}(s,a)| > |(T{Q_{{\theta _2}}})(s,a) - (T{Q^*})(s,a)|\]
Then, one can derive from~\eqref{eq:appendix:1}
\begin{align*}
&{|{Q_{{\theta _1}}}(s,a) - {Q^*}(s,a)| - |(T{Q_{{\theta _2}}})(s,a) - (T{Q^*})(s,a)|}\\
\ge& |{Q_{{\theta _1}}}(s,a) - {Q^*}(s,a)| - |(T{Q_{{\theta _2}}})(s,a) - (T{Q^*})(s,a)|\\
\ge& |{Q_{{\theta _1}}}(s,a) - {Q^*}(s,a)| - {\left\| {T{Q_{{\theta _2}}} - T{Q^*}} \right\|_\infty }\\
\ge& |{Q_{{\theta _1}}}(s,a) - {Q^*}(s,a)| - \gamma {\left\| {{Q_{{\theta _2}}} - {Q^*}} \right\|_\infty }\\
\ge& |{Q_{{\theta _1}}}(s,a) - {Q^*}(s,a)| - \gamma {\left\| {{Q_{{\theta _1}}} - {Q^*}} \right\|_\infty } - \gamma {\left\| {{Q_{{\theta _2}}} - {Q_{{\theta _1}}}} \right\|_\infty }\\
\ge& |{Q_{{\theta _1}}}(s,a) - {Q^*}(s,a)| - \gamma {\left\| {{Q_{{\theta _1}}} - {Q^*}} \right\|_\infty } - \gamma \sqrt {\frac{{2\varepsilon }}{\beta }|{\cal S}||{\cal A}|},
\end{align*}
where \eqref{eq:appendix:Q1-Q2-bound} is used in the last line. This leads to
\[|{Q_{{\theta _1}}}(s,a) - {Q^*}(s,a)| \le \gamma {\left\| Q_{\theta _1} - Q^* \right\|_\infty } + \sqrt {\varepsilon |{\cal S}||{\cal A}|}  + \gamma \sqrt {\frac{2\varepsilon}{\beta }|{\cal S}||{\cal A}|} \]
Taking the maximum on the left-hand side over $(s,a)\in {\cal S}\times {\cal A}$ yields
\[{\left\| Q_{\theta _1} - Q^* \right\|_\infty } \le \gamma {\left\| Q_{\theta _1} - Q^*\right\|_\infty } + \sqrt {\varepsilon |{\cal S}||{\cal A}|}  + \gamma \sqrt {\frac{2\varepsilon}{\beta }|{\cal S}||{\cal A}|} \]
Arranging terms leads to
\[{\left\| Q_{\theta_1} - Q^* \right\|_\infty } \le \frac{{\sqrt {\varepsilon |{\cal S}||{\cal A}|} }}{1 - \gamma} + \frac{\gamma }{1 - \gamma}\sqrt {\frac{2\varepsilon |{\cal S}||{\cal A}|}{\beta }} \]

\paragraph{Case 2:} Let us consider the case
\[|{Q_{{\theta _1}}}(s,a) - {Q^*}(s,a)| \le |(T{Q_{{\theta _2}}})(s,a) - (T{Q^*})(s,a)|,\]
which leads to
\begin{align*}
|{Q_{{\theta _1}}}(s,a) - {Q^*}(s,a)| \le& |(T{Q_{{\theta _2}}})(s,a) - (T{Q^*})(s,a)|\\
 \le& {\left\| {T{Q_{{\theta _2}}} - T{Q^*}} \right\|_\infty }\\
\le& \gamma {\left\| {{Q_{{\theta _2}}} - {Q^*}} \right\|_\infty }\\
\le& \gamma {\left\| {{Q_{{\theta _1}}} - {Q^*}} \right\|_\infty } + \gamma {\left\| {{Q_{{\theta _2}}} - {Q_{{\theta _1}}}} \right\|_\infty }\\
\le& \gamma {\left\| {{Q_{{\theta _1}}} - {Q^*}} \right\|_\infty } + \gamma \sqrt {\frac{{2\varepsilon }}{\beta }|{\cal S}||{\cal A}|}.
\end{align*}
where the last inequality uses~\eqref{eq:appendix:Q1-Q2-bound}. After simple manipulations, we have 
\[{\left\| {{Q_{{\theta _1}}} - {Q^*}} \right\|_\infty } \le \frac{\gamma }{{1 - \gamma }}\sqrt {\frac{{2\varepsilon }}{\beta }|{\cal S}||{\cal A}|} \]

Combining the two cases, one gets the following bound:
\[{\left\| {{Q_{{\theta _1}}} - {Q^*}} \right\|_\infty } \le \frac{{\sqrt {\varepsilon |{\cal S}||{\cal A}|} }}{{1 - \gamma }} + \frac{\gamma }{1 - \gamma}\sqrt {\frac{{2\varepsilon |{\cal S}||{\cal A}|}}{\beta }} \]

\subsection{Bound on $Q_{{\theta_2}}$}

Next, using the reverse triangle inequality, we also have
\begin{align}
\frac{{\sqrt {\varepsilon |S||A|} }}{{1 - \gamma }} \ge& |{Q_{{\theta _1}}}(s,a) - {Q^*}(s,a) + {Q^*}(s,a) - {Q_{{\theta _2}}}(s,a)|\nonumber\\
\ge& ||{Q_{{\theta _1}}}(s,a) - {Q^*}(s,a)| - |{Q_{{\theta _2}}}(s,a) - {Q^*}(s,a)||.\label{eq:appendix:5}
\end{align}

Now, we consider the two cases:
\paragraph{Case 1:} Let us first consider the following case:
\[|{Q_{{\theta _1}}}(s,a) - {Q^*}(s,a)| < |{Q_{{\theta _2}}}(s,a) - {Q^*}(s,a)|.\]

In this case, the inequality in~\eqref{eq:appendix:5} leads to
\[|{Q_{\theta _2}}(s,a) - {Q^*}(s,a)| \le \frac{{\sqrt {\varepsilon |{\cal S}||{\cal A}|} }}{{1 - \gamma }} + |Q_{\theta _1}(s,a) - {Q^*}(s,a)|\]
Applying the bound on $|Q_{\theta _1}(s,a) - {Q^*}(s,a)|$, we can derive
\[|{Q_{{\theta _2}}}(s,a) - {Q^*}(s,a)| \le \frac{{2\sqrt {\varepsilon |{\cal S}||{\cal A}|} }}{{1 - \gamma }} + \frac{\gamma }{1 - \gamma}\sqrt {\frac{2\varepsilon |{\cal S}||{\cal A}|}{\beta }} \]
Now, taking the maximum on the left-hand side over $(s,a)\in {\cal S}\times {\cal A}$ leads to
\[{\left\| {{Q_{{\theta _2}}} - {Q^*}} \right\|_\infty } \le \frac{{2\sqrt {\varepsilon |{\cal S}||{\cal A}|} }}{1 - \gamma} + \frac{\gamma }{{1 - \gamma }}\sqrt {\frac{2\varepsilon |{\cal S}||{\cal A}|}{\beta }}. \]

\paragraph{Case 2:} Next, let us consider the case
\[|{Q_{{\theta _1}}}(s,a) - {Q^*}(s,a)| \ge |{Q_{{\theta _2}}}(s,a) - {Q^*}(s,a)|\]
In this case, the inequality in~\eqref{eq:appendix:5} results in
\[|{Q_{{\theta _2}}}(s,a) - {Q^*}(s,a)| \le |{Q_{{\theta _1}}}(s,a) - {Q^*}(s,a)| \le \frac{{\sqrt {\varepsilon |{\cal S}||{\cal A}|} }}{{1 - \gamma }} + \frac{\gamma }{{1 - \gamma }}\sqrt {\frac{{2\varepsilon |{\cal S}||{\cal A}|}}{\beta }} \]
where the last inequality is due to the bound on $|{Q_{{\theta _1}}}(s,a) - {Q^*}(s,a)|$. Taking the maximum on the left-hand side over $(s,a)\in {\cal S}\times {\cal A}$ leads to
\[{\left\| Q_{\theta _2} - Q^* \right\|_\infty } \le \frac{{\sqrt {\varepsilon |{\cal S}||{\cal A}|} }}{{1 - \gamma }} + \frac{\gamma }{{1 - \gamma }}\sqrt {\frac{{2\varepsilon |{\cal S}||{\cal A}|}}{\beta }} \]

Combining the two cases, we have the desired conclusion.

\section{Proof of~\cref{thm:SGT2-DQN:optimality}}

Let us assume that by minimizing the loss functions of SGT2-DQN, we can approximately minimize the following expected loss function
\begin{align*}
{L_1}({\theta _1}) =& {\mathbb E}_{(s,a) \sim U({\cal S} \times {\cal A}),s' \sim P( \cdot |s,a)}\left[ {\left( {r(s,a,s') + \gamma \max_{a \in {\cal A}}Q_{\theta _2}(s',a) - Q_{\theta _1}(s,a)} \right)}^2 \right.\\
&\left. { + \frac{\beta }{2}{{({Q_{{\theta _2}}}(s,a) - {Q_{{\theta _1}}}(s,a))}^2}} \right]\\
{L_2}({\theta _2}) =& {{\mathbb E}_{(s,a) \sim U({\cal S} \times {\cal A}),s' \sim P( \cdot |s,a)}}\left[ {{{\left( r(s,a,s') + \gamma {{\max }_{a \in {\cal A}}}{Q_{\theta_1}}(s',a) - {Q_{\theta_2}}(s,a) \right)}^2}} \right.\\
&\left. { + \frac{\beta }{2}{{(Q_{\theta_1}(s,a) - Q_{\theta _2}(s,a))}^2}} \right]
\end{align*}
where $U({\cal S} \times {\cal A})$ means the uniform distribution over the set ${\cal S}\times {\cal A}$. Suppose that the loss functions are minimized with
\[L_1({\theta _1}) \le \varepsilon ,\quad L_2(\theta _2) \le \varepsilon .\]
For convenience, let us define the Bellman operator
\[(TQ)(s,a): = R(s,a) + \gamma \sum\limits_{s' \in {\cal S}} P(s'|s,a) \max _{a \in {\cal A}}Q(s',a)\]

Moreover, note that using the law of iterated expectations, the expected loss functions can be expressed as
\begin{align*}
L_1(\theta _1) =& {\mathbb E}_{(s,a) \sim U({\cal S} \times {\cal A})}\left[ {{\mathbb E}_{s' \sim P( \cdot |s,a)}}\left[ {\left( r(s,a,s') + \gamma {{\max }_{a \in A}}Q_{\theta_2}(s',a) - Q_{\theta _1}(s,a) \right)}^2 \right] \right]\\
&+ {{\mathbb E}_{(s,a) \sim U({\cal S} \times {\cal A})}}\left[ {\frac{\beta }{2}{(Q_{\theta_2}(s,a) - Q_{\theta _1}(s,a))}^2} \right],
\end{align*}
and
\begin{align*}
{L_2}(\theta_2) =& {{\mathbb E}_{(s,a) \sim U({\cal S} \times {\cal A})}}\left[ {\mathbb E}_{s' \sim P( \cdot |s,a)}\left[ {\left( r(s,a,s') + \gamma {{\max }_{a \in A}}Q_{\theta _1}(s',a) - Q_{\theta_2}(s,a) \right)}^2 \right] \right]\\
&+ {\mathbb E}_{(s,a) \sim U({\cal S} \times {\cal A})}\left[ {\frac{\beta }{2}{(Q_{\theta _1}(s,a) - {Q_{{\theta _2}}}(s,a))}^2} \right]
\end{align*}

Next, applying Jensen's inequality to $L_1(\theta _1)$, we can prove that
\begin{align*}
L_1({\theta _1}) \ge& {\mathbb E}_{(s,a) \sim U({\cal S} \times {\cal A})}\left[ {{{\left( {\mathbb E}_{s' \sim P( \cdot |s,a)}\left[ {r(s,a,s') + \gamma \max_{a \in {\cal A}}Q_{\theta _2}(s',a) - Q_{\theta _1}(s,a)} \right] \right)}^2}} \right]\\
& + {{\mathbb E}_{(s,a) \sim U({\cal S} \times {\cal A})}}\left[ {\frac{\beta }{2}{{({Q_{\theta _2}}(s,a) - {Q_{\theta _1}}(s,a))}^2}} \right]\\
=& {\mathbb E}_{(s,a) \sim U({\cal S} \times {\cal A})}\left[ ((T Q_{\theta _2})(s,a) - {Q_{{\theta _1}}}(s,a))^2 \right]\\
&+ {\mathbb E}_{(s,a) \sim U({\cal S} \times {\cal A})}\left[ {\frac{\beta }{2}{{({Q_{{\theta _2}}}(s,a) - {Q_{{\theta _1}}}(s,a))}^2}} \right]\\
\ge& \frac{1}{|{\cal S}||{\cal A}|}{\left( (TQ_{\theta _2})(s,a) - Q_{\theta _1}(s,a) \right)^2}\\
& + \frac{1}{|{\cal S}||{\cal A}|}\frac{\beta }{2}{\left( Q_{\theta _2}(s,a) - Q_{\theta _1}(s,a) \right)^2}
\end{align*}
for all $(s,a)\in {\cal S}\times {\cal A}$, which implies
\[\frac{1}{{|{\cal S}||{\cal A}|}}{\left( {(TQ_{\theta _2})(s,a) - Q_{\theta _1}(s,a)} \right)^2} \le \varepsilon ,\quad \frac{1}{{|{\cal S}||{\cal A}|}}\frac{\beta }{2}{\left( {{Q_{\theta _2}}(s,a) - Q_{\theta _1}(s,a)} \right)^2} \le \varepsilon, \]
and equivalently
\[\left| {(T{Q_{\theta _2}})(s,a) - {Q_{{\theta _1}}}(s,a)} \right| \le \sqrt {\varepsilon |{\cal S}||{\cal A}|} ,\quad |Q_{\theta _2}(s,a) - Q_{\theta _1}(s,a)| \le \sqrt {\frac{{2\varepsilon |{\cal S}||{\cal A}|}}{\beta }} \]
Next, applying the same procedures as in the proof for AGT2-DQN, we arrive at
\[{\left\| Q_{\theta _1} - Q^* \right\|_\infty } \le \frac{\sqrt {\varepsilon |{\cal S}||{\cal A}|} }{1 - \gamma} + \frac{\gamma }{{1 - \gamma }}\sqrt \frac{{2\varepsilon |{\cal S}||{\cal A}|}}{\beta }. \]
By symmetry, one can obtain the same bound for ${\left\| {{Q_{{\theta _2}}} - {Q^*}} \right\|_\infty }$. This completes the proof.

\clearpage
\section{Additional comparison results among DQN, AGT2-DQN, and SGT2-DQN}

\subsection{Cartpole environment}
\begin{figure}[h!]
\centering
\includegraphics[width=6cm,height=5cm]{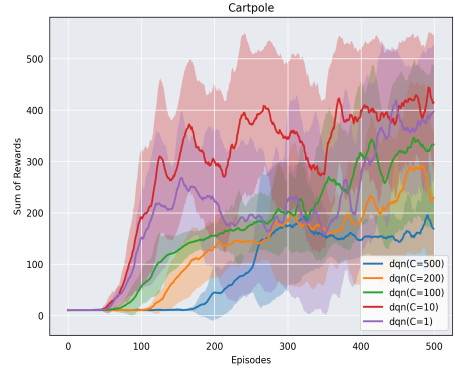}
\caption{Comparison of reward curves of DQN with different $C$ in Cartpole environment. The results show that $C=10$ gives the best learning performance among the other options.}\label{fig:appendix:cartpole:DQN-C}
\end{figure}

\cref{fig:appendix:cartpole:DQN-C} shows the learning curves of DQN for different $C\in \{1,10,100,200,500\}$ in Cartpole environment. The results show that $C=10$ gives the best learning performance among the other choices of $C$.

\begin{figure}[h!]
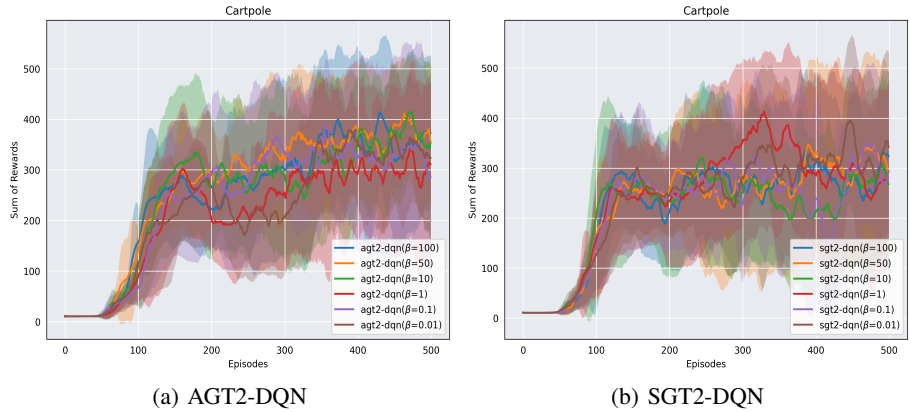

\centering
\subfigure[AGT2-DQN]{\includegraphics[width=6cm,height=5cm]{figures/Cartpole/Cartpole_agt2_dqn_with_variance}}
\subfigure[SGT2-DQN]{\includegraphics[width=6cm,height=5cm]{figures/Cartpole/Cartpole_sgt2_dqn_with_variance}}
\caption{Comparison of reward curves of AGT2-DQN and SGT2-DQN with different $\beta$ in Cartpole environment. }\label{fig:appendix:cartpole:AGT2-DQN-beta}
\end{figure}

\cref{fig:appendix:cartpole:AGT2-DQN-beta} illustrates the learning curves of AGT2-DQN and SGT2-DQN for different $\beta\in \{0.01,0.1,1,10,50,100\}$in Cartpole environment. The results demonstrate that their learning efficiency is comparable to DQN but without requiring extensive tuning, as they appear to be less sensitive to $\beta$.

\begin{figure}[h!]
\centering
\includegraphics[width=6cm,height=5cm]{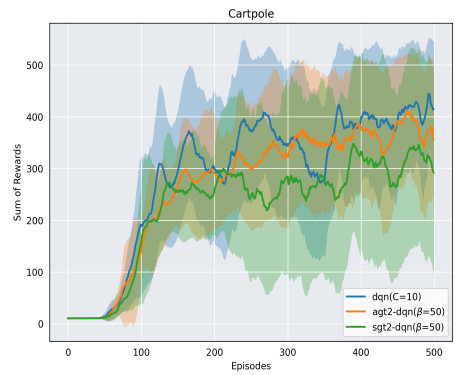}
\caption{Comparison of reward curves of DQN with $C=10$, AGT2-DQN with $\beta = 50$ and SGT2-DQN with $\beta = 50$ in Cartpole environment. }\label{fig:appendix:cartpole:DQN-AGT2-SGT2}
\end{figure}

\cref{fig:appendix:cartpole:DQN-AGT2-SGT2} illustrates the learning curves of DQN with $C=10$, AGT2-DQN and SGT2-DQN with $\beta = 50$ in Cartpole environment. The results show that while DQN exhibits slightly better learning efficiency than the proposed methods in this environment, while as mentioned before the latter require less efforts for hyperparameter tuning.

\clearpage
\subsection{Acrobot environment}

\begin{figure}[h!]
\centering
\includegraphics[width=6cm,height=5cm]{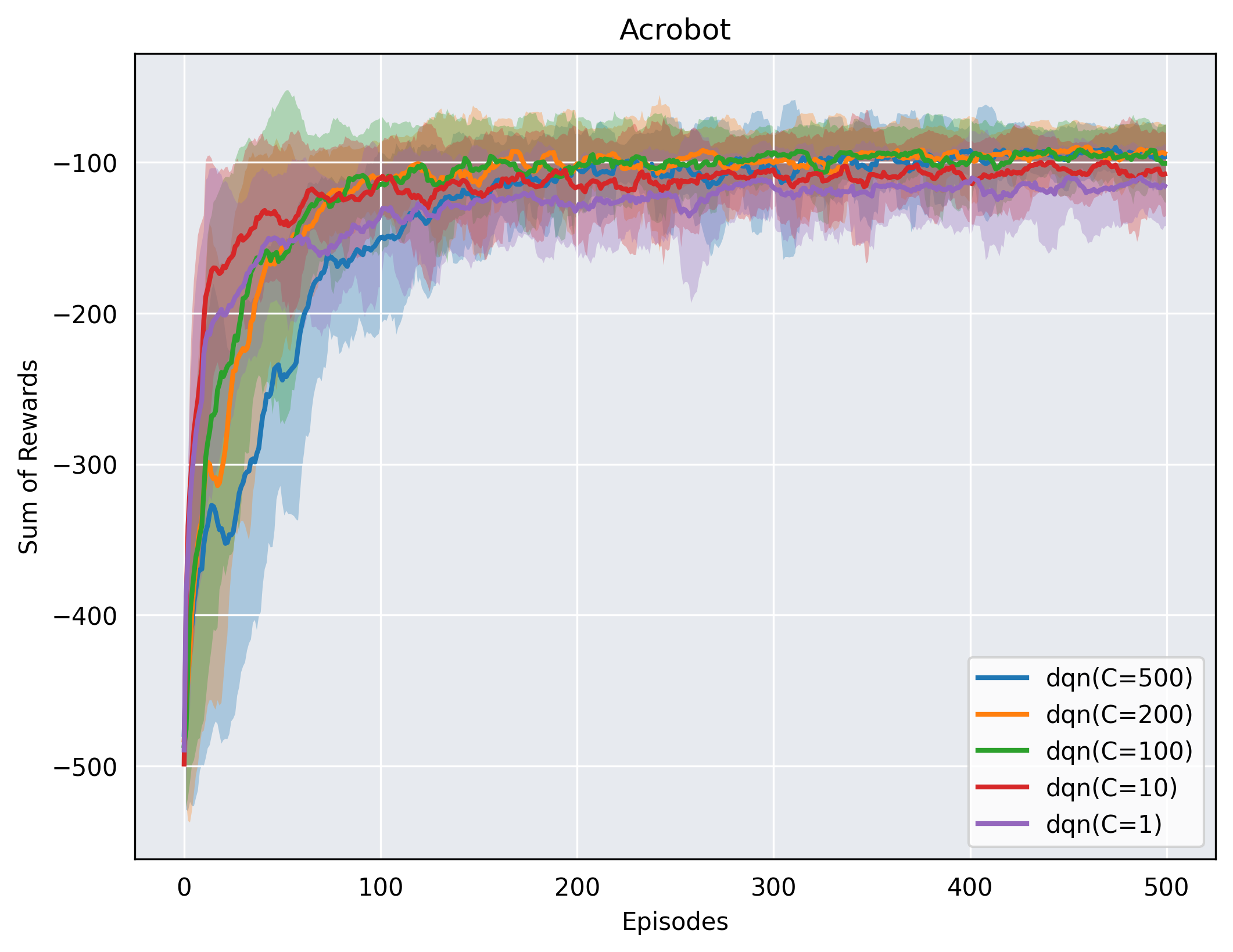}
\caption{Comparison of reward curves of DQN with different $C$ in Acrobot environment.}\label{fig:appendix:acrobot:DQN-C}
\end{figure}

\cref{fig:appendix:acrobot:DQN-C} shows the learning curves of DQN for different $C\in \{1,10,100,200,500\}$ in Acrobot environment.

\begin{figure}[h!]
\centering
\subfigure[AGT2-DQN]{\includegraphics[width=6cm,height=5cm]{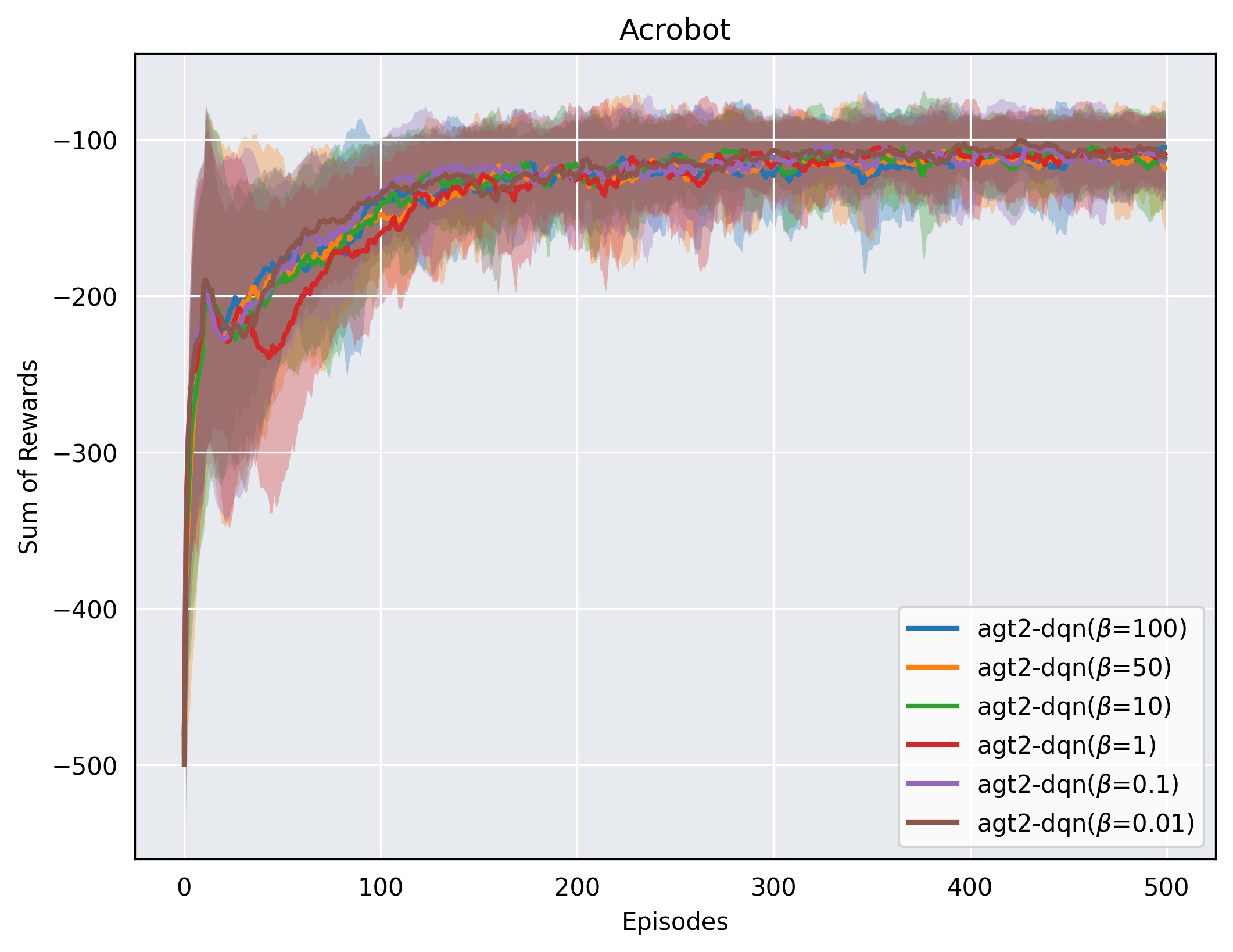}}
\subfigure[SGT2-DQN]{\includegraphics[width=6cm,height=5cm]{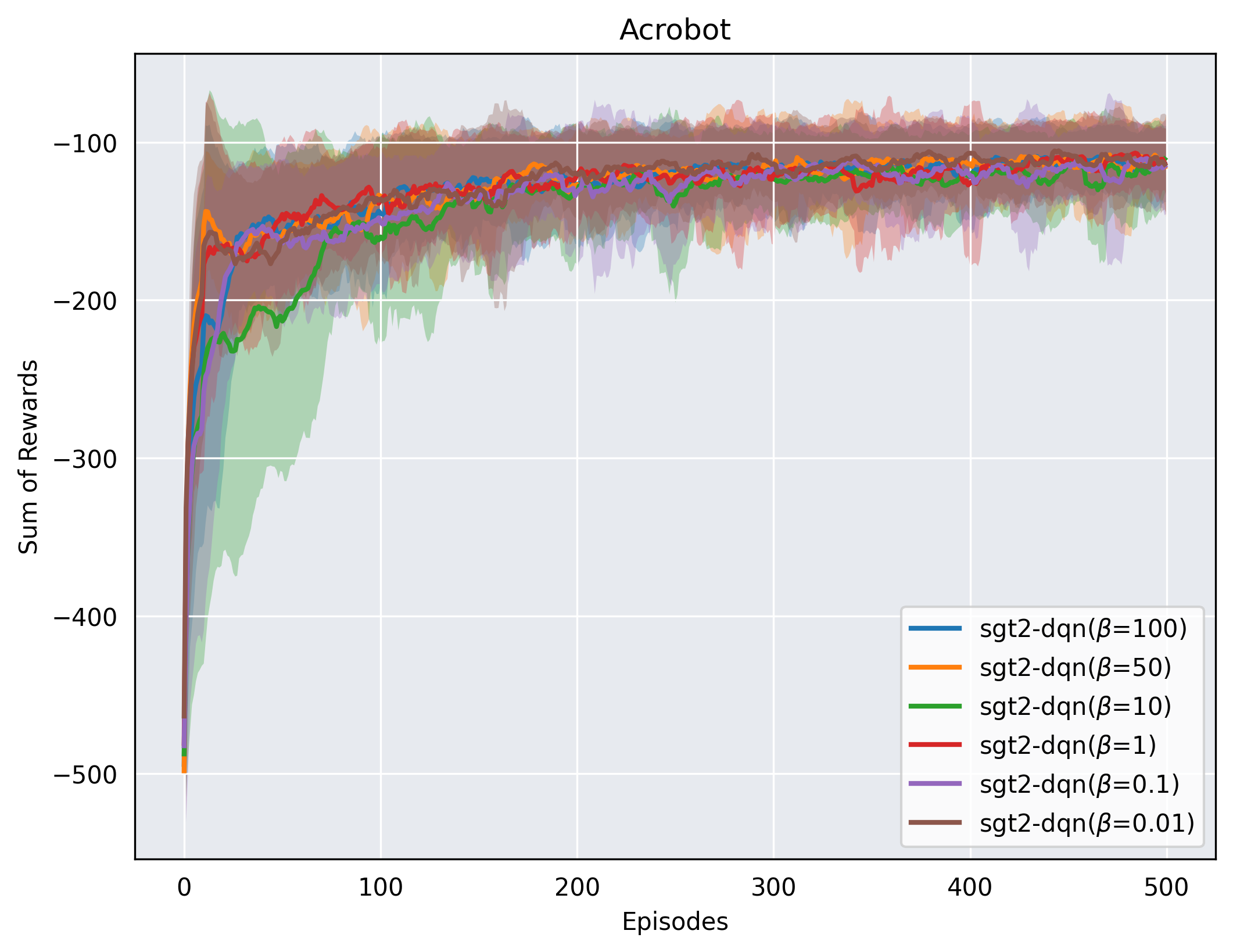}}
\caption{Comparison of reward curves of AGT2-DQN and SGT2-DQN with different $\beta$ in Acrobot environment. }\label{fig:appendix:acrobot:AGT2-DQN-beta}
\end{figure}

\cref{fig:appendix:acrobot:AGT2-DQN-beta} illustrates the learning curves of AGT2-DQN and SGT2-DQN for different $\beta\in \{0.01,0.1,1,10,50,100\}$. The results demonstrate that their learning efficiency is comparable to DQN and slightly less sensitive to the choice of the hyperparameter $\beta$.

\begin{figure}[h!]
\centering
\includegraphics[width=6cm,height=5cm]{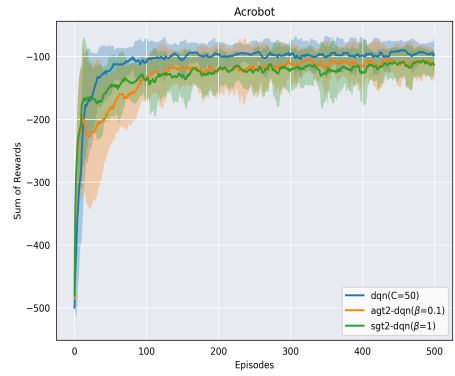}
\caption{Comparison of reward curves of DQN with $C=50$, AGT2-DQN with $\beta = 0.1$ and SGT2-DQN with $\beta = 1$ in Acrobot environment. }\label{fig:appendix:acrobot:DQN-AGT2-SGT2}
\end{figure}

\cref{fig:appendix:acrobot:DQN-AGT2-SGT2} illustrates the learning curves of DQN with $C=50$, AGT2-DQN with $\beta = 0.1$, and SGT2-DQN with $\beta = 1$, where each hyperparameter has been selected to approximately yield the best results. The results show that DQN exhibits slightly better learning efficiency than the proposed methods.

\clearpage
\subsection{Pendulum environment}

\begin{figure}[h!]
\centering
\includegraphics[width=6cm,height=5cm]{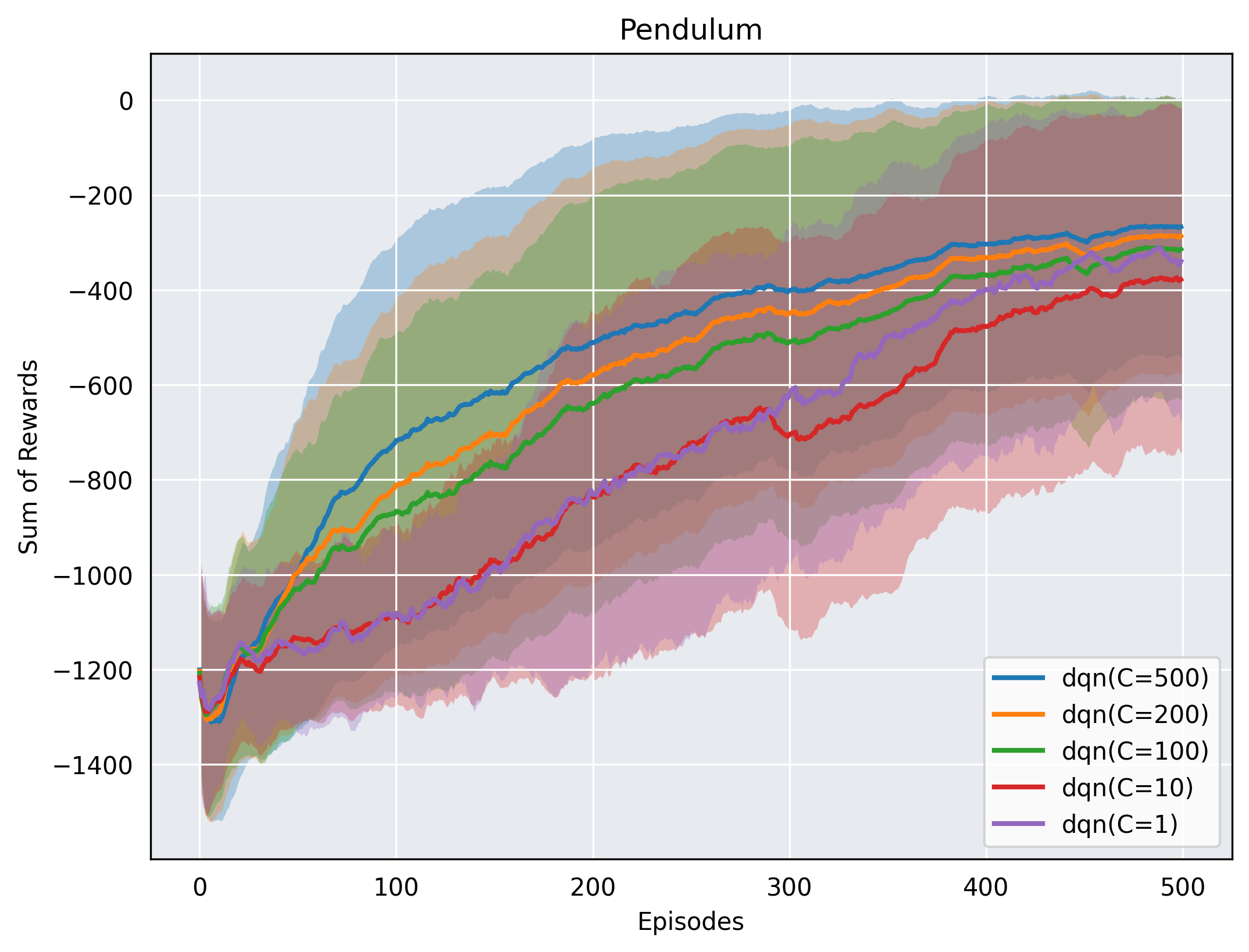}
\caption{Comparison of reward curves of DQN with different $C$ in Pendulum environment. }\label{fig:appendix:pendulum:DQN-C}
\end{figure}

\cref{fig:appendix:pendulum:DQN-C} shows the learning curves of DQN for different $C\in \{1,10,100,200,500\}$ in Pendulum environment, which are sensitive to the choice of the hyperparameter $C$.

\begin{figure}[h!]
\centering
\subfigure[AGT2-DQN]{\includegraphics[width=6cm,height=5cm]{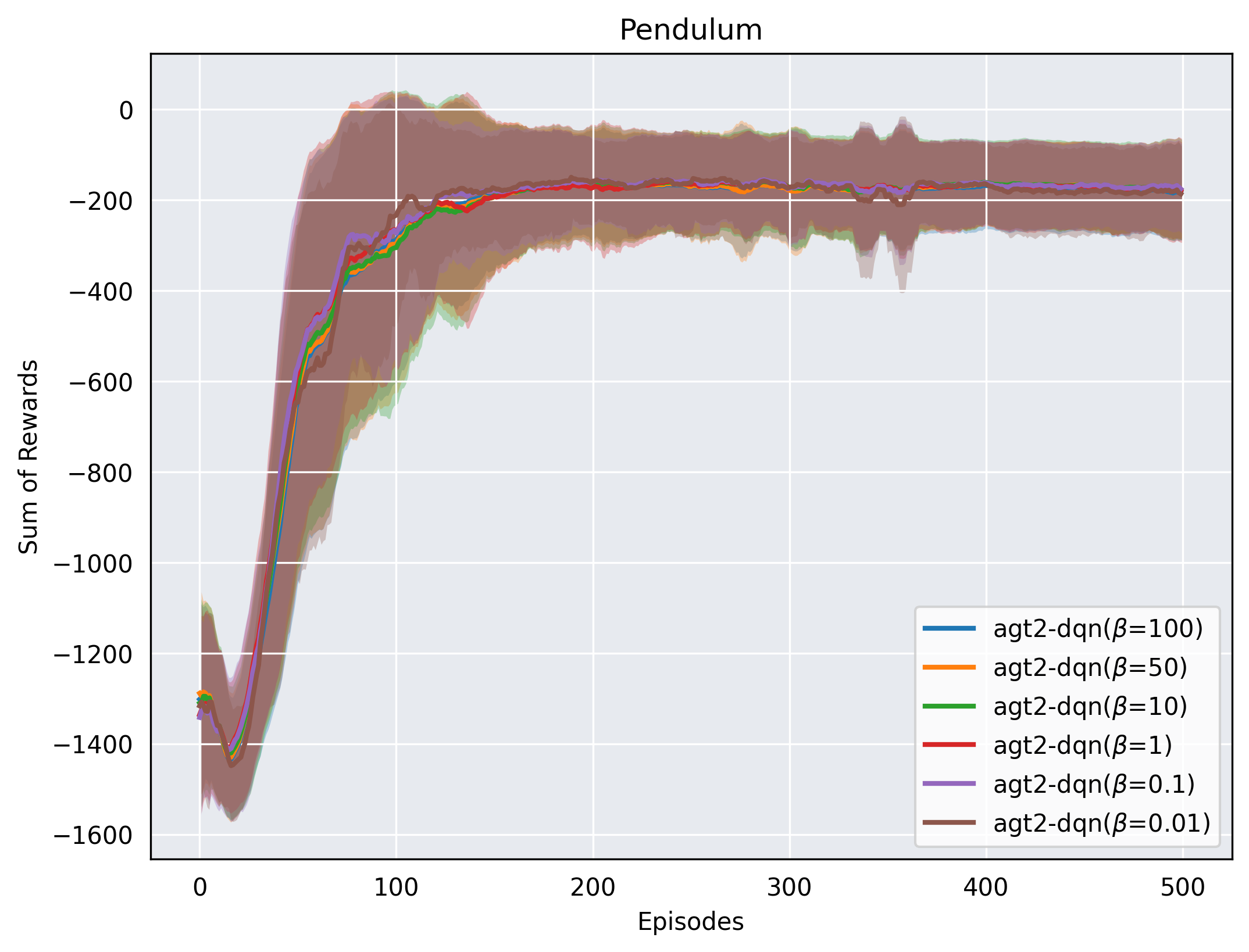}}
\subfigure[SGT2-DQN]{\includegraphics[width=6cm,height=5cm]{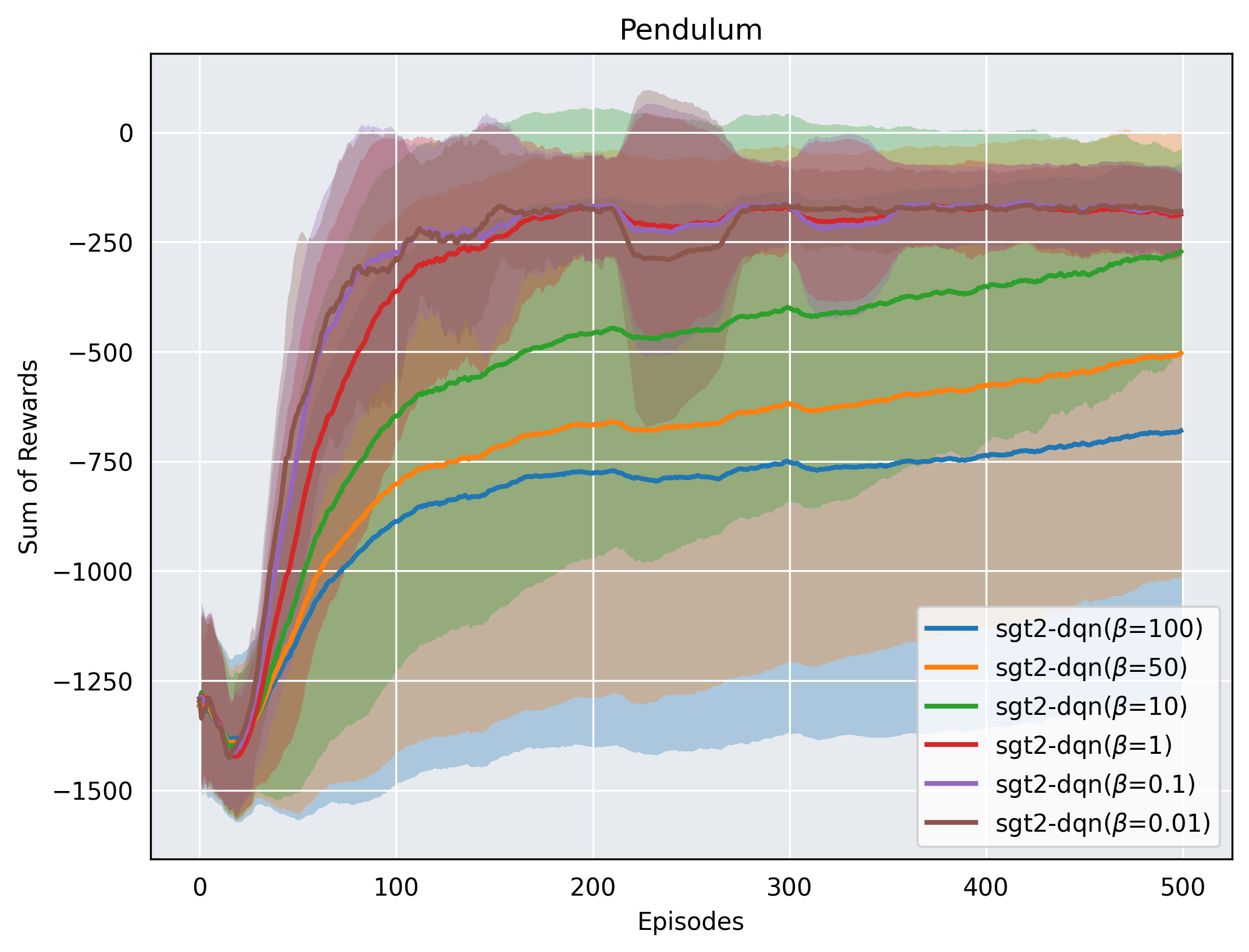}}
\caption{Comparison of reward curves of AGT2-DQN and SGT2-DQN with different $\beta$ in Pendulum environment. }\label{fig:appendix:pendulum:AGT2-DQN-beta}
\end{figure}

\cref{fig:appendix:pendulum:AGT2-DQN-beta} illustrates the learning curves of AGT2-DQN and SGT2-DQN for different $\beta\in \{0.01,0.1,1,10,50,100\}$. The results demonstrate that their learning efficiency is comparable to DQN. Moreover, compared to DQN, AGT2-DQN is less sensitive to the choice of the hyperparameter $\beta$, while SGT2-DQN is more  sensitive to the choice of the hyperparameter.

\begin{figure}[h!]
\centering
\includegraphics[width=6cm,height=5cm]{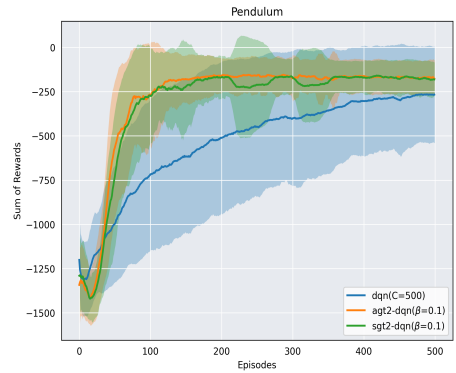}
\caption{Comparison of reward curves of DQN with $C=500$, AGT2-DQN with $\beta = 0.1$ and SGT2-DQN with $\beta = 0.1$ in Pendulum environment. }\label{fig:appendix:pendulum:DQN-AGT2-SGT2}
\end{figure}

\cref{fig:appendix:pendulum:DQN-AGT2-SGT2} illustrates the learning curves of DQN with $C=500$, AGT2-DQN with $\beta = 0.1$, and SGT2-DQN with $\beta = 0.1$, where each hyperparameter has been selected to approximately yield the best results. The results show that in this case, AGT2-DQN and SGT2-DQN exhibit better learning efficiency than DQN.

\clearpage
\subsection{Mountaincar environment}

\begin{figure}[h!]
\centering
\includegraphics[width=6cm,height=5cm]{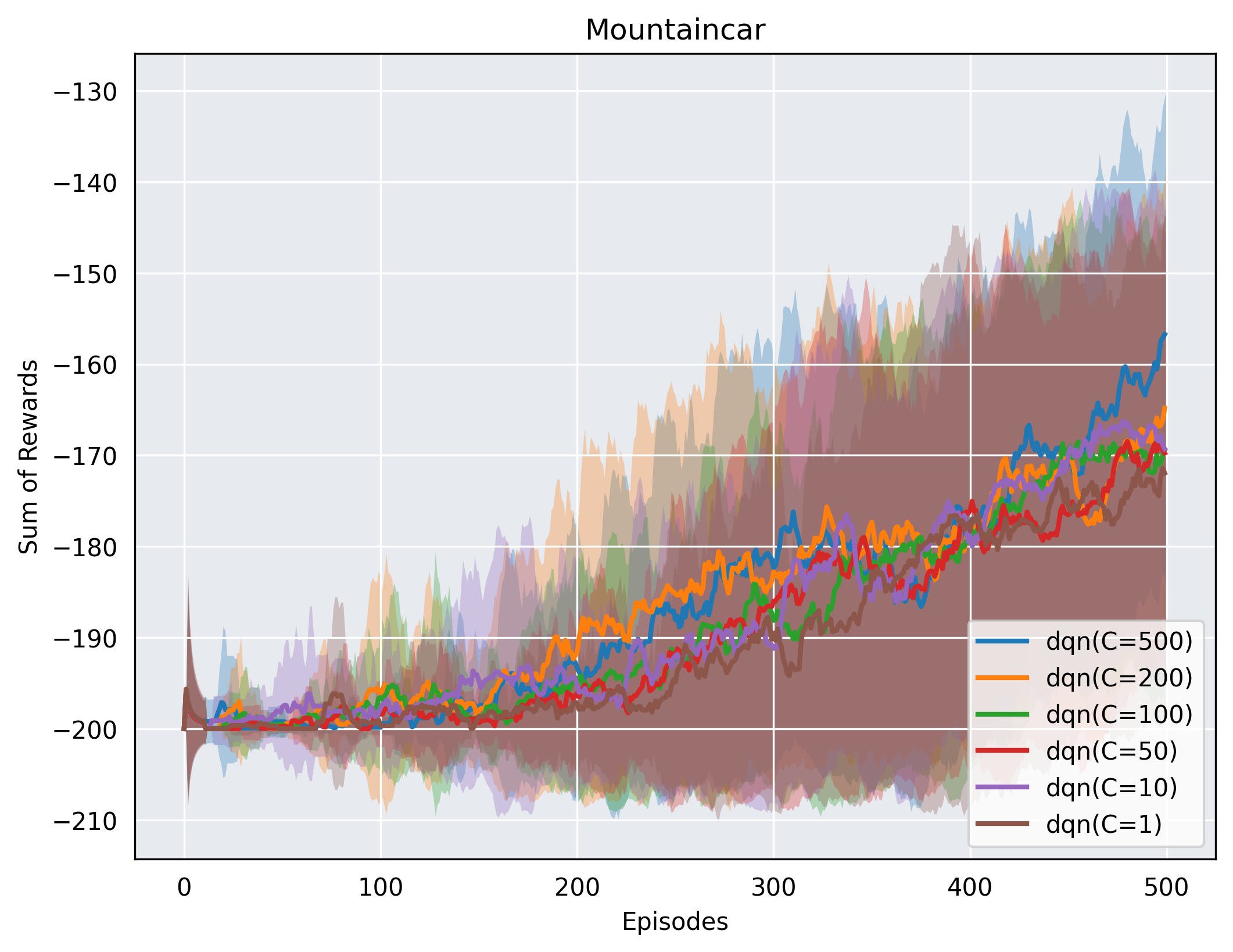}
\caption{Comparison of reward curves of DQN with different $C$ in Mountaincar environment.}\label{fig:appendix:mountaincar:DQN-C}
\end{figure}

\cref{fig:appendix:mountaincar:DQN-C} shows the learning curves of DQN for different $C\in \{1,10,100,200,500\}$ in Pendulum environment.

\begin{figure}[h!]
\centering
\subfigure[AGT2-DQN]{\includegraphics[width=6cm,height=5cm]{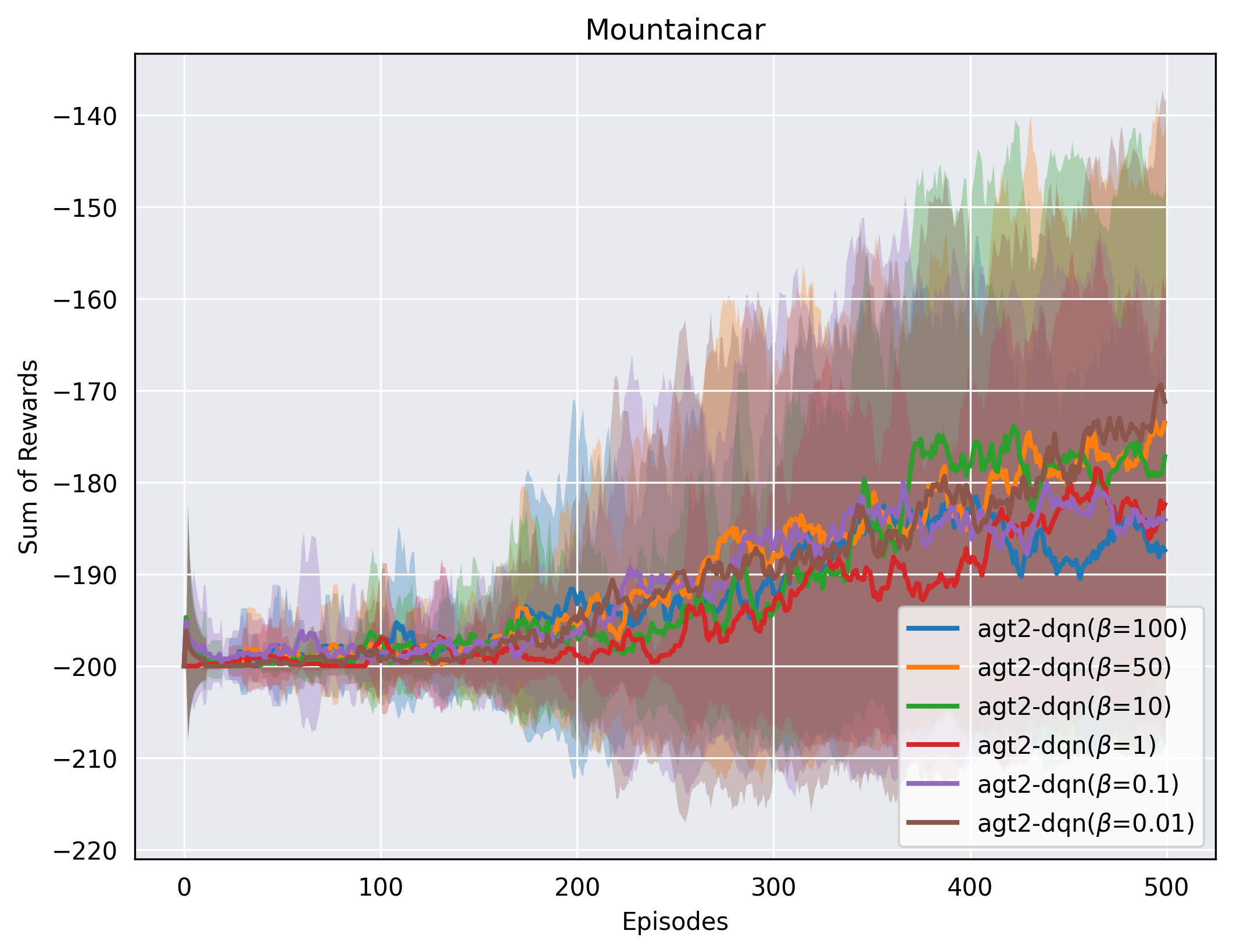}}
\subfigure[SGT2-DQN]{\includegraphics[width=6cm,height=5cm]{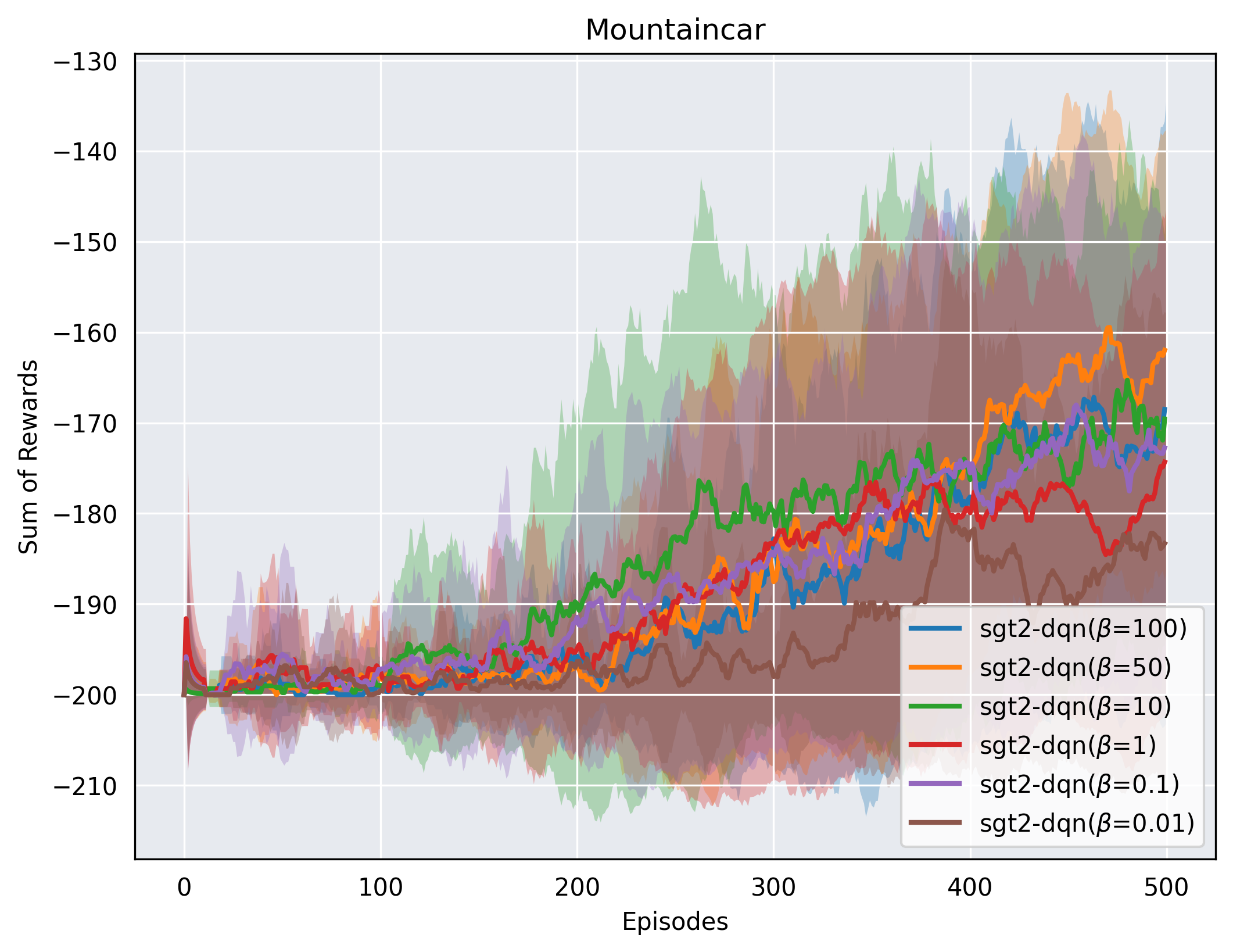}}
\caption{Comparison of reward curves of AGT2-DQN and SGT2-DQN with different $\beta$ in Mountaincar environment. }\label{fig:appendix:mountaincar:AGT2-DQN-beta}
\end{figure}

\cref{fig:appendix:mountaincar:AGT2-DQN-beta} illustrates the learning curves of AGT2-DQN and SGT2-DQN for different $\beta\in \{0.01,0.1,1,10,50,100\}$.

\begin{figure}[h!]
\centering
\includegraphics[width=6cm,height=5cm]{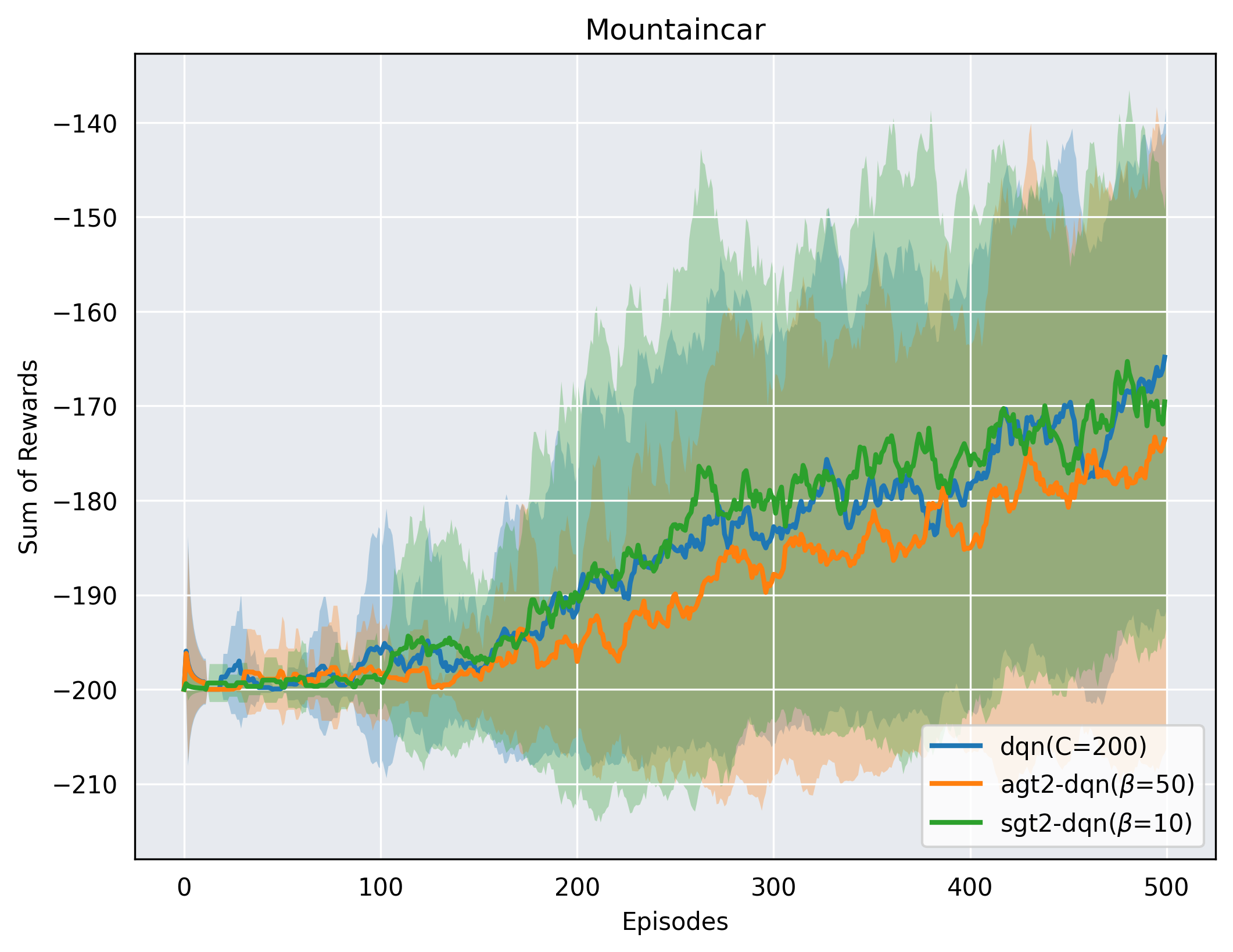}
\caption{Comparison of reward curves of DQN with $C=200$, AGT2-DQN with $\beta = 50$ and SGT2-DQN with $\beta = 10$ in Mountaincar environment. }\label{fig:appendix:mountaincar:DQN-AGT2-SGT2}
\end{figure}

\cref{fig:appendix:mountaincar:DQN-AGT2-SGT2} illustrates the learning curves of DQN with $C=200$, AGT2-DQN with $\beta = 50$, and SGT2-DQN with $\beta = 10$, where each hyperparameter has been selected to approximately yield the best results. 

Overall, the results reveal that DQN, AGT2-DQN, and SGT2-DQN provide similar learning efficiencies and sensitivity to the hyperparameters.

\clearpage
\subsection{Lunarlander environment}

\begin{figure}[h!]
\centering
\includegraphics[width=6cm,height=5cm]{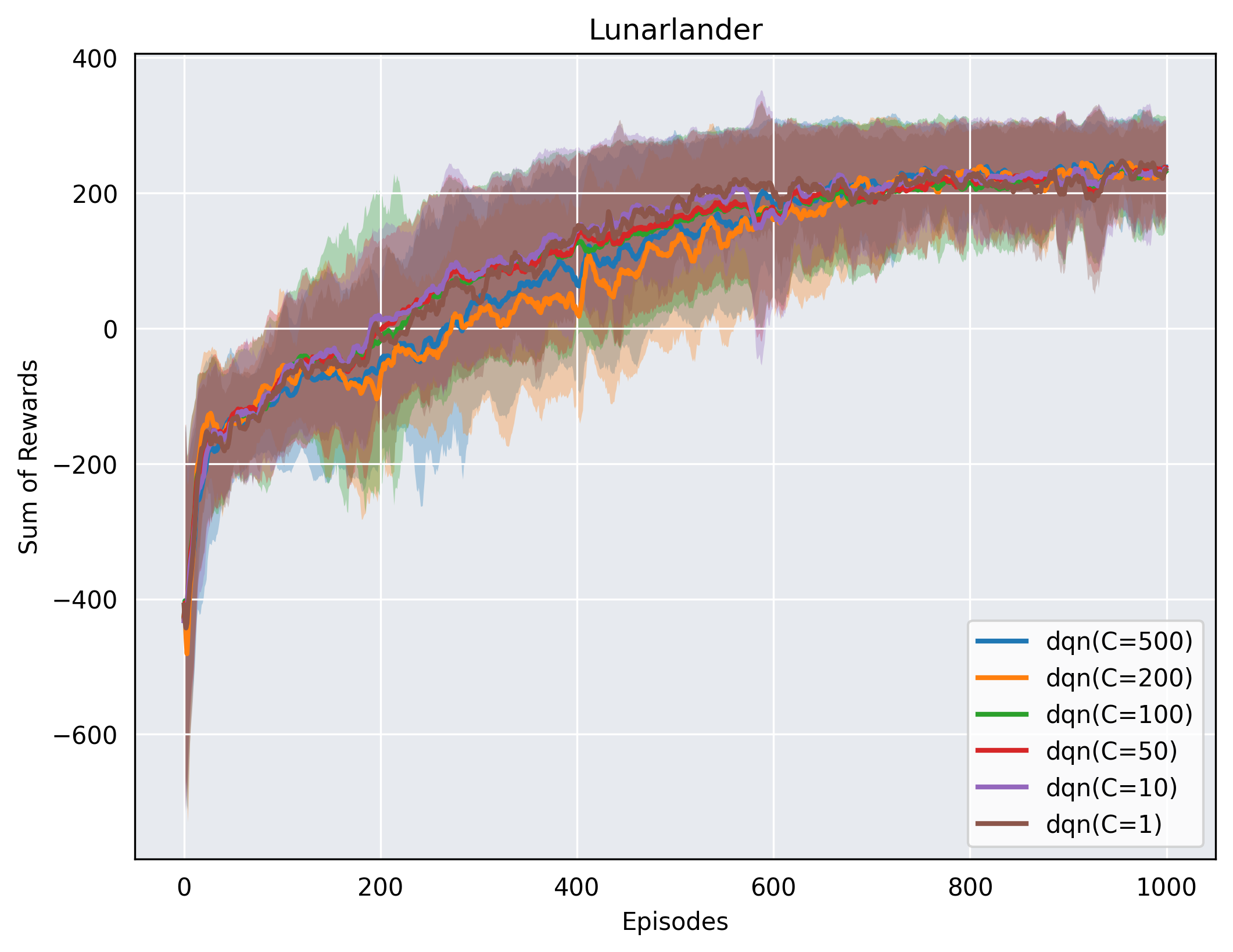}
\caption{Comparison of reward curves of DQN with different $C$ in Lunarlander environment.}\label{fig:appendix:lunarlander:DQN-C}
\end{figure}

\cref{fig:appendix:lunarlander:DQN-C} shows the learning curves of DQN for different $C\in \{1,10,100,200,500\}$ in Lunarlander environment.

\begin{figure}[h!]
\centering
\subfigure[AGT2-DQN]{\includegraphics[width=6cm,height=5cm]{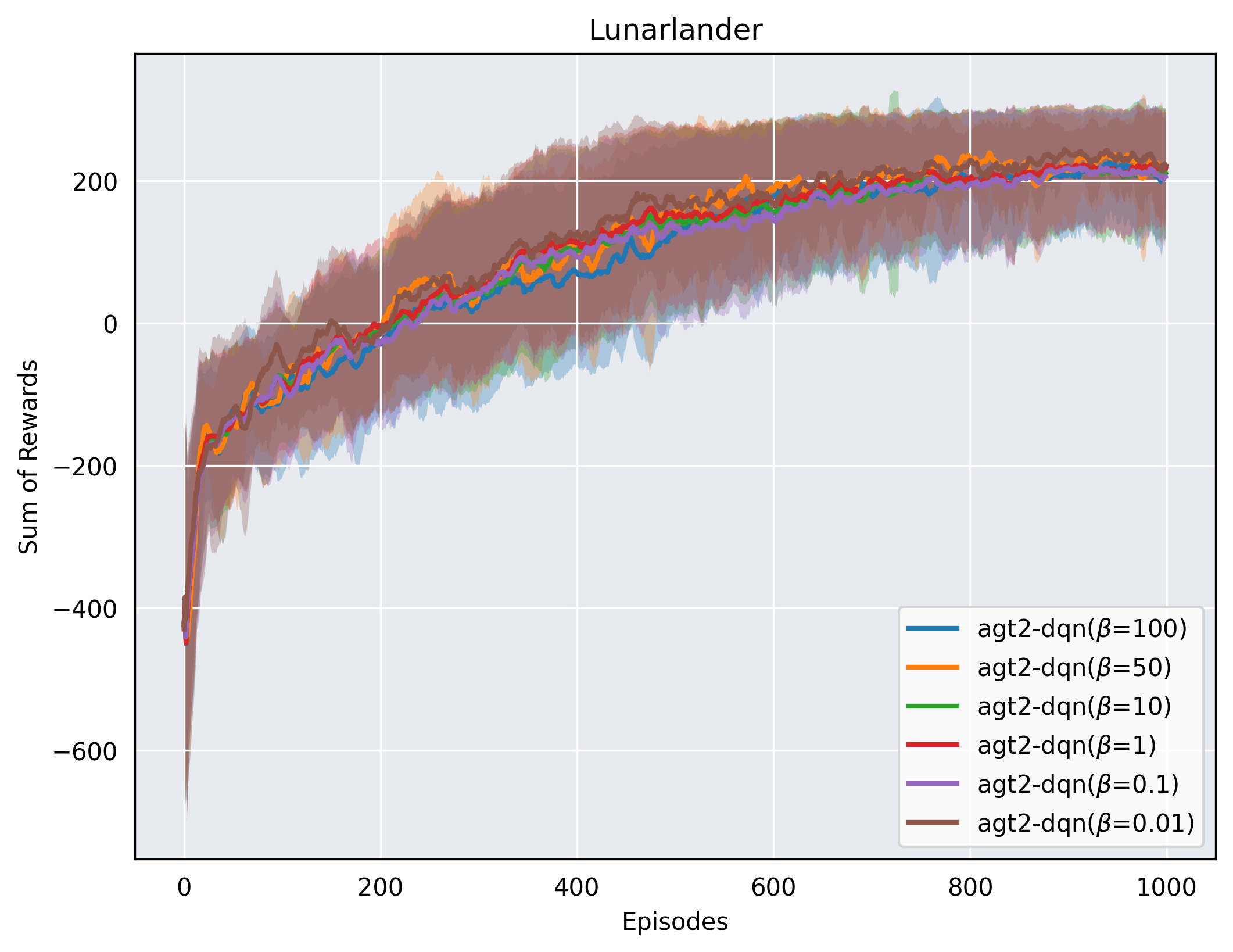}}
\subfigure[SGT2-DQN]{\includegraphics[width=6cm,height=5cm]{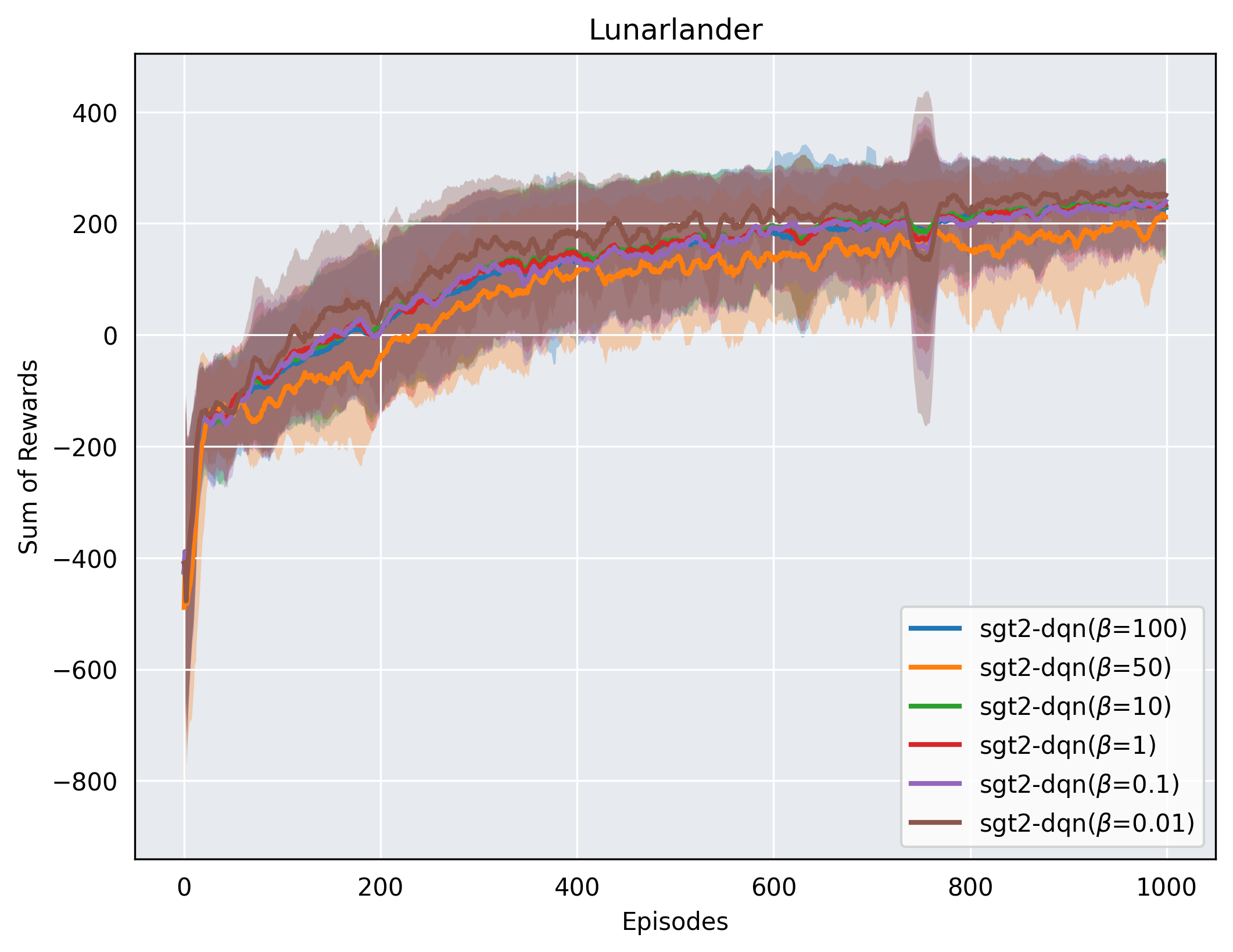}}
\caption{Comparison of reward curves of AGT2-DQN and SGT2-DQN with different $\beta$ in Lunarlander environment. }\label{fig:appendix:lunarlander:AGT2-DQN-beta}
\end{figure}

\cref{fig:appendix:lunarlander:AGT2-DQN-beta} illustrates the learning curves of AGT2-DQN and SGT2-DQN for different $\beta\in \{0.01,0.1,1,10,50,100\}$.

\begin{figure}[h!]
\centering
\includegraphics[width=6cm,height=5cm]{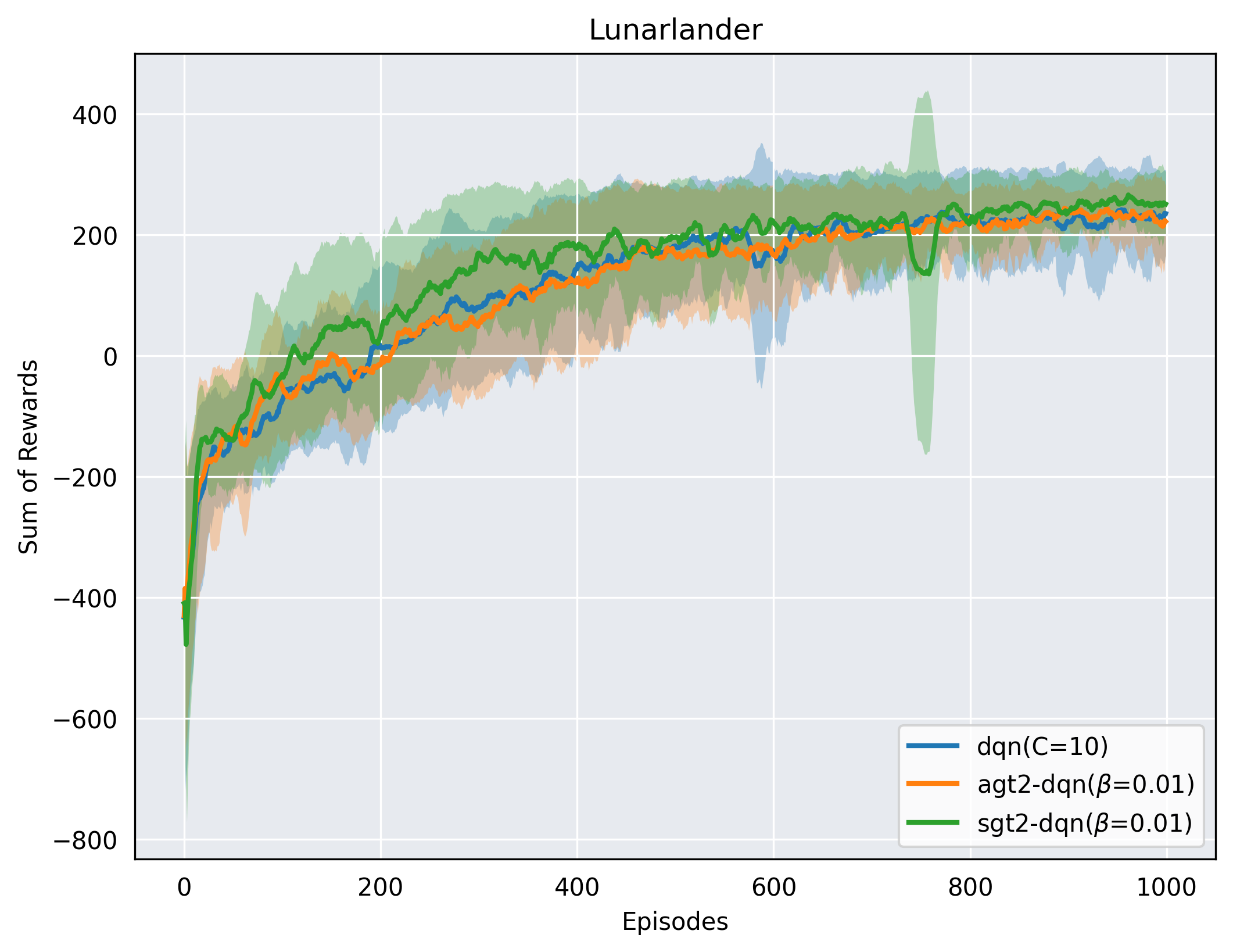}
\caption{Comparison of reward curves of DQN with $C=10$, AGT2-DQN with $\beta = 0.01$ and SGT2-DQN with $\beta = 0.01$in Lunarlander environment. }\label{fig:appendix:lunarlander:DQN-AGT2-SGT2}
\end{figure}

\cref{fig:appendix:lunarlander:DQN-AGT2-SGT2} illustrates the learning curves of DQN with $C=10$, AGT2-DQN with $\beta = 0.01$, and SGT2-DQN with $\beta = 0.01$, where each hyperparameter has been selected to approximately yield the best results. 

Overall, the results indicate that DQN, AGT2-DQN, and SGT2-DQN exhibit similar learning efficiencies and sensitivity to hyperparameters.

\end{document}